\DeclareFixedFont{\ttb}{T1}{txtt}{bx}{n}{12} 
\DeclareFixedFont{\ttm}{T1}{txtt}{m}{n}{12}  
\definecolor{deepblue}{rgb}{0,0,0.5}
\definecolor{deepred}{rgb}{0.6,0,0}
\definecolor{deepgreen}{rgb}{0,0.5,0}
\definecolor{gray}{rgb}{0.5,0.5,0.5}
\newcommand\pythonstyle{\lstset{
  language=Python,
  basicstyle=\scriptsize,
  otherkeywords={self},
  morekeywords={as},
  deletendkeywords={round,ord},
  keywordstyle=\bfseries\color{deepblue},
  emph={MyClass,__init__},          
  backgroundcolor=\color{white}, 
  emphstyle=\color{deepred},    
  stringstyle=\color{deepgreen},
  commentstyle=\color{blue},  
  frame=tb, 
  showstringspaces=false            
  breaklines=true,
  rulecolor=\color{black},
  tabsize=4,
  numbers=left,
  numbersep=5pt,
  numberstyle=\tiny\color{gray},
  xleftmargin=2em,
  framexleftmargin=1.5em,
  aboveskip=20pt,
  belowskip=20pt,
}}
\newcommand\pythoninline[1]{{\pythonstyle\lstinline!#1!}}
\renewcommand{\algorithmicrequire}{\textbf{Input:}}
\renewcommand{\algorithmicensure}{\textbf{Output:}}
\newcommand\coherence{\operatorname{coherence}}
 \newcommand{\IGNORE}[1]{}
\def\nn{\nonumber}
\newcommand\E{\mathbb{E}}
\newcommand\R{\mathbb{R}}
\newcommand\N{\mathcal{N}}
\renewcommand\t{{\scriptscriptstyle\top}}
\DeclareMathOperator{\diag}{diag}
\newcommand\eps{\varepsilon}
\newcommand\veps{\varepsilon}
\newcommand\tl{\tilde}
\newcommand\mat{\operatorname{mat}}
\newcommand\poly{\operatorname{poly}}
\DeclareMathOperator{\krank}{krank}
\DeclareMathOperator{\var}{var}
\newcommand\lambdamax{\ensuremath{\lambda_{\max}}}
\newcommand\lambdamin{\ensuremath{\lambda_{\min}}}
\renewcommand\th[1]{\ensuremath{\theta_{#1}}}
\def\tl{\tilde}\renewcommand\t{{\scriptscriptstyle\top}}
\newcommand\Dir{\operatorname{Dir}}
\newcommand\inner[1]{\ensuremath{\langle #1 \rangle}}
\newcommand{\mjcomment}[1]{\noindent{\textcolor{blue}{\textbf{\#\#\# MJ:} \textsf{#1} \#\#\#}}}
\DeclareMathOperator{\tr}{Tr}
 \DeclareMathOperator*{\argmin}{arg\,min}
 \DeclareMathOperator*{\argmax}{arg\,max}
\DeclareMathOperator{\rank}{Rank}
\DeclareMathOperator{\Var}{Var}
\DeclareMathOperator{\Diag}{Diag}
\DeclarePairedDelimiter\norm{\lVert}{\rVert}
 \def\0{{\bf 0}}
\def\nn{\nonumber}
\def\qed{\hfill\hbox{${\vcenter{\vbox{
    \hrule height 0.4pt\hbox{\vrule width 0.4pt height 6pt
    \kern5pt\vrule width 0.4pt}\hrule height 0.4pt}}}$}}
\def\tcr{\textcolor{red}}
\def\tcb{\textcolor{blue}}
\definecolor{myred}{rgb}{0.3,0.0,0.7}
\definecolor{dkg}{rgb}{0.1,0.7,0.2}
\definecolor{dkb}{rgb}{0.0,0.2,0.8}
\definecolor{brm}{rgb}{1,0.0,1}
\def\tcdkg{\textcolor{dkg}}
 \def\ha{\widehat{a}}
 \def\hb{\widehat{b}}
 \def\hc{\widehat{c}}
 \def\hf{\widehat{f}}
\def\Sc{{\cal S}}
\def\Ebb{{\mathbb E}}
\def\Rbb{{\mathbb R}}
\newcommand{\bprf}{\begin{myproof}}
\newcommand{\eprf}{\end{myproof}}
\newcommand{\bp}{\begin{psfrags}}
\newcommand{\ep}{\end{psfrags}}
\newcommand{\bl}{\begin{lemma}}
\newcommand{\el}{\end{lemma}}
\newcommand{\bt}{\begin{theorem}}
\newcommand{\et}{\end{theorem}}
\newcommand{\bc}{\begin{center}}
\newcommand{\ec}{\end{center}}
\newcommand{\bi}{\begin{itemize}}
\newcommand{\ei}{\end{itemize}}
\newcommand{\ben}{\begin{enumerate}}
\newcommand{\een}{\end{enumerate}}
\newcommand{\bd}{\begin{definition}}
\newcommand{\ed}{\end{definition}}
\def\beq{\begin{equation}}
\def\eeq{\end{equation}\noindent}
\def\beqn{\begin{eqnarray}}
\def\eeqn{\end{eqnarray} \noindent}
\def\beqnn{  \begin{eqnarray*}}
\def\eeqnn{\end{eqnarray*}  \noindent}
\def\bcase{  \begin{numcases}}
\def\ecase{\end{numcases}   \noindent}
\def\bsbcase{  \begin{subnumcases}}
\def\esbcase{\end{subnumcases}   \noindent}
\newtheorem{lem}{Lemma}
\newtheorem{prop}{Proposition}[section]
\newtheorem{claim}{Claim}
\theoremstyle{remark} 
\newenvironment{myproof}{\noindent{\bf Proof:} \hspace*{1em}}{
    \hspace*{\fill} $\Box$ }
\newenvironment{proof_of}[1]{\noindent {\bf Proof of #1: }}{\hspace*{\fill} $\Box$ }
\newcommand{\matplottc}[1]{               
        \unitlength .45truein
        \begin{center}
        \includegraphics{#1.ps}
        \end{picture}
        \end{center}
}
\def\psfancypar#1#2{\begingroup\def\par{\endgraf\endgroup\lineskiplimit=0pt}
               \setbox2=\hbox{\large\sc #2}
               \newdimen\tmpht \tmpht \ht2 \advance\tmpht by \baselineskip
               \font\hhuge=Times-Bold at \tmpht
               \setbox1=\hbox{{\hhuge #1}}
               \count7=\tmpht \count8=\ht1
               \divide\count8 by 1000 \divide\count7 by \count8
               \tmpht=.001\tmpht\multiply\tmpht by \count7
               \font\hhuge=Times-Bold at \tmpht
               \setbox1=\hbox{{\hhuge #1}}
               \noindent
                \hangindent1.05\wd1
               \hangafter=-2 {\hskip-\hangindent
               \lower1\ht1\hbox{\raise1.0\ht2\copy1}
                \kern-0\wd1}\copy2\lineskiplimit=-1000pt}
\def\Kout{\setbox1=\hbox{\Huge\bf K}\hbox to
1.05\wd1{\hspace{.05\wd1}% [arxiv_v2: inline-PS \special stripped, 282 chars]}}
\def\Sout{\setbox1=\hbox{\Huge\bf S}\hbox to 1.05\wd1{\hspace{.05\wd1}% [arxiv_v2: inline-PS \special stripped, 282 chars]}}

\newcommand{\torestate}[3]{
\expandafter \def \csname BBRESTATE #2 \endcsname{#3}
\theoremstyle{plain}
\newtheorem{BBRESTATETHMNUM#2}[theorem]{#1}
\begin{BBRESTATETHMNUM#2}\label{#2}\csname BBRESTATE #2 \endcsname   \end{BBRESTATETHMNUM#2}
\newtheorem*{BBRESTATETHMNONNUM#2}{{#1}~\ref{#2}}
}

\newcommand{\restate}[1]{\begin{BBRESTATETHMNONNUM#1}[Restated] \csname BBRESTATE #1 \endcsname
\end{BBRESTATETHMNONNUM#1}}

\definecolor{blue1}{HTML}{0066FF}
\definecolor{lpurple}{cmyk}{.05,0.18,0,0}

 \DeclareMathOperator{\Norm}{Norm}

\title{Spectral Learning on Matrices and Tensors}

\subtitle{}

\maintitleauthorlist{
Majid Janzamin \\
Twitter \\
majid.janzamin@gmail.com
\and
Rong Ge \\
Duke University \\
rongge@cs.duke.edu
\and
Jean Kossaifi \\
Imperial College London \\
jean.kossaifi@imperial.ac.uk
\and
Anima Anandkumar \\
NVIDIA 
\& California Institute of Technology \\
anima@caltech.edu
}

\issuesetup
{
copyrightowner={M.~Janzamin, R.~Ge, J.~Kossaifi and A.~Anandkumar}, 
volume = 12, 
issue = 5-6, 
pubyear = 2019, 
isbn = 978-1-68083-640-0, 
eisbn = 978-1-68083-641-7, 
doi = 10.1561/2200000057, 
firstpage = 393, 
lastpage = 536 
}

\addbibresource{tensorbook.bib}

\usepackage{mwe}

\author[1]{Janzamin,Majid}
\author[2]{Ge,Rong}
\author[3]{Kossaifi,Jean}
\author[4]{Anandkumar,Anima}

\affil[1]{Twitter; majid.janzamin@gmail.com}
\affil[2]{Duke University; rongge@cs.duke.edu}
\affil[3]{Imperial College London; jean.kossaifi@imperial.ac.uk}
\affil[4]{NVIDIA \& California Institute of Technology; anima@caltech.edu
}

\articledatabox{\nowfntstandardcitation}

\begin{document}

\makeabstracttitle

\begin{abstract}

Spectral methods have been the mainstay in several domains such as machine learning, applied mathematics and scientific computing. They involve finding a certain kind of spectral decomposition   to obtain basis functions that can  capture important structures or directions for the problem at hand. The most common spectral method  is the principal component analysis (PCA). It utilizes the principal components or the top eigenvectors of the data covariance matrix to carry out dimensionality reduction as one of its applications. This data pre-processing step is often effective in separating signal from noise.

PCA and other spectral techniques applied to {\em matrices} have several limitations. By limiting to only pairwise moments, they are effectively making a Gaussian approximation on the underlying data. Hence, they fail  on data with hidden variables which lead to non-Gaussianity. However, in almost any data set, there are latent effects   that cannot be directly observed, e.g., topics in a document corpus, or underlying causes of a disease. By extending the spectral decomposition methods to higher order moments, we demonstrate the ability to learn a wide range of latent variable models efficiently. Higher-order moments can be represented by {\em tensors}, and intuitively, they can encode   more information than just pairwise moment matrices.  
More crucially, tensor decomposition can pick up latent  effects that are missed by matrix methods. For instance, tensor decomposition can uniquely identify non-orthogonal components. 
Exploiting   these aspects turns out to be  fruitful for provable unsupervised learning of a wide range of latent variable models.

We also outline the computational techniques to design efficient tensor decomposition  methods. They are embarrassingly parallel and  thus scalable to large data sets.  Whilst there exist many optimized linear algebra software packages, efficient tensor algebra packages are also beginning to be developed. We introduce Tensorly, which has a simple python interface for expressing tensor operations. It has a flexible back-end system supporting NumPy, PyTorch, TensorFlow and MXNet amongst others. This allows it to carry out multi-GPU and CPU operations, and can also be seamlessly integrated with deep-learning functionalities.

\end{abstract}

\clearpage{}\chapter{Introduction} \label{sec:intro}

Probabilistic models form an important area of machine learning. They attempt to model the probability distribution of the observed data, such as documents, speech and images. Often, this entails relating observed data to   {\em latent or hidden variables}, e.g., topics for documents, words for speech and objects for images. The goal of learning is to then discover the latent variables and their relationships to the observed data.

Latent variable models have shown to be useful to provide a good explanation of the observed data, where they can capture the effect of hidden causes which are not directly observed.
Learning these hidden factors is central to many applications, e.g., identifying latent diseases through observed symptoms, and identifying latent communities through observed social ties. Furthermore, latent representations are very useful in feature learning. Raw data is in general very complex and redundant and feature learning is about extracting informative features from raw data. Learning efficient and useful features is crucial for the performance of learning tasks, e.g., the classification task that we perform using the learned features.

Learning latent variable models is challenging since the latent variables cannot,  by definition, be directly observed. In extreme cases, when there are more latent variables than observations, learning is theoretically impossible because of the lack of data, unless further constraints are imposed. More generally, learning latent variable models raises several questions. How much data do we need to observe in order to uniquely determine the model's parameters? Are there efficient algorithms to effectively learn these parameters? Can we get provable guarantees on the running time of the algorithm and the number of samples required to estimate the parameters? These are all important questions about learning latent variable models that we will try to address here.

In this monograph, we survey recent progress in using spectral methods including matrix and tensor decomposition techniques to learn many popular latent variable models. With careful implementation, tensor-based methods can run efficiently in practice, and in many cases they are the only algorithms with provable guarantees on running time and sample complexity.

There exist other surveys and overviews on tensor decomposition and its applications in machine learning and beyond. Among them, the work by \citet{kolda_survey} is very well-received in the community where they provide a comprehensive introduction to major tensor decomposition forms and algorithms and discuss some of their applications in science and engineering. More recently, \citet{sidiropoulos2017tensor} provide an overview of different types of tensor decompositions and some of their applications in signal processing and machine learning. \citet{papalexakis2017tensors} discuss several applications of tensor decompositions in data mining. \citet{rabanser2017introduction} review some basic concepts of tensor decompositions and a few applications.
\citet{debals2017concept} review several tensorization techniques which had been proposed in the literature. Here, tensorization is the mapping of a vector or matrix to a tensor to enable us using tensor tools.

In contrast to the above works, our focus in this monograph is on a special type of tensor decomposition called CP decomposition (see~\eqref{eq:tensor} as an example), and we cover a wide range of algorithms to find the components of such tensor decomposition. We also discuss the usefulness of this decomposition by reviewing several probabilistic models that can be learned using such tensor methods.

\section{Method of Moments and Moment Tensors}

How can we learn latent variable models, even though we cannot observe the latent variables? The key lies in understanding the relationship between latent variables and observed variables. A common framework for such relationship is known as the {\bf method of moments} which dates back to \citet{pearson1894contributions}. 

\paragraph{Pearson's 1-d Example:}The main idea of method of moments is to first estimate {\em moments} of the data, and use these estimates to learn the unknown parameters of the probabilistic model. For a one-dimensional random variable $X\in \R$, the $r$-th order moment is denoted by $\E[X^r]$, where $r$ is a positive integer and $\E[\cdot]$ is the expectation operator. Consider a simple example where $X$ is a mixture of two Gaussian variables. More precisely, with probability $p_1$, $X$ is drawn from a Gaussian distribution with mean $\mu_1$ and variance $\sigma_1^2$, and with probability $p_2$, $X$ is drawn from a Gaussian distribution with mean $\mu_2$ and variance $\sigma_2^2$. Here we have $p_1+p_2=1$. Let us consider the problem of estimating these unknown parameters given samples of $X$. The random variable $X$ can be viewed as drawn from a latent variable model because given a sample of $X$, we do not know which Gaussian it came from. Let latent variable $Z \in \{1,2\}$ be a random variable with probability $p_1$ of being 1. Then given $Z$, $X$ is just a Gaussian distribution as
$$
[X|Z = z] \sim \N(\mu_z, \sigma_z^2).
$$

As noted by \citet{pearson1894contributions}, even though we cannot observe $Z$, the moments of $X$ are closely related to the unknown parameters (probabilities $p_1,p_2$, means $\mu_1,\mu_2$, standard deviations $\sigma_1,\sigma_2$) we desire to estimate. More precisely, for the first three moments we have
\begin{align*}
\E[X] & = p_1\mu_1 + p_2\mu_2,\\
\E[X^2] & = p_1(\mu_1^2+\sigma_1^2)+p_2(\mu_2^2+\sigma_2^2),\\
\E[X^3] & = p_1 (\mu_1^3 + 3\mu_1\sigma_1^2)+p_2 (\mu_2^3 + 3\mu_2\sigma_2^2).\\
\end{align*}

The moments $\E[X], \E[X^2], \E[X^3],\ldots$ can be empirically estimated given observed data. Therefore, the equations above can be interpreted as a system of equations on the six unknown parameters stated above. \citet{pearson1894contributions} showed that with the first 6-th moments, we have enough equations to {\em uniquely} determine the values of the parameters.

\paragraph{Moments for Multivariate Random Variables of Higher Dimensions:}For a scalar random variable, its $p$-th moment is just a scalar number. However, for a random vector, higher order moments can reveal much more information. Let us consider a random vector $X \in \R^d$. The first moment of this variable is a vector $\mu \in \R^d$ such that $\mu_i = \E[X_i], \forall i\in [d]$, where $[d] := \{1,2,\dotsc, d \}$. For the second order moment, we are not only interested in the second moments of individual coordinates $\E[X_i^2]$, but also in the {\em correlation} between different coordinates $\E[X_i X_j], i\ne j$. Therefore, it is convenient to represent the second order moment as a $d\times d$ symmetric matrix $M$, where $M_{i,j} = \E[X_i X_j]$. 

This becomes more complicated when we look at higher order moments. For 3rd order moment, we are  interested in the correlation between all {\em triplets} of variables. In order to represent this compactly, we use a 3-dimensional $d\times d\times d$ object $T$, also known as a 3rd order tensor. The tensor is constructed such that $T_{i,j,k} = \E[X_iX_jX_k], \forall i,j,k\in[d]$. This tensor has $d^3$ elements or ${d+2\choose 3}$ distinct entries. In general, $p$-th order moment can be represented as a $p$-th order tensor with $d^p$ entries. These tensors are called moment tensors. Vectors and matrices are special cases of moment tensors of order 1 and 2, respectively.

In applications, it is often crucial to define what the random variable $X$ is, and examine what moments of $X$ we can estimate from the data.
We now provide a simple example to elaborate on how to form a useful moment and defer the proposal of many more examples to Section~\ref{ch:applications}.

\section{Warm-up: Learning a Simple Model with Tensors}
\label{sec:intro-LVMs}
\label{sec:intro-example}

In this section, we will give a simple example to demonstrate what is a tensor decomposition, and how it can be applied to learning latent variable models. Similar ideas can be applied to more complicated models, which we will discuss in Section~\ref{ch:applications}.

\paragraph{Pure Topic Model:}
The model we consider is a very simple topic model~\citep{papadimitriou2000latent, hofmann1999probabilistic}. In this model, there are $k$ unknown topics. Each topic entails a probability distribution over words in the vocabulary. Intuitively, the probabilities represent the likelihood of using a particular word when talking about a specific topic. As an example, the word ``snow'' should have a high probability in the topic ``weather'' but not the topic ``politics''. These probabilities are represented as a matrix $A\in \R^{d\times k}$, where $d$ is the size of the vocabulary and every column represents a topic. So, the columns of matrix $A$ correspond to the probabilities over vocabulary that each topic entails. We will use $\mu_j\in \R^d, j\in[k]$ to denote these probability distribution of words given $j$-th topic ($j$-th column of matrix $A$).

The model assumes each document is generated in the following way: first a topic $h\in[k]$ is chosen with probability $w_h$  where $w\in \R^k$ is a vector of probabilities; next, $l$ words $x_1,x_2,\dotsc,x_l$ are independently sampled from the $h$-th topic-word probability vector $\mu_h$. Therefore, we finally observe words for the documents. See Figure~\ref{fig:Multiview-Intro} for a graphical illustration of this model.
This is clearly a latent variable model, since we don't observe the topics. Our goal is to learn the parameters, which include the topic probability vector $w$ and the topic-word probability vectors $\mu_1,\dotsc,\mu_k$.

\begin{figure}
\begin{center}
\begin{tikzpicture}
  [
    scale=1.1,
    observed/.style={circle,minimum size=0.6cm,inner
sep=0mm,draw=black,fill=black!20},
    hidden/.style={circle,minimum size=0.6cm,inner sep=0mm,draw=black},
  ]
  \node [hidden,name=h] at ($(0,0)$) {$h$};
  \node [observed,name=x1] at ($(-1.5,-1)$) {$x_1$};
  \node [observed,name=x2] at ($(-0.5,-1)$) {$x_2$};
  \node [observed,name=xl] at ($(1.5,-1)$) {$x_l$};
  \node at ($(0.5,-1)$) {$\dotsb$};
  \draw [->] (h) to (x1);
  \draw [->] (h) to (x2);
  \draw [->] (h) to (xl);
\end{tikzpicture}
\end{center}
\caption{Pure Topic Model}
\label{fig:Multiview-Intro}
\end{figure}
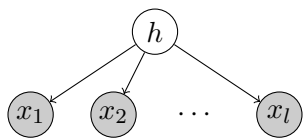

\paragraph{Computing the Moments:}First, we need to identify what the interesting moments are in this case. Since all we can observe are words in documents, and documents are all generated independently at random, it is natural to consider correlations between words as moments.

We say $x\in \R^d$ is an indicator vector of a word $z$ in our size-$d$ vocabulary if the $z$-th coordinate of $x$ is 1 and all other coordinates of $x$ are 0. For each document, let $x_1,x_2,x_3 \in \R^d$ be indicator vectors for the first three words. Given these word representations, the entries of the first three moments of $x_1,x_2,x_3$ can be written as
\begin{align*}
M_1(i) &= \Pr[x_1 = e_i],\\
M_2(i_1,i_2) & = \Pr[x_1 = e_{i_1}, x_2=e_{i_2}],\\
M_3(i_1,i_2,i_3) & = \Pr[x_1 = e_{i_1}, x_2=e_{i_2}, x_3 = e_{i_3}],
\end{align*}
where $e_i \in \R^d$ denotes the $i$-th basis vector in $d$-dimensional space. 
Intuitively, the first moment $M_1$ represents the probabilities for words; the second moment $M_2$ represents the probabilities that two words co-occur; and the third moment $M_3$ represents the probabilities that three words co-occur. 

We can empirically estimate $M_1,M_2,M_3$ from the observed documents. Now in order to apply the method of moments, we need to represent these probabilities based on the unknown parameters of our model. We can show that 
\begin{align}
M_1 &= \sum_{h=1}^k w_h \ \mu_h,\\
M_2 & = \sum_{h=1}^k w_h \ \mu_h \mu_h^\top,\label{eq:matrix}\\
M_3 & = \sum_{h=1}^k w_h \ \mu_h\otimes \mu_h\otimes \mu_h.\label{eq:tensor}
\end{align}
The computation follows from the law of total expectations (explained in more details in Section~\ref{sec:applications}).
Here, the first moment $M_1$ is the weighted average of $\mu_h$; the second moment $M_2$ is the weighted average of outer-products $\mu_h \mu_h^\top$; and the third moment $M_3$ is the weighted average of {\em tensor-products} $\mu_h\otimes \mu_h\otimes \mu_h$. The tensor product $\mu_h\otimes \mu_h\otimes \mu_h$ is a $d\times d\times d$ array whose $(i_1,i_2,i_3)$-th entry is equal to $\mu_h(i_1)\mu_h(i_2)\mu_h(i_3)$. See Section~\ref{ch:tensor-decomp} for more precise definition of the tensor product operator $\otimes$.

Note that the second moment $M_2$ is a matrix of rank at most $k$, and Equation \eqref{eq:matrix} provides a low-rank matrix decomposition of $M_2$. Similarly, finding $w_h$ and $\mu_h$ from $M_3$ using Equation \eqref{eq:tensor} is a problem called {\em tensor decomposition}. Clearly, if we can solve this problem, and it gives a unique solution, then we have learned the parameters of the model and we are done.

\section{What's Next?}

In the rest of this monograph, we will discuss the properties of tensor decomposition problem, review algorithms to efficiently find the components of such decomposition, and explain how they can be applied to learn the parameters of various probabilistic models such as latent variable models.

In Section~\ref{ch:matrix}, we first give a brief review of some basic matrix decomposition problems, including the singular value decomposition (SVD) and canonical correlation analysis (CCA). In particular, we will emphasize why matrix decomposition is often not enough to learn all the parameters of the latent variable models. 

Section~\ref{ch:tensor-decomp} discusses several algorithms for tensor decomposition. We will highlight under what conditions the tensor decomposition is {\em unique}, which is crucial in identifying the parameters of latent variable models.

In Section~\ref{ch:applications}, we give more examples on how to apply tensor decomposition to learn different latent variable models. In different situations, there are many tricks to manipulate the moments in order to get a clean equation that looks similar to \eqref{eq:tensor}.

In Section~\ref{ch:implementation}, we illustrate how to implement tensor operations in practice using the Python programming language. We then show how to efficiently perform tensor learning using TensorLy and scale things up using PyTorch.

Tensor decomposition and its applications in learning latent variable models are still active research directions. In the last two sections of this monograph we discuss some of the more recent results, which deals with the problem of overcomplete tensors and improves the guarantees on running time and sample complexity.

\clearpage{}

\clearpage{}\chapter{Matrix Decomposition}
\label{ch:matrix}
In this chapter, we describe some basic applications of matrix decomposition techniques including singular value decomposition (SVD), Principle Component Analysis (PCA) and canonical correlation analysis (CCA). These techniques are widely used in data analysis, and have been covered in many previous books (see e.g., \cite{golub1996matrix,horn2012matrix,blum2016foundations}).

The goal of this chapter is to give a brief overview of the matrix decomposition techniques. At the same time we try to point out connections and differences with relevant concepts in tensor decomposition. Especially, in many cases these matrix-based methods have the problem of ambiguity, and cannot be directly applied to learning parameters for latent variable models. In the next section, we will describe how these limitations can be solved by using tensor decomposition instead of matrix decomposition.

\section{Low Rank Matrix Decomposition}
\label{subsec:lowrank}
Assuming the reader is familiar with the basics of matrix algebra, we will start with reviewing matrix decompositions and matrix rank.
{\em Rank} is a basic property of matrices. A rank-1 matrix can be expressed as the outer product of two vectors as $u v^\top$ \--- its $(i,j)$-th entry is equal to the product of the $i$-th entry of vector $u$ denoted by $u(i)$ and the $j$-th entry of vector $v$ denoted by $v(j)$. Similarly, a matrix $M \in \R^{n\times m}$ is of rank at most $k$ if it can be written as the sum of $k$ rank-1 matrices as
\begin{equation}
M = \sum_{j=1}^k u_j v_j^\top. \label{eq:lowrank}
\end{equation}
Here $u_1,u_2,\dotsc,u_k\in \R^n$ and $v_1,v_2,\dotsc,v_k\in \R^m$ form the {\emph rank-1 components} of the matrix $M$. We call Equation (\ref{eq:lowrank}) a {\em decomposition} of matrix $M$ into rank-1 components; see Figure~\ref{fig:Matrix-decomp} for a graphical representation of this decomposition for a sample matrix $M \in \R^{5 \times 4}$.

\begin{figure}
\bc
$$
\vcenter{\hbox{\includegraphics[width=.13\linewidth]{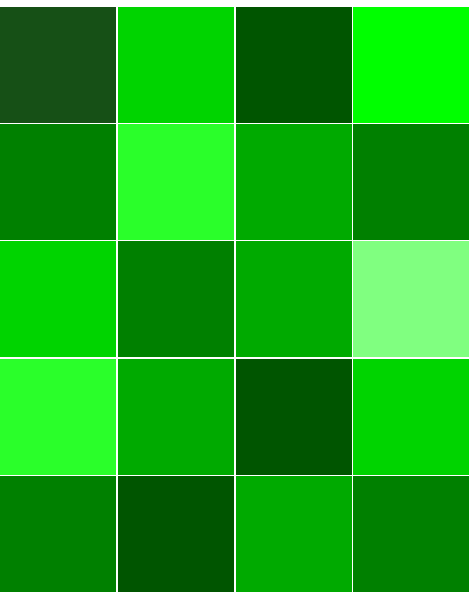}}}
= \vcenter{\hbox{\includegraphics[width=.17\linewidth]{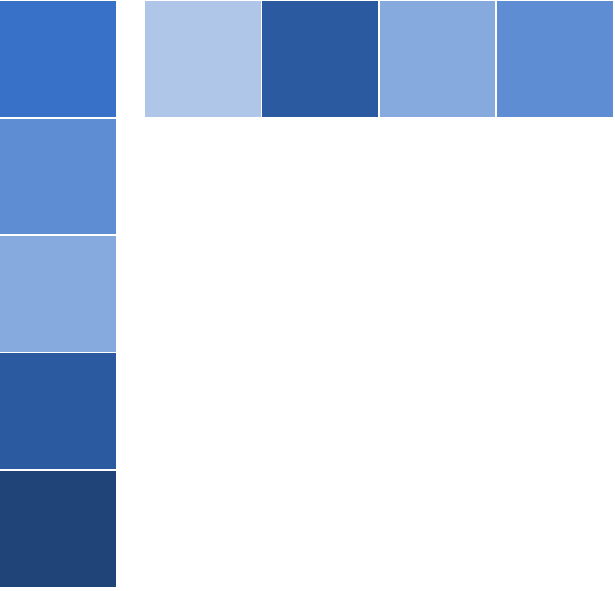}}}
+ \vcenter{\hbox{\includegraphics[width=.17\linewidth]{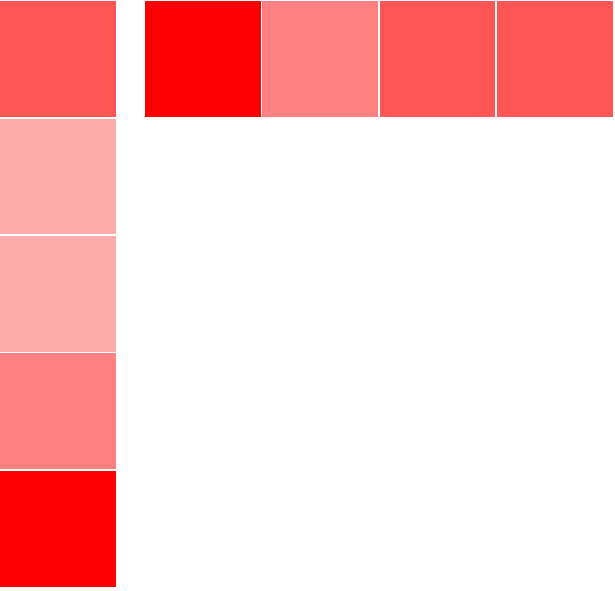}}}
+ \dotsb
$$
\ec
\caption[Matrix Decomposition]{Decomposition of a matrix $M \in \R^{5 \times 4}$ as sum of the rank-1 components. Note that each component is the product of a column vector $u_j$ and a row vector $v_j^\top$.}
\label{fig:Matrix-decomp}
\end{figure}

In many practical applications, the entries of a matrix are often determined by a small number of {\em factors}, and each factor corresponds to a rank-1 matrix; see Equation~\eqref{eq:lowrank}. As a result, many matrices we observe are {\em close to low rank matrices} (for exact definition of closeness see Section~\ref{sec:svd}).

We elaborate the application of low rank matrix decompositions with the following example.
Psychologist Charles Spearman worked on understanding whether human intelligence is a composite of different types of measurable intelligence and analyzed that through a factor analysis~\citep{spearman1904general}. Let us describe a highly simplified version of his method, where the hypothesis is that there are exactly two kinds of intelligence: {\em quantitative} and {\em verbal}. Suppose $n$ students are taking $m$ different tests on distinct subjects. We can summarize the scores that students get in different tests in a matrix $M \in \R^{n\times m}$. Each row lists the scores for a student, and each column the scores for a particular subject; see the score matrix example in Figure~\ref{fig:example-score}.

\begin{figure}
\bc
\[
\begin{blockarray}{ccccc}
& \begin{sideways} Math. \end{sideways} & 
\begin{sideways} Classics \end{sideways} & 
\begin{sideways} Physics \end{sideways} & 
\begin{sideways} Music \end{sideways} \\
\begin{block}{r(cccc)}
\text{Alice} & 19 & 26 & 17 & 21 \\
\text{Bob} & 8 & 17 & 9 & 12 \\
\text{Carol} & 7 & 12 & 7 & 9  \\
\text{Dave} & 15 & 29  & 16 & 21  \\
\text{Eve} & 31 & 40 & 27 & 33 \\
\end{block} 
\end{blockarray}
\begin{array}{c}
\vspace{0.21in} \\ =
\end{array}
\hspace{-0.1in}
\begin{array}{ccc}
\text{Verbal} & & \text{Quantitative} \vspace{0.2in} \\
\left( \begin{array}{c}
4 \\ 3 \\ 2 \\ 5 \\ 6 
\end{array} \right)
\left( \begin{array}{c}
1 \\ 5 \\ 2 \\ 3
\end{array} \right)^\top
& \hspace{-0.2in} + &
\hspace{-0.1in}
\left( \begin{array}{c}
3 \\ 1 \\ 1 \\ 2 \\ 5 
\end{array} \right)
\left( \begin{array}{c}
5 \\ 2 \\ 3 \\ 3
\end{array} \right)^\top
\end{array}
 \]
\ec
\caption{Score Matrix $M$ is an example for the scores of students (indexing the rows) in different tests on distinct subjects (indexing the columns). A corresponding low rank decomposition is also provided where the rank is two in this example.}
\label{fig:example-score}
\end{figure}

According to the simplified hypothesis, each student has different quantitative and verbal strengths. Each subject also requires different levels of quantitative and verbal skills or strength. Intuitively, a student with higher strength on verbal intelligence should perform better on a test that has a high weight on verbal intelligence. Therefore, as a simplest model we can describe the relationship as a bi-linear function:
\begin{align} \label{eqn:scoreExp-decomp}
\text{Score}(\text{student},\text{test}) = & \ \text{student}_{\text{verbal-intlg.}} \times \text{test}_{\text{verbal}} \\
& + \text{student}_{\text{quant-intlg.}}\times \text{test}_{\text{quant.}}. \nn
\end{align}
If we let $u_{\text{verbal}}, u_{\text{quant.}} \in \R^n$ be vectors that describe the verbal/quantitative strength for each student, and let $v_{\text{verbal}}, v_{\text{quant.}}\in \R^m$ be vectors that describe the requirement for each test, then we can write the score matrix $M$ as
\begin{equation}
M = u_{\text{verbal}}v_{\text{verbal}}^\top +u_{\text{quant.}}v_{\text{quant.}}^\top. \label{eq:decomposition}
\end{equation}
Therefore, $M$ is a rank 2 matrix!  Here quantitative and verbal are two {\em factors} that influence the result of the tests. The matrix $M$ is low rank because there are only two different factors. In general, this approach is called {\em factor analysis}. See Figure~\ref{fig:example-score} for an example of matrix $M$ and its corresponding rank 2 decomposition.

\subsection{Ambiguity of Matrix Decomposition} \label{sec:ambiguity}

As we described, decompositions like (\ref{eq:decomposition}) are very useful as they suggest the whole $n\times m$ matrix can be explained by a small number of components. However, if we are not only interested in the number of components, but also the exact values of the components (e.g., which student is strongest in the quantitative tasks), such decompositions are not sufficient because they are {\em not unique}. As an example, in Figure~\ref{fig:nonUnique-decomposition}, we give two different decompositions of the matrix we proposed earlier.

\begin{figure}
\bc
\[
\begin{array}{ccc}
\begin{blockarray}{ccccc}
& \begin{sideways} Math. \end{sideways} & 
\begin{sideways} Classics \end{sideways} & 
\begin{sideways} Physics \end{sideways} & 
\begin{sideways} Music \end{sideways} \\
\begin{block}{r(cccc)}
\text{Alice} & 19 & 26 & 17 & 21 \\
\text{Bob} & 8 & 17 & 9 & 12 \\
\text{Carol} & 7 & 12 & 7 & 9  \\
\text{Dave} & 15 & 29  & 16 & 21  \\
\text{Eve} & 31 & 40 & 27 & 33 \\
\end{block} 
\end{blockarray}
&
\begin{array}{c}
\vspace{0.21in} \\ =
\end{array} &
\hspace{-0.1in}
\begin{array}{ccc}
\text{Verbal} & & \text{Quantitative} \vspace{0.2in} \\
\left( \begin{array}{c}
\tcr{4} \\ \tcr{3} \\ \tcr{2} \\ \tcr{5} \\ \tcr{6} 
\end{array} \right)
\left( \begin{array}{c}
1 \\ 5 \\ 2 \\ 3
\end{array} \right)^\top
& \hspace{-0.2in} + &
\hspace{-0.1in}
\left( \begin{array}{c}
3 \\ 1 \\ 1 \\ 2 \\ 5 
\end{array} \right)
\left( \begin{array}{c}
\tcr{5} \\ \tcr{2} \\ \tcr{3} \\ \tcr{3}
\end{array} \right)^\top
\end{array} \\
 & = &
\hspace{-0.1in}
\begin{array}{ccc}
 \left( \begin{array}{c}
\tcr{1} \\ \tcr{2} \\ \tcr{1} \\ \tcr{3} \\ \tcr{1}
\end{array} \right)
\left( \begin{array}{c}
1 \\ 5 \\ 2 \\ 3
\end{array} \right)^\top
& \hspace{-0.2in} + &
\hspace{-0.1in}
\left( \begin{array}{c}
3 \\ 1 \\ 1 \\ 2 \\ 5 
\end{array} \right)
\left( \begin{array}{c}
\tcr{6} \\ \tcr{7} \\ \tcr{5} \\ \tcr{6}
\end{array} \right)^\top
\end{array}
\end{array}
 \]
\ec
\caption{\label{fig:nonUnique-decomposition}Two possible decompositions of the score matrix $M$ that we originally proposed in Figure~\ref{fig:example-score}. Note that the students verbal intelligence and tests quantitative weights are different between two decompositions.}
\end{figure}

In fact, this phenomena of non-uniqueness of matrix decomposition is very general. Consider a low rank decomposition $M = \sum_{j=1}^k u_j v_j^\top \in \R^{n \times m}$. Let $U \in \R^{n\times k}$ be a matrix whose columns are $u_j$'s, and let $V\in \R^{m\times k}$ be a matrix whose columns are $v_j$'s. Then we can represent $M$ as
$$
M = \sum_{j=1}^k u_j v_j^\top = UV^\top.
$$
Now for any {\em orthonormal matrix} $R\in \R^{k\times k}$ that satisfies $RR^\top = R^\top R = I$, we have
$$
M = UV^\top = URR^\top V^\top = (UR)(VR)^\top.
$$
Therefore, $UR$, $VR$ defines an {\em equivalent} decomposition, and its components (columns of $UR$, $VR$) can be completely different from the components in the original decomposition $UV^\top$. Later in Section~\ref{sec:totensors} we will revisit this example and see why {\em tensor decomposition} can avoid this ambiguity.

\section{Low Rank Matrix Approximation and SVD}
\label{sec:svd}

In practice, the matrix we are working on is often not {\em exactly} low rank. The observed matrix can deviate from the low rank structure for many reasons including but not limited to:
\begin{itemize}
\item The observed values can be {\em noisy}.
\item The factors may not interact linearly.
\item There might be several prominent factors as well as many small factors.
\end{itemize}
Despite all these possible problems, the observed matrix can still be {\em approximately} low rank. In such cases it is beneficial to find the low rank matrix that is the {\em closest} to the observed matrix (in other words, that best approximates it). In this section, we describe Singular Value Decomposition (SVD) method which is an elegant way of finding the closest low rank approximation of a matrix. To do so, we first define matrix norms and provide a concrete notion of closeness in matrix approximation.

\subsection{Matrix Norms}

Before talking about how to find the closest matrix, we need to first define when two matrices are close. Closeness is often defined by a distance function $d(A,B)$ for two same-size matrices $A$ and $B$. For general matrices, the most popular distance functions are based on matrix norms, i.e., $d(A,B) = \|A - B\|$ for some matrix norm $\|\cdot \|$.

There are many ways to define norms of matrices. The {\em Frobenius} norm and {\em spectral/operator} norm are the most popular ones.

\begin{definition}[Frobenius norm] The Frobenius norm of a matrix $M \in \R^{n\times m}$ is defined as
$$
\|M\|_F := \sqrt{\sum_{i=1}^n \sum_{j=1}^m M_{i,j}^2}.
$$
\end{definition}

Frobenius norm is intuitive and easy to compute. However, it ignores the matrix structure and is therefore equivalent to $\ell_2$ norm when we view the matrix as a vector. To understand the property of the matrix, we can view the matrix as a linear operator, and define its operator norm as follows.

\begin{definition}[Matrix spectral/operator norm] The spectral or operator norm of a matrix $M\in \R^{n\times m}$ is defined as
$$
\|M\| := \sup_{\|v\| \le 1} \|Mv\|,
$$
where $\|\cdot\|$ denotes the Euclidean $\ell_2$ norm for vectors.
\end{definition}
The spectral norm measures how much the matrix can stretch a vector that is inside the unit sphere.

Based on the above two norms, we can now define the closest low rank matrices as
\begin{align*}
M_k & := \argmin_{\mbox{rank}(N) \le k} \| M - N\|,\\
M_{k,F} & := \argmin_{\mbox{rank}(N) \le k} \| M - N\|_F.
\end{align*}
Both optimization problems are non-convex and may seem difficult to solve. Luckily, both of them can be solved by Singular Value Decomposition. In fact they have exactly the same solution, i.e., $M_k = M_{k,F}$, as we will see in the following section.

\subsection{Singular Value Decomposition}

For a matrix $M$, the Singular Value Decomposition (SVD) is a special type of low rank decomposition where all the rank-1 components are orthogonal to each other.

\begin{definition}[Singular Value Decomposition(SVD), see \cite{golub1996matrix} 2.5.3 or \cite{horn2012matrix} 7.3.1]  \label{def:svd}
The singular value decomposition of matrix $M\in \R^{n\times m}$ is defined as
$$
M = UDV^\top = \sum_{j=1}^{\min\{n,m\}} \sigma_j u_j v_j^\top,
$$
where $U := [u_1 | u_2 | \dotsb | u_n] \in \R^{n\times n}$ and $V  := [v_1 | v_2 | \dotsb | v_m] \in \R^{m\times m}$ are orthonormal matrices such that $U^\top U = I, V^\top V = I$, and $D \in \R^{n\times m}$  is a diagonal matrix whose diagonal entries are $\sigma_1 \ge \sigma_2 \ge \cdots \ge \sigma_{\min\{n,m\}} \ge 0$. The $u_j$'s (respectively $v_j$'s) are called the left (respectively right) singular vectors of $M$ and $\sigma_j$'s are called the singular values of $M$.
\end{definition}

Note that when $n < m$, we often view $D$ as a $n\times n$ diagonal matrix, and $V$ as an $m\times n$ orthonormal matrix because the extra columns of $V$ (columns indexed by $n< j \leq m$) are not relevant in the decomposition. Similarly when $n > m$, we often view $U$ as a $n\times m$ matrix.

The top singular value $\sigma_1$ is the largest singular value that is often denoted as $\sigma_{\max}(M)$, and the value $\sigma_{\min\{n,m\}}$ is the smallest singular value that is often denoted as $\sigma_{\min}(M)$. We now describe the optimization view-point of SVD where singular values are the maximum values of the quadratic form $u^\top M v$ when both $u$ and $v$ have bounded $\ell_2$ norms, and the corresponding components (called singular vectors) are the maximizers that are orthonormal vectors.

\begin{definition}[Optimization view-point of the SVD, see \cite{horn2012matrix} 7.3.10]\label{def:svdopt}
The top singular value $\sigma_1$ is the maximum of the quadratic form $u^\top M v$ when $u$ and $v$ have bounded $\ell_2$ norm, and the top singular vectors are the maximizers, i.e.,
\begin{align*}
\sigma_1 &= \max_{\|u\|\le 1, \|v\|\le 1} u^\top Mv,\\
u_1,v_1 & = \argmax_{\|u\|\le 1, \|v\|\le 1} u^\top Mv.
\end{align*}
The remaining values/vectors are obtained by maximizing the same quadratic form, while constraining the singular vectors to be orthogonal with all the previous ones, i.e.,
\begin{align*}
\sigma_j &= \max_{\|u\|\le 1, \|v\|\le 1,\forall i<j: u\perp u_i, v\perp v_i} u^\top Mv,\\
u_j,v_j & = \argmax_{\|u\|\le 1, \|v\|\le 1,\forall i<j: u\perp u_i, v\perp v_i} u^\top Mv.
\end{align*}
\end{definition}

As a result, we can also conclude that the spectral norm of $M$ is equal to $\sigma_1$, i.e., $\|M\| = \sigma_1$, since $\|Mv\| = \max_{\|u\| \le 1} u^\top Mv$. The singular values and singular vectors are also closely related to the eigenvalues and eigenvectors as we will demonstrate below.

\begin{lemma}[SVD vs.\ eigen-decomposition, see \cite{horn2012matrix} 7.3.5] For a matrix $M$, the singular values $\sigma_j$'s are the square roots of the eigenvalues of $MM^\top$ or $M^\top M$. The left singular vectors $u_j$'s are eigenvectors of $MM^\top$, and the right singular vectors $v_j$'s are eigenvectors of $M^\top M$.
\end{lemma}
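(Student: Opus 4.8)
The plan is to start from the singular value decomposition $M = UDV^\top$ guaranteed by Definition~\ref{def:svd} and simply multiply it out. First I would compute
$$
MM^\top = (UDV^\top)(UDV^\top)^\top = UDV^\top V D^\top U^\top = U (DD^\top) U^\top,
$$
using $V^\top V = I$. Here $DD^\top \in \R^{n\times n}$ is diagonal with $j$-th entry $\sigma_j^2$ for $j \le \min\{n,m\}$ and $0$ otherwise. Since $U$ is orthonormal, the identity $MM^\top = U(DD^\top)U^\top$ is \emph{already} an eigendecomposition of the symmetric matrix $MM^\top$: the columns $u_j$ of $U$ are orthonormal eigenvectors and the corresponding eigenvalues are the diagonal entries $\sigma_j^2$. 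In particular every eigenvalue of $MM^\top$ is some $\sigma_j^2$, so the $\sigma_j$ are the nonnegative square roots of the eigenvalues of $MM^\top$. The computation for $M^\top M$ is identical in spirit: $M^\top M = V D^\top U^\top U D V^\top = V(D^\top D)V^\top$, with $D^\top D \in \R^{m\times m}$ diagonal carrying the same nonzero entries $\sigma_j^2$, so the $v_j$ are orthonormal eigenvectors of $M^\top M$ with eigenvalues $\sigma_j^2$. That is the whole argument; it is essentially a one-line matrix manipulation.

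The only place that needs care — and hence the main obstacle — is bookkeeping with the non-square shape of $D$ and the multiplicity of the eigenvalue $0$. When $n \ne m$, $DD^\top$ and $D^\top D$ have different sizes ($n\times n$ versus $m\times m$) and differ precisely in how many zero diagonal entries they carry, which is why $MM^\top$ and $M^\top M$ share the same \emph{nonzero} eigenvalues but in general have different numbers of zero eigenvalues. For this reason I would phrase the conclusion as "the $u_j$ may be taken to be eigenvectors of $MM^\top$" rather than claim uniqueness: eigenvectors inside a repeated eigenspace (in particular the null space) are only determined up to an orthogonal change of basis. The real content of the lemma is the \emph{existence} of a single choice of singular vectors that simultaneously diagonalizes $MM^\top$ and $M^\top M$ in the way stated, and that is exactly what the SVD supplies.

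Optionally, to emphasize that the correspondence is a genuine equivalence and not just one implication, one can run the argument in reverse: by the spectral theorem the PSD matrix $M^\top M$ has an orthonormal eigenbasis $v_1,\dots,v_m$ with eigenvalues $\lambda_1 \ge \cdots \ge \lambda_m \ge 0$; setting $\sigma_j := \sqrt{\lambda_j}$ and $u_j := M v_j / \sigma_j$ when $\sigma_j > 0$ (completing to an orthonormal basis of $\R^n$ otherwise), one checks $u_i^\top u_j = v_i^\top M^\top M v_j / (\sigma_i \sigma_j) = \delta_{ij}$ and $M = \sum_j \sigma_j u_j v_j^\top$, recovering the SVD and showing the nonzero singular data is in bijection with the nonzero eigendata of $M^\top M$ (and symmetrically of $MM^\top$). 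I would keep the main proof to the forward one-line computation and present this converse only as a brief remark.
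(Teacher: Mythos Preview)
Your argument is correct and is exactly the standard one: from $M = UDV^\top$ with $U^\top U = I$, $V^\top V = I$, one reads off $MM^\top = U(DD^\top)U^\top$ and $M^\top M = V(D^\top D)V^\top$, which are eigendecompositions. The paper itself does not give a proof of this lemma at all; it simply states it and cites \cite{horn2012matrix}, Section 7.3.5, so there is nothing to compare your approach against beyond noting that it matches the textbook derivation.
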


In Section~\ref{sec:ambiguity}, we described how low rank matrix decomposition is not unique under orthogonal transformation of the rank-1 components. For SVD, because of the specific structure of its singular vectors this is not necessarily the case and in most cases Singular Value Decomposition is unique.

\begin{theorem}[Uniqueness of Singular Value Decomposition, see \cite{horn2012matrix} 7.3.5] \label{thm:SVD-unique}
The SVD of matrix $M$ defined in Definition~\ref{def:svd} is unique (for the first $\min(n,m)$ columns of $U,V$) when the singular values $\sigma_j$'s are all distinct and nonzero. \end{theorem}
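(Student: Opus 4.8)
The plan is to show that once the singular values are fixed, the singular vectors are forced (up to nothing, since we have assumed the $\sigma_j$ are distinct and positive), by reducing the statement to the uniqueness of eigen-decomposition of a symmetric matrix with distinct eigenvalues. Concretely, suppose $M = UDV^\top = \tilde U \tilde D \tilde V^\top$ are two singular value decompositions, restricted to the first $r := \min(n,m)$ columns. First I would argue $D = \tilde D$: both $MM^\top = UD^2U^\top$ and $MM^\top = \tilde U \tilde D^2 \tilde U^\top$ are eigen-decompositions of the same symmetric positive semidefinite matrix, so $D^2$ and $\tilde D^2$ list the same eigenvalues; since entries are ordered and nonnegative, $D = \tilde D$. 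Call this common diagonal matrix $D = \diag(\sigma_1,\dotsc,\sigma_r)$.

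Next I would pin down the left singular vectors. By the Lemma relating SVD to eigen-decomposition, the columns $u_j$ are eigenvectors of $MM^\top$ with eigenvalue $\sigma_j^2$, and likewise for $\tilde u_j$. Since the $\sigma_j$ are distinct and positive, the $\sigma_j^2$ are distinct eigenvalues, so each eigenspace of $MM^\top$ corresponding to $\sigma_j^2$ is one-dimensional. Hence $\tilde u_j = \pm u_j$ for each $j$ (a unit vector in a one-dimensional space is determined up to sign). Let $\epsilon_j \in \{+1,-1\}$ with $\tilde u_j = \epsilon_j u_j$. I would then recover the right singular vectors from the identity $M^\top u_j = \sigma_j v_j$, which follows by reading off the $j$-th column after multiplying $M = \sum_i \sigma_i u_i v_i^\top$ on the left by $M^\top$ and using orthonormality; equivalently $v_j = M^\top u_j / \sigma_j$ since $\sigma_j > 0$. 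Applying the same to the tilde decomposition gives $\tilde v_j = M^\top \tilde u_j / \sigma_j = \epsilon_j M^\top u_j / \sigma_j = \epsilon_j v_j$. Thus the two decompositions differ only by the simultaneous sign flips $u_j \mapsto \epsilon_j u_j$, $v_j \mapsto \epsilon_j v_j$, which leave each rank-one term $\sigma_j u_j v_j^\top = \sigma_j(\epsilon_j u_j)(\epsilon_j v_j)^\top$ unchanged; in the standard convention where one fixes, say, the sign of the first nonzero entry of each $u_j$, we get $\epsilon_j = 1$ and genuine uniqueness.

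The main obstacle — really the only subtlety — is being careful about what "unique" means and about the role of the nonzero hypothesis: if some $\sigma_j = 0$, the vector $v_j$ cannot be recovered via $v_j = M^\top u_j/\sigma_j$, and indeed the singular vectors in the null space are only determined up to an orthonormal change of basis, so the theorem would fail; the distinctness hypothesis is what guarantees the one-dimensional eigenspaces that force $\tilde u_j = \pm u_j$. I would also remark briefly that the inevitable sign ambiguity (and, with complex scalars, a unit-modulus phase) is unavoidable and is exactly why the statement says "unique" with the implicit understanding of this trivial freedom; this can be stated precisely or, following \cite{horn2012matrix} 7.3.5, folded into a sign-normalization convention. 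No hard estimates or constructions are needed — the proof is a short deduction from the eigen-decomposition uniqueness of $MM^\top$ plus the relation $M^\top u_j = \sigma_j v_j$.
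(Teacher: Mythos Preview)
Your argument is correct and is the standard proof: reduce to the eigen-decomposition of $MM^\top$, use distinctness of the $\sigma_j^2$ to force one-dimensional eigenspaces (hence $\tilde u_j=\pm u_j$), and then recover $v_j=M^\top u_j/\sigma_j$ to lock the signs together. Note, however, that the paper does not actually give its own proof of this theorem; it simply states the result with a citation to \cite{horn2012matrix}~7.3.5 and adds a remark about the non-uniqueness of the trailing columns when $n\neq m$, so there is nothing in the paper to compare your route against --- your write-up is essentially the argument one finds in that reference.
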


Note that if $n$ and $m$ are different, say $n < m$, then the last $m - n$ columns of the matrix $V$ can be an arbitrary orthogonal basis that is orthogonal to the previous $n$ right singular vectors, so that is never unique. On the other hand, these columns in $V$ do not change the result of $UDV^\top$, so the decomposition $\sum_{j=1}^{\min\{n,m\}} \sigma_j u_jv_j^\top$ is still unique.
Following the optimization view-point of SVD in Definition~\ref{def:svdopt}, it is standard to sort the diagonal entries of $D$ in descending order. In many applications we only care about the top-$k$ components of the SVD, which suggests the following definition of truncated SVD.

\begin{definition}[Truncated SVD, see \cite{golub1996matrix} 2.5.4] Suppose $M = UDV^\top$ is the SVD of $M \in \R^{n \times m}$ and entries of $D$ are sorted in descending order. Let $U_{(k)} \in \R^{n \times k},V_{(k)}\in \R^{m \times k}$ denote the matrices only including  the first $k$ columns of $U \in \R^{n \times n},V \in \R^{m \times m}$, respectively, and $D_{(k)}$ be the first $k\times k$ submatrix of $D$. Then $U_{(k)}D_{(k)}V_{(k)}^\top$ is called the top-$k$ (rank-$k$) truncated SVD of $M$.
\end{definition}

The truncated SVD can be used to approximate a matrix, and it is optimal in both Frobenius and spectral norms as follows.

\begin{theorem}[Eckart-Young theorem\citep{eckart1936approximation}: optimality of low rank matrix approximation] Let $M = \sum_{j=1}^{\min\{n,m\}} \sigma_j u_jv_j^\top$ be the SVD of matrix $M \in \R^{n \times m}$, and $M_k = \sum_{j=1}^k \sigma_j u_jv_j^\top$ be the {\em truncated} SVD of $M$. Then \(M_k\) is the best rank-$k$ approximation of $M$ in the senses:
\begin{align*}
\|M - M_k\| = \sigma_{k+1} & = \min_{\mbox{rank}(N) \le k} \|M-N\|,\\
\|M - M_k\|_F = \sqrt{\sum_{j=k+1}^{\min\{n,m\}} \sigma_j^2} &  =\min_{\mbox{rank}(N) \le k} \|M-N\|_F.
\end{align*}
\label{thm:eckartyoung}
\end{theorem}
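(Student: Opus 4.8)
The plan is to treat the two claimed equalities separately, and within each first to verify that $M_k$ attains the stated quantity and then to prove a matching lower bound valid for \emph{every} competitor $N$ with $\operatorname{rank}(N)\le k$ (throughout set $\sigma_j:=0$ for $j>\min\{n,m\}$). For the first step: $M-M_k=\sum_{j=k+1}^{\min\{n,m\}}\sigma_j u_j v_j^\top$ is itself written in the form of an SVD, with singular values $\sigma_{k+1}\ge\sigma_{k+2}\ge\cdots$ and orthonormal singular vectors inherited from $M$; hence by the optimization view-point of the SVD (Definition~\ref{def:svdopt}) its spectral norm equals $\sigma_{k+1}$, and since the rank-$1$ terms $u_j v_j^\top$ are pairwise orthogonal in the Frobenius inner product, $\|M-M_k\|_F^2=\sum_{j=k+1}^{\min\{n,m\}}\sigma_j^2$. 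So everything reduces to the two lower bounds.

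\emph{Spectral case.} Fix $N$ with $\operatorname{rank}(N)\le k$. Its null space has dimension $\ge m-k$, while $\operatorname{span}\{v_1,\dots,v_{k+1}\}$ has dimension $k+1$, and $(m-k)+(k+1)>m$, so the two subspaces share a nonzero vector; normalize it to a unit vector $w$. Then $Nw=0$, and expanding $w=\sum_{j=1}^{k+1}\langle v_j,w\rangle v_j$ and using orthonormality of the $u_j$'s gives
\[
\|M-N\|^2\ \ge\ \|(M-N)w\|^2\ =\ \|Mw\|^2\ =\ \sum_{j=1}^{k+1}\sigma_j^2\langle v_j,w\rangle^2\ \ge\ \sigma_{k+1}^2\sum_{j=1}^{k+1}\langle v_j,w\rangle^2\ =\ \sigma_{k+1}^2 ,
\]
so $\|M-N\|\ge\sigma_{k+1}=\|M-M_k\|$.

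\emph{Frobenius case.} Fix $N$ with $\operatorname{rank}(N)\le k$ and let $P$ be the orthogonal projection onto the column space of $N$, so $\operatorname{rank}(P)\le k$ and $N=PN$. Writing $M-N=(I-P)M+P(M-N)$, the two summands have columns in orthogonal subspaces, whence $\|M-N\|_F^2=\|(I-P)M\|_F^2+\|P(M-N)\|_F^2\ge\|(I-P)M\|_F^2=\|M\|_F^2-\tr(PMM^\top)$. Using $MM^\top=\sum_j\sigma_j^2 u_j u_j^\top$ (the lemma relating the singular values of $M$ to the eigenvalues of $MM^\top$) and setting $c_j:=\|Pu_j\|^2\in[0,1]$, we get $\tr(PMM^\top)=\sum_j\sigma_j^2 c_j$ with $\sum_j c_j=\tr(P)\le k$; since $\sigma_1^2\ge\sigma_2^2\ge\cdots$, this is maximized by $c_1=\cdots=c_k=1$ and the rest $0$, so $\tr(PMM^\top)\le\sum_{j=1}^k\sigma_j^2$ and hence $\|M-N\|_F^2\ge\|M\|_F^2-\sum_{j=1}^k\sigma_j^2=\sum_{j=k+1}^{\min\{n,m\}}\sigma_j^2=\|M-M_k\|_F^2$.

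The bookkeeping steps — that $M-M_k$ is in SVD form, the Frobenius-orthogonality of the two column blocks, and the elementary linear-programming fact bounding $\sum_j\sigma_j^2 c_j$ — I would only sketch. The load-bearing ideas are the two lower bounds: the dimension count producing the test vector $w$ in the spectral case, and in the Frobenius case the reduction of an arbitrary $N$ to $PM$ followed by the trace/eigenvalue maximization. I expect the Frobenius lower bound to be the main obstacle, since it requires both the projection reduction and the optimization over the $c_j$, whereas the spectral bound is essentially a single self-contained argument.
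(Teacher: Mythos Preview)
Your proof is correct and follows the standard textbook route to Eckart--Young: the dimension-count/null-space argument for the spectral norm and the projection-plus-trace-maximization argument for the Frobenius norm. Both lower bounds are handled cleanly, and the bookkeeping you flag as routine is indeed routine.

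There is nothing to compare against, however: the paper does not prove this theorem. It states Eckart--Young with a citation to \cite{eckart1936approximation} and moves on immediately to discussing computational aspects of the SVD. The theorem is used as a black box (most notably in the proof of Theorem~\ref{thm: PCA optimality} on PCA optimality, and later in the whitening perturbation analysis), so the authors treat it as classical background rather than something requiring an in-text argument. Your write-up would serve perfectly well as a supplied proof if one were wanted.
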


In addition to the above theoretical guarantees on the optimality of low rank matrix approximation, the SVD of a matrix can be computed efficiently. For general matrices the computation takes time $O(nm\min\{n,m\})$. The truncated SVD can usually be computed much faster, especially when the $k$-th singular value $\sigma_k$ is significantly larger than the $(k+1)$-th singular value (see \cite{golub1996matrix} Section 8.2, together with the discussions in Section 8.6).

We conclude this section by stating the application of SVD in computing the pseudo-inverse of a matrix.

\begin{definition}[Moore-Penrose Pseudo-inverse, see~ \cite{moore1920reciprocal,bjerhammar1951application,penrose1955generalized}]\label{def:pseudoinverse} Given a matrix $M$ of rank $k$, suppose its top-$k$ truncated SVD is $M = UDV^\top$, then the pseudo-inverse of $M$ is defined as $M^\dag = VD^{-1}U^\top$.
\end{definition}

Let $P_r(=VV^\top)$ and $P_c (=UU^\top)$ to be the projection matrix to the row-span and column-span of $M$, respectively; the pseudo-inverse is the only matrix that satisfies $MM^\dag = P_c$ and $M^\dag M=P_r$.

In the next few sections we describe some of other major applications of SVD to data analysis.

\section{Principal Component Analysis}
In this section, we describe Principle Component Analysis (PCA) ~\citep{pearson1901liii,hotelling1933analysis}) as one of the very important and useful statistical methods for data analysis and transformation.
Given data points $x_1,x_2,\dotsc,x_n \in \R^d$ that for simplicity we assume are {\em centered} ($\sum_{i=1}^n x_i = 0$), we are often interested in the {\em covariance matrix} $M \in \R^{d \times d}$:
$$M := \frac{1}{n} \sum_{i=1}^n x_i x_i^\top$$
to describe the statistical properties of the data. This matrix measures how different coordinates of the data are correlated with each other. The covariance matrix $M$ is always positive semi-definite (PSD), and for PSD matrices the SVD always has a symmetric structure such that the left and right singular vectors are the same:
$$
M = UDU^\top = \sum_{j=1}^d \sigma_j u_j u_j^\top.
$$

Given the covariance matrix, we can easily compute the {\em variance} of the data when projected to a particular direction. Suppose $v$ is a unit vector, then we have
$$
\Var[\inner{v,x}] = \E[\inner{v,x}^2] = \E[v^\top xx^\top v] = v^\top \E[xx^\top] v = v^\top M v.
$$
Here we used the fact that matrices are linear operators, and the linearity of the expectation. From this calculation and the optimization view-point of SVD in Definition~\ref{def:svdopt}, it is immediately concluded that the top singular vector $u_1$ is the direction where the data has largest {\em variance} when projected to that direction, i.e., yielding the maximum $\frac{1}{n}\sum_{i=1}^n \inner{x_i,v}^2$. This direction is usually called the {\em principal component} as it is the direction where the data is the most ``spread out''. Similarly, the first $k$ singular vectors $u_1,u_2,\dotsc,u_k$ spans a subspace that has the maximum variance of all $k$-dimensional subspaces. Geometrically, we can view the covariance matrix of the data as an {\em ellipsoid}, and $u_1$ corresponds to the longest axis; see Figure~\ref{fig:pca} for such geometrical representation in 2-dimensional space.

\begin{figure}
\bc
\bp
\psfrag{u1}[l]{\tcr{$u_1$}}\psfrag{u2}[l]{\tcr{$u_2$}}
\includegraphics[width=.5\linewidth]{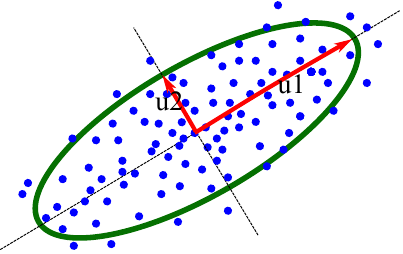}
\ep
\ec
\caption[PCA]{Geometric representation of principle components $u_1$ and $u_2$ for the covariance matrix of data points $x_i$'s.}
\label{fig:pca}
\end{figure}

\subsection{Dimensionality Reduction via PCA}
Principal components can be used to form a lower dimensional subspace and project the data to that subspace. This projection simplifies the data to a much lower dimensional space, while maintaining as much variance of the data as possible as we showed earlier. PCA is the most popular tool for {\em dimensionality reduction} and the main {\em linear} technique for doing that. In the case when the data is inherently low rank (recall the test scores example in Section~\ref{subsec:lowrank}) but may have some noise, doing PCA can often reduce the magnitude of noise (concrete settings where this can be proved includes mixture of Gaussians, see e.g. \cite{blum2016foundations} 3.9.3). 

We now formulate the dimensionality reduction problem more concretely, and provide the guarantee on the optimality of PCA.
Given $n$ data points $x_1,x_2,\dotsc,x_n \in \R^d$, we want to approximate these points with their projection to a lower dimensional subspace in $\Rbb^d$.
The question is what is the best $k$-dimensional ($k<d$) affine subspace in $\Rbb^d$ for such approximations, in the sense that the average distance between the original and approximate points is minimized, i.e.,
\begin{align} \label{eqn:distance min}
(P^*,p_0^*) := & \argmin_{\substack{P \in \Rbb^{d \times d} \\ p_0 \in \Rbb^d}} \frac{1}{n} \sum_{i \in [n]} \left\| x_i - (Px_i + p_0) \right\|^2, \\
\operatorname{s.t.} \ & \rank(P)=k, P^2=P. \nn
\end{align}
Here, $Px_i + p_0$ is the projection of $x_i$ to the $k$-dimensional affine subspace in $\Rbb^d$. This projection is specified by projection operator $P \in \Rbb^{d \times d}$ and displacement vector $p_0 \in \Rbb^d$ (Here, we assume the data points are not necessarily centered).  Note that since the projection is on a $k$-dimensional subspace, we have $\rank(P)=k$. The following theorem shows that PCA is the optimal solution to this problem. This can be proved as a direct corollary of Theorem~\ref{thm:eckartyoung} on optimality of low rank matrix approximation using SVD.

\begin{theorem}[PCA is optimal solution of~\eqref{eqn:distance min}] \label{thm: PCA optimality}
Given $n$ data points $x_1,x_2,\dotsc,x_n \in \R^d$, let $\mu \in \Rbb^d$ and $M \in \Rbb^{d \times d}$ denote the corresponding mean vector and covariance matrix, respectively. Let $M$ have SVD decomposition (the same as eigen-decomposition here) $M = U D U^\top$. Then, the optimal solutions of \eqref{eqn:distance min} are given by
\begin{align*}
P^* & = U_{(k)} U_{(k)}^\top, \\
p_0^* &= (I-P^*) \mu,
\end{align*}
where $U_{(k)} := [u_1 | u_2 | \dotsb | u_k] \in \Rbb^{d \times k}$ is the matrix including the top $k$ eigenvectors of $M$.
\end{theorem}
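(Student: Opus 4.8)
The plan is to split the optimization in~\eqref{eqn:distance min} into two stages: first optimize over the displacement $p_0$ with $P$ held fixed, then optimize over $P$. For fixed $P$ the objective $\frac1n\sum_{i\in[n]}\|(x_i - Px_i) - p_0\|^2$ is a convex quadratic in $p_0$, so setting its gradient to zero yields the unique global minimizer $p_0^* = \frac1n\sum_{i\in[n]}(x_i - Px_i) = (I-P)\mu$. This already gives the claimed formula for $p_0^*$, and substituting it back turns each residual into $x_i - Px_i - p_0^* = (I-P)(x_i-\mu)$.

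Write $Y := [\,x_1-\mu \mid \dotsb \mid x_n - \mu\,] \in \R^{d\times n}$ for the centered data matrix, so that $YY^\top = nM$. After the first stage the problem becomes
\[
\min_{P\,:\,P^2=P,\ \rank(P)=k}\ \frac1n \sum_{i\in[n]}\bigl\|(I-P)(x_i-\mu)\bigr\|^2 \ =\ \frac1n\,\min_{P\,:\,P^2=P,\ \rank(P)=k}\ \|Y - PY\|_F^2 .
\]
The key observation is that for every feasible $P$ the matrix $PY$ has rank at most $\rank(P)=k$, hence $\|Y-PY\|_F^2 \ge \min_{\rank(N)\le k}\|Y-N\|_F^2$. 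By the Eckart--Young theorem (Theorem~\ref{thm:eckartyoung}) applied to $Y$, this lower bound equals $\|Y - Y_k\|_F^2$ where $Y_k$ is the rank-$k$ truncated SVD of $Y$; and since the top-$k$ left singular vectors of $Y$ are precisely the top-$k$ eigenvectors of $YY^\top = nM$, i.e.\ the columns of $U_{(k)}$, we get $Y_k = U_{(k)}U_{(k)}^\top Y$.

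It remains to exhibit a feasible $P$ attaining the bound. Take $P^* := U_{(k)}U_{(k)}^\top$; because $U_{(k)}^\top U_{(k)} = I$ this matrix is idempotent and has rank $k$, so it is feasible, and $P^* Y = Y_k$ meets the lower bound. Therefore $P^*$ is optimal, and the corresponding optimal displacement is $p_0^* = (I-P^*)\mu$, as claimed. The point that needs care is that the feasible set allows \emph{oblique} projections (we impose only $P^2=P$, not $P = P^\top$), so a priori the optimum might not be an orthogonal projection; but the argument handles this for free, since the rank bound $\rank(PY)\le k$ --- and hence the Eckart--Young lower bound --- applies to every rank-$k$ idempotent, symmetric or not. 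The only remaining subtlety is the usual non-uniqueness when $\sigma_k(M)=\sigma_{k+1}(M)$: then the minimizer is not unique, but the stated $P^*$ is still \emph{an} optimal solution, which is all that~\eqref{eqn:distance min} asks for.
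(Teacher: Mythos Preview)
Your proof is correct, and it takes a different route from the paper's own argument. The paper's proof, after the same first step of eliminating $p_0$, invokes the Pythagorean identity
\[
\|(I-P)(x_i-\mu)\|^2 = \|x_i-\mu\|^2 - \|P(x_i-\mu)\|^2
\]
to turn the minimization into the maximization of $\tr[PMP^\top]$, which is then handled by a Rayleigh-quotient argument. You instead go directly through Eckart--Young on the centered data matrix $Y$: since $\rank(PY)\le k$ for any feasible $P$, the best possible value is $\|Y-Y_k\|_F^2$, and $P^*=U_{(k)}U_{(k)}^\top$ attains it.

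The main thing your approach buys is exactly what you flagged: the feasible set in~\eqref{eqn:distance min} only imposes $P^2=P$, not $P=P^\top$, so oblique projections are allowed. The paper's Pythagorean step tacitly assumes $P$ is an \emph{orthogonal} projection (otherwise $P(x_i-\mu)$ and $(I-P)(x_i-\mu)$ need not be orthogonal), so strictly speaking that argument only searches over symmetric idempotents. Your Eckart--Young lower bound applies uniformly to all rank-$k$ idempotents, so it closes that gap without extra work. (Amusingly, the paper's text introducing the theorem says it ``can be proved as a direct corollary of Theorem~\ref{thm:eckartyoung}'' --- which is precisely what you did --- even though the written proof then proceeds differently.) Your remark on non-uniqueness when $\sigma_k(M)=\sigma_{k+1}(M)$ is also a fair caveat that the paper does not address.
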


\begin{figure}
    \centering
    \includegraphics[width=0.5\linewidth]{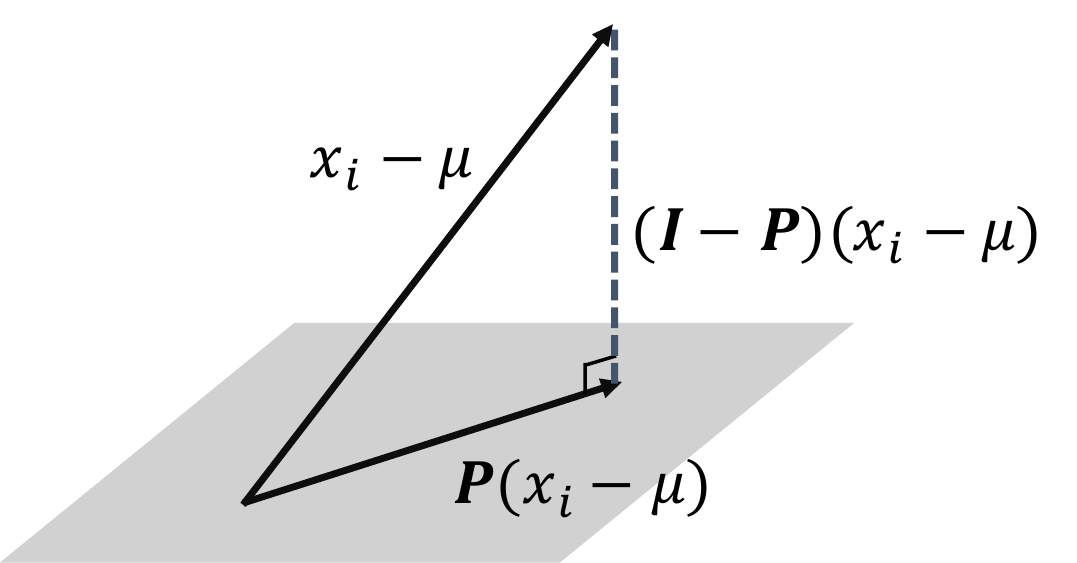}
    \caption{Visualization of the Pythagorean relation used in the proof of Theorem~\ref{thm: PCA optimality}.}
    \label{fig:th2_4}
\end{figure}

\bprf
Fixing $P$, the $p_0$ which minimizes cost function in \eqref{eqn:distance min} is $p_0^* = (I-P) \mu$.
Therefore, we have
\begin{align*}
\sum_{i \in [n]} \left\| x_i - (Px_i + p_0) \right\|^2 
&= \sum_{i \in [n]} \left\| (I-P)(x_i - \mu) \right\|^2 \\
&= \sum_{i \in [n]} \left\| x_i - \mu \right\|^2 - \sum_{i \in [n]} \left\| P(x_i - \mu) \right\|^2,
\end{align*}
where we used Pythagorean theorem in the last equality; see Figure~\ref{fig:th2_4} for its visualization. Therefore, the optimal solution $P^*$ maximizes the variance of projected points into the lower dimensional subspace as
\begin{align*}
\var(PX) = \frac{1}{n} \sum_{i \in [n]} \left\| P(x_i - \mu) \right\|^2
&= \frac{1}{n} \sum_{i \in [n]} (x_i - \mu)^\top P^\top P (x_i - \mu) \\
&= \frac{1}{n} \sum_{i \in [n]} \tr \left[ P (x_i - \mu) (x_i - \mu)^\top P^\top \right] \\
&= \tr \left[ P M P^\top \right].
\end{align*}
From Rayleigh quotient argument, we know that for the case of $k=1$, the $P^*$ which maximizes above is $P^* = u_1 u_1^\top$. Similar argument can be extended to larger $k$ which leads to $P^*=U_{(k)} U_{(k)}^\top$.
\eprf

From the above proof, we again see that PCA selects the lower dimensional subspace which has the maximum variance of projected points. Of course, the quality of this approximation still depends on the rank $k$ that we choose. When the data is assumed to come from a generative model, one can often compute $k$ by looking for a spectral gap (see e.g., Chapters 7 and 9 in \cite{blum2016foundations}). In practice, one can first choose an accuracy and then find the smallest $k$ that achieves the desired accuracy.

\section{Whitening Transformation}
\label{sec:whitening}
Another popular application of Singular Value Decomposition is to transform the data into {\em isotropic} position. We call a data set $z_1,z_2,\dotsc,z_n \in \R^d$ isotropic or {\em whitened} if the covariance matrix $$M_z := \frac{1}{n} \sum_{i=1}^n z_iz_i^\top = I_d,$$ where $I_d$ denotes the $d$-dimensional identity matrix. This basically means that the data has the {\em same} amount of variance in every direction. Whitening transformation has been discovered and applied in many domains~\citep{friedman1987exploratory,koivunen1999feasibility}. The benefit of whitening transformation is that the result is {\em invariant} under linear transformations of the original data. Raw data is often not measured in the most natural way \--- think again about the test score example in Section~\ref{subsec:lowrank}, it is possible that a math exam is graded in 100 points and  a writing exam has points in the range of 0 to 5. A na\"ive algorithm might incorrectly think that correlations with math exam is much more important because the scale is 20 times more than the writing exam. However, change of scaling is also a linear transformation, therefore applying  whitening transformation to the data can avoid these misconceptions.

In order to do this, suppose the original data is $x_1,x_2,\dotsc,x_n \in \R^d$ whose covariance matrix $M_x := \frac{1}{n} \sum_{i=1}^n x_ix_i^\top$ is not the identity matrix. Intuitively, we would like to shrink the directions that have more variance and stretch the directions that have less variance. This can again be done by SVD as follows. Recall $M_x$ is a PSD matrix whose SVD can be written as $M_x = UDU^\top$. Construct the {\em whitening matrix}
\begin{equation} \label{eq:whitening-mat}
W := UD^{-1/2},
\end{equation}
and let $z_i := W^\top x_i$. Now we have
$$
M_z := \frac{1}{n} \sum_{i=1}^n z_iz_i^\top  = W^\top M_xW = D^{-1/2}U^\top (U D U^\top) U D^{-1/2} = I_d,
$$
and hence, the transformed data points $z_i$'s are isotropic. 

Note that the result of whitening transformation can be very fragile if the smallest singular value of the data is very close to 0; see the inversion in $D^{-1/2}$. In practice, whitening is often performed after we identify the important directions using Principle Component Analysis. Similar whitening idea is useful later for designing tensor decomposition algorithms which we describe in Section~\ref{sec:whitening-ten}.

\section{Canonical Correlation Analysis}
\label{sec:cca}
All of the techniques that we have discussed so far (SVD, PCA, whitening) focus on extracting the properties of a single data set. On the other side, often in practice we would also like to understand relationships between two different sets of data. In this section, we describe the Canonical Correlation Analysis (CCA)~\citep{hotelling1992relations}), which is a very useful method to analyze the cross-covariance matrix between two different data sets. Many of the concepts introduced earlier, such as SVD and whitening, are used to describe the CCA.

Consider two sets of data points $x_1,x_2,\dotsc,x_n\in \R^{d_1}$ and $y_1,y_2,\dotsc,y_n\in \R^{d_2}$. If we again use the test scores example from Section~\ref{subsec:lowrank}, the first set of vectors $x_i$ would represent the performance of student $i$ in different exams,  while the second set of vectors $y_i$ would represent other properties of the students, e.g., the student's future salary. A natural question is whether these two data sets are {\em correlated}. Intuitively, the correlation between two directions $u\in \R^{d_1}$ and $v\in \R^{d_2}$ can be defined as 
$\frac{1}{n} \sum_{i=1}^n \inner{u,x_i}\inner{v,y_i}$. However, this definition is not very robust \--- if we apply a linear transformation to $x$ (say we multiply the first coordinate of $x$ by $10^6$), then the maximum correlation is likely to be changed (in this case likely to have more weight on first coordinate of $x$). Therefore, to measure correlations robustly and get rid of the influence from individual data sets, Canonical Correlation Analysis tries to find the maximum correlation after {\em whitening} both $x$ and $y$; see Algorithm~\ref{algo:cca} for the details. In the remaining of this section, we describe how the CCA algorithm is designed and works.

\begin{algorithm}[t]
\caption{Canonical Correlation Analysis (CCA)}
\label{algo:cca}
\begin{algorithmic}[1]
\renewcommand{\algorithmicrequire}{\textbf{input}}
\renewcommand{\algorithmicensure}{\textbf{output}}
\REQUIRE Two data sets $x_1,x_2,\dotsc,x_n\in \R^{d_1}$ and $y_1,y_2,\dotsc,y_n\in \R^{d_2}$
\ENSURE Most cross-correlated directions between whitened pairs
\STATE Compute the covariance matrices $$M_x := \frac{1}{n} \sum_{i=1}^n x_ix_i^\top, \quad M_y := \frac{1}{n} \sum_{i=1}^n y_i y_i^\top.$$
\STATE Use SVD to compute the whitening matrices $W_x,W_y$; see Equation~\eqref{eq:whitening-mat}.
\STATE Compute the correlation matrix
$$M_{\tilde{x}\tilde{y}} := \frac{1}{n} \sum_{i=1}^n W_x^\top x_iy_i^\top W_y.$$
\STATE Use SVD to compute the left and right singular vectors $\{(\tilde{u}_j, \tilde{v}_j)\}$ for $M_{\tilde{x}\tilde{y}}$.
\RETURN $(W_x\tilde{u}_i, W_y\tilde{v}_i)$.
\end{algorithmic}
\end{algorithm}

Let $M_x,M_y$ be the covariance matrices of $\{x_i\}$'s and $\{y_i\}$'s, and let $W_x$ and $W_y$ be the corresponding whitening matrices; see Algorithm~\ref{algo:cca} for the precise definitions. Let
$$\tilde{x}_i := W_x^\top x_i, \quad \tilde{y}_i := W_y^\top y_i$$
be the whitened data. We would like to find the most correlated directions in this pair of whitened data, i.e., we would like to find unit vectors $\tilde{u},\tilde{v}$ such that $u^\top [\frac{1}{n}\sum_{i=1}^n \tilde{x}_i \tilde{y}_i^\top] v$ is maximized, i.e.,
$$
\tilde{u},\tilde{v} := \argmax_{\|u\| = \|v\| = 1} \frac{1}{n} \sum_{i = 1}^n \inner{u,\tilde{x}_i}\inner{v,\tilde{y}_i}.
$$
By Definition~\ref{def:svdopt}, it is immediate to see that $\tilde{u},\tilde{v}$ are actually the left and right top singular vectors of the cross-covariance matrix $M_{\tilde{x}\tilde{y}} := \frac{1}{n} \sum_{i=1}^n \tilde{x}_i \tilde{y}_i^\top$, and this pair of directions are where the two data sets are most correlated. It is also possible to define more pairs of vectors $(\tilde{u}_j, \tilde{v}_j)$'s that correspond to the smaller singular vectors of the same matrix.

Often we would like to interpret the direction in the original data sets instead of the whitened ones. To do that, we would like to find a vector $u$ such that $\inner{u,x_i} = \inner{\tilde{u},\tilde{x}_i}$. That is,
$$u^\top x_i = \tilde{u}^\top  \tilde{x}_i = \tilde{u}^\top W_x^\top x_i,$$
and thus, we need to have
$$u = W_x \tilde{u}.$$
On the other hand, by construction we have $W_x^\top M_x W_x = I$, and therefore, since $\tilde{u}$ has unit norm, $\tilde{u}^\top W_x^\top M_x W_x \tilde{u} = 1$, which is to say $u^\top M_x u = 1$ given above equality.
Similarly, the fact that $\tilde{u}_j$ and $\tilde{u}_l$ are orthogonal if $j\ne l$ means that $u_j^\top M_x u_l = 0$. In general, it is possible to define a different inner product
$$\inner{u_j,u_l}_{M_x} := u_j^\top M_x u_l,$$
and a corresponding vector norm $\|u\|_{M_x} := \sqrt{u^\top M_x u}$, and the vectors $u_j$'s will be orthonormal under this new inner product $<\cdot, \cdot>_{M_x}$. Similarly, $v_j$'s should be orthonormal under the inner product $<\cdot, \cdot>_{M_y}$. Using these constraints, we can describe Canonical Component Analysis more precisely as below

\begin{definition} [Canonical Correlation Analysis]\label{def:cca} Given two data sets $x_1,x_2,\dotsc,x_n\in \R^{d_1}$ and $y_1,y_2,\dotsc,y_n\in \R^{d_2}$ (without loss of generality, assume $d_1\le d_2$), let $$M_x := \frac{1}{n} \sum_{i=1}^n x_ix_i^\top, \quad M_y = \frac{1}{n} \sum_{i=1}^n y_i y_i^\top$$ be the corresponding covariance matrices, respectively. Canonical Correlation Analysis (CCA) finds a set of correlated directions $u_1,u_2,\dotsc,u_{d_1}$ and $v_1,v_2,\dotsc,v_{d_1}$ such that $u_j^\top M_x u_j = 1$, $v_j^\top M_y v_j = 1$. The top correlated directions $u_1,v_1$ are similar to  the top singular vectors as
$$
u_1,v_1 = \argmax_{u^\top M_x u = 1, v^\top M_y v = 1} \frac{1}{n} \sum_{i=1}^n \inner{u,x_i}\inner{v,y_i}.
$$
Similarly, the remaining most correlated directions are defined as the remaining singular vectors
$$
u_j,v_j = \argmax_{\begin{array}{c} u^\top M_x u = 1, v^\top M_y v = 1\\ \forall l < j \quad u^\top M_x u_l = 0, v^\top M_y v_l = 0\end{array}} \frac{1}{n} \sum_{i=1}^n \inner{u,x_i}\inner{v,y_i}.
$$
\end{definition}

The derivation of Canonical Correlation Analysis already gives an efficient algorithm as provided in Algorithm~\ref{algo:cca}. 
It is not hard to verify the correctness of this algorithm, because after the linear transforms $W_x$ and  $W_y$, the objective and constraints in Definition~\ref{def:cca} become exactly the same as those in Definition~\ref{def:svdopt}.

The idea of Canonical Correlation Analysis is widely used in data analysis. In particular, CCA can find directions that are ``aligned'' in two data sets. The same idea is also used in tensor decompositions to ``align'' different views of the data, see Section~\ref{sec:whitening-ten}.

Using the full SVD to compute the CCA can be expensive in practice. Recently there have been several works that give efficient algorithms for computing top CCA vectors over large data sets, e.g., see \citet{wang2016globally, ge2016efficient, allen2016first, allen2016doubly}.
\clearpage{}

\clearpage{}\chapter{Tensor Decomposition Algorithms} \label{ch:tensor-decomp}

In this chapter, we first introduce the basic concepts of tensors and state the tensor notations that we need throughout this monograph. In particular, we highlight why in many cases we need to use tensors instead of matrices and provide the guarantees on uniqueness of tensor decomposition. Then, we describe different algorithms for computing tensor decomposition.

Most of the materials in this chapter has appeared in existing literature, especially in  \cite{AnandkumarEtal:tensor12}. We do give more explanations on the whitening procedure in Section~\ref{sec:whitening-ten} and symmetrization procedure in Section~\ref{subsec:nonsymmetric}, which were used in many previous papers but were not explicitly discussed in their general forms. We also give a new perturbation analysis for tensor power method together with whitening procedure in Section~\ref{sec:robust_with_whiten}, which will be useful for many of the applications later in Section~\ref{ch:applications}. 

\section{Transition from Matrices to Tensors} \label{sec:totensors}

We can think of tensors as multi-dimensional arrays, and one of the easiest ways to get a tensor is by stacking matrices of the same dimensions resulting in third order tensors. Let us recall the test scores example proposed in Section~\ref{subsec:lowrank}. Now suppose each exam has two parts \--- {\em written} and {\em oral}. Instead of the single score matrix $M$ that we had before, we will now have two score matrices $M_{\text{written}}$ and $M_{\text{oral}}$ including the scores for written and oral exams, respectively. Similar to the earlier score matrix $M$,  the rows of these matrices are indexed by students, and their columns are indexed by  subjects/tests. When we stack these two matrices together, we get a $n\times m\times 2$ tensor, where the third dimension is now indexed by the test format (written or oral). See the tensor in the left hand side of Figure~\ref{fig:tenDecomp-example} as the stacking of two score matrices  $M_{\text{written}}$ and $M_{\text{oral}}$.

Recall the simplified hypothesis states that there are two kinds of intelligence \--- {\em quantitative} and {\em verbal}; see Section~\ref{subsec:lowrank} to review it. Different students have different strengths, and different subjects/tests also have different requirements. As a result, the score was assumed to be a bilinear function of these hidden components; see~\eqref{eqn:scoreExp-decomp}. Now with the third dimension, it is also reasonable to expect the two kind of intelligence might behave differently in different formats \--- intuitively, verbal skills might be slightly more important in oral exams. As a result, we can generalize the bilinear function to a {\em tri-linear} form as
\begin{align*}
\text{Score}(\text{student},\text{test}&,\text{format}) = \\
& \text{student}_{\text{verbal-intlg.}} \times \text{test}_{\text{verbal}} \times \text{format}_{\text{verbal}}  \\
& + \text{student}_{\text{quant-intlg.}} \times \text{test}_{\text{quant.}} \times \text{format}_{\text{quant.}},
\end{align*}
where  $\text{format}_{\text{verbal}}$ and $\text{format}_{\text{quant.}}$ denote the importance of verbal and quantitative intelligence in different formats, respectively.
Now similar to what we did for the matrices, we can propose the following formula as {\em decomposing} the tensor into the sum of two rank-1 components as
\begin{align} \label{eqn:example-tenDecomp}
(M_{\text{written}},M_{\text{oral}}) =& \  u_{\text{verbal}}\otimes v_{\text{verbal}}\otimes w_{\text{verbal}} \\
& + u_{\text{quant.}}\otimes v_{\text{quant.}}\otimes w_{\text{quant.}}. \nn
\end{align}
Here $\otimes$ is the {\em tensor/outer product} operator which we will formally define in the next subsection; see~\eqref{eqn:rank-1 tensor}. $u_{\text{verbal}}, u_{\text{quant.}} \in \R^n$ and $v_{\text{verbal}}, v_{\text{quant.}} \in \R^m$ are the same as in Section~\ref{subsec:lowrank}, and the new components $w_{\text{verbal}}, w_{\text{quant}} \in \R^2$ correspond to verbal/quantitative importance for different formats, e.g., $w_{\text{verbal}}(\text{oral})$ denotes the importance of verbal intelligence in tests with oral format. This is a natural generalization of matrix decomposition/rank to tensors, which is commonly referred to as the CP (CANDECOMP/PARAFAC)~\citep{hitchcock1927expression,carroll1970analysis,harshman1994parafac}) decomposition/rank of tensors; we will formally define that in~\eqref{eqn:tensordecomp}. In fact, the tensor CP decomposition in~\eqref{eqn:example-tenDecomp} can be thought as a shared decomposition of matrices $M_{\text{written}}$ and $M_{\text{oral}}$ along the first two modes (corresponding to vectors $u_{\text{verbal}}, u_{\text{quant.}}, v_{\text{verbal}}, v_{\text{quant.}}$) with extra weight factors which are collected in the third mode as vectors $w_{\text{verbal}}, w_{\text{quant}}$. This is graphically represented in Figure~\ref{fig:tenDecomp-example}.

\begin{figure}
\bc
\includegraphics[width=.7\linewidth]{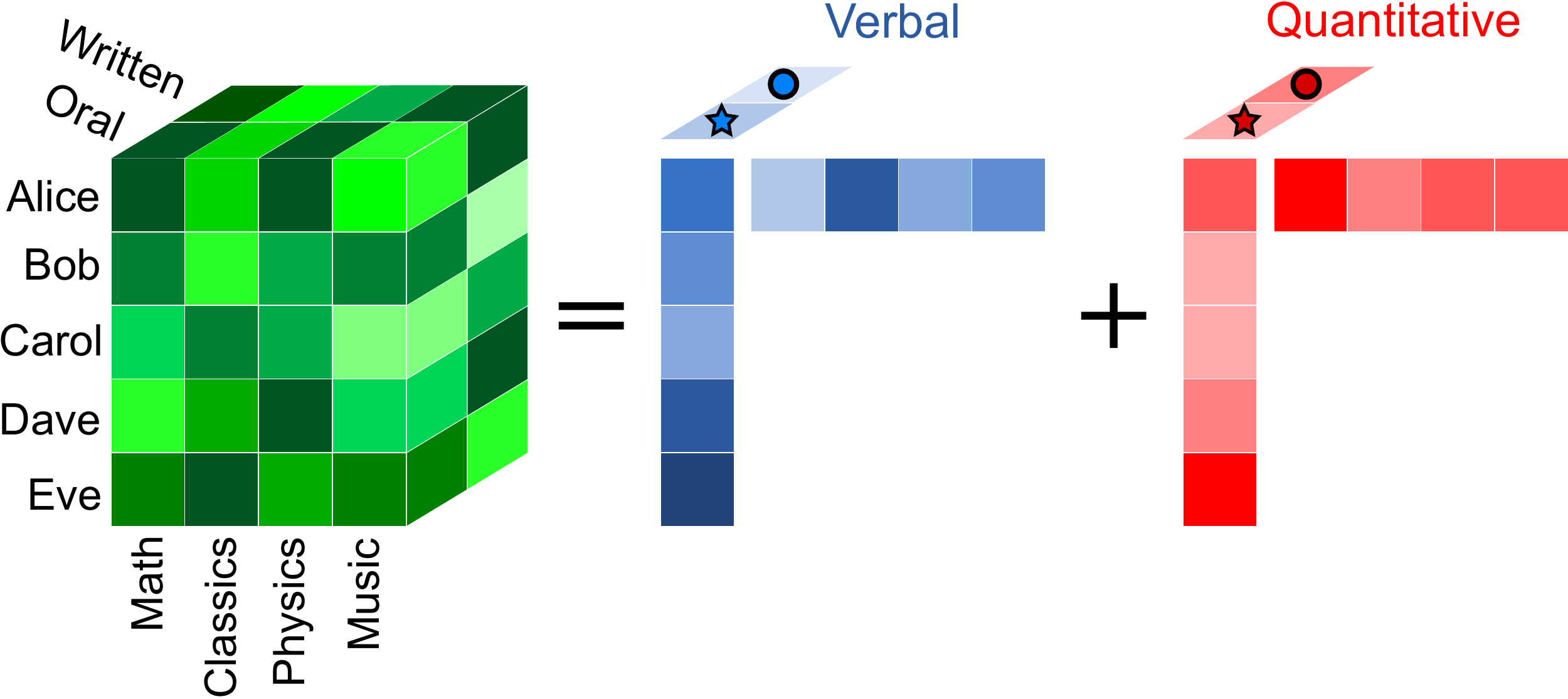}
\ec
\caption[]{Graphical representation of CP decomposition for score tensor in~\eqref{eqn:example-tenDecomp}. The matrix slices (oral and written) share a 2D-decomposition along the first two modes, and having the different weight factors in the third mode. The symbols $\star$ and $\circ$ represent the importance weight factors for oral and written formats, respectively.}
\label{fig:tenDecomp-example}
\end{figure}

\paragraph{Why using tensors instead of matrices?}Until now going to the tensor format just seems to make things more complicated. What additional benefits do we get? One important property of tensor decomposition is {\em uniqueness}. When we have only one matrix of test scores, the matrix decomposition is not unique most of the time; recall Figure~\ref{fig:nonUnique-decomposition} where we provided an example of this situation happening. The ambiguity makes it hard to answer even some of the most basic questions such as: which student has the best quantitative strength? On the other hand, under mild conditions (see Section~\ref{sec:unique-CP} for a formal discussion), the tensor decomposition is unique! Finding the unique decomposition allows us to pin down the vectors for students' strengths.

For learning latent variable models and latent representations, the {\em uniqueness} of tensor decomposition often translates to {\em identifiability}. We say a set of statistics makes the model {\em identifiable}, if there is only a unique set of parameters that can be consistent with what we have observed. Matrix decompositions usually correspond to pairwise correlations. Because of the ambiguities discussed earlier, for most latent variable models, pairwise correlations do not make the model identifiable. On the other hand, since tensor decompositions are unique, once we go to correlations between three or more objects, the models become identifiable. The example of learning a pure topic model was discussed in Section~\ref{sec:intro-LVMs} and many more examples are provided in Section~\ref{sec:applications}.

Tensor decomposition has also applications in many other areas such as chemometrics~\citep{appellof1981strategies}, neuroscience~\citep{mocks1988topographic}, telecommunications~\citep{sidiropoulos2000parallel}, data mining~\citep{acar2005modeling}, image compression and classification~\citep{shashua2001linear}, and so on; see survey paper by \citet{kolda_survey} for more references. 

\paragraph{Difficulties in Working with Tensors:}The benefit of unique decomposition comes at a cost. Although we can usually generalize notions for matrices to tensors, their counterpart in tensors are often not as well-behaved or easy to compute. In particular, tensor (CP) decomposition is much harder to compute than matrix decomposition. In fact, almost all tensor problems are NP-hard in the worst-case~\citep{TensorNPHard}. Therefore, we can only hope to find tensor decomposition in special cases. Luckily, this is usually possible when the rank of the tensor is much lower than the size of its modes which is true for many of the applications. Later in Sections~\ref{sec:orthogonal}--\ref{sec:als}, we will introduce algorithms for low rank tensor decomposition. When the rank of the tensor is high, especially when the rank is larger than the dimensions (which cannot happen for matrices), we may need more complicated techniques which we discuss in Section~\ref{sec:overcomplete-decomp}.

\section{Tensor Preliminaries and Notations} \label{sec:TenNotations}

In this section we describe some preliminary tensor concepts and provide formal tensor notations.

A real-valued \emph{$p$-th order tensor} $$T \in \bigotimes_{i=1}^p \R^{d_i}$$ is a member of the outer product of Euclidean spaces $\R^{d_i}$, $i \in [p]$, where $[p] := \{1,2,\dotsc,p\}$.
For convenience, we restrict to the case where $d_1 = d_2 = \dotsb = d_p = d$, and simply write $T \in \bigotimes^p \R^d$.
As is the case for vectors (where $p=1$) and matrices (where $p=2$), we may
identify a $p$-th order tensor with the $p$-way array of real numbers $[
T_{i_1,i_2,\dotsc,i_p} \colon i_1,i_2,\dotsc,i_p \in [d] ]$, where
$T_{i_1,i_2,\dotsc,i_p}$ is the $(i_1,i_2,\dotsc,i_p)$-th entry of $T$
with respect to a canonical basis.
A tensor is also called {\em symmetric} if the entry values are left unchanged by the permutation of any indices.
For convenience, we provide the concepts and results only for  third order tensors $(p=3)$ in the rest of this section. These can be similarly extended to higher order tensors.

\paragraph{Tensor modes, fibers and slices:}The different dimensions of the tensor are referred to as {\em modes}. For instance, for a matrix, the first mode refers to columns and the second mode refers to rows.
In addition, {\em fibers} are higher order analogues of matrix rows and columns. A fiber is obtained by fixing all but one of the indices of the tensor and is arranged as a column vector. For instance, for a matrix, its mode-$1$ fiber is any matrix column while a mode-$2$ fiber is any row. For a
third order tensor $T\in \R^{d \times d \times d}$, the mode-$1$ fiber is given by $T(:, j, l)$, mode-$2$ by $T(i, :, l)$ and mode-$3$ by $T(i, j, :)$ for fixed indices $i, j , l$.
Similarly, {\em slices} are obtained by fixing all but two of the indices of the tensor and are represented as matrices. For example, for the third order tensor $T\in \R^{d \times d \times d}$, the slices along $3$rd mode are given by $T(:, :, l)$. See Figure~\ref{fig:tenFiberSlices} for a graphical representation of tensor fibers and slices for a third order tensor.

\begin{figure}[t]
\bc
\includegraphics[width=.2\linewidth]{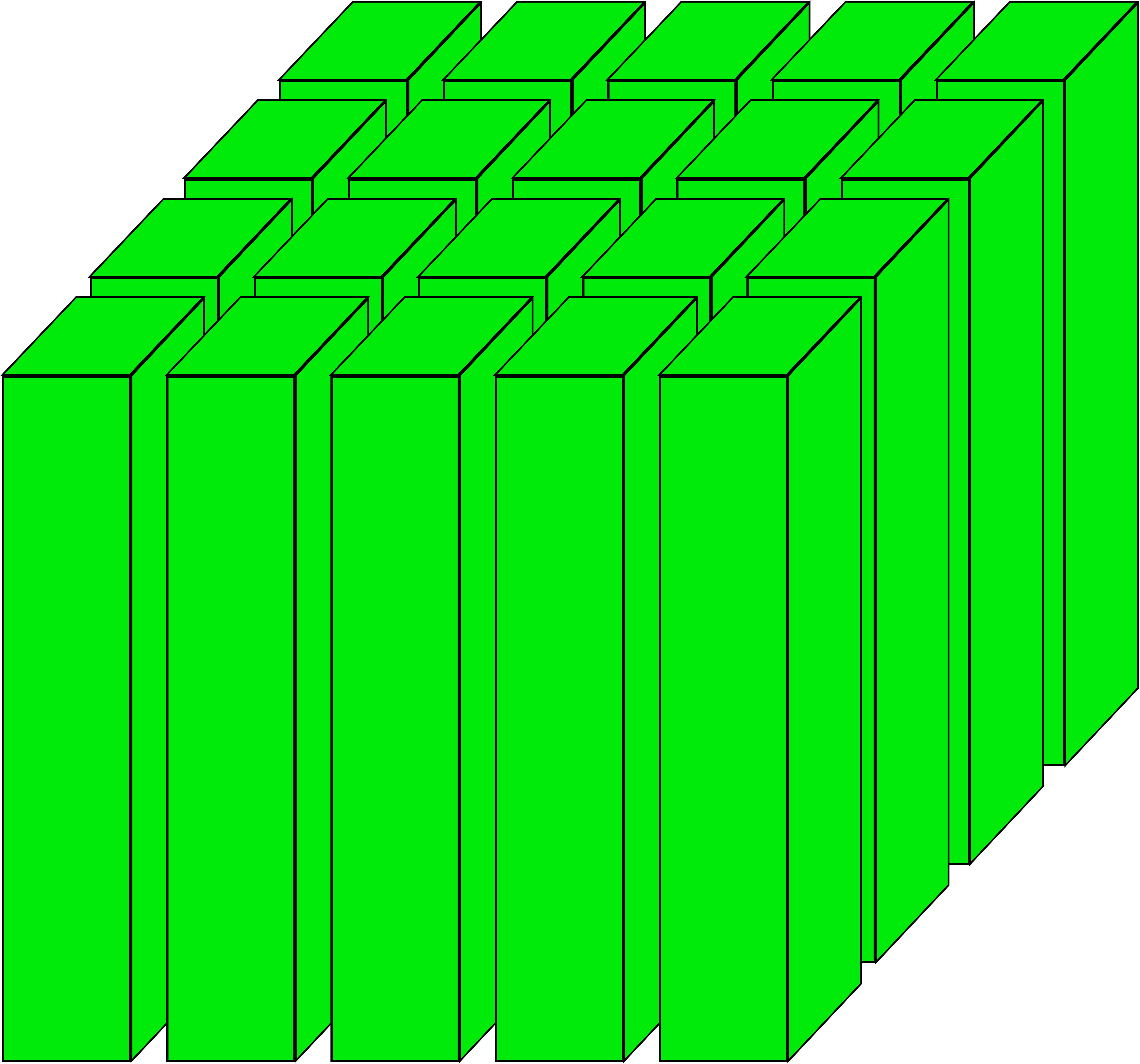}
\hspace{0.5in}
\includegraphics[width=.2\linewidth]{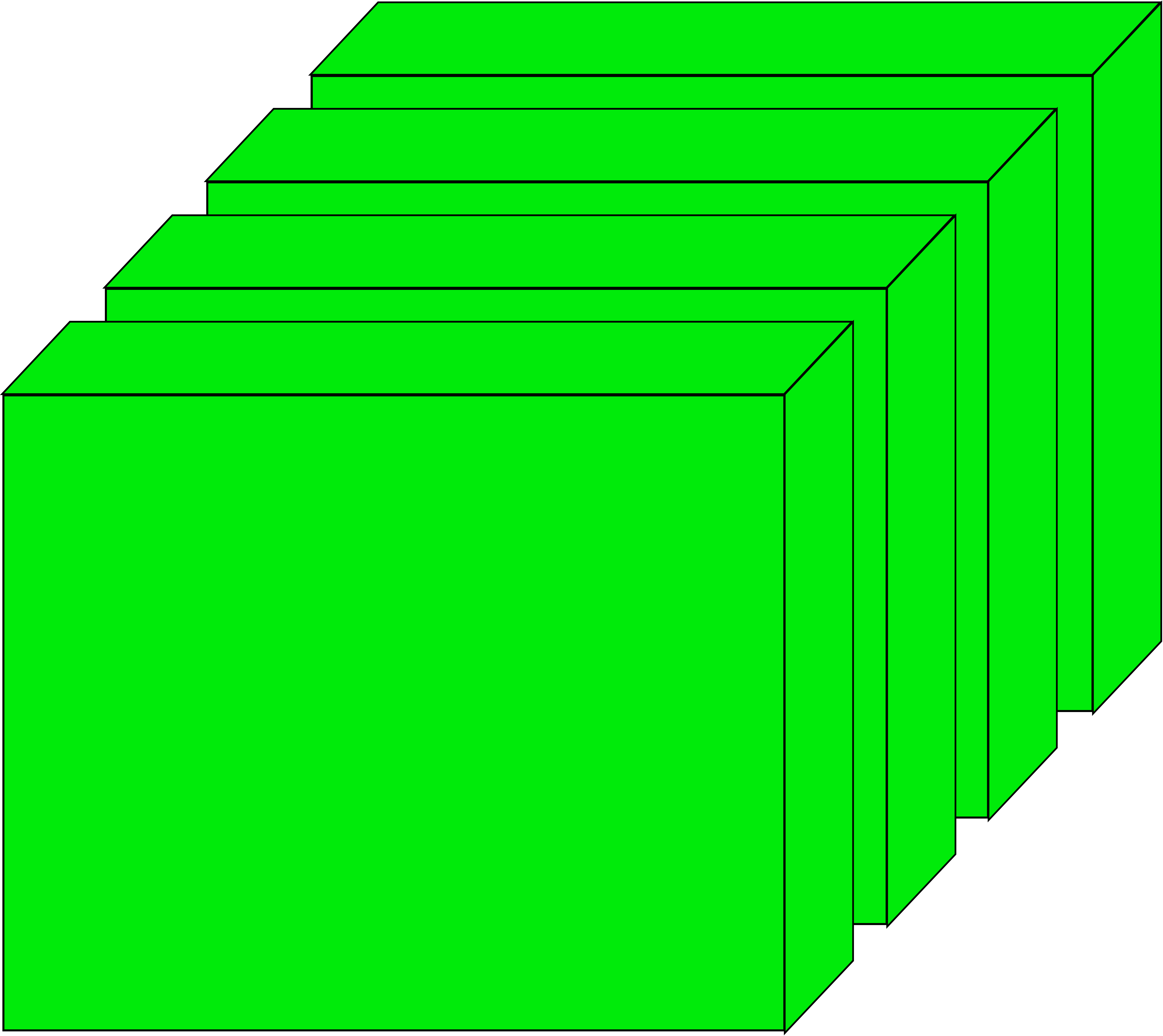}
\ec
\caption[]{Graphical representations of tensor fibers (left) and tensor slices (right) for a third order tensor.}
\label{fig:tenFiberSlices}
\end{figure}

\paragraph{Tensor matricization:}Transforming tensors into matrices is one of the ways to work with tensors.
For $r \in \{1,2,3\}$, the mode-$r$ matricization of a third order tensor $T\in \R^{d \times d \times d}$, denoted by $\mat(T,r) \in \Rbb^{d \times d^2}$, consists of all mode-$r$ fibers arranged as column vectors.
For instance, the matricized version along first mode denoted by $M \in \R^{d \times d^2}$ is defined such that
\begin{equation} \label{eqn:matricization}
T(i,j,l) = M(i,l+(j-1)d), \quad i,j,l \in [d].
\end{equation}

\paragraph{Multilinear transformation:}We view a tensor $T \in \Rbb^{d \times d \times d}$ as a multilinear form. 
Consider matrices $A,B,C \in \R^{d \times k}$. Then tensor $T(A,B,C) \in \R^{k \times k \times k}$ is defined such that
\begin{align} \label{eqn:multilinear form def}
T(A,B,C)_{j_1,j_2,j_3} := \sum_{i_1, i_2,i_3\in[d]} T_{i_1,i_2,i_3} \cdot A(i_1, j_1) \cdot B(i_2, j_2) \cdot C(i_3, j_3).
\end{align}
See Figure~\ref{fig:Tucker-intro} for a graphical representation of multilinear form.
In particular, for vectors $u,v,w \in \R^d$, we have
\begin{equation} \label{eqn:rank-1 update}
 T(I,v,w) = \sum_{j,l \in [d]} v_j w_l T(:,j,l) \ \in \R^d,
\end{equation}
which is a multilinear combination of the tensor mode-$1$ fibers.
Similarly $T(u,v,w) \in \R$ is a multilinear combination of the tensor entries,  and $T(I, I, w) \in \R^{d \times d}$ is a linear combination of the tensor slices. These multilinear forms can be similarly generalized to higher order tensors.

In the matrix case of $M \in \R^{d \times d}$, all above multilinear forms simplify to familiar matrix-matrix and matrix-vector products such that
\begin{align*}
M(A,B) &:= A^\top M B \in \R^{k \times k}, \\
M(I,v) &:= Mv = \sum_{j \in [d]} v_j M(:,j) \in \R^d.
\end{align*}

\begin{figure}
\bc
\includegraphics[width=.5\linewidth]{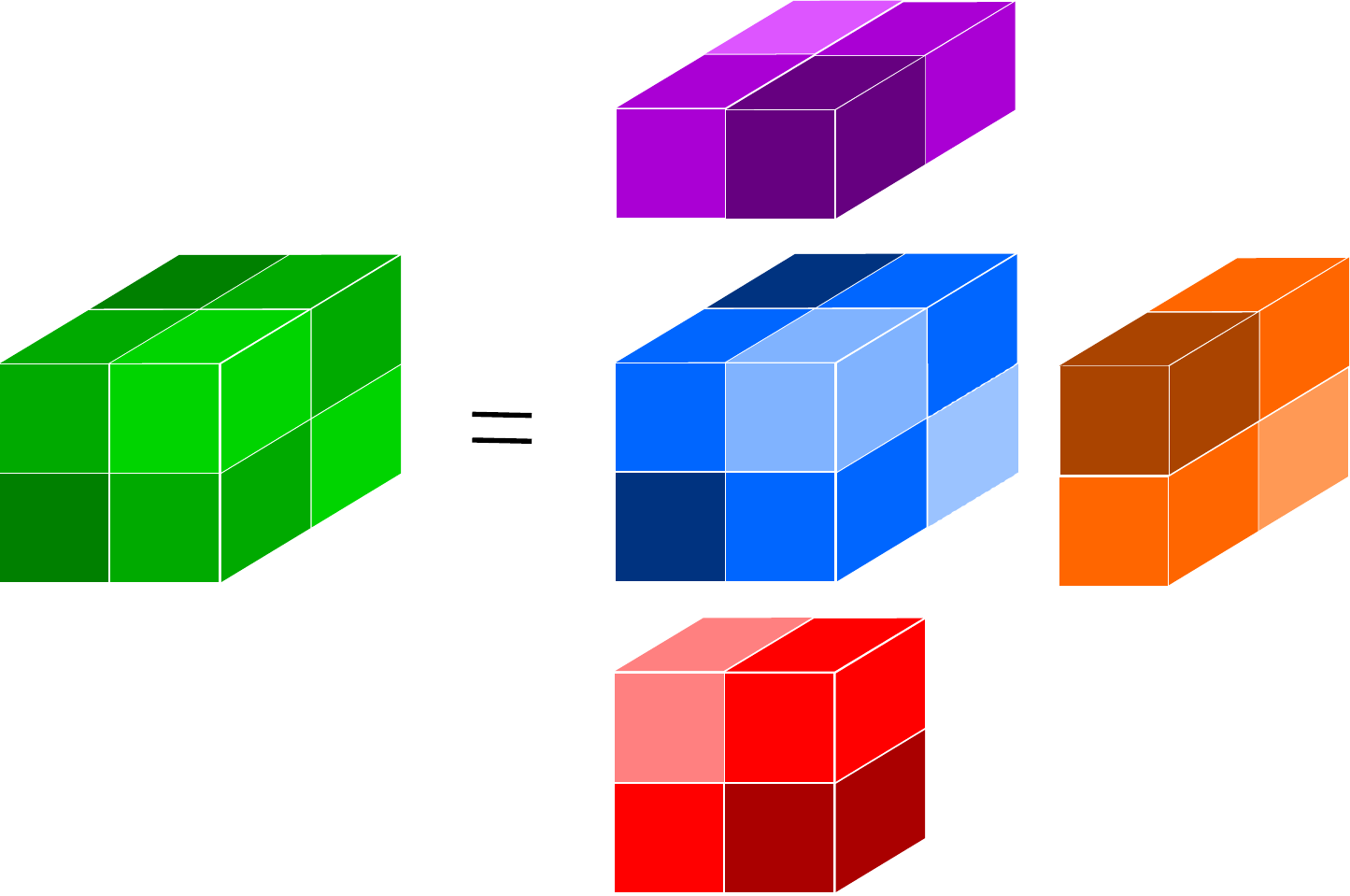}
\ec
\caption[Tensor as a multilinear transformation and representation of Tucker decomposition]{ Tensor as a multilinear transformation and representation of Tucker decomposition of a 3rd order tensor $T = \sum_{j_1,j_2,j_3 \in [k]} S_{j_1,j_2,j_3} \cdot a_{j_1} \otimes b_{j_2} \otimes c_{j_3} = S(A^\top,B^\top,C^\top)$}
\label{fig:Tucker-intro}
\end{figure}

\paragraph{Rank-1 tensor:}A $3$rd order tensor $T \in \Rbb^{d \times d \times d}$ is said to be rank-$1$ if it can be written in the form
\begin{align} \label{eqn:rank-1 tensor}
T= w \cdot a \otimes b\otimes c \Leftrightarrow T(i,j,l) = w \cdot a(i) \cdot b(j) \cdot c(l),
\end{align}
where notation $\otimes$  represents the {\em outer product} and $a \in \Rbb^d$, $b \in \Rbb^d$, $c \in \Rbb^d$ are unit vectors (without loss of generality) and $w \in \R$ is the magnitude factor.

Throughout this monograph, we also use notation $\cdot^{\otimes 3}$ to denote
$$
a^{\otimes 3} := a \otimes a \otimes a,
$$
for vector $a$.

\paragraph{Tensor CP decomposition and rank:}A tensor $T  \in \Rbb^{d \times d \times d}$ is said to have a CP (CANDECOMP/PARAFAC) rank $k\geq 1$ if $k$ is the {\em minimum} number that the tensor can be written as the sum of $k$ rank-$1$ tensors
\begin{equation}\label{eqn:tensordecomp}
T = \sum_{j\in [k]} w_j \ a_j \otimes b_j \otimes c_j, \quad w_j \in \Rbb, \ a_j,b_j,c_j \in \Rbb^d.
\end{equation}
See Figure~\ref{fig:CP-intro} for a graphical representation of CP decomposition for a symmetric 3rd order tensor.
This decomposition is also closely related to the multilinear form. In particular, given $T$ in~\eqref{eqn:tensordecomp} and vectors $\ha,\hb,\hc \in \Rbb^d$, we have
\begin{equation*}
T(\ha,\hb,\hc) = \sum_{j\in [k]} w_j \langle a_j, \ha\rangle\langle b_j, \hb\rangle\langle c_j,\hc\rangle.
\end{equation*}
Consider the decomposition in equation~\eqref{eqn:tensordecomp},
denote matrix $A:=[a_1  | a_2 | \dotsb | a_k] \in \R^{d \times k}$, and similarly $B$ and $C$.    Without loss of generality, we assume that the matrices have normalized columns (in $\ell_2$-norm), since we can always rescale them, and adjust the weights $w_j$ appropriately.

As we mentioned in the previous Section, the CP decomposition is often unique, which is very crucial to many machine learning applications. We will formally discuss that in Section~\ref{sec:unique-CP}.

\begin{figure}
$$
\vcenter{\hbox{\includegraphics[width=.13\linewidth]{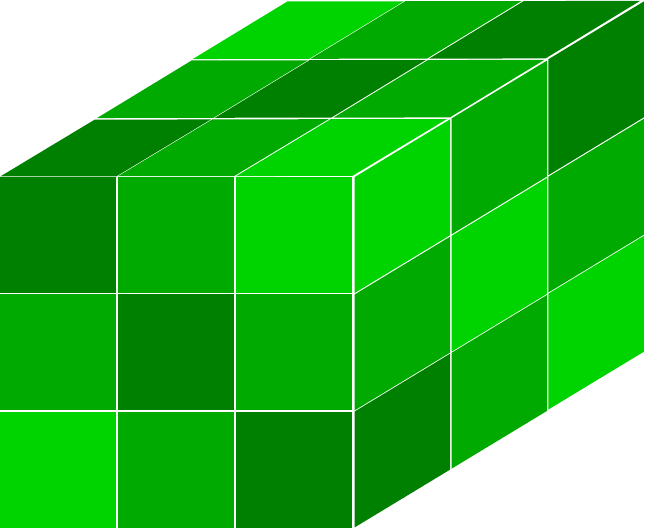}}}
= \vcenter{\hbox{\includegraphics[width=.1\linewidth]{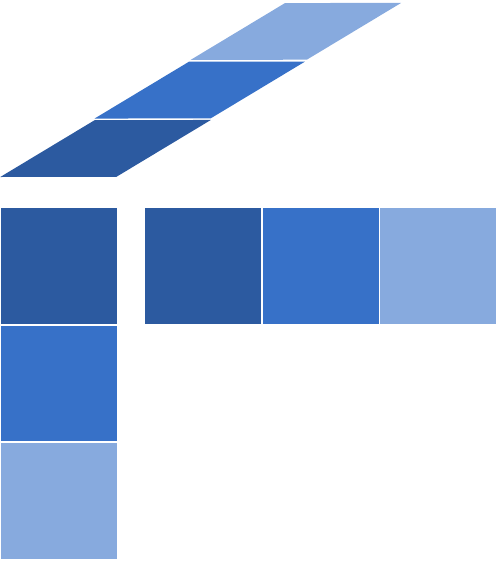}}}
+ \vcenter{\hbox{\includegraphics[width=.1\linewidth]{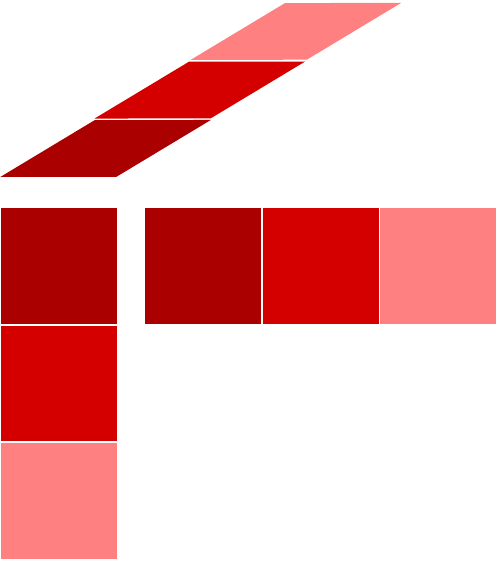}}}
+ \dotsb
$$
\caption[CP decomposition of a symmetric 3rd order tensor]{ CP decomposition of a symmetric 3rd order tensor $T = \sum_j a_j  \otimes a_j \otimes a_j$}
\label{fig:CP-intro}
\end{figure}

\paragraph{Tensor Tucker decomposition:}A tensor $T  \in \Rbb^{d_1 \times d_2 \times d_3}$ is said to have a Tucker decomposition or Tucker representation when given core tensor $S \in \Rbb^{k_1 \times k_2 \times k_3}$ and factor matrices $A \in \R^{d_1 \times k_1}, B \in \R^{d_2 \times k_2}, C \in \R^{d_3 \times k_3}$, it can be written as
\begin{equation}\label{eqn:TuckerDecomp}
T = \sum_{j_1 \in [k_1]} \sum_{j_2 \in [k_2]} \sum_{j_3 \in [k_3]} S_{j_1,j_2,j_3} \cdot a_{j_1} \otimes b_{j_2} \otimes c_{j_3}.
\end{equation}
See Figure~\ref{fig:Tucker-intro} for a graphical representation of Tucker representation.
Note that this is directly related to the multilinear from defined in~\eqref{eqn:multilinear form def} such that the R.H.S.\ of above equation is $S(A^\top,B^\top,C^\top)$.
Note that the CP decomposition is a special case of the Tucker decomposition when the core tensor $S$ is square (all modes having the same dimension) and diagonal. Unlike CP decomposition, Tucker decomposition suffers the same ambiguity problem as matrix decomposition. Therefore, we will focus on CP decomposition in this monograph. On the other hand, Tucker decomposition can be computed efficiently, which makes it a better choice for some applications other than learning latent variable models.

\paragraph{Norms:}For vector $v \in \R^d$,
$$\|v\| := \sqrt{\sum_{i \in [d]} v_i^2}$$ denotes the Euclidean ($\ell_2$) norm, and for matrix $M \in \R^{d \times d}$, the spectral (operator) norm is
$$\|M\| := \sup_{\|u\| = \|v\| = 1} |M(u,v)|,$$
where $|\cdot|$ denotes the absolute value operator.

Furthermore, $\|T\|$ and $\|T\|_F$ denote the spectral (operator) norm and the Frobenius norm of a tensor, respectively. In particular, for a $3$rd order tensor  $T \in \R^{d \times d \times d}$, we have:
\begin{align*}
\|T\| &:= \sup_{\|u\| = \|v\| = \|w\| = 1} |T(u,v,w)|, \\
\|T\|_F &:= \sqrt{\sum_{i,j,l \in [d]} T_{i,j,l}^2}.
\end{align*}

We conclude this section by reviewing some additional matrix notations and operators that we need throughout this monograph.

\paragraph{Matrix notations:} For a matrix $M$ with linearly independent rows, the right pseudo-inverse denoted by $M^\dagger$ (such that $MM^\dagger = I$)  is defined as
\begin{equation} \label{eqn:pseudo-inverse}
M^\dagger = M^\top (M M^\top)^{-1}.
\end{equation}

For matrices $A \in \R^{d_1 \times k}, B \in \R^{d_2 \times k}$, we introduce the following products. The {\em Khatri-Rao product}, also known as column-wise Kronecker product $C := A \odot B \in \R^{d_1d_2 \times k}$ is defined such that
\begin{equation} \label{eqn:KhatriRao}
C(l+ (i-1)d,j) = A(i,j) \cdot B(l,j), \quad i \in [d_1], l \in [d_2], j \in [k].
\end{equation}
Furthermore, when $d_1=d_2=d$, the {\em Hadamard product} $C := A * B \in \R^{d \times k}$ is defined as entry-wise product such that 
\begin{equation} \label{eqn:Hadamard}
C(i,j) = A(i,j) \cdot B(i,j), \quad i \in [d], j \in [k].
\end{equation}

\section{Uniqueness of CP decomposition} \label{sec:unique-CP}

When we are talking about the uniqueness of tensor CP decomposition, there are still some inherent uncertainties even in the formulation of the CP decomposition. For the following decomposition
\begin{equation*}
T = \sum_{j\in [k]} w_j \ a_j \otimes b_j \otimes c_j, \quad w_j \in \Rbb, \ a_j,b_j,c_j \in \Rbb^d,
\end{equation*}
we can obviously {\em permute} different rank-1 components, and the result will be the same tensor. We can also {\em scale} vectors $a_j,b_j,c_j$ and the weight $w_j$ simultaneously, as long as the product of all the scalings is equal to 1 and again the result will be the same tensor. The permutation and scaling ambiguities are inherent, and can often be addressed by the particular application. In the test scores example that we have revisited throughout this work, the permutation ambiguity means we get the two rank-1 components, but we do not know which one corresponds to the quantitative and which one corresponds to the verbal factor. In this case, intuitively we know a math test should require more quantitative skill, while a writing test should require more verbal skill. Therefore it should not be hard for a human to give names to the two hidden components. The scaling ambiguity is very similar to measuring quantities using different units, and we can often choose the appropriate scaling, e.g., by enforcing the strengths of students to be within 0-100. Regardless of the scaling/units we choose, the comparison between different students/subjects still makes sense \--- we can still safely answer questions like which student has the best quantitative strength.

Apart from above inherent ambiguities, there are several sufficient conditions for uniqueness of tensor decomposition. The most well-known condition is formulated by \citet{Kruskal:76,Kruskal:77}. We first provide the definition of Kruskal rank and then state this uniqueness condition.

\begin{definition}[Kruskal rank] \label{def:kruskal}
The Kruskal rank or $\krank$ of a matrix $A$ denoted by $\krank(A)$ is the maximum number $r$ such that every subset of $r$ columns of $A$ is linearly independent. 
\end{definition}

\begin{theorem}[\citep{Kruskal:76,Kruskal:77}] \label{thm:kruskal}
The CP decomposition in~\eqref{eqn:tensordecomp} is unique (up to permutation and scaling), if we let $A := [a_1 \ a_2 \ \dotsc \ a_k]$ (similarly for $B,C$, all with the same number of column $k$) satisfy the condition
\begin{equation*}
\krank(A) + \krank(B) + \krank(C) \geq 2k+2.
\end{equation*}
\end{theorem}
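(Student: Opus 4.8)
The plan is to follow Kruskal's original combinatorial argument via the \emph{permutation lemma}, which reduces the uniqueness claim to a statement about how the span of subsets of columns of one factor matrix must coincide with the span of the corresponding subsets of a competing factor matrix. Suppose we have two decompositions $T = \sum_{j\in[k]} a_j\otimes b_j\otimes c_j = \sum_{j\in[k]} \tilde a_j\otimes \tilde b_j\otimes \tilde c_j$ (absorbing the weights $w_j$ into the $a_j$'s, say), with factor matrices $A,B,C$ and $\tilde A,\tilde B,\tilde C$. Both have the same number of columns $k$ — one first argues that $k$ equals the tensor rank, so no competing decomposition can be shorter. The goal is to produce a permutation $\pi$ and scalars so that $\tilde a_{\pi(j)}\propto a_j$, and likewise for $B$ and $C$; since the weights are then forced, this is exactly uniqueness up to permutation and scaling.

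The key technical engine is \textbf{Kruskal's permutation lemma}: if $A$ and $\tilde A$ are $d\times k$ matrices such that for every vector $x$, the number of zero entries of $\tilde A^\top x$ is at least the number of zero entries of $A^\top x$ whenever $A^\top x$ has at most $k - \krank(A) + 1$ nonzero entries (a ``rank-condition'' phrased in terms of supports), then $\tilde A = A \Pi \Lambda$ for a permutation $\Pi$ and nonsingular diagonal $\Lambda$. The steps I would carry out: (i) first establish the rank condition — show that the competing decomposition cannot have rank less than $k$, using that $\krank(A)+\krank(B)+\krank(C)\ge 2k+2$ forces a lower bound on tensor rank (this is itself usually proved by the same permutation-lemma machinery or by a dimension count on the slices); (ii) fix a generic vector $w$ and look at the matrix slice $T(I,I,w) = \sum_j \langle c_j, w\rangle\, a_j b_j^\top = A\,\Diag(C^\top w)\, B^\top$, and similarly $\tilde A\,\Diag(\tilde C^\top w)\,\tilde B^\top$; (iii) exploit the Kruskal-rank hypotheses on $B$ and $C$ to show that these two rank factorizations of the same matrix must be related, transferring support information from the $C$-side to the $A$-side so that the hypotheses of the permutation lemma are met; (iv) apply the permutation lemma to conclude $\tilde A = A\Pi\Lambda_A$, then repeat the argument with the roles of the modes cyclically permuted to get the same $\Pi$ for $B$ and $C$; (v) finally, plug back into the rank-1 decomposition to see that the scalars multiply to $1$ componentwise, which is the scaling ambiguity.

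The main obstacle is step (iii)–(iv): establishing the permutation lemma itself and then verifying that its hypotheses are genuinely implied by the Kruskal inequality $\krank(A)+\krank(B)+\krank(C)\ge 2k+2$. The permutation lemma is a delicate linear-algebraic/combinatorial statement — its proof is an induction on $k$ that manipulates supports of vectors in the column span, and getting the counting exactly right (so that the ``$+2$'' in the inequality is precisely what is needed, not $+1$ or $+3$) is where all the work lies. A secondary subtlety is ensuring the \emph{same} permutation $\pi$ works for all three factor matrices simultaneously; this requires care because the three applications of the lemma are a priori independent, and one must track the index identification through the slice identity $A\Diag(C^\top w)B^\top = \tilde A\Diag(\tilde C^\top w)\tilde B^\top$ to tie them together. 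Given the scope of this monograph, I would most likely state the permutation lemma as a black box with a citation to \citet{Kruskal:77} and present only the reduction in steps (i)–(ii) and (iv)–(v) in detail.
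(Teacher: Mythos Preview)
The paper does not prove this theorem at all: it is stated with a citation to \citet{Kruskal:76,Kruskal:77} and immediately followed by a discussion of when the condition is satisfied, with no argument given. Your proposal therefore goes well beyond what the paper does. As a sketch of the actual proof, your outline via Kruskal's permutation lemma is the standard route and is broadly correct; your own caveat---that the permutation lemma and the counting that makes the ``$+2$'' tight are where all the real work lies, and that you would black-box them---is exactly right. For the purposes of matching the paper, though, no proof is needed: a citation suffices.
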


This is a mild condition when the rank of the tensor is not too high. As a comparison, matrix decomposition can be unique only when the matrix is rank 1, or we require strong conditions like orthogonality among components. In general, for non-degenerate cases when the components are in general position (with probability 1 for any continuous probability distribution on the components $A,B,C$), the $\krank$ of the matrices $A,B,C$ are equal to $\min\{k,d\}$ (\cite{Kruskal:76}, see a more robust version in \cite{bhaskara2014smoothed}). Therefore, when $2\le k\le d$, the Kruskal condition is always satisfied leading to unique tensor CP decomposition. Even when $k > d$ (rank is higher than the dimension), the Kruskal condition can be satisfied as long as $k \le 1.5d - 1$.

\section{Orthogonal Tensor Decomposition} \label{sec:orthogonal}

Tensor decomposition is in general a challenging problem. As a special and more tractable kind of decomposition, we introduce orthogonal tensor decomposition in this section. We review some useful properties of tensors that have orthogonal decomposition, and in the next section, we show how these properties lead to tensor power iteration as a natural algorithm for orthogonal tensor decomposition. It is worth mentioning here that not all tensors have orthogonal decomposition, and as we discussed in the previous section, the tensor decomposition can still be unique even when the tensor rank-1 components are not orthogonal.

We first review the spectral decomposition of symmetric matrices, and
then discuss a generalization to higher-order tensors.

\subsection{Review: Matrix Decomposition}

We first build intuition by reviewing the matrix setting, where the
desired decomposition is the eigen-decomposition of a symmetric rank-$k$
matrix $M = V \Lambda V^\t$, where $V = [v_1 | v_2 | \dotsb | v_k] \in
\R^{d \times k}$ is the matrix with orthonormal  ($V^\top V = I$) eigenvectors as columns,
and $\Lambda = \diag(\lambda_1,\lambda_2,\dotsc,\lambda_k) \in \R^{k \times
k}$ is diagonal matrix of non-zero eigenvalues.
In other words,
\begin{equation}
M = \sum_{j=1}^k \lambda_j \ v_j v_j^\t  = \sum_{j=1}^k \lambda_j
\  v_j^{\otimes 2}.
\label{eq:spectral-decomp}
\end{equation}
Such a decomposition is guaranteed to exist for every symmetric matrix; see~\cite{golub1996matrix}, Chapter 8.

Recovery of the $v_j$'s and $\lambda_j$'s can be viewed in at least two ways: {\em fixed} point and {\em variational} characterizations.

\subsubsection*{Fixed-point characterization}

First, each $v_i$ is a {\em fixed point} under the mapping $u \mapsto Mu$, up to a
scaling factor $\lambda_i$:
\[
Mv_i = \sum_{j=1}^k \lambda_j (v_j^\t v_i) v_j = \lambda_i v_i
\]
as $v_j^\t v_i = 0$ for all $j \neq i$ by orthogonality.
The $v_i$'s are not necessarily the only such fixed points.
For instance, with the multiplicity $\lambda_1 = \lambda_2 = \lambda$, then any linear combination
of $v_1$ and $v_2$ is also fixed under $M$.
However, in this case, the decomposition in~\eqref{eq:spectral-decomp} is
not unique, as $\lambda_1 v_1 v_1^\t + \lambda_2 v_2 v_2^\t$ is equal to $\lambda
(u_1 u_1^\t + u_2 u_2^\t)$ for any pair of orthonormal vectors $u_1$ and
$u_2$ spanning the same subspace as $v_1$ and $v_2$.
Nevertheless, the decomposition is unique when $\lambda_1, \lambda_2,
\dotsc, \lambda_k$ are distinct, whereupon the $v_i$'s are the only
directions fixed under $u \mapsto Mu$ up to non-trivial scaling; see Theorem~\ref{thm:SVD-unique}.

\subsubsection*{Variational characterization}

The second view of recovery is via the {\em variational characterization} of the
eigenvalues.
Assume $\lambda_1 > \lambda_2 > \dotsb > \lambda_k$; the case of repeated
eigenvalues again leads to similar non-uniqueness as discussed above.
Then the \emph{Rayleigh quotient}
\[ \frac{u^\t M u}{u^\t u} \]
is maximized over non-zero vectors by $v_1$.
Furthermore, for any $s \in [k]$, the maximizer of the Rayleigh quotient,
subject to being orthogonal to $v_1, v_2, \dotsc, v_{s-1}$, is $v_s$.
Another way of obtaining this second statement is to consider the
\emph{deflated} Rayleigh quotient
\[ 
\frac{u^\t \Bigl( M - \sum_{j=1}^{s-1} \lambda_j v_j v_j^\t
\Bigr) u}{u^\t u}, \]
and observe that $v_s$ is the maximizer. Also see that this statement is closely related to the optimization view-point of SVD provided in Definition~\ref{def:svdopt}.

Efficient algorithms for finding these matrix decompositions are well
studied \citep[Section 8.2.3]{GVL96}, and iterative power methods are one
effective class of algorithms.

We remark that in our multilinear tensor notation, we may write the maps $u
\mapsto Mu$ and $u \mapsto u^\t M u / \|u\|_2^2$ as
\begin{align}
u \mapsto Mu
& \ \equiv \ u \mapsto M(I,u) ,
\label{eq:matrix-map}
\\
u \mapsto \frac{u^\t M u}{u^\t u}
& \ \equiv \ u \mapsto \frac{M(u,u)}{u^\t u}
.
\label{eq:matrix-rq}
\end{align}

\subsection{The Tensor Case}
\label{sec:tensor-decomp}

Decomposing general tensors is a delicate issue; tensors may not even have
unique decomposition. But as we discussed earlier, tensors with orthogonal decomposition have a structure which permits a unique decomposition under a mild non-degeneracy condition.

An \emph{orthogonal decomposition} of a symmetric tensor $T \in \R^{d \times d \times d}$ is a collection of orthonormal (unit) vectors $\{ v_1,
v_2, \dotsc, v_k \}$
together with corresponding positive scalars $\lambda_j > 0$ such that
\begin{align}
T & = \sum_{j=1}^k \lambda_j v_j^{\otimes 3}
.
\label{eq:orthogonal-decomp}
\end{align}

In general, we say a $p$-th order symmetric tensor has an orthogonal decomposition if there exists a collection of  orthonormal (unit) vectors $\{ v_1,
v_2, \dotsc, v_k \}$
together with corresponding scalars $\lambda_j$ such that

\begin{align}
T & = \sum_{j=1}^k \lambda_j v_j^{\otimes p}
.\nonumber
\end{align}

Note that for odd order tensors (especially $p=3$)
, we can add the requirement that the $\lambda_j$ be positive.
This convention can be followed without loss of generality since
$-\lambda_j v_j^{\otimes p} = \lambda_j (-v_j)^{\otimes p}$ whenever $p$ is
odd.
Also, it should be noted that orthogonal decompositions do not necessarily
exist for every symmetric tensor. 

In analogy to the matrix setting, we consider two ways to view this
decomposition: a fixed-point characterization and a variational
characterization.
Related characterizations based on optimal rank-$1$ approximations are given
by~\citet{ZG01}.

\subsubsection*{Fixed-point characterization}

For a tensor $T$, consider the vector-valued map
\begin{equation} \label{eq:tensor-map}
u \mapsto T(I,u,u)
\end{equation}
which is the third-order generalization of~\eqref{eq:matrix-map}.

From the definition of multilinear form in~\eqref{eqn:rank-1 update}, this can be explicitly written as
\[
T(I,u,u) = \sum_{i \in [d]} \sum_{j,l \in [d]} T_{i,j,l} (e_j^\t u) (e_l^\t u) e_i,
\]
where $e_i$ denotes the $d$-dimensional basis vector with $i$-th entry equal to 1 and the rest of entries being zero.
Observe that~\eqref{eq:tensor-map} is \emph{not} a linear map, which is a
key difference compared to the matrix case where $Mu = M(I,u)$ is a linear map of $u$.

An eigenvector $u$ for a matrix $M$ satisfies $M(I,u) = \lambda u$, for
some scalar $\lambda$.
We say a unit vector $u \in \R^d$ is an \emph{eigenvector} of $T$, with
corresponding \emph{eigenvalue} $\lambda \in \R$, if
\[
T(I,u,u) = \lambda u .
\]
To simplify the discussion, we assume throughout that eigenvectors have
unit norm; otherwise, for scaling reasons, we replace the above equation with $T(I,u,u) =
\lambda \|u\| u$.
This concept was originally introduced
by~\citet{lim_tensoreig} and \citet{qi2005eigenvalues}\footnote{Note that there are many definitions of tensor eigenvalues and eigenvectors, see for example \cite{qi2005eigenvalues}. The definition we used here is called Z-eigenvalues/Z-eigenvectors in \cite{qi2005eigenvalues}.}.
For orthogonally decomposable tensors $T =
\sum_{j=1}^k \lambda_j v_j^{\otimes 3}$,
\[
T(I,u,u) = \sum_{j=1}^k \lambda_j (u^\t v_j)^2 v_j \ .
\]
By the orthogonality of the $v_i$'s, it is clear that $T(I,v_i,v_i) =
\lambda_i v_i$ for all $i \in [k]$.
Therefore, each $(v_i,\lambda_i)$ is an eigenvector/eigenvalue pair.

There are a number of subtle differences compared to the matrix case that
arise as a result of the non-linearity of~\eqref{eq:tensor-map}.
First, even with the multiplicity $\lambda_1 = \lambda_2 = \lambda$, a
linear combination $u := c_1 v_1 +c_2 v_2$ is not an eigenvector except in very special cases.
In particular,
\[ T(I,u,u)
= \lambda_1 c_1^2 v_1 +\lambda_2 c_2^2 v_2
= \lambda ( c_1^2 v_1 + c_2^2 v_2)
\]
may not be a multiple of $u = c_1 v_1 +c_2 v_2$.
This indicates that the issue of repeated eigenvalues does not have the
same status as in the matrix case.
Second, even if all the eigenvalues are distinct, it turns out that the $v_i$'s are not the only eigenvectors.
For example, set $u := (1/\lambda_1) v_1+(1/\lambda_2) v_2$.
Then,
\[ T(I,u,u)
= \lambda_1 (1/\lambda_1)^2 v_1 +\lambda_2 (1/\lambda_2)^2 v_2 = u
,
\]
so $u / \|u\|$ is an eigenvector with corresponding eigenvalue $\|u\|$.
More generally, for any subset $S \subseteq [k]$, the vector
\[ \sum_{j
\in S} \frac1{\lambda_j} \cdot v_j \]
is an eigenvector after normalization.

As we now see, these additional eigenvectors can be viewed as spurious.
We say a unit vector $u$ is a \emph{robust eigenvector} of $T$ if there
exists an $\epsilon > 0$ such that for all $\theta \in \{ u' \in \R^d :
\|u' - u\| \leq \epsilon \}$, repeated iteration of the map
\begin{equation} \label{eq:map2}
\bar \theta \mapsto \frac{
T(I,\bar\theta,\bar\theta)}{\|T(I,\bar\theta,\bar\theta)\|} \ ,
\end{equation}
starting from $\theta$ converges to $u$.
Note that the map~\eqref{eq:map2} re-scales the output to have unit
Euclidean norm.
Robust eigenvectors are also called attracting fixed points
of~\eqref{eq:map2}; see, e.g.,~\cite{SIMAX-080148-Tensor-Eigenvalues}.

The following theorem implies that if $T$ has an orthogonal decomposition
as given in~\eqref{eq:orthogonal-decomp}, then the set of robust
eigenvectors of $T$ are precisely the set $\{v_1,v_2,\ldots v_k\}$,
implying that the orthogonal decomposition is unique.
For even order tensors, the uniqueness is true up to sign-flips of the
$v_i$'s.

\begin{theorem}[Uniqueness of orthogonal tensor decomposition]\label{thm:robust}
Let $T$ have an orthogonal decomposition as given in
\eqref{eq:orthogonal-decomp}. Then,
\begin{enumerate}
\item The set of $\theta \in \R^d$ which do not converge to some $v_i$
under repeated iteration of \eqref{eq:map2} has measure zero.

\item The set of robust eigenvectors of $T$ is equal to $\{ v_1, v_2,
\dotsc, v_k \}$.

\end{enumerate}
\end{theorem}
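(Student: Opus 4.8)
The plan is to analyze the normalized power map~\eqref{eq:map2} in the coordinate system adapted to the decomposition~\eqref{eq:orthogonal-decomp}. Complete $\{v_1,\dots,v_k\}$ to an orthonormal basis of $\R^d$, and for $\theta\in\R^d$ write $c_i(\theta):=\langle v_i,\theta\rangle$. By the identity $T(I,u,u)=\sum_{j=1}^k\lambda_j\langle v_j,u\rangle^2 v_j$ derived just above the theorem, the unnormalized map $\theta\mapsto T(I,\theta,\theta)$ equals $\sum_{j=1}^k \lambda_j c_j(\theta)^2\,v_j$. Two things follow: after a single step the iterate lies in $\operatorname{span}\{v_1,\dots,v_k\}$, and all its coordinates $c_j$ are then nonnegative since each $\lambda_j>0$. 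So it suffices to study the map on the nonnegative orthant of coordinate vectors $(c_1,\dots,c_k)$, where it acts as $(c_i)\mapsto(\lambda_i c_i^2)$ followed by renormalization to the unit sphere.

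Next I would establish the single fact that drives everything. Fix a starting point $\theta=u^{(0)}$, let $u^{(t)}$ be the $t$-th iterate, $c_i^{(t)}:=\langle v_i,u^{(t)}\rangle$, and let $i^\ast$ attain $\max_{i\in[k]}\lambda_i^2 c_i(\theta)^2$; suppose this maximizer is \emph{unique} and the maximum is \emph{positive} (equivalently $T(I,\theta,\theta)\neq0$). Because the step sends $(c_i^{(t)})$ to a positive multiple of $(\lambda_i(c_i^{(t)})^2)$, the maximizer of $\lambda_i^2(c_i^{(t)})^2$ is unchanged at every step and stays $i^\ast$; in particular $c_{i^\ast}^{(t)}>0$ for all $t\ge1$, so the iteration is always well defined. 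For $t\ge1$ and $i\neq i^\ast$ set $S_i^{(t)}:=\lambda_i c_i^{(t)}/(\lambda_{i^\ast}c_{i^\ast}^{(t)})$; a one-line computation gives the squaring recursion $S_i^{(t+1)}=(S_i^{(t)})^2$, so $S_i^{(t)}=(S_i^{(1)})^{2^{t-1}}$ with $S_i^{(1)}=\lambda_i^2 c_i(\theta)^2/(\lambda_{i^\ast}^2 c_{i^\ast}(\theta)^2)\in[0,1)$. Hence $S_i^{(t)}\to0$, so $c_i^{(t)}/c_{i^\ast}^{(t)}\to0$ for all $i\neq i^\ast$; together with $\sum_{i\in[k]}(c_i^{(t)})^2=1$ this forces $c_{i^\ast}^{(t)}\to1$ and $c_i^{(t)}\to0$ otherwise, i.e.\ $u^{(t)}\to v_{i^\ast}$. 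This already gives item~1: the set of $\theta$ that fail to converge to some $v_i$ is contained in $B:=\{\theta:T(I,\theta,\theta)=0\}\cup\{\theta:\argmax_{i\in[k]}\lambda_i^2 c_i(\theta)^2\ \text{is not a singleton}\}$, the first set being the subspace $\{\langle v_i,\theta\rangle=0\ \forall i\}$ of dimension $d-k<d$ and the second being contained in the finite union $\bigcup_{i\neq j}\{\lambda_i^2\langle v_i,\theta\rangle^2=\lambda_j^2\langle v_j,\theta\rangle^2\}$ of pairs of hyperplanes; thus $B$ is Lebesgue-null.

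For item~2, each $v_i$ is a robust eigenvector: at $\theta=v_i$ we have $\lambda_i^2 c_i(\theta)^2=\lambda_i^2>0=\lambda_j^2 c_j(\theta)^2$ for $j\neq i$, so by continuity there is $\epsilon>0$ making $i$ the unique maximizer of $\lambda_i^2 c_i(\theta)^2$ (and $T(I,\theta,\theta)\neq0$) throughout $\|\theta-v_i\|\le\epsilon$, whence the previous paragraph gives convergence to $v_{i^\ast}=v_i$. Conversely, let $u$ be any robust eigenvector with attracting radius $\epsilon>0$. Since $B$ is null it has empty interior, so the ball $\{\|\theta-u\|\le\epsilon\}$ contains some $\theta_0\notin B$. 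By robustness the iteration from $\theta_0$ converges to $u$; since $\theta_0\notin B$, the argument above shows it also converges to some $v_{i^\ast}$; by uniqueness of limits $u=v_{i^\ast}\in\{v_1,\dots,v_k\}$. Hence the robust eigenvectors are exactly $\{v_1,\dots,v_k\}$, which in turn pins down the decomposition uniquely (both the $v_i$ and then $\lambda_i=T(v_i,v_i,v_i)$ are intrinsic to $T$).

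The computation is short once the right quantity — the ratio $S_i^{(t)}$ — is identified; its squaring recursion says that, after the orthogonal complement is killed in the first step, every non-dominant coordinate decays doubly exponentially relative to the dominant one. The only points that need care are (i) verifying the iteration never divides by zero, which holds because $c_{i^\ast}^{(t)}$ stays strictly positive, and (ii) in the converse direction of item~2, using that a measure-zero set cannot contain a ball to locate a ``good'' starting point near $u$; neither is a real obstacle.
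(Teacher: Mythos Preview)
Your argument is correct and follows essentially the same route the paper points to: the paper defers the proof to \citet{AnandkumarEtal:tensor12} with the remark that it ``follows readily from simple orthogonality considerations,'' and the computation underlying your proof is precisely the one carried out in Lemma~\ref{lemma:fixed-point} (the explicit formula $\overline\theta_t=\sum_i\lambda_i^{2^t-1}c_i^{2^t}v_i$ there is equivalent to your squaring recursion $S_i^{(t+1)}=(S_i^{(t)})^2$). Your measure-zero characterization of the non-convergent set and the density argument for the converse direction of item~2 are the standard way to complete the picture, so there is nothing to correct.
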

See \citet{AnandkumarEtal:tensor12} for the proof of the theorem which follows readily from simple
orthogonality considerations.
Note that \emph{every} $v_i$ in the orthogonal tensor decomposition is
\emph{robust}, whereas for a symmetric matrix $M$, for almost all initial points,
the map $\bar\theta \mapsto \frac{M\bar\theta}{\|M\bar\theta\|}$ converges
only to an eigenvector corresponding to the largest magnitude eigenvalue.
Also, since the tensor order is odd, the signs of the robust eigenvectors
are fixed, as each $-v_i$ is mapped to $v_i$ under~\eqref{eq:map2}.

\subsubsection*{Variational characterization}

We now discuss a variational characterization of the orthogonal
decomposition.
The \emph{generalized Rayleigh quotient}~\citep{ZG01} for a third-order
tensor is given by
\[
u \mapsto \frac{T(u,u,u)}{(u^\t u)^{3/2}}
,
\]
which can be compared to~\eqref{eq:matrix-rq}.
For an orthogonally decomposable tensor, the following theorem shows that a
non-zero vector $u \in \R^d$ is an \emph{isolated local
maximizer}~\citep{NW99} of the generalized Rayleigh quotient if and only if
$u = v_i$ for some $i \in [k]$.

\begin{theorem} \label{thm:variational}
Assume $d \geq 2$.
Let $T$ have an orthogonal decomposition as given in
\eqref{eq:orthogonal-decomp}, and consider the optimization problem
\[ \max_{u \in \R^d} \ T(u,u,u) \quad \text{s.t.} \ \|u\| = 1 . \]
\begin{enumerate}
\item The stationary points are eigenvectors of $T$.

\item A stationary point $u$ is an isolated local maximizer if and only if
$u = v_i$ for some $i \in [k]$.

\end{enumerate}
\end{theorem}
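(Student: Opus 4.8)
The plan is to diagonalize the problem in the orthonormal system. Since $k\le d$, extend $\{v_1,\dots,v_k\}$ to an orthonormal basis of $\R^d$ and write a unit vector as $u=\sum_{i=1}^d c_i v_i$ with $\sum_i c_i^2=1$. Orthogonality of the $v_j$ gives the two identities $T(u,u,u)=\sum_{j=1}^k\lambda_j c_j^3$ and $T(I,u,u)=\sum_{j=1}^k\lambda_j c_j^2 v_j$, so $\nabla(T(u,u,u))=3\,T(I,u,u)$ has $i$-th coordinate $3\lambda_i c_i^2$ for $i\le k$ and $0$ for $i>k$. This reduces the whole statement to analysing the cubic $\sum_{j\le k}\lambda_j c_j^3$ on the unit sphere.

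For Part 1, I would write the Lagrange condition for $\max T(u,u,u)$ subject to $\|u\|^2=1$, namely $3\,T(I,u,u)=\mu u$, and read it coordinatewise: $c_i(3\lambda_i c_i-\mu)=0$ for $i\le k$ and $\mu c_i=0$ for $i>k$. Either $\mu=0$, which forces $c_i=0$ for all $i\le k$ (then $T(I,u,u)=0=0\cdot u$, so $u$ is an eigenvector with eigenvalue $0$), or $\mu\ne0$, which forces $c_i=0$ for $i>k$ and $c_i\in\{0,\mu/(3\lambda_i)\}$ for $i\le k$; writing $S:=\{i:c_i\ne0\}$, this says $u=\tfrac{\mu}{3}\sum_{i\in S}\lambda_i^{-1}v_i$, and a one-line computation gives $T(I,u,u)=\tfrac{\mu}{3}u$. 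Hence every stationary point is an eigenvector, which is Part 1 (and it recovers exactly the eigenvectors catalogued earlier in this section).

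For the ``if'' direction of Part 2, I would show each $v_i$ is a strict local maximizer by a Taylor expansion on the sphere: parametrizing $u$ near $v_i$ by the small coordinates $c_{-i}:=(c_j)_{j\ne i}$ with $c_i=\sqrt{1-\|c_{-i}\|^2}$ yields $T(u,u,u)=\lambda_i(1-\|c_{-i}\|^2)^{3/2}+\sum_{j\le k,\,j\ne i}\lambda_j c_j^3=\lambda_i-\tfrac{3\lambda_i}{2}\|c_{-i}\|^2+O(\|c_{-i}\|^3)$. Since $\lambda_i>0$, the quadratic term is negative definite and dominates the cubic remainder on a small enough neighbourhood, so $v_i$ is a strict local maximizer; on the compact sphere a strict local maximizer is isolated.

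For the ``only if'' direction I would invoke the second-order necessary condition for a constrained maximum: at a stationary point $u$ with multiplier $\mu$, the Lagrangian Hessian $H:=\nabla^2(T(u,u,u))|_u-\mu I$ must be negative semidefinite on the tangent space $u^\perp$. From $\nabla^2(T(u,u,u))|_u=6\,\diag(\lambda_1c_1,\dots,\lambda_kc_k,0,\dots,0)$ and the values of the $c_i$ found above, $H$ acts as multiplication by $\mu$ on $\operatorname{span}\{v_i:i\in S\}$ and by $-\mu$ on its orthogonal complement. Hence the $\mu=0$ stationary points (those lying in $\operatorname{span}\{v_1,\dots,v_k\}^\perp$, with value $0$) are not local maxima at all --- nudging $u$ toward $v_1$ strictly raises $T(u,u,u)$ to a positive value --- and when $\mu\ne0$ with $|S|\ge2$ the restriction of $H$ to $u^\perp$ is indefinite (it has a positive eigenvalue along $\operatorname{span}\{v_i:i\in S\}\cap u^\perp$ when $\mu>0$), so $u$ is a saddle of the constrained problem, not a maximizer; only the singletons $S=\{i\}$, that is $u=v_i$, survive. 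The step I expect to be the main obstacle is exactly this second-order analysis: setting up the Lagrangian Hessian correctly, carrying out the sign and normalization bookkeeping (the odd tensor order makes the signs finicky), and confirming that every stationary point with $|S|\ge2$ genuinely fails the test. An alternative but equivalent route imitates the matrix ``deflation'' argument --- subtract off an already-identified rank-one term and observe that the generalized Rayleigh quotient then behaves locally like the rank-one case --- but the direct Lagrange-plus-Hessian computation above is the cleanest.
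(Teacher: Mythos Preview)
Your approach is the standard one and matches what the paper alludes to: the paper does not prove the theorem itself but cites Ge et al.\ (2015), noting the argument is ``similar to local optimality analysis for ICA methods using fourth-order cumulants.'' Diagonalizing in the $v_j$-basis, writing out the Lagrange conditions, and then testing the Lagrangian Hessian on the tangent space is precisely this analysis, and your execution of Part~1 and of the ``if'' direction of Part~2 is clean and correct.

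The gap you flagged in the ``only if'' direction is real and concerns the sign of $\mu$. Your Hessian computation $H=\mu\,P_{S}-\mu\,P_{S^{c}}$ (with $P_S$ the projector onto $\operatorname{span}\{v_i:i\in S\}$) is right, but you only argue the $\mu>0$ case. For $\mu<0$ with $|S|\ge 2$ you need a tangent direction on which $H$ is positive; since $u\in\operatorname{span}\{v_i:i\in S\}$, every vector in $\operatorname{span}\{v_i:i\in S\}^{\perp}$ lies in $u^{\perp}$ and yields $w^{\top}Hw=-\mu\|w\|^{2}>0$, so the point is a saddle---\emph{provided} that complement is nonempty, i.e.\ $|S|<d$. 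Thus when $k<d$ your argument closes with one extra line.

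When $k=d$, however, the statement as printed is actually false and no argument can close the gap. Take $d=k=2$, $\lambda_1=\lambda_2=1$, so the objective is $c_1^3+c_2^3$ on the unit circle. The point $u=(-\tfrac{1}{\sqrt2},-\tfrac{1}{\sqrt2})$ is stationary with $\mu<0$ and $S=\{1,2\}$; here $H|_{u^{\perp}}=\mu<0$, so $u$ is a strict (hence isolated) local maximizer with value $-\tfrac{1}{\sqrt2}$, yet $u\notin\{v_1,v_2\}$. In general the offender is the single point $-c\sum_{i=1}^{d}\lambda_i^{-1}v_i$ (normalized). This is harmless for the intended application---the spurious maximizer has negative eigenvalue and negative objective value, so the power map and any reasonable selection rule discard it---but you should note the exception rather than try to prove the literal ``only if'' in the $k=d$ case.
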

See \citet{ge2015escaping}[Section C.1] for the proof of the theorem. It is similar to local optimality analysis for ICA methods using
fourth-order cumulants~\citep{delfosse1995adaptive,frieze1996learning}.

Again, we see similar distinctions to the matrix case.
In the matrix case, the only local maximizers of the Rayleigh quotient are
the eigenvectors with the largest eigenvalue (and these maximizers take on
the globally optimal value).
For the case of orthogonal tensor forms, the robust eigenvectors are
precisely the isolated local maximizers.

An important implication of the two characterizations is that, for
orthogonally decomposable tensors $T = \sum_{j \in [k]} \lambda_j v_j^{\otimes 3}$, (i) the local maximizers of the
objective function $T(u,u,u) / (u^\t u)^{3/2}$ correspond
precisely to the vectors $v_j$ in the decomposition, and (ii) these local
maximizers can be reliably identified using a simple fixed-point iteration as in~\eqref{eq:map2}, i.e., the tensor analogue of the matrix power method.
Moreover, a second-derivative test based on $T(I,I,u)$ can be employed to
test for local optimality and rule out other stationary points.

\subsection{Beyond Orthogonal Tensor Decomposition} \label{sec:whitening-ten}

So far, we have considered tensors with orthogonal decomposition as in~\eqref{eq:orthogonal-decomp}. We now discuss how the problem of non-orthogonal tensor decomposition can be reduced to the orthogonal tensor decomposition, and therefore, we can use the orthogonal tensor decomposition algorithms to recover the rank-1 components.

As we alluded in Section~\ref{sec:whitening}, we can pre-process the tensor using a whitening procedure, which is described in more detail in Procedure~\ref{algo:whitening}. This procedure orthogonalizes the components of the input tensor. After recovering the rank-1 components of the orthogonal decomposition, we apply {\em un-whitening} procedure proposed in Procedure~\ref{algo:Un-whitening} to recover the rank-1 components of the original non-orthogonal tensor decomposition. The whitening procedure only works when the components of the original non-orthogonal tensor are linearly independent. Luckily for many machine learning applications (such as topic models, mixtures of high dimensional Gaussians) that we will talk about in Section~\ref{ch:applications}, it is natural to expect the true components to be linearly independent.

\floatname{algorithm}{Procedure}
\begin{algorithm}[t]
\caption{Whitening}
\label{algo:whitening}
\begin{algorithmic}[1]
\renewcommand{\algorithmicrequire}{\textbf{input}}
\renewcommand{\algorithmicensure}{\textbf{output}}
\REQUIRE Tensor $T = \sum_{j \in [k]} \lambda_j \cdot a_j^{\otimes 3} \in \Rbb^{d \times d \times d}$; matrix $M = \sum_{j \in [k]} \tl{\lambda}_j \cdot a_j^{\otimes 2} \in \R^{d \times d}$. Assume $a_j$'s are linearly independent.
\ENSURE Lower dimensional tensor with orthogonal rank-1 components
\STATE Compute the rank-k SVD, $M=U \Diag(\gamma) U^\top$, where $U \in \R^{d \times k}$ and $\gamma \in \R^k$.
\STATE Compute the whitening matrix $W:=U \Diag(\gamma^{-1/2}) \in \R^{d \times k}$.
\RETURN $T\left(W,W,W\right) \in \R^{k \times k \times k}$.
\end{algorithmic}
\end{algorithm}

We first elaborate on the whitening step, and analyze how the proposed Procedure~\ref{algo:whitening} works and orthogonalizes the components of input tensor. We then analyze the inversion of whitening operator showing how the components in the whitened space are translated back to the original space as stated in Procedure~\ref{algo:Un-whitening}.

\subsubsection{Whitening procedure}
Consider the non-orthogonal rank-$k$ tensor 
\begin{equation} \label{eqn:third-moment}
T = \sum_{j \in [k]} \lambda_j \cdot a_j^{\otimes 3},
\end{equation}
where the goal of whitening procedure is to reduce it to an orthogonal tensor form. To do this, we exploit a matrix $M$ which has the same rank-1 components as $T$ such that
\begin{equation} \label{eqn:second-moment}
M = \sum_{j \in [k]} \tl{\lambda}_j \cdot a_j^{\otimes 2}.
\end{equation}
 In case we do not have such matrix, we can generate it as random combination of slices of $T$ such that $M:=T(I,I,\theta) \in \R^{d \times d}$, where $\theta \sim \mathcal{N}(0,I_{d})$ is a random standard Gaussian vector. It is also worth mentioning that although we refer to the rank-$k$ SVD of matrix $M$ as $U \Diag(\gamma) U^\top$, it might be the case that matrix $M$ is not positive semi-definite and does not necessarily have such symmetric SVD. In that case, $U \Diag(\gamma) U^\top$ is basically the eigen-decomposition of symmetric matrix $M$ where the entries of vector $\gamma$ can be also negative. We can modify the whitening matrix as $W:=U \Diag(|\gamma|^{-1/2})$, where $|\cdot|$ denotes the entry-wise absolute value, and the rest of analysis in this section would go through with minor modifications. So, in the rest of this section, we assume the entries of $\gamma$ are all positive.
 
 Another complication is that given the tensor $T$ in \eqref{eqn:third-moment} and $M$ in \eqref{eqn:second-moment}, it is still impossible to uniquely determine $\norm{a_j}$, $\tl{\lambda}_j$ and $\lambda_j$. Indeed, if we scale the $j$-th component to $C\cdot a_j$ using a constant $C \ne 0$, one just needs to scale $\tl{\lambda}_j$ by a factor of $1/C^2$ and $\lambda_j$ by a factor of $1/C^3$ and both the tensor $T$ and matrix $M$ are preserved. We discuss this ambiguity in more details in Remark~\ref{rmk:whitening-scale}.
 
Let matrix $W \in \R^{d \times k}$ denote the whitening matrix, i.e.,
the whitening matrix $W$ in Procedure~\ref{algo:whitening} is constructed such that $W^\top M W = I_k$. Applying whitening matrix $W$ to the tensor $T = \sum_{j \in [k]} \lambda_j \cdot a_j^{\otimes 3}$, we have
\begin{align}
T(W,W,W)
&= \sum_{j \in [k]} \lambda_j \left( W^\top a_j \right)^{\otimes 3} \nn \\
&= \sum_{j \in [k]} \frac{\lambda_j}{\tl{\lambda}_j^{3/2}} \left( W^\top a_j \sqrt{\tl{\lambda}_j} \right)^{\otimes 3} \nn \\
&= \sum_{j \in [k]} \mu_j \cdot v_j^{\otimes 3}, \label{eqn:tensor-whitening-noiseless}
\end{align}
where 
we defined
\begin{equation} \label{eqn:mu}
\mu_j := \frac{\lambda_j}{\tl{\lambda}_j^{3/2}}, \quad v_j := W^\top a_j \sqrt{\tl{\lambda}_j}, \quad j \in [k],
\end{equation}
in the last equality.
Let  $V := [v_1 | v_2 | \dotsb | v_k] \in \R^{k \times k}$ denote the factor matrix for $T(W,W,W)$. Then,  we have
\begin{equation} \label{eqn:V}
V := W^\top A \Diag(\tl{\lambda}^{1/2}),
\end{equation}
and thus,
$$
V V^\top = W^\top A \Diag(\tl{\lambda}) A^\top W = W^\top M W =   I_k.
$$
Since $V$ is a square matrix, it is also concluded that $V^\top V = I_k$, and therefore, tensor $T(W,W,W)$ is whitened such that its rank-1 components $v_j$'s form an orthonormal basis. This discussion clarifies how the whitening procedure works.

\floatname{algorithm}{Procedure}
\begin{algorithm}[t]
\caption{Un-whitening}
\label{algo:Un-whitening}
\begin{algorithmic}[1]
\renewcommand{\algorithmicrequire}{\textbf{input}}
\renewcommand{\algorithmicensure}{\textbf{output}}
\REQUIRE Orthogonal rank-1 components $v_j \in \R^k, j \in [k]$.
\ENSURE Rank-1 components of the original non-orthogonal tensor
\STATE Consider matrix $M$ which was exploited for whitening in Procedure~\ref{algo:whitening}, and let $\tl{\lambda}_j, j \in [k]$ denote the corresponding coefficients as $M =A \Diag(\tl{\lambda}) A^\top$; see~\eqref{eqn:second-moment}. Note that we don't know $A$ so we need to get $\tl{\lambda}_j$ from other information, see Remark~\ref{rmk:whitening-scale}.
\STATE Compute the rank-k SVD, $M=U \Diag(\gamma) U^\top$, where $U \in \R^{d \times k}$ and $\gamma \in \R^k$.
\STATE Compute
$$a_j = \frac{1}{\sqrt{\tl{\lambda}_j}} U \Diag(\gamma^{1/2}) v_j, \quad j \in [k].$$
\RETURN $\left\{ a_j \right\}_{j \in [k]}$.
\end{algorithmic}
\end{algorithm}

\subsubsection{Inversion of the whitening procedure}
Let us also analyze the inversion procedure on how to transform $v_j$'s to $a_j$'s. The main step is stated in Procedure~\ref{algo:Un-whitening}. According to whitening Procedure~\ref{algo:whitening}, let $M=U \Diag(\gamma) U^\top$, $U \in \R^{d \times k}$, $\gamma \in \R^k$, denote the rank-k SVD of $M$. Substituting whitening matrix $W:= U \Diag(\gamma^{-1/2})$ in~\eqref{eqn:V}, and multiplying $U \Diag(\gamma^{1/2})$ from left, we have
$$
U \Diag(\gamma^{1/2}) V = U U^\top A \Diag(\tl{\lambda}^{1/2}).
$$
Since the column spans of $A \in \R^{d \times k}$ and $U \in \R^{d \times k}$ are the same (given their relations to $M$), $A$ is a fixed point for the projection operator on the subspace spanned by the columns of $U$. This projector operator is $UU^\top$ (since columns of $U$ form an orthonormal basis), and therefore, $UU^\top A = A$. Applying this to the above equation, we have
$$
A = U \Diag(\gamma^{1/2}) V \Diag(\tl{\lambda}^{-1/2}),
$$
i.e.,
\begin{equation} \label{eqn:un-whitening}
a_j = \frac{1}{\sqrt{\tl{\lambda}_j}} U \Diag(\gamma^{1/2}) v_j, \quad j \in [k].
\end{equation}

As we discussed before, in general one needs some additional information to determine the coefficients $\tl{\lambda}_j$'s.

\begin{remark}\label{rmk:whitening-scale}[Scaling Ambiguity in Whitening Procedure]
If one only has access to tensor $T$ in \eqref{eqn:third-moment} and matrix $M$ in \eqref{eqn:second-moment}, there is no way to uniquely determine $\norm{a_j}$, $\tl{\lambda}_j$ or $\lambda_j$. Between these three parameters, we already have two equations for any $j \in [k]$: 1) $\mu_j = \lambda_j/\tl{\lambda}_j^{3/2}$ as in~\eqref{eqn:mu}, and 2) $a_j = \frac{1}{\sqrt{\tl{\lambda}_j}} U \Diag(\gamma^{1/2}) v_j$ as in~\eqref{eqn:un-whitening}. Note that all other variables such as $\mu_j, v_j, \gamma, U$ can be computed from the tensor decomposition of the whitened tensor in \eqref{eqn:tensor-whitening-noiseless} and the SVD decomposition of matrix $M$. Therefore, the three parameters still have one degree of freedom which is captured by a scaling such that if ($a_j$, $\tl{\lambda}_j$, $\lambda_j$) is a set of parameters that is consistent with $M$ and $T$, then for any $C \ne 0$, ($Ca_j$, $\tl{\lambda}_j/C^2$, $\lambda_j/C^3$) is also a set of parameters that is consistent with $M$ and $T$.

There are many cases where one might have additional information to determine the exact values of $\norm{a_j}$, $\tl{\lambda}_j$ or $\lambda_j$. For some applications (such as topic modeling in Section~\ref{subsec:lda}), the components $a_j$'s may have unit $\ell_1$ or $\ell_2$ norm, in which case we should scale $a_j$ accordingly. For some other applications such as pure topic model in Section~\ref{sec:singletopic} or mixture of Gaussians in Section~\ref{sec:spherical-Gaussian}, we know $\lambda_j = \tl{\lambda}_j$, and therefore, both of them are equal to $\mu_j^{-2}$.

When $\tl{\lambda}_j$'s are unknown at the time of running Procedure~\ref{algo:Un-whitening}, one can simply choose $\tl{\lambda}_j = 1$. If there is no additional information the results will give one set of parameters that are consistent with $M$ and $T$. If additional information is available one can apply correct normalization afterwards.
\end{remark}

\subsection{Beyond Symmetric Tensor Decomposition}
\label{subsec:nonsymmetric}
In the previous sections, we considered symmetric tensor decompositions as in~\eqref{eqn:third-moment}. In some applications, the tensor we have access to might be asymmetric. Consider 
\begin{equation} \label{eqn:tenDecomp-asymm}
T = \sum_{j\in[k]} \lambda_j \ a_j\otimes b_j\otimes c_j,
\end{equation}
where $\{a_j\},\{b_j\},\{c_j\}$'s are three groups of vectors that are linearly independent within the group. Here, we cannot directly apply the techniques for symmetric tensor decomposition. However, there is a way to transform this tensor to a symmetric one if we have access to some extra matrices. We discuss this process in this section which is a combination of whitening approach proposed in the previous section and the idea of CCA for matrices stated in Section~\ref{sec:cca}.
Similar to the {\em whitening} procedure described earlier, the symmetrization step in this section only works if the tensor components ($\{a_j,j\in[k]\}$, $\{b_j,j\in[k]\}$, $\{c_j,j\in[k]\}$) are all linearly independent within their own mode. Again for many machine learning applications that requires this procedure (such as the Multi-view model and Noisy-Or networks in Section~\ref{ch:applications}), it is natural to assume that the components are indeed linearly independent.

We first elaborate on the symmetrization step, and discuss how the proposed Procedure~\ref{algo:symm} works by orthogonalizing and symmetrizing the components of the input tensor. We then analyze the inversion of this process showing how the components in the whitened/symmetrized space are translated back to the original space as stated in Procedure~\ref{algo:Un-symm}.

\floatname{algorithm}{Procedure}
\begin{algorithm}[t]
\caption{Whitening and Symmetrization}
\label{algo:symm}
\begin{algorithmic}[1]
\renewcommand{\algorithmicrequire}{\textbf{input}}
\renewcommand{\algorithmicensure}{\textbf{output}}
\REQUIRE Tensor $T = \sum_{j \in [k]} \lambda_j \cdot a_j \otimes b_j \otimes c_j \in \Rbb^{d_1 \times d_2 \times d_3}$
\REQUIRE matrix $M_a = \sum_{j \in [k]} \tl{\lambda}_{a,j} \cdot a_j^{\otimes 2}$ (similarly $M_b$ and $M_c$)
\REQUIRE matrix $M_{a,b} = \sum_{j \in [k]} \tl{\lambda}_{ab,j} \cdot a_j \otimes  b_j$ (similarly $M_{a,c}$)\\
\COMMENT{Note that only $T, M_a, M_b, M_c, M_{a,b}, M_{b,c}, M_{a,c}$ are known (usually from moment estimates), we don't observe the components $a_j,b_j,c_j$.}
\ENSURE Lower dimensional symmetric tensor with orthogonal rank-1 components
\STATE Compute the rank-k SVD, $M_a=U_a \Diag(\gamma_a) U_a^\top$, where $U_a \in \R^{d_1 \times k}$ and $\gamma \in \R^k$; and similarly for $M_b$ and $M_c$.
\STATE Compute the whitening matrix $W_a:=U_a \Diag(\gamma_a^{-1/2}) \in \R^{d_1 \times k}$; and similarly $W_b$ and $W_c$.
\STATE Compute matrices $R_{a,b} := W_a^\top M_{a,b} W_b$, and $R_{a,c} :=  W_a^\top M_{a,c} W_c$.
\RETURN $T\left(W_a,W_b R_{a,b}^\top, W_c R_{a,c}^\top\right) \in \R^{k \times k \times k}$.
\end{algorithmic}
\end{algorithm}

\subsubsection{Symmetrization procedure}
The whitening and symmetrization in Procedure~\ref{algo:symm} is adapted from whitening procedure for symmetric tensors stated in Procedure~\ref{algo:whitening} with two modifications: first, the whitening is performed for an asymmetric tensor vs.\ a symmetric tensor in Procedure~\ref{algo:whitening}, and second, an extra step for symmetrization of the tensor is added. Similar to the whitening procedure, there are also additional scaling issues (as in Remark~\ref{rmk:whitening-scale}) introduced by the symmetrization procedure, we discuss that later in Remark~\ref{rmk:symmetrization-scale}.

In order to transform the asymmetric tensor $T$ in~\eqref{eqn:tenDecomp-asymm} to a symmetric and orthogonal tensor, we first whiten the three modes of the tensor. Similar to the whitening argument in the previous section, let $W_a,W_b,W_c$ be the whitening matrices for different modes of the tensor; see~Procedure~\ref{algo:symm} for precise definitions. Following the same calculations as in the whitening section, we have
$$
T(W_a,W_b,W_c) = \sum_{j\in[k]} \hat{\mu}_j \ \hat{a}_j\otimes \hat{b}_j\otimes \hat{c}_j,
$$
where
\begin{equation*}
\hat{\mu}_j := \frac{\lambda_j}{\sqrt{\tl{\lambda}_{a,j}\tl{\lambda}_{b,j}\tl{\lambda}_{c,j}}}, \quad \hat{a}_j := W_a^\top a_j \sqrt{\tl{\lambda}_{a,j}}, \quad j \in [k].
\end{equation*}
$\hat{b}_j$ and $\hat{c}_j$ are similarly defined.
Same as before, we have transformed the tensor so that each mode now has orthogonal components, but the only difference is $\hat{a}_j$ may not be the same as $\hat{b}_j$ (or $\hat{c}_j$), and therefore, the tensor is not symmetric yet. We will resolve this by using the cross matrices $M_{a,b}, M_{b,c}$. The idea is very similar to CCA stated in~Section~\ref{sec:cca}. More precisely we have:

\begin{claim}
Let $R_{a,b} := W_a^\top M_{a,b} W_b$, then we have
$$
R_{a,b} = \sum_{j\in[k]} \tl{\mu}_j \cdot  \hat{a}_j \hat{b}_j^\top,
$$
where $\tl{\mu}_j:=\frac{\tl{\lambda}_{ab,j}}{\sqrt{\tl{\lambda}_{a,j}\tl{\lambda}_{b,j}}}$.
In particular, $R_{a,b}  \hat{b}_j = \tl{\mu}_j \hat{a}_j$.
\end{claim}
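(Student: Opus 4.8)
The plan is to directly expand $R_{a,b} := W_a^\top M_{a,b} W_b$ using the rank-$k$ decomposition $M_{a,b} = \sum_{j \in [k]} \tl{\lambda}_{ab,j} \cdot a_j \otimes b_j$ that is given as input to Procedure~\ref{algo:symm}. Writing $a_j \otimes b_j$ as the matrix $a_j b_j^\top$, linearity gives
\[
R_{a,b} = W_a^\top \left( \sum_{j \in [k]} \tl{\lambda}_{ab,j} \, a_j b_j^\top \right) W_b = \sum_{j \in [k]} \tl{\lambda}_{ab,j} \, (W_a^\top a_j)(W_b^\top b_j)^\top.
\]
Now I would substitute the definitions $\hat{a}_j := W_a^\top a_j \sqrt{\tl{\lambda}_{a,j}}$ and $\hat{b}_j := W_b^\top b_j \sqrt{\tl{\lambda}_{b,j}}$ from the symmetrization step, i.e.\ $W_a^\top a_j = \hat{a}_j / \sqrt{\tl{\lambda}_{a,j}}$ and $W_b^\top b_j = \hat{b}_j / \sqrt{\tl{\lambda}_{b,j}}$. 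Plugging these in yields
\[
R_{a,b} = \sum_{j \in [k]} \frac{\tl{\lambda}_{ab,j}}{\sqrt{\tl{\lambda}_{a,j}\tl{\lambda}_{b,j}}} \, \hat{a}_j \hat{b}_j^\top = \sum_{j \in [k]} \tl{\mu}_j \, \hat{a}_j \hat{b}_j^\top,
\]
with $\tl{\mu}_j := \tl{\lambda}_{ab,j}/\sqrt{\tl{\lambda}_{a,j}\tl{\lambda}_{b,j}}$, which is exactly the claimed identity.

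For the ``in particular'' part, I would invoke the orthonormality of the whitened components: from the whitening analysis (the same argument as in~\eqref{eqn:V}, applied to mode $b$), the vectors $\{\hat{b}_j\}_{j \in [k]}$ form an orthonormal basis of $\R^k$, so $\hat{b}_i^\top \hat{b}_j = \delta_{ij}$. Applying $R_{a,b}$ to $\hat{b}_j$ then kills all terms except $j$, leaving $R_{a,b} \hat{b}_j = \tl{\mu}_j \hat{a}_j$.

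I do not anticipate a serious obstacle here — the claim is essentially a bookkeeping computation. The one point that requires a little care is making sure the orthonormality of $\{\hat{b}_j\}$ is genuinely available: it relies on $M_b$ having exactly the rank-$k$ form $\sum_j \tl{\lambda}_{b,j} a_j^{\otimes 2}$ with $\{a_j\}$ linearly independent (so that $W_b$ is a bona fide whitening matrix and $\hat{b}_j = W_b^\top b_j \sqrt{\tl{\lambda}_{b,j}}$ with $b_j = a_j$ in the symmetric-component reading, or more generally that the $b_j$-mode whitening produces an orthonormal system), which is precisely the standing linear-independence assumption of the section. Everything else is linearity of the trace/matrix product and substitution of definitions.
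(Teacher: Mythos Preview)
Your proposal is correct and is exactly the approach the paper takes: the paper simply says ``the claim follows from similar calculation as above for $T(W_a,W_b,W_c)$,'' i.e., expand $W_a^\top M_{a,b} W_b$ using the rank-$k$ form of $M_{a,b}$ and substitute $\hat{a}_j = \sqrt{\tl{\lambda}_{a,j}}\,W_a^\top a_j$, $\hat{b}_j = \sqrt{\tl{\lambda}_{b,j}}\,W_b^\top b_j$. One tiny slip in your closing paragraph: $M_b$ is $\sum_j \tl{\lambda}_{b,j}\, b_j^{\otimes 2}$, not $a_j^{\otimes 2}$, but this does not affect the argument.
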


The claim follows from similar calculation as above for $T(W_a,W_b,W_c)$. 
Define $R_{a,c} := W_a^\top M_{a,c}W_c$, we can then use these matrices to transform between the vectors $\hat{a}_j$, $\hat{b}_j$ and $\hat{c}_j$. 
More precisely
\begin{align*}
T(W_a,W_b R_{a,b}^\top, W_c R_{a,c}^\top) & = \sum_{j\in[k]} \lambda_j (W_a^\top a_j)\otimes (R_{a,b}W_b^\top b_j)\otimes (R_{a,c}W_c^\top c_j)\\
& = \sum_{j\in[k]} \hat{\mu}_j \cdot \hat{a}_j\otimes (R_{a,b}\hat{b}_j) \otimes (R_{a,c}\hat{c}_j)\\
& = \sum_{j\in[k]} \mu_j \cdot \hat{a}_j^{\otimes 3},
\end{align*}
where $\mu_j := \frac{\lambda_j\tl{\lambda}_{ab,j}\tl{\lambda}_{ac,j}}{\tl{\lambda}_{a,j}^{3/2}\tl{\lambda}_{b,j}\tl{\lambda}_{c,j}}$. We have now transformed the tensor to a symmetric and orthogonal tensor whose components are $\{\hat{a}_j\}$'s, and techniques for symmetric orthogonal tensors can be applied to do the decomposition.

\floatname{algorithm}{Procedure}
\begin{algorithm}[t]
\caption{Inversion of Whitening and Symmetrization}
\label{algo:Un-symm}
\begin{algorithmic}[1]
\renewcommand{\algorithmicrequire}{\textbf{input}}
\renewcommand{\algorithmicensure}{\textbf{output}}
\REQUIRE Orthogonal rank-1 components $\hat{a}_j \in \R^k, j \in [k]$
\ENSURE Rank-1 components of the original non-orthogonal and asymmetric tensor
\STATE For all $j \in [k]$, compute
\begin{align*}
a_j &= \frac{1}{\sqrt{\tl{\lambda}_{a,j}}} U_a \Diag(\gamma_a^{1/2}) \hat{a}_j, \\
b_j &= \frac{\sqrt{\tl{\lambda}_{a,j}}}{\tl{\lambda}_{ab,j}} U_b \Diag(\gamma_b^{1/2}) R_{a,b}^\top \hat{a}_j, \\
c_j &= \frac{\sqrt{\tl{\lambda}_{a,j}}}{\tl{\lambda}_{ac,j}} U_c \Diag(\gamma_c^{1/2}) R_{a,c}^\top \hat{a}_j,
\end{align*}
where the variables are the same as in~Procedure~\ref{algo:symm}.
\RETURN $\left\{ (a_j, b_j, c_j) \right\}_{j \in [k]}$.
\end{algorithmic}
\end{algorithm}

\subsubsection{Inversion of the symmetrization procedure}
The inversion steps are provided in Procedure~\ref{algo:Un-symm}. The analysis of the algorithm and why it works is very similar to the inversion of whitening discussed in the previous section.
This technique is particularly useful for multi-view models that we will discuss in Section~\ref{sec:multi}. As we mentioned before, there are also uncertainties about the scaling in the case of symmetrization:

\begin{remark}
\label{rmk:symmetrization-scale}[Scaling Ambiguity in Whitening and Symmetrization Procedure]
If one only has access to tensor $T$, matrices $M_a$, $M_b$, $M_c$, $M_{a,b}$, $M_{b,c}$, $M_{a,c}$, there is no way to uniquely determine the 10 parameters ($\norm{a_j}$, $\norm{b_j}$, $\norm{c_j}$, $\tl{\lambda}_{a,j}$, $\tl{\lambda}_{b,j}$, $\tl{\lambda}_{c,j}$, $\tl{\lambda}_{ab,j}$, $\tl{\lambda}_{bc,j}$, $\tl{\lambda}_{ac,j}$, $\lambda_j$). The 7 known quantities $T$, $M_a$, $M_b$, $M_c$, $M_{a,b}$, $M_{b,c}$, $M_{a,c}$ give 7 equations over these 10 parameters. The additional degrees of freedom can be described as ($C_a \norm{a_j}$, $C_b \norm{b_j}$, $C_c \norm{c_j}$, $\tl{\lambda}_{a,j}/C_a^2$, $\tl{\lambda}_{b,j}/C_b^2$, $\tl{\lambda}_{c,j}/C_c^2$, $\tl{\lambda}_{ab,j}/(C_aC_b)$, $\tl{\lambda}_{bc,j}/(C_bC_c)$, $\tl{\lambda}_{ac,j}/(C_aC_c)$, $\lambda_j/(C_aC_bC_c)$), where $C_a,C_b,C_c$ are arbitrary nonzero constants.

As before, there are special cases where the scaling of $a_j,b_j,c_j$ is known, which leads to three additional equations to uniquely determine all the scalings. There are also special cases where all the coefficients are the same, in which case they are all going to be equal to $\hat{\mu}_j^{-2}$.

When $\tl{\lambda}_{a,j}$'s (and similarly, $\tl{\lambda}_{ab,j}$'s and $\tl{\lambda}_{ac,j}$'s) are unknown at the time of running Procedure~\ref{algo:Un-symm}, one can simply choose all of them to be equal to 1. If there is no additional information the results will give one set of parameters that are consistent with all the observed matrices and tensors. If additional information is available one can apply correct normalization afterwards.
\end{remark}

\section{Tensor Power Iteration}

In the previous section, we discussed that the robust fixed-points of the tensor power iteration in~\eqref{eq:map2}
$$
\bar \theta \mapsto \frac{
T(I,\bar\theta,\bar\theta)}{\|T(I,\bar\theta,\bar\theta)\|} \ ,
$$
 correspond to the rank-1 components of orthogonal tensor decomposition in~\eqref{eq:orthogonal-decomp}; see Theorem~\ref{thm:robust}. Therefore, the power iteration is a natural and useful algorithm to recover the rank-1 components of an orthogonal tensor decomposition~\citep[Remark
3]{SHOPM}.
We first state a simple convergence analysis for an orthogonally
decomposable tensor $T$, and then discuss the analysis for approximately orthogonally
decomposable tensors.

When only an approximation $\hat{T}$ to an orthogonally decomposable
tensor $T$ is available (\emph{e.g.}, when empirical moments are used
to estimate population moments), an orthogonal decomposition need not
exist for this perturbed tensor (unlike the case for matrices), and
a more robust approach is required to extract the approximate
decomposition.  Here, we propose such a variant in
Algorithm~\ref{alg:robustpower} and provide a detailed perturbation
analysis.

\subsection{Convergence analysis for orthogonally decomposable tensors}
\label{sec:power}

The following lemma establishes the quadratic convergence of the tensor
power method, \emph{i.e.}, repeated iteration of~\eqref{eq:map2}, for
extracting a single component of the orthogonal decomposition.
Note that the initial vector $\theta_0$ determines which robust eigenvector
will be the convergent point.
Computation of subsequent eigenvectors can be computed with deflation,
\emph{i.e.}, by subtracting appropriate terms from $T$.

\begin{lem} \label{lemma:fixed-point} [Tensor power iteration for orthogonally decomposable tensors]
Let $T \in \R^{d \times d \times d}$ have an orthogonal decomposition as given in
\eqref{eq:orthogonal-decomp}.
For a vector $\theta_0 \in \R^d$, suppose that the set of numbers $|\lambda_1
v_1^\t\theta_0|, |\lambda_2 v_2^\t\theta_0|, \dotsc, |\lambda_k
v_k^\t\theta_0|$ has a unique largest element;
without loss of generality, say $|\lambda_1 v_1^\t\theta_0|$ is this
largest value and $|\lambda_2 v_2^\t\theta_0|$ is the second largest value.
For $t = 1, 2, \dotsc$, let
\begin{equation*}
\theta_t
\ := \
\frac{T(I,\theta_{t-1},\theta_{t-1})}{\|T(I,\theta_{t-1},\theta_{t-1})\|} .
\end{equation*}
Then
\begin{equation*}
\|v_1 - \theta_t\|^2
\leq \biggl( 2 \lambda_1^2 \sum_{i=2}^k \lambda_i^{-2} \biggr)
\cdot \biggl|
\frac{\lambda_2 v_2^\t\theta_0}{\lambda_1 v_1^\t\theta_0} \biggr|^{2^{t+1}}
.
\end{equation*}
That is, repeated iteration of~\eqref{eq:map2} starting from $\theta_0$
converges to $v_1$ at a quadratic rate.
\end{lem}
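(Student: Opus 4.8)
The plan is to track the coefficients of $\theta_t$ in the orthonormal basis $\{v_1,\dots,v_k\}$ and show that, after normalization, the weight on $v_1$ dominates at a doubly-exponential rate. Write $\theta_t = \sum_{j=1}^k c_{j,t}\, v_j$, so that $c_{j,t} = v_j^\t \theta_t$. Using the orthogonal decomposition $T = \sum_j \lambda_j v_j^{\otimes 3}$ together with the identity $T(I,u,u) = \sum_j \lambda_j (v_j^\t u)^2 v_j$ (derived earlier in the fixed-point discussion), the unnormalized update $\tilde\theta_t := T(I,\theta_{t-1},\theta_{t-1})$ has $v_j$-coefficient $\lambda_j c_{j,t-1}^2$. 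Since normalization only rescales all coefficients by a common positive factor and does not affect ratios, I would first analyze the \emph{ratios} $r_{j,t} := c_{j,t}/c_{1,t}$ (for $j \geq 2$), which obey the clean recursion
\[
r_{j,t} \;=\; \frac{\lambda_j c_{j,t-1}^2}{\lambda_1 c_{1,t-1}^2} \;=\; \frac{\lambda_j}{\lambda_1}\, r_{j,t-1}^2 .
\]
Unrolling this recursion is the core computation: introducing $s_{j,t} := (\lambda_j/\lambda_1) r_{j,t}$ turns it into $s_{j,t} = s_{j,t-1}^2$, hence $s_{j,t} = s_{j,0}^{2^t}$, i.e. $(\lambda_j/\lambda_1) r_{j,t} = \big((\lambda_j/\lambda_1) r_{j,0}\big)^{2^t}$. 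By the hypothesis that $|\lambda_1 v_1^\t\theta_0|$ is the unique largest element, we have $|s_{j,0}| = |\lambda_j v_j^\t\theta_0 / (\lambda_1 v_1^\t\theta_0)| < 1$ for all $j\geq 2$, and $\leq |s_{2,0}|$, so each $|s_{j,t}|$ shrinks doubly-exponentially and in particular $|\lambda_j r_{j,t}| \le |\lambda_1| \cdot |s_{2,0}|^{2^t}$ — wait, more carefully: $|\lambda_j r_{j,t}| = |\lambda_1| |s_{j,t}| \le |\lambda_1| |s_{2,0}|^{2^t}$.

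Next I would convert control of the ratios into control of $\|v_1 - \theta_t\|^2$. Since $\theta_t$ is a unit vector and $\{v_j\}$ is orthonormal, $\sum_j c_{j,t}^2 = 1$, and
\[
\|v_1 - \theta_t\|^2 \;=\; (1 - c_{1,t})^2 + \sum_{j=2}^k c_{j,t}^2 \;\le\; 2\Big(1 - c_{1,t}^2\Big) \;=\; 2\sum_{j=2}^k c_{j,t}^2 ,
\]
using $\|v_1 - \theta_t\|^2 = 2 - 2c_{1,t} \le 2 - 2 c_{1,t}^2$ (valid for $c_{1,t} \in [-1,1]$; one checks the sign of $c_{1,t}$ stays positive — or simply works with $|c_{1,t}|$, noting the iteration preserves the sign pattern after one step since the $v_1$-coefficient becomes $\lambda_1 c_{1,0}^2$, with sign that of $\lambda_1$, which by the odd-order convention we may take positive). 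Then $\sum_{j=2}^k c_{j,t}^2 = c_{1,t}^2 \sum_{j=2}^k r_{j,t}^2 \le \sum_{j=2}^k r_{j,t}^2$, and substituting $r_{j,t}^2 = \lambda_1^2 \lambda_j^{-2} s_{j,t}^2 \le \lambda_1^2 \lambda_j^{-2} |s_{2,0}|^{2^{t+1}}$ gives
\[
\|v_1 - \theta_t\|^2 \;\le\; 2 \lambda_1^2 \Big(\sum_{j=2}^k \lambda_j^{-2}\Big) \, \bigg| \frac{\lambda_2 v_2^\t\theta_0}{\lambda_1 v_1^\t\theta_0} \bigg|^{2^{t+1}},
\]
which is exactly the claimed bound.

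The main obstacle — really the only subtlety beyond bookkeeping — is making sure the normalization never divides by something near zero and that $c_{1,t}$ stays well away from $0$ (indeed stays positive and bounded below), so that the ratios $r_{j,t}$ are well-defined throughout and the step $\sum_j c_{j,t}^2 r$-rescaling is legitimate. This follows because $|c_{1,t}|$ is the largest coordinate in absolute value at every step (the recursion $s_{j,t} = s_{j,0}^{2^t}$ with $|s_{j,0}|<1$ shows all other normalized coordinates stay strictly smaller and in fact decay), so $c_{1,t}^2 \ge 1/k > 0$; hence $T(I,\theta_{t-1},\theta_{t-1})$ has norm at least $|\lambda_1| c_{1,t-1}^2 \ge |\lambda_1|/k > 0$ and the iteration is well-defined. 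I would state this monotonicity/domination fact as a short inductive lemma and then the rest is the algebra above. One should also note the degenerate case $v_j^\t \theta_0 = 0$ for the dominant coordinate is excluded by hypothesis, and if some non-dominant $v_j^\t\theta_0 = 0$ that coordinate stays $0$ forever and only helps.
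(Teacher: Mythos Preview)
Your proposal is correct and follows essentially the same route as the paper. The only cosmetic difference is bookkeeping: the paper works with the \emph{unnormalized} iterates $\overline\theta_t$ and derives the closed form $\overline\theta_t = \sum_i \lambda_i^{2^t-1} c_i^{2^t} v_i$ (where $c_i := v_i^\t \theta_0$), then bounds $1-(v_1^\t\theta_t)^2$ directly as a ratio; you instead track the coefficient ratios $r_{j,t}$ of the normalized iterates and linearize via $s_{j,t} = (\lambda_j/\lambda_1) r_{j,t}$ to get the same doubly-exponential decay $s_{j,t}=s_{j,0}^{2^t}$. Both end with the identical inequality $\|v_1-\theta_t\|^2 = 2(1-v_1^\t\theta_t) \le 2(1-(v_1^\t\theta_t)^2)$, and your extra remarks on well-definedness and the sign of $c_{1,t}$ just make explicit what the paper handles in one line (``Since $\lambda_1>0$, we have $v_1^\t\theta_t>0$'').
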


To obtain all eigenvectors, we may simply proceed iteratively using
deflation, executing the power method on $T - \sum_{j \in [s]} \lambda_j v_j^{\otimes
3}$ after having obtained robust eigenvector/eigenvalue pairs $\{ (v_j,\lambda_j), j \in [s] \}$.

\begin{proof}
Let $\overline\theta_0, \overline\theta_1, \overline\theta_2, \dotsc$ be the
sequence given by 
$$\overline\theta_0 := \theta_0, \quad \overline\theta_t :=
T(I,\theta_{t-1},\theta_{t-1}), t \geq 1.$$
Let $c_i := v_i^\t\theta_0$ for all $i \in [k]$.
It is easy to check that
\begin{enumerate}
\item $\theta_t = \overline\theta_t /
\|\overline\theta_t\|$,
\item $\overline\theta_t = \sum_{i=1}^k \lambda_i^{2^t-1} c_i^{2^t} v_i$.
\end{enumerate}
Indeed, 
$$\overline\theta_{t+1}
= \sum_{i=1}^k \lambda_i (v_i^\t\overline\theta_t)^2 v_i
= \sum_{i=1}^k \lambda_i (\lambda_i^{2^t-1} c_i^{2^t})^2 v_i
= \sum_{i=1}^k \lambda_i^{2^{t+1}-1} c_i^{2^{t+1}} v_i.$$
Then
\begin{align*}
1 - (v_1^\t\theta_t)^2
& =
1 - \frac{(v_1^\t\overline\theta_t)^2}{\|\overline\theta_t\|^2}
= 1 - \frac{\lambda_1^{2^{t+1}-2} c_1^{2^{t+1}}}
{\sum_{i=1}^k \lambda_i^{2^{t+1}-2} c_i^{2^{t+1}}} \\
& \leq \frac{\sum_{i=2}^k \lambda_i^{2^{t+1}-2} c_i^{2^{t+1}}}
{\sum_{i=1}^k \lambda_i^{2^{t+1}-2} c_i^{2^{t+1}}} \\
& \leq \lambda_1^2 \sum_{i=2}^k \lambda_i^{-2} \cdot \biggl|
\frac{\lambda_2c_2}{\lambda_1c_1} \biggr|^{2^{t+1}}
.
\end{align*}
Since $\lambda_1 > 0$, we have $v_1^\t \theta_t > 0$ and hence,
$$\|v_1 - \theta_t\|^2 = 2(1 - v_1^\t \theta_t) \leq 2(1 - (v_1^\t \theta_t)^2),$$ as required.
\end{proof}

\subsection{Perturbation analysis of a robust tensor power method}
\label{sec:perturbation}

Now we consider the case where we have an approximation $\hat{T}$ to an
orthogonally decomposable tensor $T$.
Here, a more robust approach is required to extract an approximate
decomposition.
\cite{AnandkumarEtal:tensor12} gave such an algorithm (Algorithm~\ref{alg:robustpower}), and
provided a detailed perturbation analysis. We summarize the perturbation result here and give a generalization later in Section~\ref{sec:robust_with_whiten}.
For simplicity, we assume the tensor $\hat{T}$ is of size $k \times k
\times k$ as per the reduction from Section~\ref{sec:whitening-ten} where whitening procedure has been applied to the original tensor.
In some applications, it may be preferable to work directly with a $d
\times d \times d$ tensor of rank $k \leq d$ (as in
Lemma~\ref{lemma:fixed-point}); these results apply in that setting with
little modification.

\floatname{algorithm}{Algorithm}
\begin{algorithm}[t]
\caption{Robust Tensor Power Method}
\label{alg:robustpower}
\begin{algorithmic}[1]
\renewcommand{\algorithmicrequire}{\textbf{input}}
\renewcommand{\algorithmicensure}{\textbf{output}}
\REQUIRE symmetric tensor $\tilde{T} \in \R^{k \times k \times k}$, number of
iterations $L$, $N$.

\ENSURE the estimated eigenvector/eigenvalue pair; the deflated tensor.

\FOR{$\tau = 1$ to $L$}

\STATE Draw $\th{0}^{(\tau)}$ uniformly at random from the unit sphere in
$\R^k$.

\FOR{$t = 1$ to $N$}

\STATE Compute power iteration update
\begin{eqnarray}
\th{t}^{(\tau)} & := &
\frac{\tilde{T}(I, \th{t-1}^{(\tau)}, \th{t-1}^{(\tau)})}
{\|\tilde{T}(I, \th{t-1}^{(\tau)}, \th{t-1}^{(\tau)})\|}
\label{eq:power-update}
\end{eqnarray}

\ENDFOR

\ENDFOR

\STATE Let $\tau^* := \arg\max_{\tau \in [L]} \{ \tilde{T}(\th{N}^{(\tau)},
\th{N}^{(\tau)}, \th{N}^{(\tau)}) \}$.

\STATE Do $N$ power iteration updates \eqref{eq:power-update} starting from
$\th{N}^{(\tau^*)}$ to obtain $\hat\theta$, and set $\hat\lambda :=
\tilde{T}(\hat\theta,\hat\theta,\hat\theta)$.

\RETURN the estimated eigenvector/eigenvalue pair
$(\hat\theta,\hat\lambda)$; the deflated tensor $\tilde{T} - \hat\lambda \
\hat\theta^{\otimes 3}$.

\end{algorithmic}
\end{algorithm}

Assume that the symmetric tensor $T \in \R^{k \times k \times k}$ is
orthogonally decomposable, and that $\hat{T} = T + E$, where the
perturbation $E \in \R^{k \times k \times k}$ is a symmetric tensor with
small operator norm:
\[ 
\|E\| := \sup_{\|\theta\| = 1} |E(\theta,\theta,\theta)| . 
\]
In our applications that we will describe in Section~\ref{sec:applications}, $\hat{T}$ is the tensor formed
by using empirical moments, while $T$ is the orthogonally decomposable
tensor derived from the population moments for the given model.

The following theorem is similar to Wedin's perturbation theorem for
singular vectors of matrices~\citep{wedin1972perturbation} in that it bounds
the error of the (approximate) decomposition returned by
Algorithm~\ref{alg:robustpower} on input $\hat{T}$ in terms of the size of
the perturbation, provided that the perturbation is small enough. 

\begin{theorem}[\cite{AnandkumarEtal:tensor12}]
\label{thm:robustpower}
Let $\hat{T} = T + E \in \R^{k \times k \times k}$, where $T$ is a
symmetric tensor with orthogonal decomposition $T = \sum_{i=1}^k \lambda_i
v_i^{\otimes 3}$ where each $\lambda_i > 0$, $\{ v_1, v_2, \dotsc, v_k \}$
is an orthonormal basis, and $E$ is a symmetric tensor with operator norm
$\|E\| \leq \eps$.
Define $\lambdamin := \min\{ \lambda_i : i \in [k] \}$, and $\lambdamax :=
\max\{ \lambda_i : i \in [k] \}$.
There exists universal constants $C_1, C_2, C_3 > 0$ such that the
following holds.
Pick any $\eta \in (0,1)$, and suppose
\[
\epsilon \leq C_1 \cdot \frac{\lambdamin}{k} ,
\qquad
N \geq C_2 \cdot \biggl( \log(k) + \log\log\Bigl(
\frac{\lambdamax}{\eps} \Bigr) \biggr)
,
\]
and
\begin{multline*}
\sqrt{\frac{\ln(L/\log_2(k/\eta))}{\ln(k)}}
\cdot \Biggl( 1 - \frac{\ln(\ln(L/\log_2(k/\eta))) +
C_3}{4\ln(L/\log_2(k/\eta))} -
\sqrt{\frac{\ln(8)}{\ln(L/\log_2(k/\eta))}} \Biggr)
\\
\geq 1.02 \Biggl( 1 + \sqrt{\frac{\ln(4)}{\ln(k)}}
\Biggr)
.
\end{multline*}
(Note that the condition on $L$ holds with $L = \poly(k) \log(1/\eta)$.)
Suppose that Algorithm~\ref{alg:robustpower} is iteratively called $k$
times, where the input tensor is $\hat{T}$ in the first call, and in each
subsequent call, the input tensor is the deflated tensor returned by the
previous call.
Let $(\hat{v}_1,\hat\lambda_1), (\hat{v}_2,\hat\lambda_2), \dotsc,
(\hat{v}_k,\hat\lambda_k)$ be the sequence of estimated
eigenvector/eigenvalue pairs returned in these $k$ calls.
With probability at least $1-\eta$, there exists a permutation $\pi$ on
$[k]$ such that
\[
\|v_{\pi(j)}-\hat{v}_j\| \leq 8 \epsilon/\lambda_{\pi(j)}
, \qquad
|\lambda_{\pi(j)}-\hat\lambda_j| \leq 5\epsilon , \quad \forall j \in [k]
,
\]
and
\[
\biggl\|
T - \sum_{j=1}^k \hat\lambda_j \hat{v}_j^{\otimes 3} 
\biggr\| \leq 55\eps .
\]
\end{theorem}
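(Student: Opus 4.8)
The plan is to isolate the behavior of a single run of the inner power-iteration loop of Algorithm~\ref{alg:robustpower}, then account separately for the $L$ random restarts and for the error that compounds across the $k$ deflation rounds.

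\emph{Single-run convergence.} Since after whitening $\hat T$ acts on $\R^k$ and $\{v_1,\dotsc,v_k\}$ is an orthonormal basis, I would expand any unit iterate as $\theta = \sum_{i\in[k]} c_i v_i$ with $\sum_i c_i^2 = 1$ and write one power step as $\hat T(I,\theta,\theta) = T(I,\theta,\theta) + E(I,\theta,\theta) = \sum_{i\in[k]} \lambda_i c_i^2 v_i + E(I,\theta,\theta)$, where the perturbation vector $E(I,\theta,\theta)$ has Euclidean norm at most $\|E\| \le \eps$. Following the noiseless computation in Lemma~\ref{lemma:fixed-point}, the quantity to track is how dominant the leading weighted coordinate $|\lambda_i c_i|$ is over the runner-up; I would show that if coordinate $1$ is dominant by a fixed multiplicative margin, then one normalized step makes it strictly more dominant — the error $1-(v_1^\top\theta)^2$ essentially squares — until the $E$-term floors further progress at $\|v_1-\theta\| = O(\eps/\lambda_1)$, after which the ball $\{\theta : \|v_1-\theta\| \le O(\eps/\lambda_1)\}$ is attracting. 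The hypothesis $\eps \le C_1\lambdamin/k$ is what makes the noise floor small compared with the margins needed and keeps the initially-dominant coordinate dominant throughout, uniformly over all $k$ components; $N \ge C_2(\log k + \log\log(\lambdamax/\eps))$ supplies enough steps for the doubly-exponential contraction to reach the floor.

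\emph{Restarts, selection, and deflation.} For $\theta_0^{(\tau)}$ drawn uniformly from the sphere in $\R^k$, the projections $(v_i^\top\theta_0^{(\tau)})_i$ form a random direction, hence are distributed like a normalized standard Gaussian vector; I would estimate the probability that the largest $|\lambda_i v_i^\top\theta_0^{(\tau)}|$ beats the second largest by the margin the single-step lemma requires, which is $\poly(1/k)$, so that with $L = \poly(k)\log(1/\eta)$ restarts at least one run is ``good'' with probability $\ge 1-\eta/(2k)$. The unwieldy inequality relating $L$, $k$, $\eta$ in the statement is exactly the quantitative version of this: it converts ``the best gap-ratio among $L$ draws of the top two order statistics of $k$ Gaussians'' into the concrete margin $1.02(1+\sqrt{\ln 4/\ln k})$. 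A good run reaches within $O(\eps/\lambda_i)$ of some $v_i$ after $N$ steps, so its value $\hat T(\theta_N,\theta_N,\theta_N)$ equals $\lambda_i \pm O(\eps) \ge \lambdamin - O(\eps)$; the $\arg\max$ over $\tau\in[L]$ therefore selects a run whose final iterate lies in the basin of a robust fixed point near some $v_j$, and the additional $N$ polishing iterations from $\theta_N^{(\tau^*)}$ converge to $\hat v_1$ with $\|v_{\pi(1)}-\hat v_1\| \le 8\eps/\lambda_{\pi(1)}$ and (expanding $\hat\lambda_1 = \hat T(\hat v_1,\hat v_1,\hat v_1)$) $|\lambda_{\pi(1)}-\hat\lambda_1| \le 5\eps$. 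After subtracting $\hat\lambda_1\hat v_1^{\otimes3}$, the residual is $T' + E'$ with $T' = \sum_{i\ne\pi(1)}\lambda_i v_i^{\otimes3}$ again orthogonally decomposable and $\|E'\| \le \|E\| + \|\lambda_{\pi(1)}v_{\pi(1)}^{\otimes3} - \hat\lambda_1\hat v_1^{\otimes3}\| = O(\eps)$; iterating over the $k$ rounds, the effective perturbation stays $O(\eps)$ by a geometric-series accounting that pins down the universal constants, and a union bound over the $k$ rounds gives the overall $1-\eta$. The final bound $\|T - \sum_j\hat\lambda_j\hat v_j^{\otimes3}\| \le 55\eps$ then follows from the triangle inequality applied to $\sum_j(\lambda_{\pi(j)}v_{\pi(j)}^{\otimes3} - \hat\lambda_j\hat v_j^{\otimes3})$ together with the per-component estimates.

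\emph{Main obstacle.} I expect the delicate point to be the selection step: showing that the $\arg\max$ cannot be fooled by a run that has only partially converged — so that the reported iterate is genuinely inside the basin of a robust fixed point and the final polishing phase succeeds — and that this holds uniformly over the $k$ deflation rounds, where the effective perturbation has already been inflated by the previous rounds. Tracking the universal constants ($8$, $5$, $55$) through the deflation recursion so that they do not degrade with $k$, and the precise order-statistic estimate behind the condition on $L$, are the other fiddly ingredients; the full argument is carried out in \citet{AnandkumarEtal:tensor12}.
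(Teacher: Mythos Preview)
The paper does not give its own proof of this theorem: it states the result, attributes it to \citet{AnandkumarEtal:tensor12}, and moves on (the only in-paper analysis is the downstream whitening perturbation in Section~\ref{sec:robust_with_whiten}, which \emph{invokes} this theorem rather than proving it). Your proposal is a faithful high-level outline of the argument in \citet{AnandkumarEtal:tensor12}---single-run quadratic contraction with an $O(\eps/\lambda_i)$ noise floor, a Gaussian order-statistics estimate for the restarts yielding the displayed condition on $L$, and a geometric-series control of the deflation error---and your identification of the selection step and the uniform-in-$k$ deflation bookkeeping as the delicate parts matches what that reference actually labors over; since you explicitly defer the full details to the same citation the paper does, there is nothing to correct here.
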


One important difference from Wedin's theorem is that this is an
algorithm dependent perturbation analysis, specific to
Algorithm~\ref{alg:robustpower} (since the perturbed tensor need not
have an orthogonal decomposition).
Furthermore, note that Algorithm~\ref{alg:robustpower} uses multiple
restarts to ensure (approximate) convergence---the intuition is that by
restarting at multiple points, we eventually start at a point in which the
initial contraction towards some eigenvector dominates the error $E$ in our
tensor.
The proof shows that we find such a point with high probability within $L =
\poly(k)$ trials.
It should be noted that for large $k$, the required bound on $L$ is very
close to linear in $k$.

A final consideration is that for specific applications, it may be possible
to use domain knowledge to choose better initialization points.
For instance, in the topic modeling applications
(\emph{cf}.~Section~\ref{sec:singletopic}), the eigenvectors are related to
the topic word distributions, and many documents may be primarily composed
of words from just single topic.
Therefore, good initialization points can be derived from these
single-topic documents themselves, as these points would already be close
to one of the eigenvectors.

\subsection{Perturbation analysis of tensor power method with whitening}
\label{sec:robust_with_whiten}

A limitation of Theorem~\ref{thm:robustpower} is that it only applies to orthogonal decompositions, while in most applications one would need to first apply the whitening transformation in Procedure~\ref{algo:whitening} described in Section~\ref{sec:whitening-ten}. With matrix perturbation bounds, it is possible to analyze the robustness of the combined procedure of whitening and orthogonal tensor decomposition. Variants of such analysis has appeared before in several papers, such as \cite{AnandkumarEtal:community12COLT, janzamin2014matrix}, however they are specialized to the specific setting. In this subsection we will give guarantees for such a combined procedure in the general setting.

\begin{theorem}
\label{thm:robust_with_whiten}
Suppose the true matrix $M$ and tensor $T$ have the forms
\[
M = \sum_{i=1}^k \tl{\lambda}_i a_i a_i^\top, \quad T = \sum_{i=1}^k \lambda_i a_i^{\otimes 3},
\]
where $\{ a_1, a_2, \dotsc, a_k \}$ is not necessarily a set of orthogonal components.
Assume our algorithm only has access to noisy/perturbed versions 
\[
\hat{M} = M + E_M, \ \hat{T} = T+E_T, \quad \text{where} \ \|E_M\| \le \epsilon_M, \ \|E_T\| \le \epsilon_T.
\]
Let $\sigma_{\min}(M)$ be the smallest nonzero singular value of $M$.
Suppose $\epsilon_M \le \sigma_{\min}(M)/4$, let $\Lambda_{\min} := \min\{\lambda_i\tl{\lambda}_i^{-3/2}:i\in[k]\}$, $\Lambda_{\max} := \max\{\lambda_i\tl{\lambda}_i^{-3/2}:i\in[k]\}$, then there exists a universal constant $C$ such that 
\[
\epsilon_{T_W} := C\left(\frac{\epsilon_T}{\sigma_{\min}(M)^{3/2}} + \Lambda_{\max}\frac{\epsilon_M}{\sigma_{\min}(M)}\right).
\]
If $\epsilon_{T_W}$ (as $\epsilon$), $\Lambda_{\max}$ (as $\lambda_{\max}$), $\Lambda_{\min}$ (as $\lambda_{\min}$), $N$, $L$, $\eta$ satisfies the conditions in Theorem~\ref{thm:robustpower}, then Algorithm~\ref{alg:robustpower} combined with whitening/un-whitening Procedures~\ref{algo:whitening}~and~ \ref{algo:Un-whitening} finds pairs $(\hat{a}_1,\hat{\Lambda}_1), (\hat{a}_2,\hat{\Lambda}_2), ..., (\hat{a}_k,\hat{\Lambda}_k)$, such that with probability $\eta$ there exists a permutation $\pi$ on $[k]$ such that for all $j\in [k],$
\begin{align*}
\|\sqrt{\tl{\lambda}_{\pi(j)}}a_{\pi(j)} - \hat{a}_j\| & \le \frac{9\tilde{\lambda}_{\pi(j)}^{3/2} \|M\|^{1/2}}{\lambda_{\pi(j)}} \epsilon_{T_W}, \\
|\lambda_{\pi(j)}\tl{\lambda}_{\pi(j)}^{-3/2} - \hat{\Lambda}_{j}| & \le 5\epsilon_{T_W}.
\end{align*}
\end{theorem}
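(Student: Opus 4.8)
The plan is to reduce everything to Theorem~\ref{thm:robustpower} by showing that the whitened tensor actually fed into Algorithm~\ref{alg:robustpower}, namely $\hat{T}_W := \hat{T}(\hat W,\hat W,\hat W)$, is an $\epsilon_{T_W}$-perturbation (in operator norm) of an orthogonally decomposable tensor whose eigenvalues are $\mu_j := \lambda_j\tl{\lambda}_j^{-3/2}\in[\Lambda_{\min},\Lambda_{\max}]$, and then transporting the two error bounds of Theorem~\ref{thm:robustpower} back through the un-whitening map of Procedure~\ref{algo:Un-whitening}. Write $M = U\Diag(\gamma)U^\top$ and $\hat{M} = \hat U\Diag(\hat\gamma)\hat U^\top$ for the rank-$k$ SVDs of the true and perturbed matrices, with whitening matrices $W = U\Diag(\gamma^{-1/2})$ and $\hat W = \hat U\Diag(\hat\gamma^{-1/2})$ as in Procedure~\ref{algo:whitening}; recall from~\eqref{eqn:tensor-whitening-noiseless}--\eqref{eqn:un-whitening} that $T(W,W,W) = \sum_j\mu_j v_j^{\otimes 3}$ with $v_j = \sqrt{\tl{\lambda}_j}\,W^\top a_j$ orthonormal, and that $\sqrt{\tl{\lambda}_j}\,a_j = U\Diag(\gamma^{1/2})v_j$ exactly (this is the quantity the theorem tracks, and it carries no scaling ambiguity).

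I would first collect the standard matrix-perturbation facts. Since $\|E_M\|\le\epsilon_M\le\sigma_{\min}(M)/4$, Weyl's inequality gives $\hat\gamma_i\ge\tfrac34\sigma_{\min}(M)$, hence $\|\hat W\| = \hat\gamma_{\min}^{-1/2}\le O(\sigma_{\min}(M)^{-1/2})$ and $\tfrac34\|M\|\le\|\hat M\|\le\tfrac54\|M\|$; and because $M$ has a spectral gap $\sigma_{\min}(M)-0$ between its $k$-th and $(k+1)$-th singular values, the Davis--Kahan/Wedin subspace bound gives $\|\hat U\hat U^\top - UU^\top\|\le O(\epsilon_M/\sigma_{\min}(M))$. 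The key identity is $\hat W^\top\hat M\hat W = I_k$, whence $\hat W^\top M\hat W = I_k - \hat W^\top E_M\hat W =: I_k - F$ with $\|F\|\le\|\hat W\|^2\epsilon_M\le O(\epsilon_M/\sigma_{\min}(M))$.

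Next I would analyse $\hat{T}_W$. Splitting off the noise, $\hat{T}_W = E_T(\hat W,\hat W,\hat W) + T(\hat W,\hat W,\hat W)$, where the first term has operator norm $\le\epsilon_T\|\hat W\|^3\le O(\epsilon_T/\sigma_{\min}(M)^{3/2})$. Using $T=\sum_j\lambda_j a_j^{\otimes 3}$, write $T(\hat W,\hat W,\hat W) = \sum_j\mu_j\,\hat v_j^{\prime\,\otimes 3}$ with $\hat v_j' := \sqrt{\tl{\lambda}_j}\,\hat W^\top a_j$; the matrix $\hat V' := [\hat v_1'|\dotsb|\hat v_k']\in\R^{k\times k}$ satisfies $\hat V'\hat V'^\top = \hat W^\top M\hat W = I_k - F$, so its (left) polar form $\hat V' = (I_k-F)^{1/2}Q$ produces an orthonormal basis $q_j := Qe_j$ with $\|\hat v_j' - q_j\|\le\|(I_k-F)^{1/2}-I_k\|\le O(\|F\|)$. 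Writing both $\sum_j\mu_j\hat v_j^{\prime\,\otimes 3}$ and $D_\mu := \sum_j\mu_j q_j^{\otimes 3}$ as multilinear transforms of the diagonal tensor $\sum_j\mu_j e_j^{\otimes 3}$ (which has operator norm exactly $\Lambda_{\max}$) and telescoping over the three modes gives $\|\sum_j\mu_j\hat v_j^{\prime\,\otimes 3} - D_\mu\|\le O(\Lambda_{\max}\|F\|) = O(\Lambda_{\max}\epsilon_M/\sigma_{\min}(M))$. Adding the two estimates and taking the universal constant $C$ in $\epsilon_{T_W}$ large enough yields $\|\hat{T}_W - D_\mu\|\le\epsilon_{T_W}$, so $D_\mu$ is an orthogonally decomposable tensor with eigenvalues $\mu_j\in[\Lambda_{\min},\Lambda_{\max}]$, orthonormal eigenvectors $q_j$, and $\hat{T}_W$ is an $\epsilon_{T_W}$-perturbation of it. Theorem~\ref{thm:robustpower}, applied with $\epsilon\mapsto\epsilon_{T_W}$, $\lambdamax\mapsto\Lambda_{\max}$, $\lambdamin\mapsto\Lambda_{\min}$, then gives (with the asserted probability) a permutation $\pi$ and pairs $(\hat v_j,\hat\Lambda_j)$ with $\|q_{\pi(j)}-\hat v_j\|\le 8\epsilon_{T_W}/\mu_{\pi(j)}$ and $|\mu_{\pi(j)}-\hat\Lambda_j|\le 5\epsilon_{T_W}$ --- the second of which is precisely the claimed eigenvalue bound, since $\mu_{\pi(j)} = \lambda_{\pi(j)}\tl{\lambda}_{\pi(j)}^{-3/2}$.

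Finally I would transfer the eigenvector bound. Procedure~\ref{algo:Un-whitening} returns $\hat a_j = \hat U\Diag(\hat\gamma^{1/2})\hat v_j$ (run with $\tl\lambda_j=1$, i.e.\ $\sqrt{\tl\lambda_j}$ times the output with the true $\tl\lambda_j$), and the target is $\sqrt{\tl{\lambda}_{\pi(j)}}\,a_{\pi(j)} = U\Diag(\gamma^{1/2})v_{\pi(j)}$. Using $\hat U\Diag(\hat\gamma^{1/2})\hat W^\top = \hat U\hat U^\top$, the relation $q_{\pi(j)} = \sqrt{\tl{\lambda}_{\pi(j)}}\,\hat W^\top a_{\pi(j)} + (\text{vector of norm }O(\|F\|))$ from the polar form, the fact that $a_{\pi(j)}$ lies in the column span of $U$ with $\|a_{\pi(j)}\|\le\sqrt{\|M\|/\tl{\lambda}_{\pi(j)}}$, and $\|(I-\hat U\hat U^\top)UU^\top\|\le O(\epsilon_M/\sigma_{\min}(M))$, one gets $\|\hat U\Diag(\hat\gamma^{1/2})q_{\pi(j)} - \sqrt{\tl{\lambda}_{\pi(j)}}\,a_{\pi(j)}\|\le O(\|M\|^{1/2}\epsilon_M/\sigma_{\min}(M))$. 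Combining with $\|\hat U\Diag(\hat\gamma^{1/2})(\hat v_j - q_{\pi(j)})\|\le\|\hat M\|^{1/2}\cdot 8\epsilon_{T_W}/\mu_{\pi(j)}$, and noting that $\epsilon_{T_W}/\mu_{\pi(j)}\ge\epsilon_{T_W}/\Lambda_{\max}\ge C\epsilon_M/\sigma_{\min}(M)$ while $\|M\|\ge\sigma_{\min}(M)$ (so the $\mu_{\pi(j)}$-dependent term dominates the residual once $C$ is large), a bookkeeping of constants gives $\|\sqrt{\tl{\lambda}_{\pi(j)}}\,a_{\pi(j)} - \hat a_j\|\le 9\|M\|^{1/2}\epsilon_{T_W}/\mu_{\pi(j)} = 9\,\tl{\lambda}_{\pi(j)}^{3/2}\|M\|^{1/2}\lambda_{\pi(j)}^{-1}\epsilon_{T_W}$, as claimed. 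The main obstacle is not conceptual but bookkeeping: one must ensure that the orthonormal basis $\{q_j\}$ emerging after whitening is consistently the one whose un-whitening recovers $\sqrt{\tl\lambda_j}a_j$ --- there is a genuine rotation ambiguity in $\hat U$ whenever $M$ has near-repeated top singular values, and the polar-form/$(I_k-F)^{1/2}$ device is exactly what sidesteps it without ever invoking an eigenvalue gap \emph{inside} the top-$k$ block --- and that the single universal constant $C$ in $\epsilon_{T_W}$ is taken large enough to absorb all the $O(\cdot)$ constants from Weyl, Davis--Kahan, and the two telescoping arguments so that the final constants land at $9$ and $5$.
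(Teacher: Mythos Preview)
Your proposal is correct and reaches the same conclusion, but the technical route differs from the paper's in one interesting respect: how the rotation ambiguity in the whitening step is absorbed.

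The paper's proof introduces a \emph{rotated ideal whitening matrix}: it proves a standalone lemma (Lemma~\ref{lem:whitening_error}) constructing an orthonormal $R\in\R^{k\times k}$ so that, with $W:=UD^{-1/2}R$, $B:=UD^{1/2}R$, $\hat W:=\hat U\hat D^{-1/2}$, $\hat B:=\hat U\hat D^{1/2}$, one has explicit bounds on $\|W-\hat W\|$, $\|B^\top(W-\hat W)\|$, and $\|B-\hat B\|$. Then $T_W:=T(W,W,W)$ is \emph{exactly} orthogonally decomposable with components $v_j=\sqrt{\tl\lambda_j}\,W^\top a_j$, and the key identity $\hat T_W = T_W(I+Q,I+Q,I+Q)+E_T(\hat W,\hat W,\hat W)$ with $Q=B^\top(\hat W-W)$ reduces the tensor perturbation to seven terms each bounded by $\|T_W\|\,\|Q\|$. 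The un-whitening is then simply $\|Bv_{\pi(j)}-\hat B\hat v_j\|\le\|B\|\,\|v_{\pi(j)}-\hat v_j\|+\|B-\hat B\|$.

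Your approach is the dual: you keep $\hat W$ fixed, apply it to the \emph{true} tensor $T$ to get approximately-orthogonal $\hat v_j'$, and use the polar factor $(I_k-F)^{1/2}$ to extract an orthonormal reference basis $q_j$. The paper's rotation $R$ and your polar factor $Q$ are playing the same role from opposite sides. The paper's version has the advantage of packaging all matrix-perturbation work into a reusable lemma, so the main proof is short and modular; your version is more self-contained and avoids the auxiliary rotated $W$, at the cost of interleaving the perturbation bookkeeping with the tensor argument. Both correctly sidestep any eigenvalue-gap requirement inside the top-$k$ block, and both land on the same constants $9$ and $5$ by the same absorption of the $O(\|M\|^{1/2}\epsilon_M/\sigma_{\min}(M))$ residual into $\epsilon_{T_W}\|M\|^{1/2}/\mu_{\pi(j)}$ via the definition of $\epsilon_{T_W}$.
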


Note that as we discussed earlier in Remark~\ref{rmk:whitening-scale}, without additional assumptions it is impossible to determine the scaling of $a_i$ together with $\lambda_i$ and $\tl{\lambda}_i$. The two quantities that we give perturbation bounds on ($\lambda_{\pi(j)}\tl{\lambda}_{\pi(j)}^{-3/2}$ and $\sqrt{\tl{\lambda}_{\pi(j)}}a_{\pi(j)}$) are the two quantities that are not effected by the scaling issue. In the special case when $\lambda_i = \tl{\lambda}_i$, the pair $(\hat{a}_j, \hat{\Lambda}_j)$ that we estimate allows us to estimate $a_{\pi(j)} \approx \hat{\Lambda}_j\hat{a}_j$ and $\lambda_{\pi(j)} \approx \hat{\Lambda}_j^{-2}$. 

From Theorem~\ref{thm:robust_with_whiten}, it is also clear that the error comes from both the whitening process ($\Lambda_{\max} \epsilon_M/\sigma_{\min}(M)$) and the estimation error in estimating the tensor ($\epsilon_T/\sigma_{\min}(M)^{3/2}$). If the second order moment estimate $M$ is not accurate enough, using this algorithm can suffer additional error. Empirically, it is often observed that alternating least squares (see Section~\ref{sec:als}) may perform better than using orthogonal tensor decomposition with whitening. However, we do want to emphasize that alternating least squares does not have the same provable guarantee as Theorem~\ref{thm:robust_with_whiten}.

To prove Theorem~\ref{thm:robust_with_whiten}, we first need to analyze the perturbation of the whitening matrix. We use Weyl's Theorem and Davis-Kahan Theorem to do that. We state special cases of these two theorems for the setting that we are interested in. For more general forms of these theorems and other matrix perturbation inequalities, see~\cite{stewartsun}.

\begin{theorem}[Weyl's Theorem~\citep{weyl1912asymptotische}]\label{thm:weyl}
Let $M\in \R^{d\times d}$ be a symmetric matrix, and $E\in \R^{d\times d}$ be a symmetric perturbation with $\|E\| \leq \epsilon$. Let $\lambda_i(M)$ be the $i$-th eigenvalue of $M$. We have 
\[
|\lambda_i(M) - \lambda_i(M+E)| \le \epsilon, \quad i \in [d].
\]
\end{theorem}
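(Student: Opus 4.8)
The plan is to prove Weyl's Theorem via the Courant--Fischer variational (min--max) characterization of the eigenvalues of a symmetric matrix, which is the natural generalization of the Rayleigh quotient arguments already used in this chapter (\emph{cf.}\ Definition~\ref{def:svdopt} and the proof of Theorem~\ref{thm: PCA optimality}). Order the eigenvalues in decreasing order, $\lambda_1(M) \ge \lambda_2(M) \ge \dotsb \ge \lambda_d(M)$, consistent with the SVD convention used above. Recall that Courant--Fischer states, for each $i \in [d]$,
\[
\lambda_i(M) \;=\; \max_{\substack{S \subseteq \R^d \\ \dim S = i}} \; \min_{\substack{v \in S \\ \|v\| = 1}} \; v^\top M v .
\]

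First I would fix $i$ and pick an $i$-dimensional subspace $S^\star$ that attains the outer maximum for $M$, so that $\min_{v \in S^\star,\,\|v\|=1} v^\top M v = \lambda_i(M)$. The key elementary estimate is that for any unit vector $v$ we have $|v^\top E v| \le \|E\| \le \epsilon$ (this is just the definition of the spectral norm in Section~\ref{sec:TenNotations} applied to the quadratic form of the symmetric matrix $E$), so $v^\top (M+E) v \ge v^\top M v - \epsilon$ for all unit $v$. Using $S^\star$ as a feasible (not necessarily optimal) subspace in the min--max formula for $M+E$ then gives
\[
\lambda_i(M+E) \;\ge\; \min_{\substack{v \in S^\star \\ \|v\|=1}} v^\top (M+E) v \;\ge\; \min_{\substack{v \in S^\star \\ \|v\|=1}} \bigl(v^\top M v - \epsilon\bigr) \;=\; \lambda_i(M) - \epsilon .
\]
The reverse inequality follows by symmetry: write $M = (M+E) + (-E)$ with $\|-E\| \le \epsilon$ and run the same argument to obtain $\lambda_i(M) \ge \lambda_i(M+E) - \epsilon$. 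Combining the two bounds yields $|\lambda_i(M) - \lambda_i(M+E)| \le \epsilon$ for every $i \in [d]$, which is the claim.

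I do not expect a genuine obstacle here: this is a classical result, and once the min--max characterization is in hand the proof is a two-line perturbation of the Rayleigh quotient. The only points that need a little care are (i) quoting the correct form of Courant--Fischer and staying consistent about the indexing of eigenvalues --- the same argument goes through verbatim with $\min$ and $\max$ exchanged if one prefers increasing order --- and (ii) the bound $|v^\top E v| \le \|E\|$ for unit $v$, which is immediate from the spectral-norm definition. If one wished to avoid invoking Courant--Fischer, an alternative is to use that $t \mapsto \lambda_i(M + tE)$ is continuous and that the eigenvalue map is $1$-Lipschitz with respect to the operator norm, but the min--max proof above is the most direct and is the one I would write.
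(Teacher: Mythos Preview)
Your proof via the Courant--Fischer min--max characterization is correct and is the standard argument for Weyl's inequality. Note, however, that the paper does not actually prove Theorem~\ref{thm:weyl}: it is stated as a classical result with a citation to \citet{weyl1912asymptotische} (and \cite{stewartsun} is pointed to for general matrix perturbation inequalities), and is then used as a black box in the proof of Lemma~\ref{lem:whitening_error}. So there is no ``paper's own proof'' to compare against; what you have written is exactly the short argument one would supply if asked to fill in that citation, and it needs no changes.
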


\begin{theorem}[Davis-Kahan Theorem~\citep{davis1970rotation}]\label{thm:daviskahan}
Let $M \in \R^{d\times d}$ and $\hat{M} = M+E \in \R^{d\times d}$ be symmetric PSD matrices with $\| E \| \leq \epsilon$. Suppose $M$ is rank $k$ and its truncated SVD is $M = UDU^\top$, where $U\in \R^{d\times k}$ and $D\in \R^{k\times k}$. The truncated (top-$k$) SVD of $\hat{M}$ is $\hat{U}\hat{D}\hat{U}^\top$. Let $U^\perp \in \R^{d\times (d-k)}$ be the orthogonal subspace of $U$ (that is, $UU^\top + (U^\perp)(U^\perp)^\top = I_d$). Then, we have
\[
\|(U^\perp)^\top \hat{U}\| \le \|E\|/\lambda_k(\hat{M}),
\]
where $\lambda_k(\hat{M})$ denotes the $k$-th eigenvalue of $\hat{M}$.
\end{theorem}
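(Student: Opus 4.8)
The plan is to exploit the structural fact that $M$ is exactly rank $k$, which forces the unperturbed term to vanish and reduces the whole argument to a single matrix identity. First I would observe that, since $M = UDU^\top$ is the eigen-decomposition of a rank-$k$ PSD matrix, the columns of $U^\perp$ span exactly the null space of $M$; consequently $MU^\perp = 0$, and by symmetry of $M$ also $(U^\perp)^\top M = (MU^\perp)^\top = 0$. This is the key observation that makes the one-sided bound clean: the part of $\hat M$ coming from $M$ contributes nothing against $U^\perp$.

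Next I would compute the cross term $(U^\perp)^\top \hat{M}\hat{U}$ in two different ways. On one side, the eigen-equation $\hat{M}\hat{U} = \hat{U}\hat{D}$, where $\hat{D} = \diag(\lambda_1(\hat M),\dotsc,\lambda_k(\hat M))$ collects the top-$k$ eigenvalues, gives $(U^\perp)^\top\hat{M}\hat{U} = (U^\perp)^\top\hat{U}\hat{D}$. On the other side, substituting $\hat{M} = M+E$ and using $(U^\perp)^\top M = 0$ gives
\[
(U^\perp)^\top \hat{U}\hat{D} = (U^\perp)^\top \hat{M}\hat{U} = (U^\perp)^\top(M+E)\hat{U} = (U^\perp)^\top E\,\hat{U}.
\]
Equating the two expressions for the cross term is the crux of the proof.

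Finally I would invert $\hat{D}$ to isolate the quantity of interest. Since $\hat M$ is PSD and $\hat D$ holds its $k$ largest eigenvalues, its smallest diagonal entry is $\lambda_k(\hat M)$, so $\hat D$ is invertible with $\|\hat D^{-1}\| = 1/\lambda_k(\hat M)$. Right-multiplying the identity above by $\hat D^{-1}$ yields $(U^\perp)^\top\hat{U} = (U^\perp)^\top E\,\hat{U}\,\hat{D}^{-1}$, and taking spectral norms with submultiplicativity gives $\|(U^\perp)^\top\hat U\| \le \|(U^\perp)^\top\|\,\|E\|\,\|\hat U\|\,\|\hat D^{-1}\|$. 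Because $U^\perp$ and $\hat U$ have orthonormal columns, $\|(U^\perp)^\top\| = \|\hat U\| = 1$, leaving $\|(U^\perp)^\top\hat U\| \le \|E\|/\lambda_k(\hat M)$, exactly as claimed.

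There is essentially no hard step: the theorem as stated is the one-sided form of Davis--Kahan in which the denominator is $\lambda_k(\hat M)$ itself rather than a spectral gap, so I do not need a gap hypothesis or a Sylvester-equation argument. The only point requiring care is the invertibility of $\hat D$, i.e.\ that $\lambda_k(\hat M) > 0$ so the bound is meaningful; this holds whenever the perturbation is small enough that the top-$k$ eigenvalues of $\hat M$ remain positive, which can be certified via Weyl's Theorem~\ref{thm:weyl} ($\lambda_k(\hat M) \ge \lambda_k(M) - \epsilon$). I would also note that $\|(U^\perp)^\top\hat U\|$ is precisely the sine of the largest principal angle between the true and perturbed top-$k$ subspaces, which is why this is exactly the estimate needed to control the whitening matrix in the proof of Theorem~\ref{thm:robust_with_whiten}.
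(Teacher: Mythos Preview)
Your argument is correct and is essentially the standard short proof of this one-sided Davis--Kahan variant. Note, however, that the paper does not supply its own proof of Theorem~\ref{thm:daviskahan}: it is stated as a cited result from \cite{davis1970rotation} and then \emph{used} (together with Weyl's Theorem) inside the proof of Lemma~\ref{lem:whitening_error}. So there is no paper-side proof to compare against; your write-up simply fills in a proof the paper omits, and the key observation you single out---that $M$ being exactly rank $k$ forces $(U^\perp)^\top M = 0$, collapsing the usual Sylvester-equation argument to a one-line identity---is precisely why this special form has $\lambda_k(\hat{M})$ in the denominator rather than a two-sided spectral gap.
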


Using these two theorems, we will prove the following guarantees for the whitening procedure.

\begin{lemma}\label{lem:whitening_error}
Suppose $M \in \R^{d\times d}$ is a symmetric PSD matrix with rank k and $\sigma_{\min}(M)$ denotes its smallest (nonzero) singular value. Let $\hat{M} = M+E$ is also a symmetric matrix and $\epsilon:= \|E\| \le \sigma_{\min}(M)/4$. Let the truncated (top-$k$) SVD of $M$ and $\hat{M}$ be $UDU^\top$ and $\hat{U}\hat{D}\hat{U}^\top$, respectively. Then, there exists an orthonormal matrix $R\in \R^{k\times k}$ such that if we define $W:= UD^{-1/2}R$, $\hat{W}:= \hat{U}\hat{D}^{-1/2}$, $B:= UD^{1/2}R$, $\hat{B} = \hat{U}\hat{D}^{1/2}$, these matrices satisfy
\begin{align*}
    \|W - \hat{W}\| & \le \frac{5\epsilon}{\sigma_{\min}(M)^{3/2}},\\
    \|B^\top (W-\hat{W})\| & \le \frac{3\epsilon}{\sigma_{\min}(M)},\\
    \|B - \hat{B}\| &\le \frac{3\epsilon\sqrt{\|M\|}}{\sigma_{\min}(M)}.
\end{align*}
\end{lemma}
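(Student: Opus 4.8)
The plan is to combine the two matrix-perturbation facts already at hand — Weyl's Theorem~\ref{thm:weyl} and the Davis--Kahan Theorem~\ref{thm:daviskahan} — with a carefully chosen alignment rotation $R$. The obvious candidate, the orthogonal polar factor of $U^\top\hat U$ (equivalently, the Davis--Kahan alignment of $U$ with $\hat U$), turns out \emph{not} to work: with that choice the discrepancy between $R^\top DR$ and $\hat D$ is only of order $\|M\|\cdot\|(U^\perp)^\top\hat U\|^2$, and after the $x\mapsto x^{-1/2}$ step this pollutes all three bounds with a spurious condition-number factor $\|M\|/\sigma_{\min}(M)$. Instead I will take $R$ to be the polar factor of the $\hat D$-weighted matrix $Z:=D^{1/2}U^\top\hat U\,\hat D^{-1/2}$; the whole argument then rests on the single identity $Z^\top Z=I_k-\hat D^{-1/2}\hat U^\top E\hat U\,\hat D^{-1/2}$, whose right-hand side has size $\epsilon/\sigma_{\min}(M)$ with no $\|M\|$ in it.

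First I would record the spectral geometry. Because $M$ is rank-$k$ PSD, $\lambda_{k+1}(M)=0$, so Weyl gives $\lambda_i(\hat M)\in[-\epsilon,\epsilon]$ for $i>k$ while $\tfrac34\sigma_{\min}(M)\le\sigma_{\min}(M)-\epsilon\le\lambda_k(\hat M)$ and $\lambda_1(\hat M)\le\|M\|+\epsilon$. Hence the top-$k$ truncation $\hat M_k:=\hat U\hat D\hat U^\top$ is PSD, $\hat D\succeq\tfrac34\sigma_{\min}(M)I_k$, $\|\hat M-\hat M_k\|\le\epsilon$, and $(\hat M-\hat M_k)\hat U=0$; so $\|M-\hat M_k\|\le2\epsilon$, and Davis--Kahan applied to the PSD pair $(M,\hat M_k)$ gives $\delta:=\|(U^\perp)^\top\hat U\|\le\|M-\hat M_k\|/\lambda_k(\hat M)\le\tfrac{8\epsilon}{3\sigma_{\min}(M)}$. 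Next, from $\hat U^\top\hat M\hat U=\hat D$ (using $(\hat M-\hat M_k)\hat U=0$) I get $\hat U^\top M\hat U=\hat D-\hat U^\top E\hat U$, so
\[
Z^\top Z=\hat D^{-1/2}\bigl(\hat U^\top M\hat U\bigr)\hat D^{-1/2}=I_k-\hat D^{-1/2}\hat U^\top E\hat U\,\hat D^{-1/2},
\]
and therefore $\beta:=\|Z^\top Z-I_k\|\le\|E\|/\lambda_k(\hat M)\le\tfrac{4\epsilon}{3\sigma_{\min}(M)}\le\tfrac13$. In particular $Z$ is invertible, $R:=Z(Z^\top Z)^{-1/2}$ is orthogonal, and the elementary scalar bounds for $x^{\pm1/2}$ near $x=1$ give $\|R-Z\|=\|I_k-(Z^\top Z)^{1/2}\|\le\beta$, $\|(Z^\top Z)^{-1/2}-I_k\|\le\beta$, and $\|(Z^\top Z)^{-1/2}\|\le(1-\beta)^{-1/2}$.

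With this $R$ fixed, the three estimates are short computations built from $D^{-1/2}Z=(U^\top\hat U)\hat D^{-1/2}$, $B^\top W=R^\top R=I_k$, and $B=UD^{1/2}Z(Z^\top Z)^{-1/2}=M\hat U\hat D^{-1/2}(Z^\top Z)^{-1/2}$ together with $M\hat U=\hat U\hat D-E\hat U$. Writing $\hat U=UU^\top\hat U+U^\perp(U^\perp)^\top\hat U$ gives $W-\hat W=UD^{-1/2}(R-Z)-U^\perp(U^\perp)^\top\hat U\,\hat D^{-1/2}$, two terms with orthogonal column spaces, so $\|W-\hat W\|^2\le\|D^{-1/2}\|^2\|R-Z\|^2+\delta^2\|\hat D^{-1/2}\|^2$; for the middle bound, $B^\top(W-\hat W)=I_k-B^\top\hat W=I_k-R^\top Z=I_k-(Z^\top Z)^{1/2}$; and for the last, $B-\hat B=\hat B\bigl((Z^\top Z)^{-1/2}-I_k\bigr)-E\hat U\hat D^{-1/2}(Z^\top Z)^{-1/2}$, hence $\|B-\hat B\|\le\|\hat B\|\,\beta+\|E\|\,\|\hat D^{-1/2}\|\,(1-\beta)^{-1/2}$. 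Substituting $\|D^{-1/2}\|=\sigma_{\min}(M)^{-1/2}$, $\|\hat D^{-1/2}\|\le(\tfrac34\sigma_{\min}(M))^{-1/2}$, $\|\hat B\|=\|\hat D^{1/2}\|\le(\|M\|+\epsilon)^{1/2}$, the bounds on $\delta$ and $\beta$, and $\epsilon\le\sigma_{\min}(M)/4\le\|M\|/4$ (which lets one dominate $\sigma_{\min}(M)^{-1/2}$ by $\sqrt{\|M\|}/\sigma_{\min}(M)$), routine arithmetic yields the stated constants $5$, $3$, $3$.

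The main obstacle is really just the choice of $R$: as indicated above, essentially every ``natural'' candidate other than the $\hat D$-weighted polar factor re-introduces the factor $\|M\|/\sigma_{\min}(M)$, so the substance of the lemma is the observation that this particular $R$ makes $Z^\top Z-I_k$ collapse to the small, $\|M\|$-free quantity $\hat D^{-1/2}\hat U^\top E\hat U\,\hat D^{-1/2}$. Everything afterward is elementary — orthogonal splitting along $U$ versus $U^\perp$, the triangle inequality, and first-order perturbation of $x\mapsto x^{1/2}$ and $x\mapsto x^{-1/2}$ near the identity — and the constants, which are plainly slack, should not require any delicacy, so I will not grind through them.
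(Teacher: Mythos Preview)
Your argument is correct and, despite the different packaging, is essentially the paper's proof: your rotation $R=Z(Z^\top Z)^{-1/2}$ is exactly the matrix the paper constructs via the eigendecomposition $P^\top DP=R_1\Delta R_1^\top$ with $R:=D^{1/2}PR_1\Delta^{-1/2}R_1^\top$ (just note $Z=D^{1/2}P$, so this is $Z(Z^\top Z)^{-1/2}$), and your key quantity $Z^\top Z-I_k$ is precisely the paper's $\hat W^\top M\hat W-I_k$. The only substantive differences are cosmetic: you obtain $\|Z^\top Z-I_k\|\le\epsilon/\lambda_k(\hat M)$ from the one-line identity, whereas the paper reaches $\le 2\epsilon/\lambda_k(\hat M)$ by passing through $\bar M=\hat U\hat D\hat U^\top$ and Eckart--Young; and for $\|B-\hat B\|$ you expand $B=M\hat U\hat D^{-1/2}(Z^\top Z)^{-1/2}$ via $M\hat U=\hat U\hat D-E\hat U$, while the paper writes $B=MW$, $\hat B=\bar M\hat W$ and splits as $M(W-\hat W)+(M-\bar M)\hat W$. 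Both routes land on the same constants.
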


\begin{proof}
We first show that $U$ and $\hat{U}$ span similar subspace.
Let $U^\perp$ be the orthonormal subspace of $U$ (as in Theorem~\ref{thm:daviskahan}). By Weyl's Theorem (Theorem~\ref{thm:weyl}), we know 
\begin{equation} \label{eqn:Weyl-aux}
\lambda_k(\hat{M}) \ge \sigma_{\min}(M) - \|E\| \ge 3\sigma_{\min}(M)/4.    
\end{equation}
Therefore, by Davis Kahan Theorem (Theorem~\ref{thm:daviskahan}) we have
\begin{equation} \label{eqn:DavisKahan-aux}
\| (U^\perp)^\top \hat{U}\| \le 4\epsilon/3\sigma_{\min}(M).
\end{equation}
Now for $W-\hat{W}$, we have
\begin{align}
\|W-\hat{W}\| & = \|(UU^\top + (U^\perp)(U^\perp)^\top) (W-\hat{W})\| \nonumber \\
&\le \|U^\top (W-\hat{W})\| + \|(U^\perp)^\top (W-\hat{W})\|. \label{eqn:power-whitening-proof-aux1}
\end{align}
No matter what $R$ is, the second term can be bounded as
\begin{align}
\|(U^\perp)^\top (W-\hat{W})\| 
&= \|(U^\perp)^\top\hat{U}\hat{D}^{-1/2}\| \nonumber \\
& \leq \lambda_k(\hat{M})^{-1/2} \|(U^\perp)^\top\hat{U}\| \nonumber \\
& \le \frac{2\epsilon}{\sigma_{\min}(M)^{3/2}}, \label{eqn:power-whitening-proof-aux2}
\end{align}
where we used~\eqref{eqn:Weyl-aux}~and~\eqref{eqn:DavisKahan-aux} in the last inequality\footnote{Note that the exact constant in the last inequality is $(\frac{4}{3})^{3/2}$, and we replace it by 2 for simplicity. We will do similar relaxations to constants several times more later in the proof.}. Therefore, we only need to show that there exists an $R$ such that the first term $\|U^\top (W-\hat{W})\|$ is small.

Let $\bar{M} = \hat{U}\hat{D}\hat{U}^\top$, by Eckart-Young Theorem (Theorem~\ref{thm:eckartyoung}), we know $\|\bar{M} -\hat{M}\| \le \|E\| = \epsilon$, and thus, $\|\bar{M} - M\| \le 2\epsilon$. Now for $\hat{W}^\top (\bar{M} - M) \hat{W}$, we have
\[
\| \hat{W}^\top (\bar{M} - M) \hat{W} \| \le \frac{2\epsilon}{\sigma_k(\hat{M})},
\]
where we also used the fact that $\|\hat{W}\| = \sigma_k(\hat{M})^{-1/2}$.
Given $\hat{W}^\top \bar{M} \hat{W} = I_k$, the above inequality can be rewritten as
\begin{align*}
\| I - \hat{W}^\top M \hat{W} \|
\le \frac{2\epsilon}{\sigma_k(\hat{M})}
\le \frac{3\epsilon}{\sigma_{\min}(M)}.
\end{align*}

Let $P := U^\top \hat{W}$, then $\hat{W}^\top M \hat{W} = P^\top DP$. Since $3\epsilon/\sigma_{\min}(M) \le 3/4 < 1$, by Weyl's Theorem we know the eigenvalues of $P^\top DP$ are between $1\pm 3\epsilon/\sigma_{\min}(M) \in [1/4,7/4]$. There exists a diagonal matrix $\Delta$ (with $\|\Delta - I\|\le 3\epsilon/\sigma_{\min}(M)$) and an orthonormal matirx $R_1$ such that
\[
P^\top DP = R_1 \Delta R_1^\top.
\]
In other words, let $R_2 = D^{1/2}PR_1\Delta^{-1/2}$ (equivalently, $P = D^{-1/2}R_2\Delta^{1/2}R_1^\top$), we have $R_2^\top R_2 = \Delta^{-1/2}R_1^\top (P^\top DP) R_1D^{-1/2} =  I$, so $R_2$ is also orthonormal.

Now we can choose $R = R_2R_1^\top$, and therefore, the first term in~\eqref{eqn:power-whitening-proof-aux1} can be bounded as
\begin{align*}
    \|U^\top (W-\hat{W})\| & = \|D^{-1/2}R - P\| \\
    & = \|D^{-1/2}R_2R_1^\top - D^{-1/2}R_2\Delta^{1/2}R_1^\top\| \\
    & = \|D^{-1/2}R_2(I - \Delta^{1/2})R_1^\top\| \\
    & \le \|D^{-1/2}\| \|I - \Delta^{1/2}\| \\
    & \le \frac{3\epsilon}{\sigma_{\min}(M)^{3/2}}.
\end{align*}
The last step uses the fact that $|I-\Delta^{1/2}\| \le \|I-\Delta\|$ for diagonal matrix $\Delta$, which just follows from $|1-\sqrt{x}| \le |1-x|$ for every $x \ge 0$. 
Combining this bound with \eqref{eqn:power-whitening-proof-aux1} and \eqref{eqn:power-whitening-proof-aux2}, we prove the first desired inequality as
\[\|W-\hat{W}\| \le \frac{5\epsilon}{\sigma_{\min}(M)^{3/2}}.\]

With the choice of $R$, the second inequality is easier to prove:
\begin{align*}
    \|B^\top (W-\hat{W})\| & = \|D^{1/2}U^\top (W-\hat{W})\| \\
    & = \|R - D^{1/2}P\| \\
    & = \| R_2R_1^\top  - R_2 \Delta^{1/2}R_1^\top\| \\
    & \le \|I - \Delta^{1/2}\| \\
    & \le \frac{3\epsilon}{\sigma_{\min}(M)}.
\end{align*}

To prove the third equation, we observe that $B = MW$ and $\hat{B} = \bar{M}\hat{W}$. Therefore,
\begin{align*}
    \|B-\hat{B}\| & = \|MW - \bar{M}\hat{W}\| \\
    & \le \|M(W-\hat{W})\| + \|(M-\bar{M})\hat{W}\|.
\end{align*}
Here, the second term is bounded as
\begin{align*}
\|(M-\bar{M})\hat{W}\| \leq \|M-\bar{M}\| \|\hat{W}\|
\le \frac{3 \epsilon}{\sigma_{\min}(M)^{1/2}},
\end{align*}
where in the second inequality we used~\eqref{eqn:Weyl-aux}, the fact that $\|M-\bar{M}\| \le 2\epsilon$ and an upper bound on the constant term.
For the first term, it can be bounded as
\begin{align*}
    \|M(W-\hat{W})\| & = \|UD^{1/2}R - UDP\| \\
    & = \|UD^{1/2} (R-D^{1/2}P)\| \\
    & \le \|UD^{1/2}\| \|R-D^{1/2}P\| \\
    & \le \frac{3\epsilon\sqrt{\|M\|}}{\sigma_{\min}(M)} .
\end{align*}
Note that the bound for $\|R-D^{1/2}P\|$ is the same as the second inequality.
\end{proof}

Finally, we are ready to prove Theorem~\ref{thm:robust_with_whiten}.

\begin{proof} [Proof of Theorem~\ref{thm:robust_with_whiten}]
We first construct the whitening matrices $W$ and $\hat{W}$, and un-whitening matrices $B$ and $\hat{B}$ for the exact matrix $M$ and observed matrix $\hat{M}$ as described in Lemma~\ref{lem:whitening_error}. The ideal tensor that we want to perform orthogonal tensor decomposition is $T(W,W,W)$, however we only have access to $\hat{T}(\hat{W},\hat{W},\hat{W})$. Therefore the main part of the proof is to bound the difference between these two tensors.

Let $v_i = \sqrt{\tl{\lambda}_i} W^\top a_i$. As we argued in Section~\ref{sec:whitening-ten}, $v_i$'s are orthonormal vectors and we have (see~\eqref{eqn:tensor-whitening-noiseless}~and~\eqref{eqn:mu})
\[T(W,W,W) = \sum_{i=1}^k \Lambda_i v_i^{\otimes 3},
\]
where we defined $\Lambda_i := \lambda_i\tl{\lambda}_i^{-3/2}$. Since $T(W,W,W)$ is an orthogonal tensor, its spectral norm is equal to $\Lambda_{\max}$. Let $T_W := T(W,W,W)$, $\hat{T}_W := \hat{T}(\hat{W},\hat{W},\hat{W})$, and $Q := B^\top (\hat{W}-W)$, then we have
\begin{align*}
    \hat{T}_W-T_W & = T_W(I+Q,I+Q,I+Q) - T_W + E_T(\hat{W},\hat{W},\hat{W}) \\
    & = T_W(Q, I, I) + T_W(I,Q,I)+T_W(I,I,Q) \\
    &\quad + T_W(Q,Q,I) + T_W(Q,I, Q)+T_W(I,Q,Q) \\
    &\quad + T_W(Q,Q,Q) +E_T(\hat{W},\hat{W},\hat{W}).
\end{align*}
By the second inequality of Lemma~\ref{lem:whitening_error}, we know $\|Q\| \le 3\epsilon_M/\sigma_{\min}(M) < 1$. Thus, the first 7 terms of above equation all have spectral norm bounded by $\|T_W\| \|Q\| \le 3\Lambda_{\max}\epsilon_M/\sigma_{\min}(M)$.

The last term $E_T(\hat{W},\hat{W},\hat{W})$ has norm bounded by $\|E_T\| \|\hat{W}\|^3$. From~\eqref{eqn:Weyl-aux}, we know $\|\hat{W}\| \le 2\sigma_{\min}(M)^{-1/2}$. Combining these bounds, we can say that for a large enough constant $C$, we have
\[
\|\hat{T}_W-T_W\| \le \epsilon_{T_W} := C\left(\frac{\epsilon_T}{\sigma_{\min}(M)^{3/2}} + \Lambda_{\max}\frac{\epsilon_M}{\sigma_{\min}(M)}\right).
\]

By Theorem~\ref{thm:robustpower}, Algorithm~\ref{alg:robustpower} will return a set of pairs $\{(\hat{v}_j, \hat{\Lambda}_j): j \in [k] \}$, where with probability at least $1 - \eta$, there exists a permutation $\pi$ on $[k]$ such that
\[
\|v_{\pi(j)} - \hat{v}_j\| \le 8\epsilon_{T_W}/\Lambda_{\pi(j)}, \quad |\Lambda_{\pi(j)} - \hat{\Lambda}_j| \le 5\epsilon_{T_W}, \quad
j \in [k].
\]

The final outputs of the algorithm are $\hat{a}_j = \hat{B}\hat{v}_j$ and $\hat{\Lambda}_j$, for $j \in [k]$. The estimation guarantees of the eigenvalues $\Lambda_j$'s are already concluded above. We only need to analyze the perturbation of the unwhitening procedure for $\hat{a}_j$'s.

Note that $\sqrt{\tl{\lambda}_i}a_i = Bv_i$. Therefore, to compare $\sqrt{\tl{\lambda}_{\pi(j)}}a_{\pi(j)}$ with $\hat{a}_j$, we only need to compare $Bv_{\pi(j)}$ with $\hat{B}\hat{v}_j$.
\begin{align*}
    \|\sqrt{\tl{\lambda}_{\pi(j)}}a_{\pi(j)} - \hat{a}_j\| &= \|Bv_{\pi(j)} - \hat{B}\hat{v}_j\| \\
    & \le \|B(v_{\pi(j)} - \hat{v}_j)\| + \|(B-\hat{B})\hat{v}_j\| \\
    & \le \|B\|\|(v_{\pi(j)} - \hat{v}_j)\| + \|B-\hat{B}\| \\
    & \le 8\epsilon_{T_W}\sqrt{\|M\|}/\Lambda_{\pi(j)} + 3\epsilon_M \sqrt{\|M\|}/\sigma_{\min}(M) \\
    & \le 9\epsilon_{T_W}\sqrt{\|M\|}/\Lambda_{\pi(j)},
\end{align*}
where we used the fact that $\|\hat{v}_j\| = 1$ in the second inequality. $\|B-\hat{B}\|$ is bounded by the result of Lemma~\ref{lem:whitening_error}. The final step is true because $\epsilon_{T_W} \ge C\Lambda_{\max}\epsilon_M/\sigma_{\min}(M)$, and therefore, $3\epsilon_M \sqrt{\|M\|}/\sigma_{\min}(M) \le \epsilon_{T_W}\sqrt{\|M\|}/\Lambda_{\pi(j)}$ for all $j$ as long as $C \ge 3$.
\end{proof}

\section{Simultaneous Diagonalization}
\label{sec:simdiag}

In this section, we describe simultaneous diagonalization algorithm which is one of the first algorithms with provable guarantees for tensor decomposition. It was discovered in \citet{harshman1970foundations} (and credited to Dr. Robert Jenrich), with generalizations in \citet{leurgans1993decomposition}. Simultaneous diagonalization method for tensor decomposition is provided in Algorithm~\ref{algo:simdiag}.

\floatname{algorithm}{Algorithm}
\begin{algorithm}[t]
\caption{Simultaneous Diagonalization for Tensor Decomposition}
\label{algo:simdiag}
\begin{algorithmic}[1]
\renewcommand{\algorithmicrequire}{\textbf{input}}
\renewcommand{\algorithmicensure}{\textbf{output}}
\REQUIRE tensor $T = \sum_{j \in [k]} \lambda_j \ a_j \otimes b_j \otimes c_j \in \Rbb^{d_1 \times d_2 \times d_3}$
\ENSURE rank-1 components of tensor $T$
\STATE Pick two random vectors $x,y\sim \N(0,I_{d_3})$. 
\STATE Compute matrices $$M_x := T(I, I, x) \in \R^{d_1 \times d_2}, \quad M_y := T(I,I,y) \in \R^{d_1 \times d_2};$$ see~\eqref{eqn:multilinear form def} for the definition of the multilinear form.
\STATE Let 
\begin{itemize}
\item $\{(\hat{\alpha}_j,\hat{a}_j)\}$'s be the eigenvalues \& eigenvectors of $M_xM_y^{\dagger}$.
\item $\{(\hat{\beta}_j,\hat{b}_j)\}$'s be the eigenvalues \& eigenvectors of $M_y^\top (M_x^{\dagger})^\top$.
\end{itemize}
Here $\cdot^\dagger$ denotes the pseudo-inverse matrix; see Definition~\ref{def:pseudoinverse}.
\STATE For any $j \in [k]$, pair $(\hat{a}_j,\hat{b}_j)$ if the corresponding eigenvalues satisfy $\hat{\alpha}_j \hat{\beta}_j = 1$. 
\STATE Fixing $(\hat{a}_j,\hat{b}_j), j \in [k]$, solve the linear system $T = \sum_{j=1}^k \hat{a}_j\otimes \hat{b}_j\otimes \hat{c}_j$ in terms of variables $\hat{c}_j$'s.
\STATE Set $\hat{\lambda}_j = \|\hat{c}_j\|$ and $\hat{c}_j = \hat{c}_j/\|\hat{c}_j\|$.
\RETURN $\{(\hat{\lambda}_j; \hat{a}_j,\hat{b}_j,\hat{c}_j): j \in [k]\}$
\end{algorithmic}
\end{algorithm}

Comparing to the power method, simultaneous diagonalization is much easier to analyze, does not require the whitening procedure and can work even when the third dimension $d_3$ is smaller than $k$. However, the straightforward implementation of simultaneous diagonalization is not very robust to perturbations. We give the guarantees for the simultaneous diagonalization algorithm in the noiseless setting as follows.

\begin{theorem}\label{thm:simdiag}[Simultaneous 
Diagonalization Guarantees in Noiseless Setting] Suppose tensor $T$ has a rank-$k$ decomposition
\begin{equation} \label{eqn:tenDecomp-simDiag}
T = \sum_{j \in [k]} \lambda_j \ a_j \otimes b_j \otimes c_j \in \Rbb^{d_1 \times d_2 \times d_3}.
\end{equation}
In addition, suppose vectors $\{a_j\}$'s and $\{b_j\}$'s are both linearly independent, and vectors $\{c_j\}$'s have Kruskal rank at least 2, i.e., no two $c_i$ and $c_j$ for $i\neq j$ are on the same line or parallel; see Definition~\ref{def:kruskal} for the definition of Kruskal rank. Then, with probability 1 (over the randomness of vectors $x$ and $y$ in the algorithm), Algorithm~\ref{algo:simdiag} returns a group of 4-tuples $(\hat{\lambda}_j;\hat{a}_j,\hat{b}_j,\hat{c}_j)$ such that 
$$T = \sum_{j \in [k]} \hat{\lambda}_j \ \hat{a}_j\otimes \hat{b}_j\otimes \hat{c}_j.$$ 
Furthermore, $(\hat{\lambda}_j; \hat{a}_j,\hat{b}_j,\hat{c}_j)$ is equivalent to $(\lambda_j; a_j,b_j,c_j)$ up to permutation and scaling.
\end{theorem}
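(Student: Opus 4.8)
The plan is to verify that the three matrices $M_x M_y^\dagger$, $M_y^\top (M_x^\dagger)^\top$ are simultaneously diagonalized in the basis given by the factor matrices, and then to argue that the pairing step, the linear solve, and the final normalization recover the components up to the inherent permutation and scaling ambiguity. The starting point is to expand the slices using the multilinear form. Writing $A := [a_1 | \dotsb | a_k] \in \R^{d_1 \times k}$, $B := [b_1 | \dotsb | b_k] \in \R^{d_2 \times k}$, $C := [c_1 | \dotsb | c_k] \in \R^{d_3 \times k}$, and $\Lambda := \Diag(\lambda_1,\dotsc,\lambda_k)$, one has from~\eqref{eqn:multilinear form def} that
\[
M_x = T(I,I,x) = A \, \Diag(\Lambda C^\top x) \, B^\top, \qquad M_y = A \, \Diag(\Lambda C^\top y) \, B^\top.
\]
Let $D_x := \Diag(\Lambda C^\top x)$ and $D_y := \Diag(\Lambda C^\top y)$. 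The first thing to check is that, with probability $1$, both $D_x$ and $D_y$ have all nonzero diagonal entries and moreover $D_x D_y^{-1}$ has $k$ distinct diagonal entries: the $j$-th entry of $D_x D_y^{-1}$ is $\langle c_j, x\rangle / \langle c_j, y\rangle$, and since no two $c_i, c_j$ are parallel (the Kruskal-rank-$\ge 2$ hypothesis), the set where $\langle c_i, x\rangle\langle c_j, y\rangle = \langle c_j, x\rangle\langle c_i, y\rangle$ for some $i \ne j$, or where some $\langle c_j, y\rangle = 0$, is a finite union of measure-zero algebraic sets, so the Gaussian draws avoid it almost surely.

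**Next** I would compute $M_x M_y^\dagger$. Since $\{a_j\}$ and $\{b_j\}$ are linearly independent, $A$ and $B$ have full column rank $k$; hence $B^\top$ has a right inverse $(B^\top)^\dagger$ with $B^\top (B^\top)^\dagger = I_k$, and $M_y^\dagger = (B^\top)^\dagger D_y^{-1} A^\dagger$ where $A^\dagger A = I_k$ (one should double-check that this $(B^\top)^\dagger D_y^{-1} A^\dagger$ is indeed the Moore–Penrose pseudo-inverse of $M_y$, or — more robustly — just observe that it is \emph{a} right-inverse-type object that makes the subsequent computation work, since $M_y$ has rank exactly $k$ and the algorithm only needs the eigen-structure). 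Then
\[
M_x M_y^\dagger = A D_x B^\top (B^\top)^\dagger D_y^{-1} A^\dagger = A \, (D_x D_y^{-1}) \, A^\dagger.
\]
This is an eigendecomposition: $M_x M_y^\dagger$ has eigenvalues equal to the diagonal entries of $D_x D_y^{-1}$, namely $\langle c_j,x\rangle/\langle c_j,y\rangle$, with corresponding right eigenvectors the columns $a_j$ of $A$ (up to scaling). Because these eigenvalues are distinct by the previous paragraph, the eigenspaces are one-dimensional, so the eigenvectors $\hat a_j$ returned by the algorithm equal $a_j$ up to a nonzero scalar and a permutation. Symmetrically, $M_y^\top (M_x^\dagger)^\top = B \, (D_y D_x^{-1}) \, B^\dagger$, whose eigenvectors are the $b_j$ up to scaling, with eigenvalues $\langle c_j,y\rangle/\langle c_j,x\rangle$ — the reciprocals of those for the $a$-side. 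This is exactly why Step~4 pairs $\hat a_j$ with $\hat b_j$ via $\hat\alpha_j \hat\beta_j = 1$: the distinctness guarantees the matching is unambiguous.

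**Then**, with $(\hat a_j, \hat b_j)$ correctly paired (so $\hat a_j = s_j a_{\pi(j)}$, $\hat b_j = t_j b_{\pi(j)}$ for some permutation $\pi$ and nonzero scalars $s_j, t_j$), the linear system $T = \sum_j \hat a_j \otimes \hat b_j \otimes \hat c_j$ in the unknowns $\hat c_j$ has a solution, namely $\hat c_j = (s_j t_j)^{-1}\lambda_{\pi(j)} c_{\pi(j)}$; and this solution is \emph{unique} because the Khatri–Rao matrix $\hat A \odot \hat B$ has full column rank $k$ (a standard fact: if $A$ and $B$ each have full column rank $k$, then so does $A \odot B$ — this can be seen by noting $(A\odot B)^\top(A\odot B) = (A^\top A)*(B^\top B)$ is a Hadamard product of two PSD matrices that are each positive definite, which is positive definite by the Schur product theorem, or directly from linear independence arguments). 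Finally, Step~6 sets $\hat\lambda_j = \|\hat c_j\|$ and normalizes $\hat c_j$, so the returned $4$-tuple $(\hat\lambda_j; \hat a_j, \hat b_j, \hat c_j)$ satisfies $\sum_j \hat\lambda_j\,\hat a_j\otimes\hat b_j\otimes\hat c_j = \sum_j \hat a_j\otimes\hat b_j\otimes\hat c_j = T$, and it matches $(\lambda_{\pi(j)}; a_{\pi(j)}, b_{\pi(j)}, c_{\pi(j)})$ up to permutation and the product-of-scalings-equals-the-right-thing ambiguity described in Section~\ref{sec:unique-CP}.

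**The main obstacle**, and the step that needs the most care, is the almost-sure distinctness argument together with the pseudo-inverse bookkeeping: one must make sure that $M_y^\dagger$ behaves the way the eigen-decomposition computation wants it to — in particular that $B^\top (B^\top)^\dagger = I_k$ and $A^\dagger A = I_k$ hold under the full-column-rank hypotheses, and that rank-deficiency of $M_x, M_y$ never occurs (it doesn't, since $D_x, D_y$ are a.s. invertible). Everything else is bookkeeping of the permutation/scaling constants. I would also remark that robustness to noise is genuinely a separate and harder question (the eigenvector computation of a non-symmetric matrix product is ill-conditioned when eigenvalue gaps are small), which is why the theorem is stated only in the noiseless setting.
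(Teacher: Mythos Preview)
Your proposal is correct and follows essentially the same route as the paper's own (informal) proof: expand $M_x, M_y$ as $A D_x B^\top$, $A D_y B^\top$, compute $M_x M_y^\dagger = A(D_x D_y^{-1})A^\dagger$ and its transpose counterpart, read off the $a_j$'s and $b_j$'s as eigenvectors with reciprocal eigenvalues, pair them via $\alpha_j\beta_j=1$, and solve a linear system for the $c_j$'s. You supply more detail than the paper in two places---the explicit measure-zero argument for distinctness of the ratios $\langle c_j,x\rangle/\langle c_j,y\rangle$ under the Kruskal-rank-$2$ hypothesis, and the Khatri--Rao/Schur-product argument for uniqueness of the linear solve---both of which are welcome and correct.
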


In the rest of this section, we illustrate the ideas in different steps of the algorithm which clarifies how it decomposes rank-$k$ tensor $T$ in~\eqref{eqn:tenDecomp-simDiag}, and also provides an informal proof for the above theorem.

First, we describe the structure and properties of matrices $M_x,M_y$ (see Step 2 of the algorithm), which also clarifies why this algorithm is called simultaneous diagonalization. Following the above tensor decomposition structure for tensor $T$, and given the multilinear form as a linear combination of tensor slices through weight vectors $x$ and $y$, we have
\begin{align*}
M_x &= \sum_{j \in [k]} \lambda_j\inner{x,c_j} a_j b_j^\top = A D_x B^\top, \\
M_y &= \sum_{j \in [k]} \lambda_j\inner{y,c_j} a_j b_j^\top = A D_y B^\top.
\end{align*} 
Here $A \in \R^{d_1 \times k}$, $B \in \R^{d_2 \times k}$ are matrices whose columns are $\{a_j\}$'s and $\{b_j\}$'s, respectively. We also define $D_x \in \R^{k \times k}$ as a diagonal matrix whose $(j,j)$-the entry denoted by $d_{x,j}$ is equal to $\lambda_j\inner{x,c_j}$, and similarly, $D_y \in \R^{k \times k}$ as a diagonal matrix whose $(j,j)$-the entry denoted by $d_{y,j}$ is equal to $\lambda_j\inner{y,c_j}$. These equations are called {\em diagonalizations} of $M_x, M_y$, and they share the {\em same} matrices $A,B$. That is why this algorithm is called {\em simultaneous diagonalization}.

With the above forms of $M_x$ and $M_y$, we can compute the two matrices used in Step 3 as
\begin{align*}
M_xM_y^\dagger &= A D_x D_y^{-1} A^\dagger, \\
M_y^\top (M_x^\dagger)^\top &= B D_y D_x^{-1} B^\dagger.
\end{align*}
Given this, for any $a_j$ in~\eqref{eqn:tenDecomp-simDiag}, we have
$$
M_xM_y^\dagger a_j = A D_x D_y^{-1} A^\dagger a_j = A D_x D_y^{-1} e_j^{(k)} = d_{x,j}d_{y,j}^{-1} a_j.
$$
Here in the second equality, we used the fact that $A^\dagger A = I_k$ (with $I_k$ denoting the $k$-dimensional identity matrix), and hence, $A^\dagger a_j = e_j^{(k)}$, where $e_j^{(k)}$ denotes the $j$-th basis vector in the $k$-dimensional space, i.e, the $j$-th column of $I_k$.
Similarly for any $b_j$ in~\eqref{eqn:tenDecomp-simDiag}, we have 
$$M_y^\top (M_x^\dagger)^\top b_j = d_{x,j}^{-1}d_{y,j} b_j.$$
Therefore, by the definition of matrix eigenvectors, $\{a_j\}$'s and $\{b_j\}$'s are exactly the eigenvectors of the two matrices, and the corresponding eigenvalues $\alpha_j:=d_{x,j}d_{y,j}^{-1}$ and $\beta_j:=d_{x,j}^{-1}d_{y,j}$ satisfy 
$$\alpha_j \beta_j =1, \quad j \in [k].$$
Thus, as long as the values $\alpha_j$'s (and inherently $\beta_j$'s) are unique for $j\in[k]$, Steps 3 and 4 of Algorithm~\ref{algo:simdiag} correctly find the set of $\{(a_j,b_j)\}$'s up to permutation. The values of $\alpha_j$'s (and inherently $\beta_j$'s) rely on the randomness of vectors $x$ and $y$, and when $\{c_j\}$'s have Kruskal rank at least two, the values are distinct with probability 1.

Finally in Step 5, the algorithm recovers the components of the last mode of the tensor  by fixing $\{a_j\}$'s, $\{b_j\}$'s, and solving a system of linear equations in terms of $\{c_j\}$'s. Note that the same idea appears in Alternating Least Squares algorithm that we will discuss in the next section with more details on how to efficiently solve such a system of linear equations. When $\{a_j\}$'s and $\{b_j\}$'s are linearly independent, this step will have a unique solution. Hence, the algorithm finds the unique decomposition of tensor $T$.

Note that simultaneous diagonalization algorithm corresponds to a special tight case of Kruskal's condition in Theorem~\ref{thm:kruskal}, where $\krank(A) = k$, $\krank(B) = k$, $\krank(C) \ge 2$. Weakening the assumption on any of the matrices $A$, $B$ or $C$ may make the tensor decomposition non-unique. Compared to tensor power method (with the symmetrization procedure in Section~\ref{subsec:nonsymmetric}), the simultaneous diagonalization method does not need access to second moment matrices, and can allow one of the factors to have krank 2.

\section{Alternating Least Squares}
\label{sec:als}
One of the most popular algorithms for  tensor decomposition is {\em Alternating Least Squares (ALS)} method, which has been described as the ``workhorse'' of tensor decomposition~\citep{kolda_survey}. This involves solving the least squares problem on a {\em mode} of the tensor, while keeping the other modes fixed, and alternating between the tensor modes. This becomes clearer as we describe the details of ALS as follows.

Given rank-$k$ tensor
$$
T = \sum_{j \in [k]} \lambda_j \ a_j \otimes b_j \otimes c_j \in \R^{d_1 \times d_2 \times d_3},
$$
the goal is to recover tensor rank-1 components $a_j \in \R^{d_1}, b_j \in \R^{d_2}, c_j \in \R^{d_3}$, and coefficients $\lambda_j \in \R$, for  $j \in [k]$. As before, we assume the rank-1 components have unit norm. The problem can be formulated as the {\em least squares} optimization 
\begin{align} \label{eqn:ALSopt}
\left( \lambda_j; a_j, b_j, c_j \right)_{j \in [k]} := & \argmin_{\tl{\lambda}_j, \tl{a}_j, \tl{b}_j, \tl{c}_j} \Bigl\| T- \sum_{j \in [k]} \tl{\lambda}_j \ \tl{a}_j \otimes \tl{b}_j \otimes \tl{c}_j \Bigr\|_F, \\
& \text{s.t.} \ \tl{\lambda}_j \in \R, \tl{a}_j \in \R^{d_1}, \tl{b}_j \in \R^{d_2}, \tl{c}_j \in \R^{d_3}, \nn
\end{align}
where the error between tensor $T$ and its rank-$k$ estimation is minimized in the sense of Frobenius norm. This is a multilinear optimization program and  a {\em non-convex} optimization problem. 

Alternating Least Squares method provides an approach to overcome the non-convexity challenge. This mainly involves modifying the optimization problem such that the optimization is performed for only one of the components while all other components are assumed to be fixed. The same step is performed by {\em alternating} among different components (modes). Thus, the problem in~\eqref{eqn:ALSopt} is solved through an alternating least squares approach.  

We now describe the main step of ALS by fixing the second and third mode rank-1 components, i.e., matrices $\tl{B} := [\tl{b}_1 | \tl{b}_2 | \dotsb | \tl{b}_k]$ and $\tl{C}  := [\tl{c}_1 | \tl{c}_2 | \dotsb | \tl{c}_k]$, and optimizing over first mode, i.e., matrix $\tl{A}  := [\tl{a}_1 | \tl{a}_2 | \dotsb | \tl{a}_k]$.
As the first step, since the Frobenius norm in~\eqref{eqn:ALSopt} is an entry-wise tensor norm, we can reshape the tensor inside without chainging the norm. More specifically, we can rewrite the optimization problem in~\eqref{eqn:ALSopt} into the following equivalent form:
\begin{equation} \label{eqn:ALSoptMod}
\min_{\tl{A} \in \R^{d_1 \times k}} \bigl\| \mat(T,1) - \tl{A} \cdot \diag(\tl{\lambda}) \cdot (\tl{B} \odot \tl{C})^\top \bigr\|_F,
\end{equation}
where $\mat(T,1)$ denotes the mode-1 matricization of $T$, and $\odot$ denotes the Khatri-Rao product; see~\eqref{eqn:matricization} and \eqref{eqn:KhatriRao}, respectively. Note that here we assume $\tl{B}$, $\tl{C}$, and $\tl{\lambda}$ are fixed. We also used the following matricization property such that for vectors $u,v,w$, we have
\begin{equation*}
\mat(u \otimes v \otimes w,1) = u \cdot (v \odot w)^\top. 
\end{equation*}

The optimization problem in~\eqref{eqn:ALSoptMod} is now a {\em linear} least squares problem, and the analysis is very similar to linear regression with the additional property that matrix $(\tl{B} \odot \tl{C})^\top$ is highly-structured which is crucial for the computational efficiency of ALS as we see below. The (right) pseudo-inverse of $(\tl{B} \odot \tl{C})^\top$ is (see~\eqref{eqn:pseudo-inverse} for the definition)
\begin{equation*}
\left[(\tl{B} \odot \tl{C})^\top\right]^\dagger = (\tl{B} \odot \tl{C}) \cdot \left[(\tl{B} \odot \tl{C})^\top (\tl{B} \odot \tl{C})\right]^{-1}.
\end{equation*}
Computing the inverse matrix could be the computationally-expensive part of the iterations, but the specific Khatri-Rao structure of the matrix enables us to write it as 
\begin{equation*}
(\tl{B} \odot \tl{C})^\top (\tl{B} \odot \tl{C}) =  \tl{B}^\top \tl{B} * \tl{C}^\top \tl{C},
\end{equation*}
where $*$ denotes the Hadamard (entry-wise) product.
Thus, we only need to compute the inverse of $k \times k$ matrix $\tl{B}^\top \tl{B} * \tl{C}^\top \tl{C}$, and when $k$ is small (compared to $d_t, t \in \{1,2,3\}$), the inverse can be computed much faster. In practice we can also compute the Hadamard product and then solve a linear system of equations rather than explicitly computing the inverse.

Finally, using the above property and imposing the unit norm constraint on the columns of $\tl{A}$, we update the rank-1 components of the first mode as
\begin{equation} \label{eqn:ALS-mode1}
\tl{A} \mapsto \Norm \left( \mat(T,1) \cdot (\tl{B} \odot \tl{C}) \cdot  \left( \tl{B}^\top \tl{B} * \tl{C}^\top \tl{C} \right)^{-1} \right),
\end{equation}
where operator $\Norm(\cdot)$ normalizes the columns of input matrix, i.e., for vector $v$, we have $\Norm(v) := v / \|v\|$.

\floatname{algorithm}{Algorithm}
\begin{algorithm}[t]
\caption{Alternating Least Squares for Tensor Decomposition}
\label{algo:als}
\begin{algorithmic}[1]
\renewcommand{\algorithmicrequire}{\textbf{input}}
\renewcommand{\algorithmicensure}{\textbf{output}}
\REQUIRE tensor $T = \sum_{j \in [k]} \lambda_j \ a_j \otimes b_j \otimes c_j \in \Rbb^{d_1 \times d_2 \times d_3}$
\ENSURE rank-1 components of tensor $T$
\STATE Set initial estimates for $\tl{A} \in \R^{d_1 \times k},\tl{B} \in \R^{d_2 \times k},\tl{C} \in \R^{d_3 \times k}$.
\WHILE{converged} 
\STATE Let $\overline{A} = \mat(T,1) \cdot (\tl{B} \odot \tl{C}) \cdot  \left( \tl{B}^\top \tl{B} * \tl{C}^\top \tl{C} \right)^{-1}$, and set
$$
\tl{\lambda}_j = \|\overline{A}_j\|, \quad \tl{A}_j = \overline{A}_j/\tilde{\lambda}_j, \quad j \in [k].
$$
\STATE Let $\overline{B} = \mat(T,2) \cdot (\tl{A} \odot \tl{C}) \cdot  \left( \tl{A}^\top \tl{A} * \tl{C}^\top \tl{C} \right)^{-1}$, and set
$$
\tl{\lambda}_j = \|\overline{B}_j\|, \quad \tl{B}_j = \overline{B}_j/\tilde{\lambda}_j, \quad j \in [k].
$$
\STATE Let $\overline{C} = \mat(T,3) \cdot (\tl{A} \odot \tl{B}) \cdot  \left( \tl{A}^\top \tl{A} * \tl{B}^\top \tl{B} \right)^{-1}$, and set
$$
\tl{\lambda}_j = \|\overline{C}_j\|, \quad \tl{C}_j = \overline{C}_j/\tilde{\lambda}_j, \quad j \in [k].
$$
\ENDWHILE
\RETURN $(\tilde{\lambda};\tilde{A},\tilde{B},\tilde{C})$ as the estimation of tensor decomposition components.
\end{algorithmic}
\end{algorithm}

By alternating between different modes of the tensor and with similar calculations, we update the second and third modes as
\begin{align*}
\tl{B} & \mapsto \Norm \left( \mat(T,2) \cdot (\tl{A} \odot \tl{C}) \cdot  \left( \tl{A}^\top \tl{A} * \tl{C}^\top \tl{C} \right)^{-1} \right), \\
\tl{C} & \mapsto \Norm \left( \mat(T,3) \cdot (\tl{A} \odot \tl{B}) \cdot  \left( \tl{A}^\top \tl{A} * \tl{B}^\top \tl{B} \right)^{-1} \right).
\end{align*}
For the coefficient vector $\lambda$, we update it appropriately such that the rank-1 components have unit norm. We have summarized the ALS steps in Algorithm~\ref{algo:als}.

\paragraph{ALS vs.\ power iteration:}The ALS updates in the rank-1 form are strongly related to the power iteration updates. Recall tensor power iteration in~\eqref{eq:map2} which can be adapted to the asymmetric setting such that the update corresponding to the first component is (ignoring the normalization)
$$
\tl{a} \mapsto T(I,\tl{b},\tl{c}).
$$
The update in the right hand side can be also rewritten as
$$
T(I,\tl{b},\tl{c}) = \mat(T,1) \cdot (\tl{b} \odot \tl{c}) \propto \mat(T,1) \cdot \left( (\tl{b} \odot \tl{c})^\top\right)^\dagger,
$$
which is basically the rank-1 form of ALS updates that we described in this section. In rank-$k$ ALS, all components are simultaneously updated at each iteration, while the rank-1 version only updates one component at a time which is basically the ALS update that we described in this section, but only for one of the components. The process then needs to be repeated for each remaining component on the deflated tensor; see Algorithm~\ref{alg:robustpower}. By contrast, in the ALS algorithm we introduced here, all components are simultaneously updated at each iteration. Also note that if the tensor does not have an orthogonal decomposition, ALS can still work while tensor power iteration requires additional whitening step (as in Section~\ref{sec:whitening-ten}) even if the components are linearly independent. However, the benefit of tensor power iteration is that we do have guarantees for it (see Section~\ref{sec:power}, while ALS is not known to converge from a random starting point even if the tensor has an orthogonal decomposition.

\paragraph{Regularized ALS:}Since ALS involves solving linear least squares problems, we can also propose the regularized version of ALS. It is derived by adding a regularization term to the optimization in~\eqref{eqn:ALSoptMod}. The most popular form of regularization is the $\ell_2$-regularization which adds a term $\alpha \|\tl{A}\|_F^2$ to the optimization problem, where $\alpha \geq 0$ is the regularization parameter. This leads to the ALS updates being changed as
\begin{equation} \label{eqn:ALS-model-l2}
\tl{A} \mapsto \Norm \left( \mat(T,1) \cdot (\tl{B} \odot \tl{C}) \cdot  \left( \tl{B}^\top \tl{B} * \tl{C}^\top \tl{C}  + \alpha I \right)^{-1} \right);
\end{equation}
and similarity the updates for $\tl{B}$ and $\tl{C}$ are changed.
This is specifically helpful when the non-regularized pseudo-inverse matrix is not well-behaved. We can obviously add other forms of regularization terms to the optimization problem which lead to variants of regularized ALS.

\paragraph{ALS for symmetric tensors:}The ALS algorithm is naturally proposed for asymmetric tensor decomposition, where at each iteration only one component is updated while all other components are fixed. The next natural question is whether ALS can be adapted to the decomposition of symmetric tensors such as $T = \sum_{j \in [k]} \lambda_j \ a_j^{\otimes 3}$. Here we have to only estimate one matrix $A$. We review two heuristics to do this. Let $\tl{A}_t$ denote the update variable in the left hand side of~\eqref{eqn:ALS-mode1} at iteration $t$. The first heuristic consists in, at iteration $t$, substituting $\tl{B}$ and $\tl{C}$ in the right hand side of~\eqref{eqn:ALS-mode1} by $\tl{A}_{t-1}$ and $\tl{A}_{t-2}$, respectively. The second heuristic consists in substituting both with $\tl{A}_{t-1}$.

\clearpage{}

\clearpage{}\chapter{Applications of Tensor Methods} \label{sec:applications}
\label{ch:applications}
In Chapter~\ref{sec:intro}, we gave a few examples of latent variable models that can be learned by tensor decomposition techniques. In this chapter, we elaborate on this connection and give more examples on how to learn many probabilistic models by tensor decomposition. We cover both unsupervised and supervised settings in this chapter. 
We hope these examples provide a good understanding of how tensor methods are applied in the existing literature and can help in generalizing tensor decomposition techniques to  learning more models.
Of course, there are still many more applications of tensor decomposition techniques to learn probabilistic models in the literature, and we give a brief survey in Section~\ref{sec:othermodels}.

In the unsupervised setting, we discuss models including the Gaussian mixtures, multiview mixture model, Independent Component Analysis (ICA), Latent Dirichlet Allocation (LDA) and Noisy-Or models. To this end, the observed moment is formed as a low order tensor (usually third or fourth order), and by decomposing the tensor to its rank-1 components we are able to learn the parameters of the model; see Sections~\ref{sec:singletopic}-\ref{sec:noisy-or-network} which describe this connection.
The basic form is demonstrated in Theorem~\ref{thm:single-topic} for the
first example, and the general pattern will emerge from subsequent
examples.

Then in Section~\ref{sec:moment-supervised}, we show how the tensor techniques can be adapted to supervised setting, and in particular, for learning neural networks and mixtures of generalized linear models. Here, we exploit the cross-moment between the output and a specific non-linear transformation of the input. By decomposing that cross-moment into rank-1 components, we learn the parameters of the model.

\section{Pure Topic Model Revisited}\label{sec:singletopic}

We start by explaining the pure topic model in more details, where it was originally introduced in Section~\ref{sec:intro-example}. 
Recall the model is a simple bag-of-words model for documents in which the
words in the document are assumed to be \emph{exchangeable}\--- a collection of random variables $x_1, x_2, \dotsc, x_\ell$ are
exchangeable if their joint probability distribution is invariant to
permutation of the indices.
The well-known De Finetti's theorem~\citep{austin2008exchangeable} implies
that such exchangeable models can be viewed as mixture models in which
there is a latent variable $h$ such that $x_1, x_2, \dotsc, x_\ell$ are
conditionally i.i.d.~given $h$ (see Figure~\ref{fig:exchangeable} for the
corresponding graphical model) and the conditional distributions are
identical at all the nodes, i.e., for all $x$'s.

In our simplified topic model for documents, the latent variable $h$ is
interpreted as the (sole) topic of a given document, and it is assumed to
take only a finite number of distinct values.
Let $k$ be the number of distinct topics in the corpus, $d$ be the number
of distinct words in the vocabulary, and $\ell \geq 3$ be the number of
words in each document.
The generative process for a document is as follows: the document's
topic is drawn according to the discrete distribution specified by the
probability vector $w := (w_1,w_2,\dotsc,w_k) \in \Delta^{k-1}$, where $\Delta^{k-1} :=
\{ v \in \R^k : \forall j \in [k], v_j \in [0,1], \ \sum_{j \in [k]} v_j = 1 \}$ denotes the probability simplex, i.e., the hidden topic $h$ is modeled as a discrete random variable $h$ such that
\[ \Pr[h = j] = w_j , \quad j \in [k] . \]
Given the topic $h$, the document's $\ell$ words are drawn independently
according to the discrete distribution specified by the probability
vector $\mu_h \in \Delta^{d-1}$.
It will be convenient to represent the $\ell$ words in the document by
$d$-dimensional random \emph{vectors} $x_1, x_2, \dotsc, x_\ell \in \R^d$.
Specifically, we set
\begin{align*}
    x_t = e_i \quad \text{if and only if} \quad
& \text{the $t$-th word in the document is $i$}, \\
& t \in [\ell] , i \in [d],
\end{align*}
where $\{e_1,e_2,\ldots, e_d\}$ is the standard coordinate basis for $\R^d$. This is basically equivalent to {\em one-hot encoding of words} using standard basis vectors in the $d$-dimensional space.

As we did in Section~\ref{sec:intro-example}, we will consider the {\em cross} moments of these vectors which means we will compute $\E[x_1x_2^\top]$ instead of $\E[x_1x_1^\top]$. The advantage of the above encoding of words and the choice of moments is that the moments will correspond to joint probabilities over words. For instance, observe that
\begin{align*}
\E[ x_1 \otimes x_2 ]
& = \sum_{i,j \in [d]} \Pr[ x_1 = e_i, x_2 = e_j ] \ e_i \otimes e_j
\\
& = \sum_{i,j \in [d]} \Pr[ \text{$1$st word} = i, \text{$2$nd word} =
j ] \ e_i \otimes e_j ,
\end{align*}
and thus, the $(i,j)$-the entry of the moment matrix $\E[ x_1 \otimes x_2 ]$ is $\Pr[
\text{$1$st word} = i, \text{$2$nd word} = j ]$.
More generally, the $(i_1,i_2,\dotsc,i_\ell)$-th entry in the tensor $\E[
x_1 \otimes x_2 \otimes \dotsb \otimes x_\ell ]$ is $\Pr[ \text{$1$st word}
= i_1, \text{$2$nd word} = i_2, \dotsc, \text{$\ell$-th word} = i_\ell ]$.
This means that estimating cross moments, say, of $x_1 \otimes x_2 \otimes
x_3$, is the same as estimating joint probabilities of the first three
words over all documents; recall that we assume that each document has at least three words.

The second advantage of the vector encoding of words is that the conditional
expectation of $x_t$ given $h = j$ is simply $\mu_j$, the vector of word
probabilities for topic $j$. This can be shown as
\begin{align*}
\E[ x_t | h = j ]
&= \sum_{i \in [d]} \Pr[ \text{$t$-th word} = i | h = j] \ e_i \\
&= \sum_{i \in [d]} [ \mu_j ]_i \ e_i
= \mu_j,
\quad j \in [k],
\end{align*}
where $[\mu_j]_i$ is the $i$-th entry of the vector $\mu_j$.
Because the words are conditionally independent given the topic, we can use
this same property with conditional cross moments, say, of $x_1$ and $x_2$:
\[
\E[ x_1 \otimes x_2 | h = j ]
= \E[ x_1 | h = j] \otimes \E[ x_2 | h = j]
= \mu_j \otimes \mu_j,
\quad j \in [k] .
\]

Now using the law of total expectations, we know

$$
\E[ x_1 \otimes x_2]
=\sum_{j=1}^k \Pr[h=j] \E[ x_1 \otimes x_2 | h = j ] = \sum_{j=1}^k w_j \ \mu_j\otimes \mu_j.
$$

This and similar calculations lead to the following theorem.

\begin{theorem}[\citealp{AHK12}] \label{thm:single-topic}
For the above exchangeable single topic model, if
\begin{align*}
M_2 & := \E[ x_1 \otimes x_2 ], \\
M_3 & := \E[ x_1 \otimes x_2 \otimes x_3 ],
\end{align*}
then
\begin{align*}
M_2 & = \sum_{j \in [k]} w_j \ \mu_j \otimes \mu_j, \\
M_3 & = \sum_{j \in [k]} w_j \ \mu_j \otimes \mu_j \otimes \mu_j.
\end{align*}
\end{theorem}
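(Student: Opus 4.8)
The plan is to derive the stated expressions for $M_2$ and $M_3$ directly from the law of total expectation, exactly mirroring the computation already carried out in the text for $\E[x_1\otimes x_2]$. The only new ingredient needed beyond what is written above is the conditional factorization of the third-order cross moment, which again uses conditional independence of the words given the topic.

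First I would recall the two facts established in the discussion preceding the theorem: (i) $\E[x_t\mid h=j]=\mu_j$ for every $t\in[\ell]$ and $j\in[k]$ (the vector-encoding identity), and (ii) conditioned on $h=j$, the word-vectors $x_1,x_2,x_3$ are independent. From (ii), since expectation of a tensor product of independent vectors is the tensor product of their expectations, I get
\[
\E[x_1\otimes x_2\otimes x_3\mid h=j]
=\E[x_1\mid h=j]\otimes\E[x_2\mid h=j]\otimes\E[x_3\mid h=j]
=\mu_j\otimes\mu_j\otimes\mu_j,
\]
and likewise $\E[x_1\otimes x_2\mid h=j]=\mu_j\otimes\mu_j$. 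Then applying the law of total expectation over the discrete latent variable $h$ with $\Pr[h=j]=w_j$ gives
\[
M_2=\E[x_1\otimes x_2]=\sum_{j\in[k]}w_j\,\E[x_1\otimes x_2\mid h=j]=\sum_{j\in[k]}w_j\,\mu_j\otimes\mu_j,
\]
\[
M_3=\E[x_1\otimes x_2\otimes x_3]=\sum_{j\in[k]}w_j\,\E[x_1\otimes x_2\otimes x_3\mid h=j]=\sum_{j\in[k]}w_j\,\mu_j\otimes\mu_j\otimes\mu_j,
\]
which is precisely the claim.

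There is essentially no obstacle here: the result is a direct bookkeeping consequence of conditional independence plus linearity of expectation, and the $M_2$ case is already spelled out in the excerpt. The only point worth stating carefully is why $\E$ commutes with $\otimes$ for conditionally independent factors — this is just that the $(i_1,i_2,i_3)$-entry of $\E[x_1\otimes x_2\otimes x_3\mid h=j]$ equals $\E[x_1(i_1)x_2(i_2)x_3(i_3)\mid h=j]$, which factors as $\E[x_1(i_1)\mid h=j]\,\E[x_2(i_2)\mid h=j]\,\E[x_3(i_3)\mid h=j]$ by conditional independence, i.e. the $(i_1,i_2,i_3)$-entry of $\mu_j\otimes\mu_j\otimes\mu_j$. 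So the ``hard part,'' such as it is, is merely writing the entrywise factorization cleanly; everything else is the law of total expectation applied term by term.
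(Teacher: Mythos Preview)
Your proposal is correct and follows exactly the approach the paper takes: the text immediately preceding the theorem already computes $\E[x_1\otimes x_2]$ via conditional independence plus the law of total expectation, and then simply says ``this and similar calculations lead to the following theorem.'' Your write-up is precisely those ``similar calculations'' for the third-order case, so there is nothing to add.
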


The structure of $M_2$ and
$M_3$ revealed in Theorem~\ref{thm:single-topic} implies that the topic
vectors $\mu_1, \mu_2, \dotsc, \mu_k$ can be estimated by computing a
certain symmetric tensor decomposition.
Moreover, due to exchangeability, any triples (resp., pairs) of words in a
document---and not just the first three (resp., two) words---can be used in
forming $M_3$ (resp., $M_2$).

\section{Beyond Raw Moments}
\label{sec:beyond}

In the above exchangeable single topic model, the {\em raw} (cross) moments of the observed
words directly yield the desired symmetric tensor structure.
In some other models, the raw moments do not explicitly have this form.
In this section, we show that the desired tensor structure can be found through
various manipulations of different moments for some other latent variable models.

\subsection{Spherical Gaussian mixtures} \label{sec:spherical-Gaussian}

We now consider a mixture of $k$ Gaussian distributions with spherical covariances.
We start with the simpler case where all of the covariances are identical;
this probabilistic model is closely related to the (non-probabilistic)
$k$-means clustering problem~\citep{kmeans}.
We then consider the case where the spherical variances may differ.

\paragraph{Common covariance.}

Let $w_j \in (0,1)$ be the probability of choosing component $j \in [k]$,
$\mu_1, \mu_2, \dotsc, \mu_k \in \R^d$ be the component mean
vectors, and $\sigma^2 I_d \in \R^{d \times d}$ be the common covariance matrix ($\sigma \in \R$) for the spherical Gaussian mixtures model.  Then an observation vector $x$ in this model is given by
\begin{align*}
x & := \mu_h + z ,
\end{align*}
where $h$ is the discrete random variable with $\Pr[h = j] = w_j$ for $j
\in [k]$ (similar to the exchangeable single topic model), and $z \sim
\N(0,\sigma^2 I_d)$ is an independent multivariate Gaussian random vector in
$\R^d$ with zero mean and spherical covariance matrix $\sigma^2 I_d$.

The Gaussian mixtures model differs from the exchangeable single topic model
in the way observations are generated.
In the single topic model, we observe multiple draws (words in a particular
document) $x_1, x_2, \dotsc, x_\ell$ given the same fixed $h$ (the topic of
the document).
In contrast, for the Gaussian mixtures model, every realization of $x$
corresponds to a different realization of $h$. The following theorem shows that how we can get the desired tensor decomposition form by modifying the raw moments.

\begin{theorem}[\citealp{HK13-mog}] \label{thm:spherical_same}
Assume $d \geq k$.
The variance $\sigma^2$ is the smallest eigenvalue of the
covariance matrix $\E[ x \otimes x ] - \E[x]\otimes \E[x]$.
Furthermore, if
\begin{align*}
M_2 & := \E[ x \otimes x ] - \sigma^2 I_d, \\
M_3 & :=  \E[ x \otimes x \otimes x] \\
& \quad - \sigma^2 \sum_{i \in [d]} \bigl(
\E[x] \otimes e_i \otimes e_i + e_i \otimes \E[x] \otimes e_i
+ e_i \otimes e_i \otimes \E[x] \bigr) ,
\end{align*}
then,
\begin{align*}
M_2 & = \sum_{j \in [k]} w_j \ \mu_j \otimes \mu_j, \\
M_3 & = \sum_{j \in [k]} w_j \ \mu_j \otimes \mu_j \otimes \mu_j.
\end{align*}
\end{theorem}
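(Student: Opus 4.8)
The plan is to verify the three asserted identities by direct computation from the generative model $x = \mu_h + z$, where $h$ is the discrete mixing variable with $\Pr[h=j] = w_j$ and $z \sim \N(0,\sigma^2 I_d)$ is independent of $h$. The only tools needed are the law of total expectation, the moment formulas for a zero-mean spherical Gaussian, and linearity of the tensor/outer product. I will treat the three claims (the variance characterization, the $M_2$ identity, and the $M_3$ identity) in turn.

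First I would establish the moments of $z$: since $z \sim \N(0,\sigma^2 I_d)$ with zero mean, we have $\E[z] = 0$, $\E[z \otimes z] = \sigma^2 I_d$ (viewing $I_d$ as the matrix $\sum_i e_i \otimes e_i$), and all odd moments vanish, in particular $\E[z \otimes z \otimes z] = 0$. For the variance claim, I would condition on $h$: $\E[x \otimes x \mid h=j] = \mu_j \otimes \mu_j + \sigma^2 I_d$ using independence of $z$ from $h$ and $\E[z]=0$, so by total expectation $\E[x\otimes x] = \sum_j w_j \mu_j\otimes\mu_j + \sigma^2 I_d$. Subtracting $\E[x]\otimes\E[x] = (\sum_j w_j \mu_j)\otimes(\sum_j w_j\mu_j)$ gives the covariance matrix as $\sigma^2 I_d$ plus the (PSD) covariance of the discrete vector $\mu_h$; since the latter has rank at most $k-1 \le d-1$, it has a zero eigenvalue, and $\sigma^2$ is therefore the smallest eigenvalue of $\E[x\otimes x] - \E[x]\otimes\E[x]$. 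The $M_2$ identity then follows immediately: $M_2 = \E[x\otimes x] - \sigma^2 I_d = \sum_j w_j \mu_j\otimes\mu_j$.

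For the $M_3$ identity I would first expand $\E[x\otimes x\otimes x \mid h=j]$ by writing $x = \mu_j + z$ and using multilinearity of $\otimes$. This produces eight terms; the four terms with an odd number of $z$ factors vanish after taking expectation (using $\E[z]=0$ and $\E[z\otimes z\otimes z]=0$), leaving $\mu_j^{\otimes 3}$ plus the three terms of the form $\mu_j \otimes \E[z\otimes z]$, $\E[z]\otimes\cdots$ — more precisely $\E[\mu_j \otimes z \otimes z] = \mu_j \otimes \sigma^2 I_d = \sigma^2 \sum_i \mu_j \otimes e_i \otimes e_i$ and its two cyclic permutations. Taking total expectation over $h$ and using $\sum_j w_j \mu_j = \E[x]$ gives $\E[x^{\otimes 3}] = \sum_j w_j \mu_j^{\otimes 3} + \sigma^2 \sum_{i\in[d]}\bigl(\E[x]\otimes e_i\otimes e_i + e_i\otimes\E[x]\otimes e_i + e_i\otimes e_i\otimes\E[x]\bigr)$, which is exactly the correction subtracted in the definition of $M_3$, so $M_3 = \sum_j w_j \mu_j^{\otimes 3}$.

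This proof is essentially bookkeeping, so there is no deep obstacle; the part most prone to error is the careful expansion of the third-moment term and confirming that the mixed Gaussian moments $\E[\mu_j \otimes z \otimes z]$ really do assemble into the symmetric correction term stated in the theorem (in particular that the cross term $\E[z \otimes z]$ contracted against a fixed $\mu_j$ gives $\sum_i \mu_j \otimes e_i \otimes e_i$ and not something involving $\langle \mu_j, e_i\rangle$). I would double-check this by evaluating a single entry: the $(a,b,c)$ entry of $\sigma^2 \sum_i \mu_j \otimes e_i \otimes e_i$ is $\sigma^2 \mu_j(a)\,\mathbf{1}\{b=c\}$, which matches $\mu_j(a)\,\E[z_b z_c] = \mu_j(a)\,\sigma^2\delta_{bc}$, confirming the identification.
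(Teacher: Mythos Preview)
Your proof is correct. The paper itself does not supply a proof of this theorem; it simply states the result and attributes it to \citet{HK13-mog}, so there is nothing in the paper to compare against. Your argument---conditioning on $h$, expanding $(\mu_h+z)^{\otimes 2}$ and $(\mu_h+z)^{\otimes 3}$ by multilinearity, killing the odd-order Gaussian moments, and identifying $\E[z\otimes z]=\sigma^2\sum_i e_i\otimes e_i$---is exactly the standard direct computation, and your entrywise sanity check for the $\mu_j\otimes z\otimes z$ term is the right way to guard against index slip-ups. The rank argument for the eigenvalue claim (that $\mathrm{Cov}(\mu_h)$ is PSD of rank at most $k-1<d$, hence has a nontrivial null space on which the covariance of $x$ equals $\sigma^2$) is also correct and uses the hypothesis $d\ge k$ in the intended way.
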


\paragraph{Differing covariances.}
The general case is where each component may have a
\emph{different} spherical covariance.
An observation in this model is again $x = \mu_h + z$, but now $z \in \R^d$
is a random vector whose conditional distribution given $h = j$ for some
$j \in [k]$ is a multivariate Gaussian $\N(0,\sigma_j^2 I_d)$ with zero mean
and spherical covariance $\sigma_j^2 I_d$.

\begin{theorem}[\citealp{HK13-mog}] \label{thm:spherical}
Assume $d \geq k$.
The average variance $\bar\sigma^2 := \sum_{j \in [k]} w_j \sigma_j^2$ is
the smallest eigenvalue of the covariance matrix $\E[ x \otimes x ] -
\E[x]\otimes \E[x]$.
Let $v$ be any unit norm eigenvector corresponding to the eigenvalue
$\bar\sigma^2$.
If
\begin{align*}
M_1 & := \E[ \langle v, x - \E[x] \rangle^2 x], \\
M_2 & := \E[ x \otimes x ] - \bar\sigma^2 I_d, \\
M_3 & := \E[ x \otimes x \otimes x] \\
& \quad - \sum_{i \in [d]} \bigl( M_1 \otimes e_i \otimes e_i + e_i \otimes M_1 \otimes e_i + e_i \otimes e_i \otimes M_1 \bigr) ,
\end{align*}
where $\langle \cdot,\cdot \rangle$ denotes the inner-product operator.
Then
\begin{align*}
M_2 = & \sum_{j \in [k]} w_j \ \mu_j \otimes \mu_j, \\
M_3 = & \sum_{j \in [k]} w_j \ \mu_j \otimes \mu_j \otimes \mu_j.
\end{align*}
\end{theorem}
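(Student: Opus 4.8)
The plan is to reduce everything to the law of total expectation together with the elementary moment identities for a zero-mean spherical Gaussian, mirroring the structure of the common-covariance case in Theorem~\ref{thm:spherical_same} but tracking the per-component variances $\sigma_j^2$ carefully. Throughout I would write $\bar\mu := \E[x] = \sum_{j\in[k]} w_j \mu_j$ and condition on $h=j$, using that given $h=j$ we have $x = \mu_j + z$ with $z\sim\N(0,\sigma_j^2 I_d)$ independent of the (deterministic) $\mu_j$, and that $\E[z]=0$, $\E[z\otimes z]=\sigma_j^2 I_d = \sigma_j^2\sum_{i\in[d]} e_i\otimes e_i$, while all odd-order moments of $z$ vanish by the symmetry $z\mapsto -z$.

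First I would establish the spectral claim. Conditioning on $h$ gives $\E[(x-\bar\mu)\otimes(x-\bar\mu)] = \sum_j w_j\big[(\mu_j-\bar\mu)\otimes(\mu_j-\bar\mu) + \sigma_j^2 I_d\big] = \Sigma + \bar\sigma^2 I_d$, where $\Sigma := \sum_j w_j(\mu_j-\bar\mu)^{\otimes 2}\succeq 0$ and $\bar\sigma^2 = \sum_j w_j\sigma_j^2$. The range of $\Sigma$ lies in $\operatorname{span}\{\mu_j-\bar\mu : j\in[k]\}$, which (since $\sum_j w_j(\mu_j-\bar\mu)=0$) has dimension at most $k-1 < d$ using $d\ge k$; hence $\Sigma$ has a nontrivial kernel, the smallest eigenvalue of $\E[x\otimes x]-\E[x]\otimes\E[x]$ is exactly $\bar\sigma^2$, and it is attained precisely on $\ker\Sigma$. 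Moreover any unit eigenvector $v$ for eigenvalue $\bar\sigma^2$ satisfies $v^\top\Sigma v = 0$, and since $\Sigma$ is a nonnegative combination of rank-one PSD terms this forces $\langle v, \mu_j-\bar\mu\rangle = 0$, i.e.\ $\langle v,\mu_j\rangle = \langle v,\bar\mu\rangle$ for every $j\in[k]$. This orthogonality is the structural fact that makes $M_1$ collapse.

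Next I would compute the moments. For $M_1$, conditioning on $h=j$ and using $\langle v, x-\bar\mu\rangle = \langle v, \mu_j-\bar\mu\rangle + \langle v,z\rangle = \langle v,z\rangle$, I get $\E[\langle v,x-\bar\mu\rangle^2 x \mid h=j] = \E[\langle v,z\rangle^2(\mu_j+z)\mid h=j] = \sigma_j^2\mu_j + \E[\langle v,z\rangle^2 z\mid h=j]$, where the last term vanishes as it is odd in $z$; averaging over $h$ gives $M_1 = \sum_j w_j\sigma_j^2\mu_j$. For $M_2$, $\E[x\otimes x] = \Sigma + \bar\sigma^2 I_d + \bar\mu^{\otimes 2}$, so $M_2 = \Sigma + \bar\mu^{\otimes 2} = \sum_j w_j\mu_j\otimes\mu_j$ via $\Sigma = \sum_j w_j\mu_j^{\otimes 2} - \bar\mu^{\otimes 2}$. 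For $M_3$, the conditional third moment is $\E[x^{\otimes 3}\mid h=j] = \mu_j^{\otimes 3} + \sigma_j^2\sum_{i}(\mu_j\otimes e_i\otimes e_i + e_i\otimes\mu_j\otimes e_i + e_i\otimes e_i\otimes\mu_j)$ — the one-$z$ and three-$z$ terms drop out by oddness and the two-$z$ terms each contribute one factor of $\sigma_j^2 I_d$ — and averaging over $h$, recognizing $\sum_j w_j\sigma_j^2\mu_j = M_1$ in each of the three correction sums, shows that subtracting $\sum_i(M_1\otimes e_i\otimes e_i + e_i\otimes M_1\otimes e_i + e_i\otimes e_i\otimes M_1)$ from $\E[x^{\otimes 3}]$ leaves exactly $\sum_j w_j\mu_j^{\otimes 3}$.

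I expect the main obstacle to be definitional rather than computational: one must verify that \emph{every} unit eigenvector $v$ for the smallest eigenvalue is orthogonal to all the $\mu_j-\bar\mu$ (not merely to $\bar\mu$, and for an arbitrary such $v$, not a hand-picked one), since this is what makes the otherwise intractable term $\E[\langle v, x-\bar\mu\rangle^2 z\mid h=j]$ disappear. That step is exactly where the PSD structure of $\Sigma$ and the hypothesis $d\ge k$ are essential; everything else is Gaussian-moment bookkeeping of the same flavor as the proof of Theorem~\ref{thm:spherical_same}, to which the present theorem specializes when all $\sigma_j=\sigma$ (whereupon $M_1 = \sigma^2\E[x]$).
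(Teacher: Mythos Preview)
Your argument is correct. The paper itself does not supply a proof of this theorem; it merely states the result with a citation to \citet{HK13-mog} and records the one consequence $M_1=\sum_{j\in[k]}w_j\sigma_j^2\mu_j$, which is precisely the formula you derive from the orthogonality $\langle v,\mu_j-\bar\mu\rangle=0$ and Gaussian odd-moment vanishing. So there is nothing in the monograph's treatment to contrast with your approach---your computation is the natural one and matches the partial information the paper provides.
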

As shown by~\citet{HK13-mog}, $M_1 = \sum_{j \in [k]} w_j \sigma_j^2 \mu_j$.
Note that for the common covariance case, where $\sigma_j^2 = \sigma^2$, we
have that $M_1 = \sigma^2 \E[x]$; see~Theorem~\ref{thm:spherical_same}.

\subsection{Independent component analysis (ICA)}

The standard model for ICA~\citep{Comon94,CC96,HO00,Comon:book}, in which
independent signals are linearly mixed and corrupted with Gaussian noise
before being observed, is specified as follows.
Let $h \in \R^k$ be a latent random \emph{vector} with independent
coordinates, $A \in \R^{d \times k}$ the mixing matrix, and $z \in \R^d$ be a
multivariate Gaussian random vector. The random vectors $h$ and $z$ are
assumed to be independent.
The observed random vector $x$ in this model is given by
\begin{align*}
x & := A h + z .
\end{align*}
Let $\mu_j$ denote the $j$-th column of the mixing matrix $A$.

\begin{theorem}[\citealp{Comon:book}]
Define
\begin{equation*}
M_4 := \E[ x \otimes x \otimes x \otimes x ] - T,
\end{equation*}
where $T \in \R^{d \times d \times d \times d}$ is the fourth-order tensor with
\begin{multline*}
[T]_{i_1,i_2,i_3,i_4} := \E[ x_{i_1} x_{i_2} ] \E[ x_{i_3} x_{i_4} ] + \E[
x_{i_1} x_{i_3} ] \E[ x_{i_2} x_{i_4} ] \\
+ \E[ x_{i_1} x_{i_4} ] \E[ x_{i_2}
x_{i_3} ] ,
\quad 1 \leq i_1, i_2, i_3, i_4 \leq d,
\end{multline*}
\emph{i.e.}, $T$ is the fourth derivative tensor of the function $v
\mapsto 8^{-1} \E[ (v^\t x)^2 ]^2$, and so, $M_4$ is the fourth cumulant tensor.
Let $\kappa_j := \E[h_j^4] - 3$ for each $j \in [k]$.
Then
\begin{equation*}
M_4 = \sum_{j \in [k]} \kappa_j \ \mu_j \otimes \mu_j \otimes \mu_j \otimes \mu_j.
\end{equation*}
\end{theorem}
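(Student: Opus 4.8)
The plan is to recognize $M_4$ as the fourth-order \emph{cumulant} tensor of $x$ and then invoke the three structural properties of cumulants: additivity over independent summands, multilinearity under linear maps, and vanishing of order-$\geq 3$ cumulants for Gaussian vectors. First I would record that for a centered random vector $y$ the fourth cumulant tensor has entries
\[
\operatorname{cum}_4(y)_{i_1 i_2 i_3 i_4} = \E[y_{i_1}y_{i_2}y_{i_3}y_{i_4}] - \E[y_{i_1}y_{i_2}]\E[y_{i_3}y_{i_4}] - \E[y_{i_1}y_{i_3}]\E[y_{i_2}y_{i_4}] - \E[y_{i_1}y_{i_4}]\E[y_{i_2}y_{i_3}],
\]
and then check that the subtracted part is exactly the tensor $T$ in the statement. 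Writing $\Sigma := \E[x x^\t]$, the function $v \mapsto 8^{-1}\E[(v^\t x)^2]^2 = 8^{-1}(v^\t \Sigma v)^2$ is a homogeneous degree-$4$ polynomial, so its fourth-derivative tensor has $(i_1,i_2,i_3,i_4)$ entry $\Sigma_{i_1 i_2}\Sigma_{i_3 i_4} + \Sigma_{i_1 i_3}\Sigma_{i_2 i_4} + \Sigma_{i_1 i_4}\Sigma_{i_2 i_3}$, which coincides with $T$ since $\Sigma_{i_a i_b} = \E[x_{i_a}x_{i_b}]$. Hence, under the standing normalization that $h$ and $z$ are centered, $M_4 = \E[x^{\otimes 4}] - T = \operatorname{cum}_4(x)$.

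Next I would run the cumulant calculus on $x = A h + z$. Since $h$ and $z$ are independent, additivity gives $\operatorname{cum}_4(x) = \operatorname{cum}_4(Ah) + \operatorname{cum}_4(z)$, and $\operatorname{cum}_4(z) = 0$ because $z$ is Gaussian. Multilinearity of cumulants under the linear map $A$ gives
\[
\operatorname{cum}_4(Ah)_{i_1 i_2 i_3 i_4} = \sum_{j_1,j_2,j_3,j_4 \in [k]} A(i_1,j_1)A(i_2,j_2)A(i_3,j_3)A(i_4,j_4)\,\operatorname{cum}_4(h)_{j_1 j_2 j_3 j_4}.
\]
Because the coordinates of $h$ are mutually independent, $\operatorname{cum}_4(h)_{j_1 j_2 j_3 j_4} = 0$ unless $j_1 = j_2 = j_3 = j_4$, and the diagonal value is the univariate cumulant $\operatorname{cum}_4(h_j) = \E[h_j^4] - 3\E[h_j^2]^2 = \kappa_j$ using $\E[h_j^2] = 1$. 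Substituting the diagonal tensor $\sum_{j \in [k]} \kappa_j\, e_j^{\otimes 4}$ and noting that applying $A$ along each mode sends $e_j \mapsto A e_j = \mu_j$ (the $j$-th column of $A$) yields $\operatorname{cum}_4(x) = \sum_{j \in [k]} \kappa_j\, \mu_j^{\otimes 4}$, which is the desired identity.

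The main obstacle here is expository rather than conceptual: one must either develop (or carefully cite, e.g.\ from \cite{Comon:book}) the additivity, multilinearity, Gaussian-vanishing, and diagonal-for-independence properties of cumulants, being explicit about the normalization hypotheses on $h$ and $z$ that make $\kappa_j = \E[h_j^4] - 3$ an honest cumulant; or else avoid cumulant language entirely and instead expand $\E[(Ah+z)^{\otimes 4}]$ directly, using independence of $h$ and $z$, Isserlis'/Wick's theorem for the Gaussian moments of $z$, and coordinate-independence of $h$ to kill cross terms, finally verifying term-by-term that everything except $\sum_{j} \kappa_j \mu_j^{\otimes 4}$ cancels against $T$. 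The latter route is elementary but bookkeeping-heavy; the former is cleaner, and is what I would write, isolating the identification $M_4 = \operatorname{cum}_4(x)$ (i.e.\ the computation showing $T$ is the fourth-derivative/Gaussian part) as a short preliminary lemma.
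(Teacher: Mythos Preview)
Your proposal is correct and is the standard cumulant-calculus argument. Note, however, that the paper does not actually prove this theorem: it is stated with a citation to \citet{Comon:book} and no proof is supplied in the text, so there is nothing to compare against beyond observing that your approach is the canonical one from the ICA literature. Your observation about the implicit normalizations $\E[h_j]=0$ and $\E[h_j^2]=1$ (needed so that $\kappa_j = \E[h_j^4]-3$ really is the fourth cumulant) is well taken and worth making explicit, since the paper's model description only states independence of the coordinates of $h$.
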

Note that $\kappa_j$ corresponds to the excess kurtosis, a measure of
non-Gaussianity as $\kappa_j = 0$ if $h_j$ is a standard normal random
variable.
Hence, mixing matrix $A$ is not identifiable if $h$ is a multivariate
Gaussian.

We may derive forms similar to that of $M_2$ and $M_3$ in
Theorem~\ref{thm:single-topic} using $M_4$ by observing that
\begin{align*}
M_4(I,I,u,v)
& = \sum_{j \in [k]} \kappa_i (\mu_j^\t u) (\mu_j^\t v) \ \mu_j \otimes \mu_j, \\
M_4(I,I,I,v)
& = \sum_{j \in [k]} \kappa_j (\mu_j^\t v) \ \mu_j \otimes \mu_j \otimes \mu_j,
\end{align*}
for any vectors $u,v \in \R^d$.

\subsection{Latent Dirichlet Allocation}
\label{subsec:lda}
An increasingly popular class of latent variable models are \emph{mixed
membership models}, where each datum may belong to several different latent
classes simultaneously.
Latent Dirichlet Allocation (LDA, \cite{blei2003latent}) is one such model for the case of document modeling; here, each
document corresponds to a mixture over topics (as opposed to just a single
topic that we discussed in Section~\ref{sec:singletopic}).
The distribution over such topic mixtures is a Dirichlet distribution
$\Dir(\alpha)$ with parameter vector $\alpha \in \R_{++}^k$ with strictly
positive entries; its density function over the probability simplex $\Delta^{k-1} :=
\{ v \in \R^k : \forall j \in [k], v_j \in [0,1], \ \sum_{j \in [k]} v_j = 1 \}$
is given by
\[
p_\alpha(h) =
\frac{\Gamma(\alpha_0)}{\prod_{j \in [k]} \Gamma(\alpha_j)}
\prod_{j \in [k]} h_j^{\alpha_j-1}
, \quad h \in \Delta^{k-1},
\]
where
\begin{equation*}
\alpha_0 := \alpha_1 + \alpha_2 + \dotsb + \alpha_k,
\end{equation*}
and $\Gamma(\cdot)$ denotes the Gamma function.

As before, the $k$ topics are specified by probability vectors $\mu_1,
\mu_2, \dotsc, \mu_k \in \Delta^{d-1}$ for generating words.
To generate a document, we first draw the topic mixture $h =
(h_1,h_2,\dotsc,h_k) \sim \Dir(\alpha)$, and then conditioned on $h$, we
draw $\ell$ words $x_1,x_2,\dotsc,x_\ell$ independently from the discrete
distribution specified by the probability vector $\sum_{j \in [k]} h_j
\mu_j$, \emph{i.e.}, for each  word $x_t$, we independently sample a topic
$j$ according to the topic proportion vector $h$ and then sample $x_t$ according to $\mu_j$.
Again, we encode a word $x_t$ by setting $x_t = e_i$ if and only if the $t$-th word in
the document is $i$.

The parameter $\alpha_0$ (the sum of the ``pseudo-counts'') characterizes
the concentration of the distribution.
As $\alpha_0\rightarrow 0$, the distribution degenerates to a single
topic model, i.e., the limiting density has, with probability
$1$, exactly one entry of $h$ being $1$ and the rest are $0$.
At the other extreme, if $\alpha = (c,c,\dotsc,c)$ for some scalar $c > 0$,
then as $\alpha_0 = ck \to \infty$, the distribution of $h$ becomes peaked
around the uniform vector $(1/k,1/k,\dotsc,1/k)$, and furthermore, the
distribution behaves like a product distribution.
We are typically interested in the case where $\alpha_0$ is small
(\emph{e.g.}, a constant independent of $k$), whereupon $h$ typically has
only a few large entries.
This corresponds to the setting where the documents are mainly comprised of
just a few topics.

\begin{theorem}[\citealp{SpectralLDA}] \label{thm:lda}
Define
\begin{align*}
M_1 & := \E[x_1], \\
M_2 & := \E[x_1 \otimes x_2 ] - \frac{\alpha_0}{\alpha_0+1} M_1\otimes M_1, \\
M_3 & : = \E[ x_1 \otimes x_2 \otimes x_3 ] \\
& \quad -\frac{\alpha_0}{\alpha_0+2}
\Bigl(\E[x_1 \otimes x_2 \otimes M_1]
+ \E[x_1 \otimes M_1 \otimes x_2]  + \E[M_1 \otimes x_1 \otimes x_2]
\Bigr) \\
& \quad + \frac{2\alpha_0^2}{(\alpha_0+2)(\alpha_0+1)}
M_1 \otimes M_1 \otimes M_1.
\end{align*}
Then,
\begin{align*}
M_2 & = \sum_{j \in [k]} \frac{\alpha_j}{(\alpha_0+1)\alpha_0} \ \mu_j \otimes \mu_j, \\
M_3 & =
\sum_{j \in [k]} \frac{2\alpha_j}{(\alpha_0+2)(\alpha_0+1)\alpha_0} \ \mu_j
\otimes \mu_j \otimes \mu_j.
\end{align*}
\end{theorem}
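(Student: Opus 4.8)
The plan is to compute the relevant raw and mixed moments of the word-indicator vectors under the LDA generative process and then verify algebraically that the stated corrections to the raw moments cancel all the ``Dirichlet cross-terms,'' leaving only the desired diagonal (rank-$k$ symmetric) structure. The key fact I would use repeatedly is that, conditioned on the topic-mixture vector $h \sim \Dir(\alpha)$, each word $x_t$ is drawn independently from the discrete distribution $\sum_{j\in[k]} h_j \mu_j$, so that $\E[x_t \mid h] = \sum_{j} h_j \mu_j =: \bar\mu(h)$ and, for distinct indices $s \neq t$, $\E[x_s \otimes x_t \mid h] = \bar\mu(h) \otimes \bar\mu(h)$, and similarly for triples of distinct indices. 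Hence all the observed cross-moments reduce to moments of the random vector $\bar\mu(h)$, i.e.\ to the quantities $\E[h]$, $\E[h\otimes h]$, $\E[h\otimes h\otimes h]$ pushed through the linear map $h \mapsto Ah$ where $A = [\mu_1 | \dots | \mu_k]$.

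The second ingredient is the explicit form of the low-order Dirichlet moments. First I would record that if $h\sim\Dir(\alpha)$ with $\alpha_0 = \sum_j \alpha_j$, then $\E[h_j] = \alpha_j/\alpha_0$; $\E[h_i h_j] = \frac{\alpha_i\alpha_j}{\alpha_0(\alpha_0+1)}$ for $i\neq j$ and $\E[h_j^2] = \frac{\alpha_j(\alpha_j+1)}{\alpha_0(\alpha_0+1)}$; and the analogous third-moment formulas (products $\alpha_i\alpha_j\alpha_l$ over $\alpha_0(\alpha_0+1)(\alpha_0+2)$ off the diagonal, with extra $+1,+2$ shifts whenever indices coincide). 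The point of these formulas is that each Dirichlet moment equals a ``product term'' (which is what one would get from a product distribution, i.e.\ $\E[h]^{\otimes r}$ up to scaling) plus a correction supported on coincident indices. When one pushes these through $A$, the product terms become multiples of $M_1^{\otimes r}$ (since $M_1 = \E[x_1] = A\,\E[h] = \sum_j \frac{\alpha_j}{\alpha_0}\mu_j$), and the coincidence corrections become exactly $\sum_j (\text{coeff}_j)\, \mu_j^{\otimes r}$. So the raw moments already have the form (desired rank-$k$ term) $+$ (multiples of tensors built from $M_1$).

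Concretely: for $M_2$, I would show $\E[x_1\otimes x_2] = \sum_{i,j}\E[h_ih_j]\,\mu_i\otimes\mu_j = \frac{\alpha_0}{\alpha_0+1}\big(\sum_{i,j}\frac{\alpha_i\alpha_j}{\alpha_0^2}\mu_i\otimes\mu_j\big) + \sum_j \frac{\alpha_j}{\alpha_0(\alpha_0+1)}\mu_j\otimes\mu_j = \frac{\alpha_0}{\alpha_0+1}M_1\otimes M_1 + \sum_j\frac{\alpha_j}{\alpha_0(\alpha_0+1)}\mu_j^{\otimes 2}$, which is exactly the claim after moving the $M_1\otimes M_1$ term to the left-hand side of the definition of $M_2$. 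For $M_3$ the bookkeeping is heavier: I would expand $\E[x_1\otimes x_2\otimes x_3] = \sum_{i,j,l}\E[h_ih_jh_l]\,\mu_i\otimes\mu_j\otimes\mu_l$, split $\E[h_ih_jh_l]$ into the all-distinct product term, the three ``two-coincident'' terms, and the all-coincident term, and check that the subtracted quantity $\frac{\alpha_0}{\alpha_0+2}\big(\E[x_1\otimes x_2\otimes M_1] + \text{(2 cyclic shifts)}\big)$ exactly removes the two-coincident contributions modulo a residual proportional to $M_1^{\otimes 3}$, which is in turn killed by the final $+\frac{2\alpha_0^2}{(\alpha_0+2)(\alpha_0+1)}M_1^{\otimes 3}$ term; what survives is $\sum_j \frac{2\alpha_j}{(\alpha_0+2)(\alpha_0+1)\alpha_0}\mu_j^{\otimes 3}$. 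I would also need the intermediate identity $\E[x_1\otimes x_2\otimes M_1] = A\,\E[h\otimes h]\,(\cdot) \otimes M_1$, i.e.\ that $\E[x_1\otimes x_2\otimes M_1]$ uses the already-computed second Dirichlet moment in the first two slots and $M_1$ in the third.

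The main obstacle is purely the third-order computation: correctly enumerating the partition-of-indices cases for $\E[h_ih_jh_l]$, tracking the symmetrization over the three slots, and confirming that the coefficient of the $M_1^{\otimes 3}$ residual after the first correction is precisely $-\frac{2\alpha_0^2}{(\alpha_0+2)(\alpha_0+1)}$ so that it cancels. There is no conceptual difficulty — everything follows from linearity of expectation, conditional independence of words given $h$, and the closed-form Dirichlet moments — but the combinatorial coefficient-matching is where an error would most likely creep in, so I would organize it by first proving a clean lemma expressing $\E[h^{\otimes 2}]$ and $\E[h^{\otimes 3}]$ as ``product term plus diagonal corrections,'' then applying the linear map $A(\cdot)$ slot-wise and substituting $M_1 = A\E[h]$.
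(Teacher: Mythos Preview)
Your proposal is correct and is exactly the standard argument: condition on $h$ to reduce the word cross-moments to the multilinear images of $\E[h^{\otimes r}]$ under $A=[\mu_1|\cdots|\mu_k]$, plug in the closed-form Dirichlet moments, and check that the correction terms built from $M_1$ remove all off-diagonal contributions. The paper does not give its own proof here but simply cites \citealp{SpectralLDA}; the argument in that reference proceeds precisely along the lines you describe, so there is nothing to add beyond carrying out the third-order coefficient bookkeeping carefully (your identification of that as the only error-prone step is accurate).
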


Note that $\alpha_0$ needs to be known to form $M_2$ and $M_3$ from
the raw moments.
This, however, is a much weaker assumption than assuming that the entire distribution
of $h$ is known, \emph{i.e.}, knowledge of the whole parameter vector
$\alpha$.

\section{Multi-view Models}
\label{sec:multi}

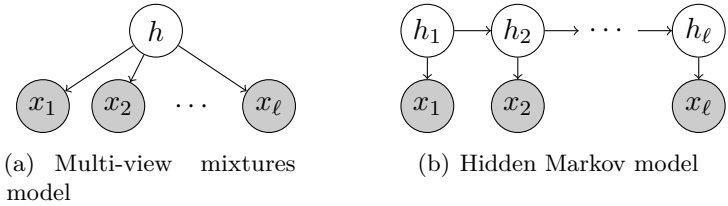
\begin{figure}
\begin{center}
\subfigure[Multi-view mixtures model]{\label{fig:exchangeable}
\begin{tikzpicture}
  [
    scale=1.0,
    observed/.style={circle,minimum size=0.7cm,inner sep=0mm,draw=black,fill=black!20},
    hidden/.style={circle,minimum size=0.7cm,inner sep=0mm,draw=black},
  ]
  \node [hidden,name=h] at ($(0,0)$) {$h$};
  \node [observed,name=x1] at ($(-1.5,-1)$) {$x_1$};
  \node [observed,name=x2] at ($(-0.5,-1)$) {$x_2$};
  \node at ($(0.5,-1)$) {$\dotsb$};
  \node [observed,name=xl] at ($(1.5,-1)$) {$x_\ell$};
  \draw [->] (h) to (x1);
  \draw [->] (h) to (x2);
  \draw [->] (h) to (xl);
\end{tikzpicture}}
\hfil\subfigure[Hidden Markov model]{\label{fig:hmm}
\begin{tikzpicture}
  [
    scale=1.0,
    observed/.style={circle,minimum size=0.7cm,inner sep=0mm,draw=black,fill=black!20},
    hidden/.style={circle,minimum size=0.7cm,inner sep=0mm,draw=black},
  ]
  \node [hidden,name=h1] at ($(-1.2,0)$) {$h_1$};
  \node [hidden,name=h2] at ($(0,0)$) {$h_2$};
  \node [name=hd] at ($(1.2,0)$) {$\dotsb$};
  \node [hidden,name=hl] at ($(2.4,0)$) {$h_\ell$};
  \node [observed,name=x1] at ($(-1.2,-1)$) {$x_1$};
  \node [observed,name=x2] at ($(0,-1)$) {$x_2$};
  \node [observed,name=xl] at ($(2.4,-1)$) {$x_\ell$};
  \draw [->] (h1) to (h2);
  \draw [->] (h2) to (hd);
  \draw [->] (hd) to (hl);
  \draw [->] (h1) to (x1);
  \draw [->] (h2) to (x2);
  \draw [->] (hl) to (xl);
\end{tikzpicture}}
\end{center}
\caption{Examples of latent variable models
}
\label{fig:graphical-model}
\vspace{-1mm}
\end{figure}

Multi-view models (also sometimes called na\"ive Bayes models) are a
special class of Bayesian networks in which observed variables $x_1, x_2,
\ldots, x_\ell$ are conditionally independent given a latent variable $h$.
This is similar to the exchangeable single topic model, but here we do not
require the conditional distributions of the $x_t, t \in [\ell]$, to be
identical.
Techniques developed for this class can be used to handle a number of
widely used models including hidden Markov models (HMMs) \citep{MR06,AHK12},
phylogenetic tree models~\citep{Chang96,MR06}, certain tree mixtures~\citep{AnandkumarHsuKakade:graphmixturesNIPS12}, and certain probabilistic
grammar models~\citep{unmixing}.

As before, we let $h \in [k]$ be a discrete random variable with $\Pr[h =
j] = w_j$ for all $j \in [k]$.
Now consider random vectors $x_1 \in \R^{d_1}$, $x_2 \in \R^{d_2}$, and
$x_3 \in \R^{d_3}$ which are conditionally independent given $h$ (see Figure~\ref{fig:exchangeable} for the corresponding graphical  model), and
\begin{align*}
\E[x_t | h = j] & = \mu_{t,j}
, \quad j \in [k], \ t \in \{1,2,3\},
\end{align*}
where $\mu_{t,j} \in \R^{d_t}$ are the conditional means of $x_t$
given $h = j$.
Thus, we allow the observations $x_1, x_2, \dotsc, x_\ell$ to be random vectors,
parameterized only by their conditional means. Importantly, these conditional
distributions may be discrete, continuous, or even a mix of both.

We first note the form for the raw (cross) moments.
\begin{prop} We have
\begin{align*}
\E[ x_t \otimes x_{t'} ]
& = \sum_{j \in [k]} w_j \ \mu_{t,j} \otimes \mu_{t',j} ,
\quad \{t,t'\} \subset \{1,2,3\} , t \neq t', \\
\E[ x_1 \otimes x_2 \otimes x_3]
& = \sum_{j \in [k]} w_j \ \mu_{1,j} \otimes \mu_{2,j} \otimes \mu_{3,j} .
\end{align*}
\end{prop}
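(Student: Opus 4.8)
The plan is to derive both identities directly from the law of total expectation together with the conditional independence structure of the multi-view model, exactly as in the pure topic model computation recalled just before Theorem~\ref{thm:single-topic}. First I would condition on the latent variable $h$: since $h$ is discrete with $\Pr[h=j]=w_j$, the tower property gives
\[
\E[x_t \otimes x_{t'}] = \sum_{j \in [k]} \Pr[h=j] \ \E[x_t \otimes x_{t'} \mid h=j] = \sum_{j \in [k]} w_j \ \E[x_t \otimes x_{t'} \mid h=j],
\]
and likewise for the third-order moment. So the whole statement reduces to evaluating the conditional moments $\E[x_t \otimes x_{t'} \mid h=j]$ and $\E[x_1 \otimes x_2 \otimes x_3 \mid h=j]$.

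The second step is to invoke conditional independence: given $h=j$, the random vectors $x_1, x_2, x_3$ are independent, so for $t \neq t'$ the outer product factorizes in expectation, entrywise $\E[(x_t)_a (x_{t'})_b \mid h=j] = \E[(x_t)_a \mid h=j]\,\E[(x_{t'})_b \mid h=j]$, i.e.
\[
\E[x_t \otimes x_{t'} \mid h=j] = \E[x_t \mid h=j] \otimes \E[x_{t'} \mid h=j] = \mu_{t,j} \otimes \mu_{t',j},
\]
using the defining relation $\E[x_t \mid h=j] = \mu_{t,j}$. The same argument with three mutually (conditionally) independent factors yields $\E[x_1 \otimes x_2 \otimes x_3 \mid h=j] = \mu_{1,j} \otimes \mu_{2,j} \otimes \mu_{3,j}$. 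Substituting these into the total-expectation sums from the first step gives the two claimed formulas.

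There is essentially no hard part here; the only point that deserves a sentence of care is the justification that conditional independence of $x_t$ and $x_{t'}$ implies $\E[x_t \otimes x_{t'} \mid h] = \E[x_t \mid h] \otimes \E[x_{t'} \mid h]$ (and the analogous three-way statement), which is just the elementary fact that expectations of products of independent scalar random variables factor, applied coordinatewise to the outer-product tensor, together with a mild integrability assumption so the conditional means exist. If one wants to be fully rigorous one would also remark that the argument does not require the conditional distributions of the $x_t$ to be identical or even of the same type (discrete or continuous), only that their first moments are the given $\mu_{t,j}$ — this is the sole difference from the exchangeable single-topic computation.
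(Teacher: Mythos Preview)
Your proposal is correct and follows exactly the approach the paper takes: the proposition is stated without a separate proof because it is the immediate generalization of the computation shown for the single-topic model (law of total expectation to condition on $h$, then conditional independence to factor $\E[x_t\otimes x_{t'}\mid h=j]$ into $\mu_{t,j}\otimes\mu_{t',j}$). The only novelty relative to that earlier calculation is that the conditional means $\mu_{t,j}$ now depend on the view $t$, which you have correctly noted.
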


The cross moments do not possess a symmetric tensor form when the
conditional distributions are different. We can either apply asymmetric tensor decomposition techniques to estimate conditional mean vectors $\mu_{t,j}$, or symmetrize the tensors by the following trick and then apply symmetric tensor decomposition techniques.
Nevertheless, the moments can be ``symmetrized'' via a simple linear
transformation of $x_1$ and $x_2$ (roughly speaking, this relates $x_1$ and
$x_2$ to $x_3$); this leads to an expression from which the conditional
means of $x_3$ (\emph{i.e.}, $\mu_{3,1}, \mu_{3,2}, \dotsc, \mu_{3,k}$) can
be recovered.
For simplicity, we assume $d_1 = d_2 = d_3 = k$; the general case (with
$d_t \geq k$) is easily handled using low-rank singular value
decompositions.

\begin{theorem}[\citealp{SpectralLDA}] \label{thm:multiview}
Assume that $\{ \mu_{t,1}, \mu_{t,2}, \dotsc, \mu_{t,k} \}$ are
linearly independent for each $t \in \{1,2,3\}$.
Define
\begin{align*}
\tl x_1 & :=  \E[x_3 \otimes x_2] \E[x_1 \otimes x_2]^{-1}  x_1, \\
\tl x_2 & :=  \E[x_3 \otimes x_1] \E[x_2 \otimes x_1]^{-1}  x_2, \\
\end{align*}
and
\begin{align*}
M_2 & := \E[\tl x_1 \otimes \tl x_2], \\
M_3 & := \E[\tl x_1 \otimes \tl x_2 \otimes x_3] .
\end{align*}
Then,
\begin{align*}
M_2 & = \sum_{j \in [k]} w_j \ \mu_{3,j} \otimes \mu_{3,j}, \\
M_3 & = \sum_{j \in [k]} w_j \ \mu_{3,j} \otimes \mu_{3,j} \otimes \mu_{3,j} .
\end{align*}
\end{theorem}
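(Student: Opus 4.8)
The plan is to reduce the asymmetric third-order moment $\E[x_1\otimes x_2\otimes x_3]$ to a symmetric one by showing that the linear transformations defining $\tl x_1$ and $\tl x_2$ effectively replace the conditional means $\mu_{1,j}$ and $\mu_{2,j}$ by $\mu_{3,j}$, while leaving the mixing weights $w_j$ untouched. The key algebraic fact to establish first is that, under the linear-independence hypothesis, the matrices $\E[x_t\otimes x_{t'}] = \sum_{j\in[k]} w_j\,\mu_{t,j}\otimes\mu_{t',j}$ are invertible: writing $D:=\diag(w_1,\dotsc,w_k)$ and $A_t := [\mu_{t,1}|\dotsb|\mu_{t,k}]\in\R^{k\times k}$, we have $\E[x_t\otimes x_{t'}] = A_t D A_{t'}^\top$, a product of three invertible $k\times k$ matrices (using $d_1=d_2=d_3=k$ and $w_j>0$). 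This is where the assumption that $\{\mu_{t,j}\}_j$ is linearly independent for each $t$ is used, and it guarantees that $\E[x_1\otimes x_2]^{-1}$ and $\E[x_2\otimes x_1]^{-1}$ in the definitions of $\tl x_1,\tl x_2$ are well-defined.

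Next I would compute the conditional expectations of the transformed variables. Since the transformations are linear and deterministic given the data (the matrices are population quantities, not random), conditioning on $h=j$ and using $\E[x_t\mid h=j]=\mu_{t,j}$ gives
\begin{align*}
\E[\tl x_1\mid h=j] &= \E[x_3\otimes x_2]\,\E[x_1\otimes x_2]^{-1}\,\mu_{1,j}
= (A_3 D A_2^\top)(A_1 D A_2^\top)^{-1}\mu_{1,j},\\
\E[\tl x_2\mid h=j] &= (A_3 D A_1^\top)(A_2 D A_1^\top)^{-1}\mu_{2,j}.
\end{align*}
Now observe $\mu_{1,j} = A_1 e_j$, so $(A_1 D A_2^\top)^{-1}\mu_{1,j} = (A_2^\top)^{-1}D^{-1}e_j = (A_2^\top)^{-1}w_j^{-1}e_j$, and therefore $\E[\tl x_1\mid h=j] = A_3 D A_2^\top (A_2^\top)^{-1} w_j^{-1} e_j = A_3 D\, w_j^{-1} e_j = A_3 e_j = \mu_{3,j}$. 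The same cancellation (with the roles of $A_1,A_2$ swapped) yields $\E[\tl x_2\mid h=j]=\mu_{3,j}$. This is the crux of the argument: the apparently asymmetric "correction" matrices are precisely engineered so that all the $A_1,A_2$ factors and the $w_j$ factors cancel, leaving exactly $\mu_{3,j}$.

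Finally, I would assemble the moments via the law of total expectation together with conditional independence of $\tl x_1,\tl x_2,x_3$ given $h$ (they are deterministic linear functions of $x_1,x_2,x_3$ respectively, hence inherit conditional independence). For the second moment, $\E[\tl x_1\otimes\tl x_2] = \sum_j w_j\,\E[\tl x_1\mid h=j]\otimes\E[\tl x_2\mid h=j] = \sum_j w_j\,\mu_{3,j}\otimes\mu_{3,j}$, and for the third moment likewise $\E[\tl x_1\otimes\tl x_2\otimes x_3] = \sum_j w_j\,\mu_{3,j}\otimes\mu_{3,j}\otimes\mu_{3,j}$, using $\E[x_3\mid h=j]=\mu_{3,j}$ for the last factor. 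The main obstacle is purely bookkeeping: being careful that the transformation matrices are population (non-random) objects so that conditioning commutes with the linear maps, and correctly tracking which $A_t$ appears where so that the cancellation goes through; there is no analytic difficulty once the matrix factorization $\E[x_t\otimes x_{t'}]=A_t D A_{t'}^\top$ is in hand. One should also remark (as the theorem statement implicitly does) that the general case $d_t\ge k$ follows by first projecting each mode onto its $k$-dimensional range via a low-rank SVD, reducing to the square invertible case treated above.
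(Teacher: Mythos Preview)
Your proposal is correct and follows the standard argument: factor each cross-moment as $\E[x_t\otimes x_{t'}]=A_t D A_{t'}^\top$, observe that the transformation matrix $\E[x_3\otimes x_2]\E[x_1\otimes x_2]^{-1}$ simplifies to $A_3 A_1^{-1}$, so that $\E[\tl x_1\mid h=j]=A_3 A_1^{-1}\mu_{1,j}=\mu_{3,j}$ (and similarly for $\tl x_2$), and then invoke conditional independence together with the law of total expectation. The paper itself does not supply a proof of this theorem; it simply states the result with attribution to \citet{SpectralLDA}, so there is nothing further to compare against.
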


We now discuss three examples mostly taken from \citet{AHK12} where the
above observations can be applied.
The first two concern mixtures of product distributions, and
the last one is the time-homogeneous hidden Markov model.

\subsection{Mixtures of axis-aligned Gaussians and other product
distributions}

The first example is a mixture of $k$ product distributions in $\R^d$ under
a mild incoherence assumption~\citep{AHK12}.
Here, we allow each of the $k$ component distributions to have a different
product distribution (\emph{e.g.}, Gaussian distribution with an axis-aligned
covariance matrix), but require the matrix of component means $A := [ \mu_1
| \mu_2 | \dotsb | \mu_k ] \in \R^{d \times k}$ to satisfy a certain (very
mild) incoherence condition.
The role of the incoherence condition is explained below.

For a mixture of product distributions, any partitioning of the dimensions
$[d]$ into three groups creates three (possibly asymmetric) ``views'' which
are conditionally independent once the mixture component is selected.
However, recall that Theorem~\ref{thm:multiview} requires that for each
view, the $k$ conditional means be linearly independent.
In general, this may not be achievable; consider, for instance, the case
$\mu_i = e_i$ for each $i \in [k]$.
Such cases, where the component means are very aligned with the coordinate
basis, are precluded by the incoherence condition.

Let $\Pi_A$ denote the orthogonal projector operator to the range of $A$ and  define $\coherence(A) := \max_{i \in [d]} \{ e_i^\t \Pi_A e_i \}$ to be the
largest diagonal entry of this operator, and
assume $A$ has rank $k$.
The coherence lies between $k/d$ and $1$; it is largest when the range of
$A$ is spanned by the coordinate axes, and it is $k/d$ when the range is
spanned by a subset of the Hadamard basis of cardinality $k$.
The incoherence condition requires that for some $\veps, \delta \in (0,1)$,
$\coherence(A) \leq (\veps^2/6)/\ln(3k/\delta)$.
Essentially, this condition ensures that the non-degeneracy of the
component means is not isolated in just a few of the $d$ dimensions.
Operationally, it implies the following.
\begin{prop}[\citealp{AHK12}] \label{prop:product}
Assume $A$ has rank $k$, and
\[ \coherence(A) \leq \frac{\veps^2/6}
{\ln(3k/\delta)} \]
for some $\veps,\delta \in (0,1)$.
With probability at least $1-\delta$, a random partitioning of the
dimensions $[d]$ into three groups (for each $i \in [d]$, independently
pick $t \in \{1,2,3\}$ uniformly at random and put $i$ in group $t$)
has the following property.
For each $t \in \{1,2,3\}$ and $j \in [k]$, let $\mu_{t,j}$ be the entries
of $\mu_j$ put into group $t$, and let $A_t := [ \mu_{t,1} | \mu_{t,2} |
\dotsb | \mu_{t,k} ]$.
Then for each $t \in \{1,2,3\}$, $A_t$ has full column rank, and the $k$-th
largest singular value of $A_t$ is at least $\sqrt{(1-\veps)/3}$ times that
of $A$.
\end{prop}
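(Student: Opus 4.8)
The plan is to reduce the statement to a fact about an orthonormal basis of the column span of $A$ and then invoke a matrix Chernoff bound. First I would take the thin SVD $A = U\Sigma V^\top$ with $U \in \R^{d\times k}$ having orthonormal columns, $\Sigma = \diag(\sigma_1,\dotsc,\sigma_k)$ with $\sigma_1\ge\dotsb\ge\sigma_k = \sigma_{\min}(A) > 0$ (here we use that $A$ has rank $k$), and $V \in \R^{k\times k}$ orthogonal. Then $\Pi_A = UU^\top$, so if $u^{(i)} \in \R^k$ denotes the $i$-th row of $U$ we have $\coherence(A) = \max_{i\in[d]} \|u^{(i)}\|^2 =: \mu_0$. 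For a group $t$, letting $U_t$ (resp. $A_t$) be the submatrix of $U$ (resp. $A$) consisting of the rows indexed by the random set $S_t\subseteq[d]$, we get $A_t = U_t\Sigma V^\top$ and hence $A_t^\top A_t = V\Sigma\,(U_t^\top U_t)\,\Sigma V^\top$. Consequently, for every unit vector $x$, $x^\top A_t^\top A_t x \ge \lambdamin(U_t^\top U_t)\,\|\Sigma V^\top x\|^2 \ge \lambdamin(U_t^\top U_t)\,\sigma_k(A)^2$, so it suffices to prove that with probability at least $1-\delta$ we have $\lambdamin(U_t^\top U_t) \ge (1-\veps)/3$ simultaneously for $t\in\{1,2,3\}$; this bound also forces each $A_t$ to have full column rank, and the claimed factor $\sqrt{(1-\veps)/3}$ is just the square root of this estimate.

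The main step is then to bound $\lambdamin(U_t^\top U_t)$ from below. Fix $t$ and write $U_t^\top U_t = \sum_{i\in[d]} Y_i$ with $Y_i := \mathbf{1}\{i\in S_t\}\,u^{(i)}(u^{(i)})^\top$. The $Y_i$ are independent (this uses that dimension $i$ is assigned to a group independently across $i$), positive semidefinite, and satisfy $0 \preceq Y_i \preceq \|u^{(i)}\|^2 I_k \preceq \mu_0 I_k$, while $\E[Y_i] = \tfrac13 u^{(i)}(u^{(i)})^\top$ gives $\E[U_t^\top U_t] = \tfrac13 U^\top U = \tfrac13 I_k$, so the relevant mean parameter is $\lambdamin(\E[U_t^\top U_t]) = 1/3$. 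Applying the matrix Chernoff lower-tail inequality for sums of independent bounded PSD matrices (with bounded-norm parameter $R=\mu_0$ and mean parameter $1/3$) yields
\[
\Pr\!\left[\lambdamin(U_t^\top U_t) \le \tfrac{1-\veps}{3}\right] \le k\left(\frac{e^{-\veps}}{(1-\veps)^{1-\veps}}\right)^{1/(3\mu_0)} \le k\, e^{-\veps^2/(6\mu_0)}.
\]
Now the incoherence hypothesis $\mu_0 = \coherence(A) \le (\veps^2/6)/\ln(3k/\delta)$ is exactly what makes the right-hand side at most $k\,e^{-\ln(3k/\delta)} = \delta/3$, and a union bound over $t\in\{1,2,3\}$ completes the argument.

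The routine portions are the SVD bookkeeping in the first paragraph and the elementary scalar inequality $e^{-\veps}/(1-\veps)^{1-\veps} \le e^{-\veps^2/2}$ for $\veps\in[0,1)$. The only substantive ingredient is the matrix Chernoff tail bound, and the one thing that needs genuine care is tracking the constants so that the bounded-norm parameter $R = \coherence(A)$ and the mean parameter $1/3$ combine to reproduce precisely the stated threshold $(\veps^2/6)/\ln(3k/\delta)$; a secondary point to verify is the independence structure — the per-dimension assignments are independent (which is given), so the $Y_i$ are genuinely independent, whereas the groups $S_1,S_2,S_3$ are not mutually independent, but that is harmless since we only ever union-bound the three per-group events.
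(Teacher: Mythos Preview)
Your proposal is correct, and the argument via matrix Chernoff applied to the row-vectors of the orthonormal factor $U$ is exactly the standard route; the paper itself does not give a proof but simply cites \citet{AHK12}, whose proof is essentially the one you outline. The constants line up as you state: with $R=\coherence(A)$ and $\mu_{\min}=1/3$, the Chernoff tail $k\,e^{-\veps^2/(6\mu_0)}$ becomes $\delta/3$ precisely under the stated incoherence bound, and the union bound over three groups finishes the argument.
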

Therefore, three asymmetric views can be created by randomly partitioning
the observed random vector $x$ into $x_1$, $x_2$, and $x_3$, such that the
resulting component means for each view satisfy the conditions of
Theorem~\ref{thm:multiview}.

\subsection{Spherical Gaussian mixtures, revisited}

Consider again the case of spherical Gaussian mixtures described in~Section~\ref{sec:spherical-Gaussian}. The previous analysis in Theorems~\ref{thm:spherical_same}~and~\ref{thm:spherical} can be used when the observation dimension $d \geq k$, and
 the $k$ component means are linearly independent. We now show
that when the dimension is slightly larger, say greater than $3k$, a
different (and simpler) technique based on the multi-view structure
can be used to extract the relevant structure.

We again use a randomized reduction.
Specifically, we create three views by (i) applying a random rotation to
$x$, and then (ii) partitioning $x \in \R^d$ into three views $\tl{x}_1,
\tl{x}_2, \tl{x}_3 \in \R^{\tl{d}}$ for $\tl{d} := d/3$.
By the rotational invariance of the multivariate Gaussian distribution, the
distribution of $x$ after random rotation is still a mixture of spherical
Gaussians (\emph{i.e.}, a mixture of product distributions), and thus
$\tl{x}_1, \tl{x}_2, \tl{x}_3$ are conditionally independent given $h$.
What remains to be checked is that, for each view $t \in \{1,2,3\}$, the
matrix of conditional means of $\tl{x}_t$ for each view has full column
rank.
This is true with probability $1$ as long as the matrix of conditional
means $A := [ \mu_1 | \mu_2 | \dotsb | \mu_k ] \in \R^{d \times k}$  has
rank $k$ and $d \geq 3k$.
To see this, observe that a random rotation in $\R^d$ followed by a
restriction to $\tl{d}$ coordinates is simply a random projection from $\R^d$ to
$\R^{\tl{d}}$, and that a random projection of a linear subspace of dimension $k$
to $\R^{\tl{d}}$ is almost surely injective as long as $\tl{d} \geq k$.
Applying this observation to the range of $A$ implies the following.
\begin{prop}[\citealp{HK13-mog}]
Assume $A$ has rank $k$ and that $d \geq 3k$.
Let $R \in \R^{d \times d}$ be chosen uniformly at random among all
orthogonal $d \times d$ matrices, and set $\tl{x} := Rx \in \R^d$ and
$\tl{A} := RA = [ R\mu_1 | R\mu_2 | \dotsb | R\mu_k ] \in \R^{d \times k}$.
Partition $[d]$ into three groups of sizes $d_1, d_2, d_3$ with $d_t \geq
k$ for each $t \in \{1,2,3\}$.
Furthermore, for each $t$, define $\tl{x}_t \in \R^{d_t}$ (respectively,
$\tl{A}_t \in \R^{d_t \times k}$) to be the subvector of $\tl{x}$ (resp.,
submatrix of $\tl{A}$) obtained by selecting the $d_t$ entries (resp.,
rows) in the $t$-th group.
Then $\tl{x}_1, \tl{x}_2, \tl{x}_3$ are conditionally independent given
$h$; $\E[\tl{x}_t | h = j] = \tl{A}_t e_j$ for each $j \in [k]$ and $t \in
\{1,2,3\}$; and with probability $1$, the matrices $\tl{A}_1, \tl{A}_2,
\tl{A}_3$ have full column rank.
\end{prop}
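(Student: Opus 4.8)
The plan is to separate the claim into the two structural assertions (conditional independence of $\tl{x}_1,\tl{x}_2,\tl{x}_3$ given $h$, and the identity $\E[\tl{x}_t\mid h=j]=\tl{A}_t e_j$), which follow at once from rotational invariance of the isotropic Gaussian, and the one assertion needing a genericity argument (that each $\tl{A}_t$ has full column rank for a.e.\ rotation $R$).

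First I would fix the hidden state $h=j$ and the orthogonal matrix $R$, and write $\tl{x}=Rx=R\mu_j+Rz$ with $z\sim\N(0,\sigma_j^2 I_d)$ conditionally on $h=j$. Since $\sigma_j^2 I_d$ is a scalar multiple of the identity, $Rz\sim\N(0,\sigma_j^2 I_d)$ for every orthogonal $R$, so conditionally on $h=j$ we have $\tl{x}\sim\N(R\mu_j,\sigma_j^2 I_d)=\N(\tl{A}e_j,\sigma_j^2 I_d)$. A Gaussian vector with scalar (hence diagonal) covariance has independent coordinates, so any partition of $[d]$ into the three groups yields conditionally independent subvectors $\tl{x}_1,\tl{x}_2,\tl{x}_3$; and the conditional mean of $\tl{x}_t$ is the restriction of $R\mu_j=\tl{A}e_j$ to the rows in group $t$, which is exactly $\tl{A}_t e_j$. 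This settles the first two assertions.

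For the rank claim, the key observation is that $\tl{A}_t = S_t R A$, where $S_t\in\R^{d_t\times d}$ is the coordinate-selection matrix picking the rows in group $t$. Since $A$ has rank $k$ and exactly $k$ columns, its columns form a basis of $V:=\range(A)$, and $R$ invertible gives that the columns of $RA$ form a basis of the $k$-dimensional subspace $RV$. Hence $\tl{A}_t$ has full column rank iff $S_t$ is injective on $RV$, i.e.\ iff $RV\cap\ker(S_t)=\{0\}$, and $\dim\ker(S_t)=d-d_t\le d-k$. I would then argue this holds for Haar-almost-every $R$ by the standard genericity principle: the map $R\mapsto\sum(\text{squared }k\times k\text{ minors of }S_t RA)$ is a polynomial in the entries of $R$, hence real-analytic on the real-algebraic group $O(d)$, so on each (irreducible) component it either vanishes identically or has a Haar-null zero set. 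To rule out identical vanishing, exhibit one witness: because $d_t\ge k$, pick $k$ coordinates $i_1,\dots,i_k$ in group $t$ and choose $R_0\in O(d)$ mapping $V$ onto $\mathrm{span}\{e_{i_1},\dots,e_{i_k}\}$; then every column of $R_0A$ is supported on those $k$ coordinates while $R_0A$ still has rank $k$, so the $k\times k$ submatrix of $R_0A$ on rows $i_1,\dots,i_k$ is invertible, and that submatrix sits inside $S_tR_0A=\tl{A}_t$. (Equivalently, one may cite the classical fact that a Haar-random $k$-plane $RV$ in $\R^d$ meets a fixed subspace of dimension $\le d-k$ only at the origin, almost surely.) Finally I would take the union over $t\in\{1,2,3\}$ of the three Haar-null exceptional sets to conclude that with probability $1$ all of $\tl{A}_1,\tl{A}_2,\tl{A}_3$ have full column rank.

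The only real subtlety — the "hard part" such as it is — is this probability-$1$ step: one must phrase the genericity event correctly as trivial intersection of $RV$ with $\ker(S_t)$, be careful that the exceptional set is genuinely Haar-null (which requires the analyticity / irreducible-component structure of $O(d)$, not a bare measure-zero assertion), and note that the hypotheses $d\ge 3k$ together with a balanced partition $d_t\ge k$ are used precisely to make the witness $R_0$ (hence the generic behavior) possible. Everything else is immediate from rotational invariance of $\N(0,\sigma^2 I_d)$.
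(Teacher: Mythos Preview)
Your proposal is correct and follows essentially the same approach as the paper. The paper's justification (given in the text preceding the proposition) invokes rotational invariance of the spherical Gaussian for conditional independence, and for the rank claim simply observes that a random rotation followed by coordinate restriction is a random projection, and a random projection of a $k$-dimensional subspace into $\R^{d_t}$ with $d_t\ge k$ is almost surely injective; your parenthetical ``classical fact'' is exactly this statement, and your polynomial/analyticity argument with an explicit witness $R_0$ just supplies a more careful justification of that same fact.
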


It is possible to obtain a quantitative bound on the $k$-th largest
singular value of each $A_t$ in terms of the $k$-th largest singular value
of $A$ (analogous to Proposition~\ref{prop:product}).
One avenue is to show that a random rotation in fact causes $\tl{A}$ to
have low coherence, after which we can apply Proposition~\ref{prop:product}.
With this approach, it is sufficient to require $n = O(k \log k)$ (for
constant $\veps$ and $\delta$), which results in the $k$-th largest
singular value of each $A_t$ being a constant fraction of the $k$-th
largest singular value of $A$.
We conjecture that, in fact, $n \geq c \cdot k$ for some $c > 3$ suffices.

\subsection{Hidden Markov models}

Our next example is the time-homogeneous Hidden Markov models (HMM)\citep{baum1966statistical} for sequences of vector-valued
observations $x_1, x_2, \dotsc \in \R^d$.
Consider a Markov chain of discrete hidden states $y_1 \to y_2 \to y_3 \to
\dotsb$ over $k$ possible states $[k]$; given a state $y_t$ at time
$t$, the random observation $x_t \in \R^d$ at time $t$ is independent of all other observations and hidden states.
See Figure~\ref{fig:hmm}.

Let $\pi \in \Delta^{k-1}$ be the initial state distribution (\emph{i.e.},
the distribution of $y_1$), and $T \in \R^{k \times k}$ be the stochastic
transition matrix for the hidden state Markov chain such that for all times $t$,
\[ \Pr[y_{t+1} = i | y_t = j] = T_{i,j} , \quad i,j \in [k] . \]
Finally, let $O \in \R^{d \times k}$ be the matrix whose $j$-th column is
the conditional expectation of $x_t$ given $y_t = j$: for all times $t$,
\[ \E[ x_t | y_t = j] = O e_j , \quad j \in [k] . \]

\begin{prop}[\citealp{AHK12}]
Define $h := y_2$, where $y_2$ is the second hidden state in the Markov
chain.
Then
\begin{itemize}
\item $x_1, x_2, x_3$ are conditionally independent given $h$;
\item the distribution of $h$ is given by the vector $w := T\pi \in
\Delta^{k-1}$;
\item for all $j \in [k]$,
\begin{align*}
\E[x_1 | h = j] & = O \diag(\pi) T^\t \diag(w)^{-1} e_j \\
\E[x_2 | h = j] & = O e_j \\
\E[x_3 | h = j] & = OT e_j .
\end{align*}
\end{itemize}
\end{prop}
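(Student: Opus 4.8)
The plan is to verify the three conditional-expectation formulas and the two structural claims (conditional independence of $x_1, x_2, x_3$ given $h := y_2$, and the distribution of $h$) directly from the definition of the time-homogeneous HMM, by unwinding the Markov structure one link at a time.

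First I would establish the conditional independence. Given the value of $y_2$, the hidden chain splits: $y_1$ is conditionally independent of $(y_3, y_4, \dotsc)$ given $y_2$ (this is the Markov property of the chain, read backwards and forwards from $y_2$). Since each observation $x_t$ depends only on $y_t$, it follows that $x_1$ (a function of $y_1$ plus independent noise), $x_2$ (a function of $y_2$), and $x_3$ (a function of $y_3$, which depends on $y_2$ only through $y_3$) are mutually conditionally independent given $y_2 = h$. I would write this out as a short factorization of the joint law $\Pr[y_1, y_3 \mid y_2]$ into $\Pr[y_1 \mid y_2]\Pr[y_3 \mid y_2]$ and then push observations through.

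Next, the distribution of $h = y_2$: by definition $y_2 \mid y_1 = j$ has distribution $T e_j$ and $y_1 \sim \pi$, so $\Pr[y_2 = i] = \sum_j T_{i,j}\pi_j = (T\pi)_i$, i.e. $w = T\pi$ as claimed. Then for the three conditional means: $\E[x_2 \mid h = j] = \E[x_2 \mid y_2 = j] = O e_j$ is immediate from the definition of $O$. For $\E[x_3 \mid h = j]$, condition further on $y_3$: $\E[x_3 \mid y_3 = i] = O e_i$ and $\Pr[y_3 = i \mid y_2 = j] = T_{i,j}$, so $\E[x_3 \mid y_2 = j] = \sum_i T_{i,j} O e_i = O T e_j$. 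The only real computation is $\E[x_1 \mid h = j]$: here I need the \emph{backward} conditional $\Pr[y_1 = i \mid y_2 = j]$, which by Bayes' rule equals $\Pr[y_2 = j \mid y_1 = i]\Pr[y_1 = i]/\Pr[y_2 = j] = T_{j,i}\pi_i/w_j$. Hence $\E[x_1 \mid y_2 = j] = \sum_i \frac{T_{j,i}\pi_i}{w_j} O e_i$, which in matrix form is exactly $O\,\diag(\pi)\,T^\top \diag(w)^{-1} e_j$ — one checks that the $i$-th entry of $\diag(\pi) T^\top \diag(w)^{-1} e_j$ is $\pi_i T_{j,i}/w_j$, matching term by term.

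The main obstacle — really the only non-routine point — is getting the backward transition (the Bayes' rule inversion) bookkeeping right: keeping track of which index of $T$ is the ``from'' state and which is the ``to'' state, so that the transpose and the $\diag(\pi)$, $\diag(w)^{-1}$ factors land on the correct sides. Everything else is a direct unwinding of conditional expectations using the tower property and the defining Markov/emission structure. Once these three means are in hand, the proposition feeds directly into Theorem~\ref{thm:multiview} (with $\mu_{t,j} = \E[x_t \mid h = j]$), provided $O$ and $T$ are such that the three sets of conditional means are linearly independent, which is the standard non-degeneracy assumption for HMM identifiability.
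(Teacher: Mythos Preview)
Your proposal is correct and is exactly the natural direct verification. The paper itself does not give a proof of this proposition; it simply cites \citet{AHK12} and moves on to note the full-rank conditions needed to apply Theorem~\ref{thm:multiview}. Your unwinding via the Markov property for conditional independence, $w = T\pi$ for the law of $y_2$, and Bayes' rule for the backward conditional $\Pr[y_1 = i \mid y_2 = j] = T_{j,i}\pi_i/w_j$ is the standard argument, and your matrix bookkeeping for $O\,\diag(\pi)\,T^\top\diag(w)^{-1}e_j$ is correct.
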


Note the matrix of conditional means of $x_t$ has full column rank, for each $t
\in \{1,2,3\}$, provided that: (i) $O$ has full column rank, (ii) $T$ is
invertible, and (iii) $\pi$ and $T\pi$ have positive entries. Using the result of this proposition, we can formulate the problem as a multi-view mixture model and apply Theorem~\ref{thm:multiview}.

\section{Nonlinear Model: Noisy-Or Networks} \label{sec:noisy-or-network}

The models we stated in the previous sections are all {\em linear} for the purpose of tensor decomposition; in particular, the observed moment tensors $T$ have an exact decomposition with the rank-1 components as the desired parameters to be learned. This behavior is fairly common if given hidden components, the conditional expectation of the observation is a linear combination of different components, e.g., in the Latent Dirichlet Allocation model, if the document has a mixture of topics, the probabilities of observing different words are also linear mixtures.

In more complicated models, the observation may not be linear. In this section, we consider the {\em noisy-or} model, which is among the first non-linear models that can be learned by tensor decomposition.

The noisy-or model is a Bayes network with binary latent variables $h\in \{0,1\}^k$, and binary observed variables $x\in \{0,1\}^d$. The hidden variables are independent Bernoulli variables with parameter $\rho$, i.e., $\Pr[h_j = 1]=\rho, j \in [k]$. The conditional distribution $\Pr[x \vert h]$ is parameterized by a non-negative weight matrix $W\in \R^{d\times k}$. Conditioned on $h$, the observations $x_1,\dots, x_d$ are independent  with distribution
\begin{align}
\label{eqn:noisyor1}\Pr\left[x_i = 0\mid h\right] = \prod_{j=1}^k \exp(-W_{ij}h_j) =\exp(-\inner{W^i, h}), \quad i \in [d],
\end{align}
where $W^{i}$ denotes the $i$-th row of $W$.
This model is often used to model the relationship between diseases and symptoms, as in the classical human-constructed tool for medical diagnosis called {\em Quick Medical  Reference} (QMR-DT) by \citet{shwe1991probabilistic}. In this case, the latent variables $h_j$'s are diseases and observed variables $x_i$'s are symptoms. We see that $1-\exp(-W_{ij}h_j)$ can be thought of as the probability that disease $h_j$ activates symptom $x_i$, and $x_i$ is activated if one of $h_j$'s activates it. This also explains the name of the model, noisy-or.  

Given~\eqref{eqn:noisyor1} and the independence of different $x_i$'s given $h$, we have
\begin{equation*}
\Pr[x \mid h] = \prod_{i=1}^d \left(1-\exp(-\langle W^i, h \rangle)\right)^{x_i} \left(\exp(-\langle W^i, h \rangle)\right)^{1-x_i}.
\end{equation*}
Contrasting with the linear models in the previous sections, we see that under this model when a patient has multiple diseases, the expectation of the symptoms $x$ is {\em not} a linear combination of different components.

\paragraph{Point-wise Mutual Information:}Since the conditional probability is a product of $d$ terms, it is natural to consider taking $\log$ in order to convert it into a summation. This motivates the use of Point-wise Mutual Information (PMI), which is a common metric for the correlations between two events. Given events $X$ and $Y$, the PMI is defined as
$$
\text{PMI}(X,Y) := \log \frac{\Pr[X,Y]}{\Pr[X]\Pr[Y]}.
$$
Intuitively, if $X,Y$ are independent, then $\text{PMI} = 0$; if they are positively correlated, then $\text{PMI} > 0$; if they are negatively correlated, then $\text{PMI} < 0$. This can be also generalized to three random variables as
\begin{equation*}
\text{PMI3}(X, Y, Z) := \log \frac{\Pr[X,Y]\Pr[Y,Z]\Pr[X,Z]}{\Pr[X,Y,Z]\Pr[X]\Pr[Y]\Pr[Z]}.
\end{equation*}

For the noisy-or networks, we use PMI and PMI3 as the (generalized) moments that we observe. More precisely, we define the following PMI matrix $M \in \R^{d \times d}$ and PMI tensor $T \in \R^{d \times d \times d}$ as
\begin{align*}
M_{i_1,i_2} &:=  \text{PMI}(1-x_{i_1}, 1-x_{i_2}), \quad i_1,i_2 \in [d].   \\
T_{i_1,i_2,i_3} &:=  \text{PMI3}(1-x_{i_1}, 1-x_{i_2}, 1-x_{i_3}), \quad i_1,i_2,i_3 \in [d].
\end{align*}
These tabulate the correlations among all pairs and triples of symptoms; more specifically, they incorporate indicator random variable for the symptom being absent.

As before, we would like to have a low rank decomposition for these observed matrix and tensor. This is almost true except for some small perturbations as follows. For convenience, we define $F, G\in \R^{d\times k}$ as
\begin{align*}
F & := 1- \exp(-W) \\
G &:= 1-\exp(-2W).
\end{align*}
Using these quantities we can approximately represent the PMI matrix and tensor in low rank forms.
 
\begin{proposition}[Proposition 2.1 in \cite{arora2016provable}]\label{prop:lowrank} Let  $F_j, G_j \in \R^d$ denote the $j$-th columns of the above matrices $F, G$, respectively.  Then we have
\begin{align*}
M & \approx \rho \left(FF^{\top} + \rho GG^{\top}\right)
= \rho \sum_{j=1}^k  F_jF_j^{\top} + \rho^2 \sum_{j=1}^k   G_j G_j^{\top}  
\\	T &\approx  \rho \sum_{j=1}^k   F_j \otimes F_j \otimes F_j + \rho^2 \sum_{j=1}^k   G_j\otimes G_j \otimes G_j.
	\end{align*}
\end{proposition}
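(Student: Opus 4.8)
The plan is to compute the relevant joint probabilities $\Pr[1-x_{i}=1]$, $\Pr[1-x_{i_1}=1,1-x_{i_2}=1]$, etc., exactly from the noisy-or model, take logarithms as dictated by the definitions of PMI and PMI3, and then expand to leading order in $\rho$. First I would observe that because the hidden variables $h_j$ are independent $\text{Bernoulli}(\rho)$, and because conditioned on $h$ the event $\{x_i=0\}$ has probability $\exp(-\inner{W^i,h})$, one gets the clean product formula
\[
\Pr[x_{i_1}=0,\dots,x_{i_m}=0] = \E_h\Bigl[\exp\bigl(-\inner{W^{i_1}+\dots+W^{i_m},h}\bigr)\Bigr] = \prod_{j=1}^k\Bigl(1-\rho+\rho\,e^{-(W_{i_1 j}+\dots+W_{i_m j})}\Bigr),
\]
using independence of the $h_j$'s and $\E[e^{-th_j}] = 1-\rho+\rho e^{-t}$. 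Taking $\log$ turns this into $\sum_{j=1}^k \log\bigl(1-\rho(1-e^{-(W_{i_1 j}+\dots+W_{i_m j})})\bigr)$, which is a sum over hidden coordinates — exactly the structure needed for a low-rank decomposition.

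Next I would plug these expressions into $\text{PMI}(1-x_{i_1},1-x_{i_2}) = \log\Pr[x_{i_1}=x_{i_2}=0] - \log\Pr[x_{i_1}=0] - \log\Pr[x_{i_2}=0]$ and into the analogous three-variable combination, so that each PMI/PMI3 value becomes a single sum $\sum_{j=1}^k g_j(\text{indices})$ where $g_j$ is built from terms like $\log(1-\rho(1-e^{-W_{i j}}))$ and $\log(1-\rho(1-e^{-W_{i j}-W_{i' j}}))$. Then I would Taylor-expand each $\log(1-\rho\,\cdot)$ in $\rho$. Writing $f_{ij} := 1-e^{-W_{ij}}$ (so $F_j$ has entries $f_{ij}$), one checks $1-e^{-W_{ij}-W_{i'j}} = f_{ij}+f_{i'j}-f_{ij}f_{i'j}$, and after cancellation the $O(\rho)$ term of $M_{i_1 i_2}$ collapses to $\rho\sum_j f_{i_1 j}f_{i_2 j} = \rho (FF^\top)_{i_1 i_2}$, the $O(\rho^2)$ term involves $\sum_j (f_{i_1 j}^2 + \dots)$ combined with the second-order log expansion; matching this against $g_{ij} := 1-e^{-2W_{ij}}$ via the identity $g_{ij} = 2f_{ij}-f_{ij}^2$ yields the $\rho^2\sum_j G_{i_1 j}G_{i_2 j}$ term. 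The same bookkeeping, one order more elaborate, handles $T$: the first-order term gives $\rho\sum_j F_j^{\otimes 3}$, and collecting the second-order contributions (using $\log(1-u) = -u-u^2/2-\cdots$ and the cubic cross-terms from $\log(1-\rho(1-e^{-W_{ij}-W_{i'j}-W_{i''j}}))$) produces $\rho^2\sum_j G_j^{\otimes 3}$, with all lower-symmetry cross terms cancelling by the very design of the PMI3 combination.

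The main obstacle — and the reason the statement has ``$\approx$'' rather than ``$=$'' — is controlling the higher-order remainder: the expansion is only exact up to $O(\rho^3)$ (and up to the $f_{ij}f_{i'j}f_{i''j}$-type corrections hidden inside the $e^{-W_{ij}-\cdots}$ terms), so the real work is a careful error analysis showing that the discarded terms are genuinely negligible in the regime of interest (small $\rho$, or bounded $\inner{W^i,\mathbf 1}$). I would isolate this by writing $M = \rho FF^\top + \rho^2 GG^\top + M_{\mathrm{err}}$ and bounding $\|M_{\mathrm{err}}\|$ (and likewise $\|T_{\mathrm{err}}\|$) in terms of $\rho^3$ and norms of $W$; the combinatorial cancellations in the PMI3 definition are what make these remainders third-order rather than first-order, so verifying those cancellations explicitly is the delicate step. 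For the precise constants and the exact form of the error bound I would defer to \cite{arora2016provable}, since Proposition~\ref{prop:lowrank} is quoted from there; here it suffices to carry out the leading-order computation above and note that the remainder is higher order in $\rho$.
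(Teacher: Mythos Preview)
Your proposal is correct and matches the paper's own treatment: the paper does not give a detailed proof of this proposition but simply notes that ``the approximation in both equations are due to higher order terms in the Taylor expansions'' and refers the reader to \cite{arora2016provable} for details. Your plan---compute the joint absence probabilities via the product formula $\prod_j(1-\rho+\rho e^{-\sum_t W_{i_t j}})$, take logs to get a sum over $j$, expand $\log(1-\rho\,\cdot)$ in $\rho$, and identify the leading terms with $\rho FF^\top$ and $\rho^2 GG^\top$---is exactly the Taylor-expansion argument the paper alludes to, and your decision to defer the precise error bounds to the cited reference mirrors what the paper itself does.
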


The approximation in both equations are due to higher order terms in the Taylor expansions and are dominated by the term with $G$. Recall that $\rho$ is the probability of any disease $h_j$ being present, and therefore, for this application we expect $\rho$ to be small. Hence, the terms with $F$ are much larger than the terms with $G$  and we can say that applying tensor decomposition approximately recovers columns of $F$. Several ideas and many more details are required in analyzing the effect of the perturbation $G$ (since $G$ is not as small as required in Theorem~\ref{thm:robustpower}); we refer interested readers to \cite{arora2016provable}.

\section{Applications in Supervised Learning} \label{sec:moment-supervised}

In this section, we describe how tensor methods can be also used in supervised learning applications contrasting with the unsupervised problems that we have described so far. In particular, we focus on neural networks to elaborate on this application of tensor methods. This is a very interesting extension given the vast applications of neural networks that have significantly improved predictive performance across multiple domains such as computer vision and speech recognition with rapidly growing influence in many other areas. 
Although we focus on neural networks in this monograph, similar tensor techniques are also applied to learning mixtures of generalized linear models in~\citet{sedghi2016provable}. Most of the discussions and results in the rest of this section are borrowed from \citet{janzamin2015beating}.

In previous sections, we discussed the application of tensor methods for learning latent variable models and latent representations which was performed in an unsupervised manner. Thus, when considering supervised learning tasks such as training neural networks, the first natural and major question that we have to answer is how to adapt these tensor methods to supervised learning.
To answer this, we incorporate a generative approach in the problem and propose non-linear transformation of the input which is basically new features extracted from the input. We refer to this new transformation as {\em score function} of the input. These new extracted features enable us to formulate the problem of training neural networks as the tensor decomposition problem. More concretely, we show that the cross-moment between output and the score function of  the input has information about the weight parameters of the neural network in its rank-1 components. Before providing more details, we first elaborate more on exploiting a generative model, and in particular, the score functions.

\paragraph{Generative vs.\ discriminative models:}Generative models incorporate a joint distribution $p(x,y)$ over both the input $x$ and label $y$. On the other hand, discriminative models such as neural networks only incorporate the conditional distribution $p(y|x)$. While training neural networks for general input $x$ is NP-hard, does  knowledge about the input distribution $p(x)$ make  learning tractable?

Here, we assume knowledge of   the input density $p(x)$ which can be any continuous differentiable function.
While unsupervised learning problem of estimation of density $p(x)$ is itself a hard problem for general models, here we investigate how $p(x)$ can be exploited to make training of neural networks tractable. The knowledge of $p(x)$ is naturally available in the {\em experimental design} framework, where the person designing the experiments has the ability to choose the input distribution. Examples include   conducting polling, carrying out drug trials, collecting survey information, and so on.

We utilize the knowledge about the input density $p(x)$ (up to normalization)\footnote{We do not require the knowledge of the normalizing constant or the partition function, which is $\#P$ hard to compute~\citep{wainwright2008graphical}.} to obtain certain (non-linear) transformations of the input, given by the class of score functions.   {\em Score functions} are normalized derivatives of the input pdf; see~\eqref{eqn:diffoperator}.  If the input is a vector (the typical case), the first order score function (i.e., the first  derivative) is a vector, the second order score is a matrix, and the higher order scores are tensors.

\subsection{Moment tensor structure in neural networks}

We consider a  neural network with one hidden layer of dimension $k$. Let the output $\tl{y} \in \{0,1\}$ be the binary label, and $x \in \R^d$ be the feature (input) vector; see \citet{janzamin2015beating} for generalization to higher dimensional output (multi-label and multi-class), and also the continuous output case. 
We consider the label generating model
\begin{equation} \label{eq:nn2}
\quad \tl{f}(x) := \Ebb[\tl{y}|x]= \inner{a_2,\sigma(A_1^\top x+b_1)}+b_2,
\end{equation}
where $\sigma(\cdot)$ is a (linear/nonlinear) element-wise function named as activation function; 
see Figure~\ref{fig:DeepNet} for a schematic representation of label-function in~\eqref{eq:nn2} in the general case of vector output $\tl{y}$. 

In this section, we only focus on the {\em realizable} setting. In this setting, the goal is to learn the parameters of the neural network specified in~\eqref{eq:nn2}, i.e., to learn the weight matrices (vectors) $A_1 \in \R^{d \times k}$, $a_2 \in \R^k$ and bias vectors $b_1 \in \R^k$, $b_2 \in \R$, given labeled data samples $\{(x_i, \tl{y}_i)\}$. This only involves the estimation analysis where we have a label-function $\tl{f}(x)$ specified in~\eqref{eq:nn2} with fixed unknown parameters $A_1, b_1, a_2, b_2$, and we would like to learn these parameters and finally bound the overall function estimation error $\E_x[|\tl{f}(x) - \hf(x)|^2]$, where $\hf(x)$ is the estimation of fixed neural network $\tl{f}(x)$ given finite samples. The approximation\footnote{Here by approximation we mean how accurate the neural network can approximate any arbitrary function $f(x)$.} analysis and consequently the risk bound is out of the focus of this monograph and the interested reader is referred to read \citet{janzamin2015beating} for details.

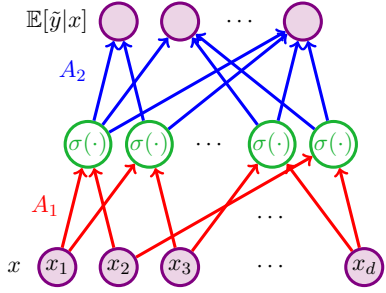
\begin{figure}[t]
\begin{center}
\resizebox{0.45\textwidth}{!}{
\begin{tikzpicture}
  [
    scale=1,
    observed/.style={circle,minimum size={width("$x_{d}$")+6pt},inner
sep=0mm,draw=violet,fill=lpurple,line width=.5mm},
    hidden/.style={circle,minimum size=0.6cm,inner sep=0mm,draw=dkg,line width=.5mm},
        func/.style={circle,minimum size=0.6cm,inner sep=0mm,draw=blue1,dashed, line width=.5mm},
        vdots/.style={min, node distance=.5mm},
  ]
  \node [hidden,name=f1] at ($(-2,0)$) {$\tcdkg{\sigma(\cdot)}$};
  \node [hidden,name=f2] at ($(-1,0)$) {$\tcdkg{\sigma(\cdot)}$};
  \node [hidden,name=fk] at ($(2,0)$) {$\tcdkg{\sigma(\cdot)}$};
 \node [hidden,name=fn] at ($(1,0)$) {$\tcdkg{\sigma(\cdot)}$};
  \node[observed,name=y1] at ($(-1.5,2)$){};
  \node [observed,name=yk] at ($(-.5,2)$){};
  \node [observed,name=y2] at ($(1.5,2)$){};
 \node [observed,name=x11] at ($(-2.5,-2)$) {$x_1$}; 
   \node [observed,name=x1] at ($(-1.5,-2)$) {$x_2$}; 
  \node [observed,name=x2] at ($(-0.5,-2)$) {$x_3$}; 
  \node [observed,name=xd2] at ($(2.5,-2)$) {$x_{d}$}; 
  \node [] at ($(-3.2,-2)$) {$x$};
  \node [] at ($(-2.5,2)$) {$\E[\tl{y}|x]$};
  \node [] at ($(-2.25,1.2)$) {\tcb{$A_2$}};
   \node [] at ($(-2.7,-1)$) {\tcr{$A_1$}};
    \node at ($(0.5,2)$) {$\dotsb$};
        \node at ($(1,-1.2)$) {$\dotsb$};
  \node at ($(0,0)$) {$\dotsb$};
   \node at ($(1,-2)$) {$\dotsb$};

  \draw [blue, line width=.5mm, ->] (f1) to (y1);
  \draw [blue, line width=.5mm, ->] (f1) to (y2);
  \draw [blue, line width=.5mm, ->] (f1) to (yk);
 \draw [blue, line width=.5mm,->] (f2) to (y2);
  \draw [blue, line width=.5mm, ->] (f2) to (y1);
  \draw [blue, line width=.5mm, ->] (fk) to (yk);
  \draw [blue, line width=.5mm, ->] (fk) to (y2);
   \draw [blue, line width=.5mm, ->] (fn) to (y2);
   \draw [blue, line width=.5mm, ->] (fn) to (yk);

    \draw [red, line width=.5mm, <-] (f1) to (x1);
  \draw [red, line width=.5mm, <-] (f1) to (x11);
   \draw [red, line width=.5mm, <-] (f2) to (x11);
  \draw [red, line width=.5mm,<-] (f2) to (x2);
  \draw [red, line width=.5mm,<-] (fk) to (xd2);
  \draw [red, line width=.5mm, <-] (fk) to (x1);
  \draw [red, line width=.5mm, <-] (fn) to (x2);
  \draw [red, line width=.5mm,<-] (fn) to (xd2);

\end{tikzpicture}
}
\end{center}
\caption[Graphical representation of a neural network with single hidden layer]{ Graphical representation of a neural network, $\Ebb[\tl{y}|x] = A_2^\top \sigma(A_1^\top x+b_1)+b_2$.}
\label{fig:DeepNet}
\end{figure}

We are now ready to explain how learning the parameters of two-layer neural network in Figure~\ref{fig:DeepNet} can be characterized as a tensor decomposition algorithm. Note that we only provide the tensor decomposition algorithm for learning the parameters of first layer and as described in~\citet{janzamin2015beating}, the bias parameter in first layer is learned using a Fourier method and the parameters of second layer are learned using linear regression. These parts are not within the focus of this monograph and we refer the reader to \citet{janzamin2015beating} for more details.  Note that most of the unknown parameters (compare the dimensions of matrix $A_1$, vectors $a_2$, $b_1$, and scalar $b_2$) are estimated in the first part, and thus, the tensor decomposition method for estimating $A_1$ is the main part of the learning algorithm.

In order to provide the tensor structure, we first define the score functions as follows.

\subsubsection{Score function} \label{sec:ScoreFunc}
The $m$-th order score function $\Sc_m(x) \in \bigotimes^m \R^d$ is defined as~\citep{janzamin2014matrix}
\begin{equation} \label{eqn:diffoperator}
\Sc_m(x) := (-1)^m \frac{\nabla_x^{(m)} p(x)}{p(x)},
\end{equation}
where $p(x)$ is the probability density function of random vector $x \in \R^d$, and $\nabla_x^{(m)}$ denotes the $m$-th order derivative operator defined as follows. For function $g(x): \R^d \rightarrow \R$ with vector input $x \in \R^d$, the $m$-th order derivative w.r.t.\ variable $x$ is denoted by $\nabla_x^{(m)} g(x) \in \bigotimes^{m} \R^d$ (which is a $m$-th order tensor) such that
\begin{equation} \label{eqn:derivativedef}
\left[ \nabla_x^{(m)} g(x) \right]_{i_1,\dotsc,i_m} := \frac{\partial g(x)}{\partial x_{i_1} \partial x_{i_2} \dotsb \partial x_{i_m}}, \quad i_1,\dotsc,i_m \in [d].
\end{equation}
When it is clear from the context, we drop the subscript $x$ and write the derivative as $\nabla^{(m)} g(x)$.

The main property of score functions as yielding differential operators that enables us to estimate the weight matrix $A_1$ via tensor decomposition is discussed in next subsection; see Equation~\eqref{eqn:socreFunc-diffOperator}.

Note that in this framework, we assume  access to a sufficiently good approximation of the input pdf $p(x)$ and the corresponding  score functions $\Sc_2(x)$, $\Sc_3(x)$. Indeed, estimating these quantities in general is a hard problem, but there exist numerous instances where this becomes tractable. Examples include spectral methods for learning latent variable models such as Gaussian mixtures, topic or admixture models, independent component analysis (ICA) and so on that we discussed in the previous sections. Moreover, there have been recent  advances in non-parametric score matching methods~\citep{sriperumbudur2013density} for density estimation in infinite dimensional exponential families with guaranteed convergence rates. These methods can be used to estimate the input pdf in an unsupervised manner. Below, we discuss more about score function estimation methods. In this work, we focus on how we can use the input generative information to make   training of  neural networks tractable. We refer the interested reader to \citet{janzamin2015beating} for more discussions on this. 

\paragraph{Estimation of score function}There are various efficient methods for estimating the score function. The framework of score matching is popular for parameter estimation  in probabilistic models~\citep{hyvarinen2005estimation, swersky2011autoencoders}, where the criterion is to fit parameters based on matching the data score function. \citet{swersky2011autoencoders} analyze the score matching for latent energy-based models.
In deep learning, the framework of auto-encoders attempts to find encoding and decoding functions which minimize the reconstruction error under added noise; the so-called Denoising Auto-Encoders (DAE). This is an unsupervised framework involving only unlabeled samples. \citet{alain2012regularized} argue that the DAE   approximately learns the first order score function of the input, as the noise variance goes to zero. ~\citet{sriperumbudur2013density} propose non-parametric score matching methods for density estimation in infinite dimensional exponential families with guaranteed convergence rates. Therefore, we can use any of these methods for estimating $\Sc_1(x)$ and use the recursive form~\citep{janzamin2014matrix}
$$\Sc_m(x) = - \Sc_{m-1}(x) \otimes \nabla_x \log p(x) - \nabla_x \Sc_{m-1}(x)$$ to estimate higher order score functions. Despite the existence of these techniques, there still exist so much room for proposing methods to efficiently estimate score functions.

\subsubsection{Tensor form of the moment} \label{sec:tensordecomp}
The score functions are new representations (extracted features) of input data $x$ that can be used for training neural networks.
The score functions have  the property of yielding  differential operators with respect to the input distribution. 
More precisely, for label-function $\tl{f}(x) := \E[\tl{y}|x]$, \citet{janzamin2014matrix} show that
\begin{equation} \label{eqn:socreFunc-diffOperator}
\E[\tl{y} \cdot \Sc_3(x)] = \E[\nabla_x^{(3)} \tl{f}(x)].
\end{equation}
Now for the neural network output in~\eqref{eq:nn2}, note that the function $\tl{f}(x)$ is a non-linear function of both input $x$ and weight matrix $A_1$. The expectation operator $\Ebb[\cdot]$ averages out the dependency on $x$, and the derivative acts as a {\em linearization operator} as follows. In the neural network output~\eqref{eq:nn2}, we observe that  the columns of weight vector $A_1$ are the linear coefficients involved with input variable $x$. When taking the derivative of this function, by the chain rule, these linear coefficients show up in the final form. With this intuition, we are now ready to provide the precise form of the moment where we show how the cross-moment between label and score function as $\E[\tl{y} \cdot \Sc_3(x)]$ leads to a tensor decomposition form for estimating weight matrix $A_1$:

\begin{lemma}[\citep{janzamin2015beating}] \label{lem:moment}
For the two-layer neural network specified in~\eqref{eq:nn2}, we have
\begin{equation} \label{eqn:cross-moment}
\Ebb \left[ \tl{y} \cdot \Sc_3(x) \right] = \sum_{j \in [k]} \lambda_j \cdot (A_1)_j \otimes (A_1)_j \otimes (A_1)_j,
\end{equation}
where $(A_1)_j \in \R^d$ denotes the $j$-th column of $A_1$, and 
\begin{equation} \label{eqn:cross-moment-coeffs}
\lambda_j = \E \left[ \sigma'''(z_j) \right] \cdot a_2(j),
\end{equation}
for vector $z := A_1^\top x+b_1$ as the input to the nonlinear operator $\sigma(\cdot)$.
\end{lemma}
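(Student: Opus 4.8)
The plan is to combine the differential-operator identity~\eqref{eqn:socreFunc-diffOperator}, namely $\E[\tl{y}\cdot\Sc_3(x)] = \E[\nabla_x^{(3)}\tl{f}(x)]$, with a direct computation of the third derivative of the network output $\tl{f}(x) = \inner{a_2,\sigma(A_1^\top x + b_1)} + b_2$. First I would write $\tl{f}(x) = \sum_{j\in[k]} a_2(j)\,\sigma(\inner{(A_1)_j, x} + b_1(j)) + b_2$, so that $\tl{f}$ is a sum of $k$ ridge functions plus a constant. The constant $b_2$ and (as we will see) the affine-in-$x$ contribution to the outer layer disappear under the third derivative, so only the nonlinear ridge terms survive.

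The key computational step is the chain rule for $\nabla_x^{(3)}$ applied to a single ridge function $g_j(x) := \sigma(z_j)$ where $z_j := \inner{(A_1)_j, x} + b_1(j)$. Since $z_j$ is affine in $x$, we have $\nabla_x z_j = (A_1)_j$ and all higher derivatives of $z_j$ vanish. Hence by the chain rule $\nabla_x g_j(x) = \sigma'(z_j)\,(A_1)_j$, then $\nabla_x^{(2)} g_j(x) = \sigma''(z_j)\,(A_1)_j\otimes (A_1)_j$, and finally
\begin{equation*}
\nabla_x^{(3)} g_j(x) = \sigma'''(z_j)\,(A_1)_j\otimes (A_1)_j\otimes (A_1)_j.
\end{equation*}
Summing over $j$ and multiplying by $a_2(j)$ gives $\nabla_x^{(3)}\tl{f}(x) = \sum_{j\in[k]} a_2(j)\,\sigma'''(z_j)\,(A_1)_j^{\otimes 3}$, where I write $v^{\otimes 3} := v\otimes v\otimes v$ as in the monograph's notation. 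Taking expectations over $x$ and invoking~\eqref{eqn:socreFunc-diffOperator}, the only $x$-dependence on the right is through $z_j = \inner{(A_1)_j,x} + b_1(j)$ inside $\sigma'''$, so linearity of expectation yields
\begin{equation*}
\E[\tl{y}\cdot\Sc_3(x)] = \sum_{j\in[k]} \bigl(\E[\sigma'''(z_j)]\cdot a_2(j)\bigr)\,(A_1)_j^{\otimes 3},
\end{equation*}
which is exactly~\eqref{eqn:cross-moment} with $\lambda_j = \E[\sigma'''(z_j)]\cdot a_2(j)$ as in~\eqref{eqn:cross-moment-coeffs}, recalling $z = A_1^\top x + b_1$.

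The main obstacle is not the algebra but making the interchange of expectation and differentiation in~\eqref{eqn:socreFunc-diffOperator} rigorous — strictly speaking this requires regularity/decay assumptions on $p(x)$ (so that integration by parts against the score function $\Sc_3(x) = (-1)^3 \nabla_x^{(3)} p(x)/p(x)$ produces no boundary terms) and integrability of $\tl{y}\cdot\nabla_x^{(3)}\tl{f}(x)$. Since~\eqref{eqn:socreFunc-diffOperator} is quoted as an established fact from~\citet{janzamin2014matrix}, I would simply cite it and assume the standing regularity hypotheses under which it holds; likewise I would assume $\sigma$ is three-times differentiable with the moments $\E[\sigma'''(z_j)]$ well-defined. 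With those in place, the proof is the three-line chain-rule computation above followed by one application of linearity of expectation.
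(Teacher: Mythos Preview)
Your proposal is correct and follows exactly the approach the paper indicates: the paper does not spell out a detailed proof but simply states that the lemma ``is proved by the main property of score functions as yielding differential operators that was described earlier,'' i.e., by invoking~\eqref{eqn:socreFunc-diffOperator} and then differentiating $\tl f$. Your chain-rule computation of $\nabla_x^{(3)}\tl f(x)$ for the sum of ridge functions is precisely the missing detail, and your remarks on the regularity hypotheses needed for~\eqref{eqn:socreFunc-diffOperator} are appropriate caveats that the paper leaves implicit.
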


This is proved  by the main property of score functions as yielding differential operators that was described earlier. This lemma shows that by decomposing the cross-moment tensor $\E[\tl{y} \cdot \Sc_3(x)]$, we can recover the columns of $A_1$. This clarifies how the score function acts as a linearization operator while the final output is nonlinear in terms of $A_1$.

\section{Other Models}
\label{sec:othermodels}
Tensor decompositions have been applied to learn many other models. Several ideas we introduced in this section originated from more complicated settings, and can be applied to more models. Here we give hints to more examples, but the list is by no means complete.

The idea of manipulating moments was well-known in the ICA (Independent Component Analysis) literature, where cumulants are used frequently instead of moments. For other distributions, it was used for the Latent Dirichlet Allocation~\citep{SpectralLDA}, and widely applied in all the models where the hidden variables are not categorical. 

The multi-view model was first used in \citet{MR06} to learn Hidden Markov Models and Phylogeny Tree Reconstruction. The original technique in \citet{MR06} was based on spectral algorithms and was not viewed as tensor decomposition, however it is very similar to the simultaneous diagonalization algorithm we introduced in Section~\ref{sec:simdiag}.

Tensor methods can be also applied to learning more complicated mixtures of Gaussians, where each component may have a different, non-spherical component~\citep{ge2015learning}. The covariance matrix creates many technical problems which is beyond the scope of this monograph. The idea of creating different views as we discussed in Section~\ref{sec:multi} can be also applied to learning community models~\citep{AnandkumarEtal:community12COLT}. 

Tensor decomposition is also particularly useful in the context of deep neural networks, most notably with the aim to speed up computation. One way to do so is to apply tensor factorization to the kernel of convolutional layers~\citep{tai2016convolutional}. In particular, by applying CP decomposition to the convolutional kernel of a pre-trained network, not only is it possible to reduce the number of parameters, but it also gives a way of re-expressing the convolution in terms of a series of smaller and more efficient convolutions. Both ALS~\citep{lebedev2015speeding} and tensor power method~\citep{astrid2017cp} have been considered. This process typically results in a performance deterioration which is restored by fine-tuning. A similar result can be obtained using Tucker decomposition~\citep{yong2016compression}. It is possible to go further and jointly parameterize multiple layers or whole networks, resulting in large parameter space savings without loss of performance~\citep{kossaifi2019t}.

We can also preserve the multi-linear structure in the activation tensor, using tensor contraction~\citep{kossaifi2017tensor}, or by removing fully connected layers and flattening layers altogether and replacing with tensor regression layers~\citep{kossaifi2018tensor}. Adding a stochastic regularization on the rank of the decomposition can also help render the models more robustly~\citep{stochastic_rank}. \emph{Tensorization} can be also leveraged by applying it to the weight matrix of fully-connected layers~\citep{novikov2015tensorizing}.\clearpage{}

\clearpage{}\chapter{Practical Implementations}\label{ch:implementation}

We have so far covered many aspects of tensors including tensor decomposition and how they are useful in learning different machine learning models in both supervised and unsupervised settings. In this section, we discuss practical implementation of tensor operations using Python programming language. We first motivate our choice and introduce some actual code to perform tensor operations and tensor decomposition.  We then briefly show how to perform more advanced tensor operations using TensorLy~\citep{tensorly}, a library for tensor learning in Python. Finally, we show how to scale up our algorithms using the PyTorch deep learning framework~\citep{pytorch} as a backend for TensorLy.

\section{Programming Language and Framework}

Throughout this section, we present the implementations in Python language.
Python is a multi-purpose and powerful programming language that is emerging as the prime choice for Machine Learning and data science. Its readability allows us to focus on the underlying concepts we are implementing without getting distracted by low-level considerations such as memory handling or obscure syntax. Its huge popularity means that good libraries exist to solve most of our computational needs. In particular, NumPy~\citep{numpy} is an established and robust library for numerical computation. It offers a high performance structure for manipulating multi-dimensional arrays. TensorLy builds on top of this and provides a simple API for fast and easy tensor manipulation. TensorLy has a system of backends that allows you to switch transparently from NumPy to PyTorch, MXNet, TensorFlow, etc. This means you can perform any of the operations seamlessly on all these  frameworks. In particular, using a deep learning framework such as PyTorch as backend, it is easy to scale operations to GPUs and multi-machines.

\subsection{Pre-requisite}
In order to run the codes presented in this section, you will need a working installation of Python 3.0, along with NumPy (for the numerical array structure), SciPy~\citep{scipy}  (for scientific python), and optionally Matplotlib~\citep{matplotlib} for visualization.

The easiest way to get all these is to install the Anaconda distribution (\url{https://anaconda.org/}) which comes with all the above bundled and pre-compiled so you do not have to do anything else!

\section{Tensors as NumPy Arrays}

You may recall from Section~\ref{sec:totensors} that tensors can be identified as multi-dimensional arrays. Therefore, we represent tensors as NumPy arrays, which are multi-dimensional arrays. 

Let's take as an example a tensor \(T \in \R^{3 \times 4 \times 2}\), defined by the following frontal slices:

\[
    T(:, :, 1) = 
       \left[
       \begin{matrix}
       0  & 2  & 4  & 6\\
       8  & 10 & 12 & 14\\
       16 & 18 & 20 & 22
       \end{matrix}
       \right]
\]

and

\[
    T(:, :, 2) =
       \left[
       \begin{matrix}
       1  & 3  & 5  & 7\\
       9  & 11 & 13 & 15\\
       17 & 19 & 21 & 23
       \end{matrix}
       \right]
\]

In NumPy we can instantiate new arrays from nested lists of values. For instance, matrices are represented as a list of rows, where each row is itself a list.
Let's define the slices of above tensor $T$ as 2-D NumPy arrays:
    
\begin{python}
# We first import numpy
import numpy as np

# First frontal slice
T1 = np.array([[  0.,   2.,   4.,   6.],
               [  8.,  10.,  12.,  14.],
               [ 16.,  18.,  20.,  22.]])

# Second frontal slice
T2 = np.array([[  1.,   3.,   5.,   7.],
               [  9.,  11.,  13.,  15.],
               [ 17.,  19.,  21.,  23.]])
\end{python}

Let's now write a function that stacks these frontal slices into a third order tensor:

\begin{python}
def tensor_from_frontal_slices(*matrices):
    """Creates a tensor from its frontal slices
    
    Parameters
    ----------
    matrices : 2D-Numpy arrays
    
    Returns
    -------
    tensor : 3D-NumPy arrays
        its frontal slices are the matrices passed as input
    """
    return np.concatenate([matrix[:, :, np.newaxis]\
                          for matrix in matrices], axis=-1)
\end{python}

We can then build the full tensor \(T\) from its frontal slices $T1$ and $T2$ created above:

\begin{python}
T = tensor_from_frontal_slices(T1, T2)
\end{python}

We can inspect the frontal slices naturally using almost the same notation as in the math.
To do so we fix the last index while iterating over other modes (using `:').

\begin{python}
>>> T[:, :, 0]
array([[  0.,   2.,   4.,   6.],
       [  8.,  10.,  12.,  14.],
       [ 16.,  18.,  20.,  22.]])
>>> T[:, :, 1]
array([[  1.,   3.,   5.,   7.],
       [  9.,  11.,  13.,  15.],
       [ 17.,  19.,  21.,  23.]])
\end{python}

Remember that in NumPy (and generally, in Python), like in C, indexing starts at zero. In the same way, you can also inspect the horizontal slices (by fixing the first index) and lateral slices (by fixing the second index).

Similarly, we can easily inspect the fibers which, as you may recall, are higher-order analogues to column and rows. We can obtain the fibers of $T$ by fixing all indices but one:

\begin{python}
# First column (mode-1 fiber)
>>> T[:, 0, 0]
array([  0.,   8.,  16.])

# First row (mode-2 fiber)
>>> T[0, :, 0]
array([ 0.,  2.,  4.,  6.])

# First tube (mode-3 fiber)
>>> T[0, 0, :]
array([ 0.,  1.])
\end{python}

Finally, you can access the size of a tensor via its \emph{shape}, which indicates the size of the tensor along each of its modes. For instance, our tensor $T$ has shape $(3, 4, 2)$:

\begin{python}
>>> T.shape
(3, 4, 2)
\end{python}

\section{Basic Tensor Operations and Decomposition}

Tensor matricization, or unfolding, as introduced in Equation~\eqref{eqn:matricization} and described in Procedure~\ref{algo:tensor_unfolding} naturally translates into Python. One important consideration when implementing algorithms that manipulate tensors is the way elements are organised in memory. You can think of the memory as one long vector of numbers. Because of the way CPU and GPU operate, it matters how these elements are layered in the memory. To store a matrix, for instance, we can either organise the elements row-after-row (also called C-ordering) or column-after-column (also called Fortran ordering). In NumPy, elements are organised by default in row-order, same for PyTorch. It so happens that the definition of the unfolding we use is adapted for such ordering, thus avoiding expensive reordering of the data.

\floatname{algorithm}{Procedure}
\begin{algorithm}[t]
\caption{Tensor unfolding}
\label{algo:tensor_unfolding}
\begin{algorithmic}[1]
\renewcommand{\algorithmicrequire}{\textbf{input}}
\renewcommand{\algorithmicensure}{\textbf{output}}
\REQUIRE Tensor $T$ of shape $(d_1, d_2, \cdots, d_n)$; unfolding mode $m$.
\ENSURE Mode-$m$ matricization (unfolding)
\STATE Move the $m^{\text{th}}$ dimension to the first position.
\STATE Reshape into a matrix $M$ of shape $(d_m, \prod_{k \neq m} d_k)$.
\RETURN $M$.
\end{algorithmic}
\end{algorithm}

As a result, matricization (or unfolding) of a tensor along a given mode simplifies to moving that mode to the front and reshaping into a matrix as also described in Procedure~\ref{algo:tensor_unfolding}.

\begin{python}
def unfold(tensor, mode):
    """Returns unfolding of a tensor -- modes starting at 0.
    
    Parameters
    ----------
    tensor : ndarray
    mode : int (default is 0), mode along which to unfold
    
    Returns
    -------
    ndarray
        unfolded_tensor of shape
    """
    return np.reshape(np.moveaxis(tensor, mode, 0),
                      (tensor.shape[mode], -1))
\end{python}

Folding the tensor is done by performing the inverse operations: we first reshape the matrix into a tensor and move back the first dimension to its original position.

\begin{python}
def fold(unfolded_tensor, mode, shape):
    """Refolds the unfolded tensor into a full tensor.
        In other words, refolds the n-mode unfolded tensor
        into the original tensor of the specified shape.
    
    Parameters
    ----------
    unfolded_tensor : ndarray
        unfolded tensor of shape ``(shape[mode], -1)``
    mode : int
        the mode of the unfolding
    shape : tuple
        shape of the original tensor before unfolding
    
    Returns
    -------
    ndarray
        folded_tensor of shape `shape`
    """
    full_shape = list(shape)
    mode_dim = full_shape.pop(mode)
    full_shape.insert(0, mode_dim)
    return np.moveaxis(np.reshape(
              unfolded_tensor, full_shape), 0, mode)
\end{python}

\subsection{CP decomposition}
Now that we know how to manipulate tensors using NumPy arrays, we are ready to implement a simple version of the CP decomposition via Alternating Least Squares, as explained in Section~\ref{sec:als}. We will start by writing the auxiliary functions we need in the main algorithm.

CP decomposition expresses its input tensor as a sum of outer products of vectors; see Equation~\eqref{eqn:tensordecomp} for the definition. Taking the unfolded expression, there is a useful equivalent formulation that uses the Khatri-Rao product which we used in Equation~\eqref{eqn:ALSoptMod}. In particular, for vectors $u,v,w$, we have
\begin{equation*}
\mat(u \otimes v \otimes w,1) = u \cdot (v \odot w)^\top. 
\end{equation*}
Note that here, $mat(., 1)$ corresponds to unfolding along mode $0$ in our code.

Let's first write a function to take the Khatri-Rao product of two matrices, as defined in equation~\eqref{eqn:KhatriRao}.
A naive, literal implementation of that equation could be as follows:
\begin{python}
def naive_khatri_rao(A, B):
    # Both matrices must have the same number of columns k
    d1, k = A.shape
    d2, k = B.shape
    
    # The khatri-rao product has size d1d2 x k
    C = np.zeros((d1*d2, k))
    for i in range(d1):
        for l in range(d2):
            for j in range(k):
                # Indexing starts at 0!
                C[l + i*d1, j] = A[i, j]*B[l, j]
    return C
\end{python}

However, loops are typically slow in Python and this naive implementation is as a result extremely slow. By contrast, we can use the built-in \texttt{einsum} function from NumPy, which uses Einstein's notation to define the operation, to write a \emph{vectorized}  version. This results in a much more efficient function:

\begin{python}
def khatri_rao(matrix1, matrix2):
    """Returns the khatri-rao product of matrix1 and matrix2
    """
    n_columns = matrix1.shape[1]
    result = np.einsum('ij,lj->ilj', matrix1, matrix2)
    return result.reshape((-1, n_columns))
\end{python}

Recall that the khatri-rao takes a column-wise Kronecker product of two matrices with the same number of columns. The \texttt{einsum} function here expresses this idea in terms of indices, where $A$ is indexed by $i$ and $j$ and $B$ is indexed by $l$ and $j$. The output is of size $ilj$ and we simply have to reshape it into a matrix of the appropriate size.

Then, given a third order tensor in its Kruskal form (i.e., a decomposed tensor, expressed as a series of factors $A, B$ and $C$ with unit norm and the associated vector of coefficients  $\lambda$ implying the norms), we need a method to return the reconstruction $T = \sum_{j\in[k]} \lambda_j \ a_j\otimes b_j\otimes c_j$. Using the above matricization property, this reconstruction can also be written in its unfolded form as $mat(T, 1) = \tl{A} \cdot \diag(\tl{\lambda}) \cdot (\tl{B} \odot \tl{C})^\top$, resulting in the following function:
\begin{python}
def kruskal_to_tensor(weights, A, B, C):
    """Converts the kruskal form into a tensor
    """
    full_shape = (A.shape[0], B.shape[0], C.shape[0])
    # Reconstruct in unfolded form
    unfolded_tensor = np.dot(A.dot(np.diag(weights)), 
                             khatri_rao(B, C).T)
    # Fold back to a tensor
    return fold(unfolded_tensor, 0, full_shape)
\end{python}

To measure convergence, we can use, for instance, the Frobenius norm of the reconstruction error. Recall that the Frobenius norm is simply the square root of the sum of the squared elements of the tensor. This can be written in NumPy as :

\begin{python}
def frobenius_norm(tensor):
    """Frobenius norm of the tensor
    """
    return np.sqrt(np.sum(tensor**2))
\end{python}

We are now ready to implement the Alternating Least Squares method for Tensor Decomposition described in Algorithm~\ref{algo:als}.

\begin{python}
def parafac(tensor, rank, l2_reg=1e-3, n_iter_max=200, tol=1e-10):
    """CANDECOMP/PARAFAC decomposition via ALS

    Parameters
    ----------
    tensor : ndarray
    rank  : int
            number of components
    l2_reg : float, default is 0.1
            regularization parameter (\alpha)
    n_iter_max : int
                 maximum number of iterations
    tol : float, optional
          tolerance: the algorithm stops when the variation in
          the reconstruction error is less than the tolerance
    verbose : int, optional
        level of verbosity
        
    Returns
    -------
    weights, A, B, C :  weights, factors of the decomposition
    """
    # Initialize the factors of the decomposition randomly
    A = np.random.random_sample((tensor.shape[0], rank))
    B = np.random.random_sample((tensor.shape[1], rank))
    C = np.random.random_sample((tensor.shape[2], rank))

    # Norm of the input tensor
    norm_tensor = frobenius_norm(tensor)
    error = None
    
    # Initalize the weights to 1
    weights = np.ones(rank)
    
    # Avoid division by zero
    eps = 1e-12
    
    # Regularization term \alpha*I
    regularization = np.eye(rank)*l2_reg
    
    for iteration in range(n_iter_max):
        # Update A
        prod = B.T.dot(B)*C.T.dot(C) + regularization
        factor = unfold(tensor, 0).dot(khatri_rao(B, C))
        A = np.linalg.solve(prod.T, factor.T).T
        # Normalization (of the columns) of A
        weights = np.linalg.norm(A, ord=2, axis=0)
        A /= (weights[None, :] + eps)

        # Update B
        prod = A.T.dot(A)*C.T.dot(C) + regularization
        factor = unfold(tensor, 1).dot(khatri_rao(A, C))
        B = np.linalg.solve(prod.T, factor.T).T
        # Normalization of B
        weights = np.linalg.norm(B, ord=2, axis=0)
        B /= (weights[None, :] + eps)
        
        # Update C
        prod = A.T.dot(A)*B.T.dot(B) + regularization
        factor = unfold(tensor, 2).dot(khatri_rao(A, B))
        C = np.linalg.solve(prod.T, factor.T).T
        # Normalization of C
        weights = np.linalg.norm(C, ord=2, axis=0)
        C /= (weights[None, :] + eps)
        
        # Compute the reconstruction error
        prev_error = error
        rec = kruskal_to_tensor(weights, A, B, C)
        error = frobenius_norm(tensor - rec) / norm_tensor

        if iteration > 1:
            if tol and abs(prev_error - error) < tol:
                print('converged in {} iterations.'.format(
                      iteration))
                break

    return weights, A, B, C
\end{python}

Using our previously introduced tensor $T$ as an example, we can verify that our algorithm indeed does what it is supposed to:

\begin{python}
# decompose T into factors using CP
weights, A, B, C = parafac(T, 3)

# reconstruct the full tensor from these
rec = kruskal_to_tensor(weights, A, B, C)

# verify that the reconstruction is correct
np.testing.assert_array_equal(np.round(rec), T)
\end{python}

Let's now go over some aspects of the algorithm we just wrote, in particular, how we integrated unit-norm constraints on the columns of the factor, as well as $\ell_2$ regularization.

\paragraph{Normalization:} Within the CP decomposition method, after updating each factor, we further normalize it by dividing each column by its norm, as also done in equation~\eqref{eqn:ALS-mode1}. For example, for the first factor matrix A, we have:

\begin{python}
    # First, update the factor as previously
    A = np.linalg.solve(prod.T, factor.T).T

    # Normalize the columns
    weights = np.linalg.norm(A, ord=2, axis=0)
    A /= (weights[None, :] + eps)
\end{python}
We do similar normalization for the other two factor matrices B and C. 
Note that we have also added a tiny value \texttt{eps} to the normalization, where \texttt{eps} is defined as $10^{-12}$, which is close to machine precision. This additional term is used to avoid any division by zero. Note that, here, we are using float64, which has a machine epsilon of about $10^{-15}$, this would have to be adapted when changing the data type (e.g. to float32).

\paragraph{Broadcasting:}In the last line of the update of A, the expression \texttt{weights[None, :]} is equivalent to  \texttt{weights[np.newaxis, :]}. In other words, we add a dimension (of $1$) to weights, and consider it as a matrix of size \texttt{(1, rank)} rather than a vector of length \texttt{rank}. This allows us to use \emph{broadcasting}: \texttt{weight} is broadcasted to the same shape as the factor without actually duplicating the memory. This results in an efficient \emph{vectorized} operation which divides each element of each column of the factor by the norm of that column.

This concept of broadcasting can also be used to simplify our \texttt{kruskal\_to\_tensor} by replacing the matrix multiplication of the first factor \texttt{A} and \texttt{diag(weights)} with a simple element-wise multiplication:
\begin{python}
def kruskal_to_tensor(weigths, A, B, C):
    """Converts the kruskal form into a tensor
    """
    full_shape = (A.shape[0], B.shape[0], C.shape[0])
    # The main difference: we incorporate the weights
    unfolded_tensor = np.dot(A*weigths[np.newaxis, :], 
                             khatri_rao(B, C).T)
    return fold(unfolded_tensor, 0, full_shape)
\end{python}

\paragraph{Regularization:}In section~\ref{sec:als}, we also introduced an $\ell_2$ regularized version of the ALS.
The difference with the unregularized version is an additional term in the pseudo-inverse in the ALS updates; see Equation~\eqref{eqn:ALS-model-l2}. Considering a regularization parameter $\alpha = \texttt{l2\_reg}$, the update for factor \texttt{A} changes by the addition of a weighted identity matrix $\alpha I $ to the product $(\tl{B} \odot \tl{C})^\top (\tl{B} \odot \tl{C})$ and similarly for \texttt{B} and \texttt{C}. In the code, \texttt{np.eye(rank)} is the identity matrix of size $\texttt{rank} \times \texttt{rank}$.

\section{Example: Image Compression via Tensor Decomposition}

We now use our function to compress an image. We use as an example an image of a raccoon that comes shipped in with the SciPy library.

\begin{python}
from scipy.misc import face

# Load the face
image = face()

# Convert it to a tensor of floats
image = np.array(image, dtype=np.float64)

# Check the size of the image
print(image.shape)
# (768, 1024, 3)
\end{python}

Our image is a third order tensor of shape (height, width, 3), the last mode corresponding to the RGB channels (Red, Green, Blue), the way colors are encoded on your computer. You can see the original image in Figure~\ref{fig:raccoon}, in this case with a height of $768$ and a width of $1024$.

To visualize the tensor, we need a helper function to convert tensors of floats (typically stored into 64 bits) into an image, which consists of values stored into 8 bits. Here, a simple conversion suffices since the image already has a dynamic range between $0$ and $255$ as it was originally stored in $8$ bits.
If the image had a high dynamic range (higher than $255$) then a more complex transformation (\emph{tone mapping}) such as histogram equalization would be needed.

\begin{python}
def to_image(tensor):
    """convert a tensor of float values into an image
    """
    tensor -= tensor.min()
    tensor /= tensor.max()
    tensor *= 255
    return tensor.astype(np.uint8)
\end{python}

This type of conversion, called tone mapping, can be much more complex than this simple conversion. Since we have a dynamic range between $0$ and $255$, it is appropriate here, but in general, when converting an image from 32 bits to just 8, we might want to use more complex techniques such as histogram normalization.

\begin{figure}
\bc
\includegraphics[width=.4\linewidth]{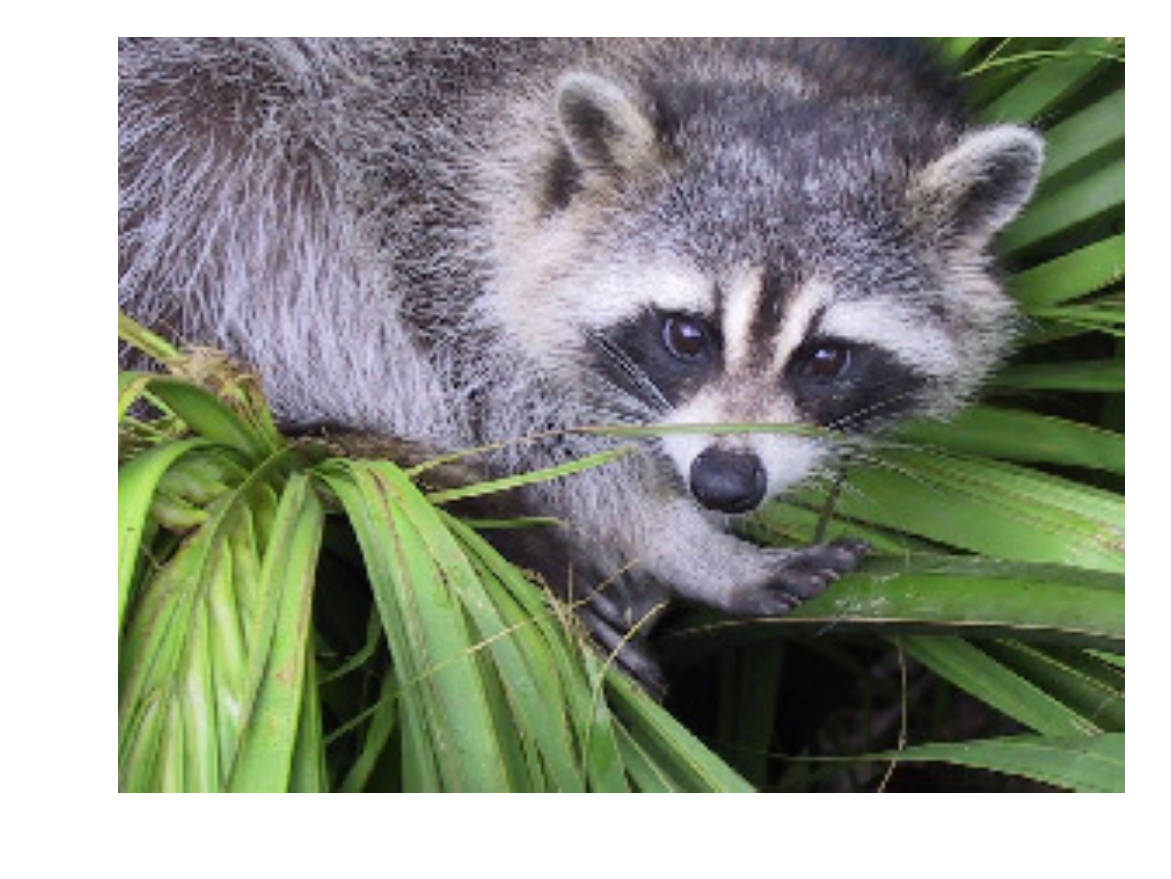}
\hspace{0.5in}
\includegraphics[width=.4\linewidth]{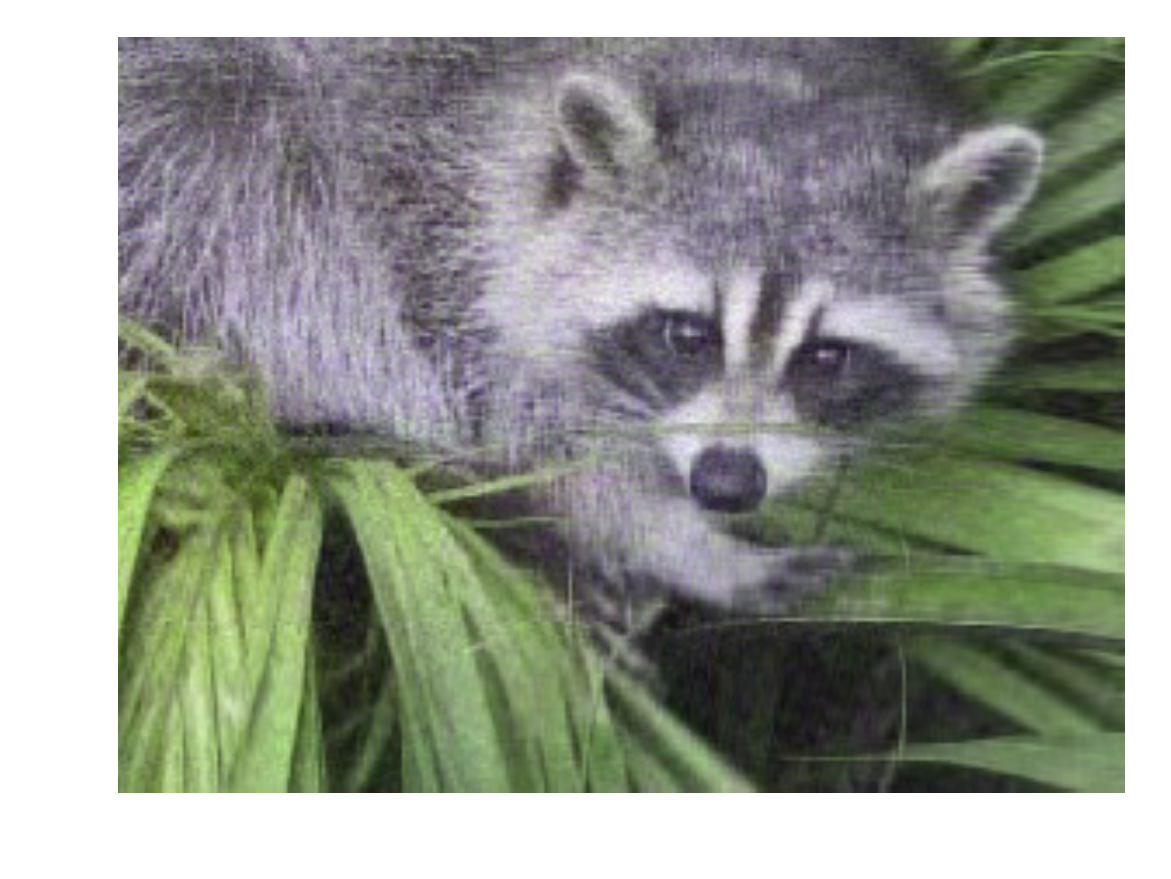}
\ec
\caption[]{Our beautiful guinea pig, which happens to be a raccoon. On the left, the original image, and on the right, the reconstructed image from the factors of the decomposition, with a rank $50$.}
\label{fig:raccoon}
\end{figure}

Now that the image is loaded in memory, we can apply our decomposition method to it, and build a reconstructed image from the compressed version, i.e., the factors of the decomposition,

\begin{python}
# Apply CP decomposition
weights, A, B, C = factors = parafac(image, rank=50, tol=10e-6)

# Reconstruct the full image
rec = kruskal_to_tensor(weights, A, B, C)
\end{python}

If you want to visualise the result, you can do so easily with Matplotlib:

\begin{python}
rec_image = to_image(rec)

#Import matplotlib to plot the image
import matplotlib.pyplot as plt
plt.imshow(rec_image)
plt.show()
\end{python}
The original image, shows in Figure~\ref{fig:raccoon}, has $\text{height} \times \text{width} \times \#\text{channels}$ $ = 768\times 1024 \times 3 = 2,359,296$ elements. The decomposition, on the other hand, expresses the image as a series of factors A, B and C containing respectively \(\text{height} \times \text{rank} = 768 \times 50\), 
\(\text{width} \times \text{rank} = 1024 \times 50\)
and \(\text{\# channels} \times \text{rank} = 3 \times 50\). 
In addition, we have to count the elements of \texttt{weights}, which is a vector of length equal to the rank. In total, the decomposition only has a total of less than $90,000$ parameters, or approximately $26\times$ less than the original image. Yet, as you can see in Figure~\ref{fig:raccoon}, the reconstructed image looks visually similar to the uncompressed image.

Note that the CP decomposition is not the best fit here, since the same rank is used for all modes, including the RGB channels. This is a case where a Tucker decomposition would be more adapted as we can select the \emph{Tucker rank} (or \emph{multi-linear rank}) to more closely match that of the input tensor.

\section{Going Further with TensorLy}
We have so far shown how to implement some basic tensor manipulation functions as well as a CP decomposition algorithm based on Alternating Least Squares method.
However, in practice, we want well-tested and robust algorithms that work at scale. This already exists in the TensorLy library, which implements the methods presented in this section, and several more including Tucker decomposition, Robust Tensor PCA, low-rank tensor regression, etc.

The easiest way is to install TensorLy with pip (by simply typing \emph{pip install tensorly} in the console). You can also install it directly from source at \url{https://github.com/tensorly/tensorly}.

When you have it installed, the usage is similar to what we have introduced above:

\begin{python}
import tensorly as tl
import numpy as np

# Create a random tensor:
T = tl.tensor(np.random.random((10, 10, 10)))

# unfold the tensor:
unfolding = tl.unfold(T, mode=0)

# fold it back into a tensor
tl.fold(unfolding, mode=0, shape=tl.shape(T))
\end{python}

Decompositions are already implemented and can be readily applied to an input tensor:
\begin{python}
from tensorly.decomposition import parafac, tucker

# CP decomposition 
weights, factors = parafac(T, rank=3, normalize_factors=True)

# Tucker decomposition returns a core tensor and factor matrices
core, factors = tucker(T, ranks=[3, 2, 4])
\end{python}

You can also easily perform tensor regression using TensorLy, with a similar API that scikit-learn~\citep{scikit-learn} offers.
Refer to the website for a detailed tutorial\footnote{\url{https://tensorly.github.io/dev/}} and API guide.

\section{Scaling up with PyTorch}
All the examples we have presented so far used small tensors that fit nicely in the memory of most commodity laptops and could be run quickly on their CPUs. However, as the size of the data and the complexity of the algorithms grow, we need highly-optimized functions that run on both GPU and CPU and on several machines in parallel. Running in multi-machines setup introduces the challenge of distributed inference and training. These can be incredibly complex to implement correctly. Fortunately, libraries exist that take care of it for you and let you focus on the logic of your model. One notable such framework is PyTorch~\citep{pytorch}.

By default, TensorLy uses NumPy as its backend. However, you can easily switch to PyTorch, a deep learning framework optimized for running large scale methods. Once you have installed PyTorch, you can easily use it as a backend for TensorLy and have all the operations run transparently on multiple machines and GPU. While CPUs performs operations on tensors in a mostly sequential way, GPUs accelerate operations by running them efficiently in parallel: modern CPUs typically contain up to 16 cores, while a GPU has thousands of them.

\begin{python}
import tensorly as tl
import torch

# Use PyTorch as the backend
tl.set_backend('pytorch')

# Create a random tensor:
T = tl.tensor(np.random.random((10, 10, 10)))

type(T) # torch.Tensor!

# You can also specify where the tensor lives:
T = tl.tensor(np.random.random((10, 10, 10)), device='cuda:0')
\end{python}

Now, not only do all the algorithms in TensorLy run on GPU and CPU, you can also interface it easily with PyTorch and Deep Learning algorithms:

\begin{python}
# Let's create a random number generator (rng)
random_state = 1234
rng = tl.random.check_random_state(random_state)

# You can put your tensor on cpu or gpu
device = 'cpu' # Or 'cuda:0'

# Create a random tensor
shape = [5, 5, 5]
tensor = tl.tensor(rng.random_sample(shape), device=device)
\end{python}

We have created a random tensor, which we will try to decompose in the Tucker form. This time, however, we will optimize the factors using gradient descent.

And this is where the magic happens: we can attach gradients to the tensors,
using \texttt{requires\_grad} parameter.
\begin{python}
#Initialize a random Tucker decomposition of that tensor

# We choose a rank for the decomposition
rank = [5, 5, 5]

# We initialize a random Tucker core
core = tl.tensor(rng.random_sample(rank), requires_grad=True,
                 device=device)

# We create a list of random factors
factors = []
for i in range(tl.ndim(tensor)):
    factor = tl.tensor(rng.random_sample(
                        (tensor.shape[i], rank[i])),
                       requires_grad=True, device=device)
    factors.append(factor)

#Let's use the simplest possible learning method: SGD
def SGD(params, lr):
    for param in params:
        # Gradient update
        param.data -= lr * param.grad.data
        # Reset the gradients
        param.grad.data.zero_()
\end{python}

Now we can iterate through the training loop using gradient backpropagation:

\begin{python}
n_iter = 7000
lr = 0.01
penalty = 0.1

for i in range(1, n_iter + 1):
    # Reconstruct the tensor from the decomposed form
    rec = tl.tucker_to_tensor((core, factors))

    # l2 loss 
    loss = tl.norm(rec - tensor, 2)

    # l2 penalty on the factors of the decomposition
    for f in factors:
        loss = loss + penalty * tl.norm(f, 2)

    loss.backward()
    SGD([core] + factors, lr)

    if i 
        rec_error = tl.norm(rec - tensor, 2)/tl.norm(tensor, 2)
        print("Epoch {},. Rec. error: {}".format(i, rec_error))

    if i 
        # Learning rate decay every 3000 iterations
        lr /= 10
\end{python}

You will see the loss gradually go down as the approximation improves. You can verify that the relative reconstruction error is indeed small (we compute the error within a \texttt{no\_grad} context as we do not want to compute gradients here):
\begin{python}
with torch.no_grad():
    # reconstruct the full tensor from these
    rec = tl.tucker_to_tensor((core, factors))
    relative_error = tl.norm(rec - tensor)/tl.norm(tensor)
    
print(relative_error)
\end{python}

\paragraph{}
To conclude, we have demonstrated in this section how to go from theory to a working implementation of tensor methods. These are powerful tools that can be efficiently leveraged using TensorLy. 
Using PyTorch and TensorLy together, you can easily combine tensor methods and Deep Learning, and run your model at scale across several machines and GPUs on millions of data samples. Next, we will discuss further practical considerations of tensor decomposition such as running time, memory usage, and sample complexity.
\clearpage{}

\clearpage{}\chapter{Efficiency of Tensor Decomposition}

In this section, we discuss the running time, memory usage and sample complexity for algorithms based on tensor decomposition.

Tensors are objects of very high dimensions; even a 3rd order $d\times d\times d$ tensor with $d = 10,000$ is already huge and hard to fit into memory of a single machine. A common misconception about tensor decomposition algorithms is that they need to use at least $\Theta(d^3)$ memory, running time and number of samples, because the intuition is one needs at least one unit of resource for each entry in the tensor. This is in fact far from truth and the requirements on these three resources can be much smaller than $O(d^3)$. Furthermore, many tensor algorithms can be naturally parallelized and some of them can be run in an online fashion which greatly reduces the amount of memory required.

\section{Running Time and Memory Usage}

Storing a tensor explicitly as a high dimensional array  and directly performing the computations on the explicit tensor can be very expensive. However, when applied to learning latent variable models and more generally when the tensor has an intrinsic lower dimensional structure, the tensor decomposition algorithms can often be made efficient.

\paragraph{Number of Components:}Latent variable models represent observed variables using hidden variables, e.g., Gaussian mixture model with hidden Gaussian components, topic models with hidden topics and many other models that we described in Section~\ref{sec:applications}. The good news is in most of the cases, the number of hidden components $k$ is often much smaller than the dimension $d$ of observed variables. For example, in topic modeling, the dimension $d$ is equal to the number of words in vocabulary, which is at least in the order of thousands, while the number of topics can be $k = 100$ in many applications. In these cases, after applying the Whitening Procedure  proposed in Procedure~\ref{algo:whitening} we only need to work with a $k\times k\times k$ tensor which is easy to store in memory and allow for efficient computations.

\subsection{Online Tensor Decomposition}

Even when the number of components is large, it is still possible to run many tensor decomposition algorithms without explicitly constructing the tensor. This is because in most of the algorithms we only need to consider the effect of the tensor applied to vectors/matrices and not the whole tensor itself.

\paragraph{Tensor Power Method:}It is very straightforward to convert each iteration of tensor power method to an online algorithm.
In many cases, given samples $x^{(1)},\dotsc,x^{(n)}$, the empirical tensor that we estimate can be represented as $\frac{1}{n}\sum_{i=1}^n S(x^{(i)})$ where $S(\cdot)$ is a function that maps a sample to a tensor.
As an example, consider the multi-view model as explained in Section~\ref{sec:multi}. Each sample $x$ consists of three views $(x_1,x_2,x_3)$. Let $S(x) = x_1\otimes x_2\otimes x_3$, then we desire to estimate the mean tensor $\E[S(x)]$. Given $n$ samples $\{(x^{(i)}_1,x^{(i)}_2, x^{(i)}_3), i\in [n]\}$, then the estimated empirical tensor is
\begin{equation}
\hat{T} = \frac{1}{n}\sum_{i=1}^n S(x^{(i)}) =\frac{1}{n} \sum_{i=1}^n x^{(i)}_1\otimes x^{(i)}_2\otimes x^{(i)}_3. \label{eq:empiricaltensor}
\end{equation}
In tensor power method, the main iteration in~\eqref{eq:map2} involves applying the tensor $\hat{T}$ to vectors $u,v$, which can be easily done as
\begin{equation}
\hat{T}(u, v, I) = \frac{1}{n} \sum_{i=1}^n \inner{x^{(i)}_1,u}\inner{x^{(i)}_2,v}x^{(i)}_3. \label{eq:onlinepower}
\end{equation}
Clearly, using this formula we only need to compute two inner-products for each sample, and the algorithm never needs to store more than a constant number of vectors. 

\begin{claim}[Online Tensor Power Iteration] In many settings, one iteration of tensor power method can be done in time $O(nd)$, where $n$ is the number of samples and $d$ is the dimension. If number of samples is large enough, the algorithm is guaranteed to find an accurately estimated component in $O(nd\log d)$ time with high probability.
\end{claim}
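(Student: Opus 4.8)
The plan is to read off the per-iteration cost directly from the online representation~\eqref{eq:onlinepower}. A single step of the map~\eqref{eq:map2} applies $\hat T$ to two copies of the current iterate; writing $\hat T(I,\bar\theta,\bar\theta)=\frac1n\sum_{i=1}^n\langle x_2^{(i)},\bar\theta\rangle\,\langle x_3^{(i)},\bar\theta\rangle\,x_1^{(i)}$, for each sample $i$ we compute the two scalars $\langle x_2^{(i)},\bar\theta\rangle$ and $\langle x_3^{(i)},\bar\theta\rangle$ in $O(d)$ time and add the scaled vector $\langle x_2^{(i)},\bar\theta\rangle\langle x_3^{(i)},\bar\theta\rangle\,x_1^{(i)}$ to a running accumulator in another $O(d)$ time; a single final normalization of the accumulator is $O(d)$. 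Hence one power iteration costs $O(nd)$, and since the samples may be streamed one at a time with only the accumulator and the iterate retained, the working memory is $O(d)$ beyond the data. The same bound survives deflation, as each already-recovered pair $(\hat\lambda_j,\hat v_j)$ contributes only an $O(d)$ correction per step. This establishes the first sentence.

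For the total running time I would plug the $O(nd)$ per-step cost into the convergence guarantees already proved. When the tensor is (approximately) orthogonally decomposable — directly, or after the whitening reduction of Procedure~\ref{algo:whitening} — Lemma~\ref{lemma:fixed-point} gives quadratic convergence once the iterate lies in the right basin: with a random unit start the contraction ratio $|\lambda_2 v_2^\t\theta_0/\lambda_1 v_1^\t\theta_0|$ is below $1-1/\poly(k)$ with constant probability, so $O(\log k)$ steps drive the error below a constant and $O(\log\log(1/\delta))$ further genuinely quadratic steps reach target accuracy $\delta$. This is exactly the requirement $N\ge C_2(\log k+\log\log(\lambda_{\max}/\eps))$ of Theorem~\ref{thm:robustpower}; since $k\le d$ and $\delta$ is polynomially small in $d$, this is $N=O(\log d)$, so one restart costs $N\cdot O(nd)=O(nd\log d)$. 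A random restart lands in a good basin with constant probability, so $O(\log(1/\eta))$ restarts recover one component with probability $1-\eta$, which is absorbed into the stated bound. The accuracy claim comes from the perturbation picture: writing $\hat T=T+E$ with $T$ the clean population tensor, standard concentration for averages of i.i.d.\ bounded tensors gives $\|E\|=O(1/\sqrt n)$ w.h.p., and ``$n$ large enough'' is precisely the requirement $\|E\|\le C_1\lambda_{\min}/k$ of Theorem~\ref{thm:robustpower}; the returned component then satisfies $\|v_{\pi(j)}-\hat v_j\|\le 8\|E\|/\lambda_{\pi(j)}=O(1/\sqrt n)$. If whitening precedes the power iteration, one substitutes the combined error $\epsilon_{T_W}$ of Theorem~\ref{thm:robust\_with\_whiten} for $\|E\|$ throughout.

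The main obstacle is not any single inequality but the bookkeeping at the interfaces. First, making ``$n$ large enough'' explicit requires verifying the tensor concentration bound $\|E\|=O(1/\sqrt n)$ in the model at hand, which hinges on boundedness or sub-Gaussianity of the sample map $S(\cdot)$. Second, and most delicate for a clean $O(nd\log d)$ statement, is the whitening preprocessing: forming the rank-$k$ SVD of the second-moment matrix $M$ through online matrix-vector products is itself $\tilde O(ndk)$, so the $O(nd\log d)$ figure really refers to the power-iteration phase once whitening is done (or to the regime where $k=O(1)$), and this caveat should be stated rather than hidden. Third, folding the restart count $L$ and the $\log\log(1/\delta)$ precision term into the single factor $\log d$ is only literally correct up to lower-order ($\polylog$) factors, so the cleanest honest version of the claim is that the tensor power phase runs in $\tilde O(nd\log d)$ time. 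Everything else is a direct substitution of Lemma~\ref{lemma:fixed-point} and Theorem~\ref{thm:robustpower} into the $O(nd)$ per-step cost.
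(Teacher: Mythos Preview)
Your proposal is correct and follows exactly the argument the paper intends: the paper itself does not give a separate proof of this claim but simply states it after displaying the online formula~\eqref{eq:onlinepower} and observing that each sample contributes two inner products and one scaled-vector accumulation. Your write-up makes explicit the second half of the claim by invoking the iteration bound $N=O(\log k+\log\log(\lambdamax/\eps))$ from Theorem~\ref{thm:robustpower}, which is precisely the intended justification; your caveats about the restart factor $L$, the whitening cost, and the honest $\tilde O(\cdot)$ reading are well taken and in fact more careful than the paper's informal statement.
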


\paragraph{Alternating Least Squares:}ALS method relies on repeatedly solving least square problems; see Algorithm~\ref{algo:als} for the details. To simplify the discussion, we focus on one step of the algorithm, where we are given matrices $A$, $B$, eigenvalues $\lambda$ and want to find $C$ such that $\sum_{j=1}^k \lambda_j a_j\otimes b_j\otimes c_j$ is as close to the empirical tensor $\hat{T}$ as possible; this is what Step 5 in Algorithm~\ref{algo:als} does. All other steps are symmetric and can be computed similarly.

First, we observe that the problem can be decoupled into $d$ sub-problems \--- one for finding each row of $C$.
Consider the variant of Equation~\eqref{eqn:ALSoptMod} for updating matrix $C$ (when $A$, $B$ and $\lambda$ are fixed), and pick the $i$-th row of $\mat(T,3) \in \R^{d \times d^2}$ and matricize it to a $d \times d$ matrix. This leads to the following set of sub-problems to solve for different rows of matrix $C$ denoted by $C^{(i)}$,
$$
\min_{C^{(i)}} \ \Bigl\|\hat{T}(I, I, e_i) - \sum_{j=1}^k \lambda_j C_{i,j} a_jb_j^\top \Bigr\|_F, \quad i \in [d].
$$
These $d$ sub-problems can be solved in parallel which makes it faster to run ALS.

Furthermore, we can use efficient gradient-based methods in the context of online learning even without exploiting parallelization as above. Recently there has been a lot of research on using online gradient-based algorithms to solve least square problems~\citep{shalev2013stochastic,johnson2013accelerating}, and they can all be applied here. A common assumption in these works is that the objective function can be decomposed into the sum of $n$ terms, where the gradient for each term can be computed efficiently. More precisely, the optimization should be of the form
\begin{equation}
\min_{C} \sum_{i=1}^n f \left(C, x^{(i)}\right). \label{eq:problem}
\end{equation}
The guarantee for these online algorithms can be stated in the following informal statement.
\begin{claim}\label{clm:fastleastsquare}
Suppose the objective function in \eqref{eq:problem} is well-conditioned and $n$ is large enough, and the time for computing the gradient for a single $f$ is $T$. Then, there exist algorithms that can find the optimal solution with accuracy $\epsilon$ in time $O(Tn\log n/\epsilon)$. 
\end{claim}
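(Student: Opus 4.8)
The plan is to recognize Claim~\ref{clm:fastleastsquare} as an instance of the well-studied convergence theory for \emph{variance-reduced stochastic gradient} methods on smooth, strongly convex finite-sum problems, i.e., exactly the setting of~\citep{shalev2013stochastic,johnson2013accelerating}. The first step is to set up the reduction honestly. Each of the ALS sub-problems (for instance, Step~5 of Algorithm~\ref{algo:als}, or its row-decoupled version) is a least squares problem, so the objective $\sum_{i=1}^n f(C,x^{(i)})$ is a convex quadratic in $C$: its gradient is affine and each summand $f(\cdot,x^{(i)})$ is $L$-smooth. The hypothesis that the problem is ``well-conditioned'' is precisely the statement that the Hessian $\sum_{i=1}^n \nabla^2 f(C,x^{(i)})$ has smallest eigenvalue bounded away from $0$ (strong convexity with some parameter $\mu$) with condition number $\kappa = L/\mu = O(1)$; in the ALS case this is a bound on the singular values of the Khatri--Rao factor, equivalently on the Gram matrix $\tl B^\top \tl B * \tl C^\top \tl C$ (see~\eqref{eqn:ALS-mode1}). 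One also checks that $\nabla_C f(C,x^{(i)})$ is computable in time $O(T)$; in the moment-tensor applications of Section~\ref{sec:multi} and Section~\ref{sec:applications} each summand depends on $x^{(i)}$ only through a constant number of inner products, exactly as in~\eqref{eq:onlinepower}, so $T = O(dk)$ there.

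Next I would instantiate a variance-reduced method --- SVRG is the cleanest to describe --- and read off its cost. Each \emph{epoch} of SVRG computes one exact (batch) gradient, at cost $O(Tn)$, and then performs $\Theta(n)$ stochastic-gradient updates, each at cost $O(T)$, so an epoch runs in time $O(Tn)$. The standard guarantee is that, with an appropriately chosen step size and since $\kappa = O(1)$, SVRG contracts the optimization error (the function-value gap, or $\E\norm{C_t - C^\star}^2$) by a constant factor per epoch. Hence $O(\log(1/\epsilon))$ epochs suffice to reach accuracy $\epsilon$, giving total running time $O(Tn\log(1/\epsilon))$; the $O(Tn\log n/\epsilon)$ bound stated in the claim is a loose version of this, and since the claim is explicitly informal I would either record the tighter $\log(1/\epsilon)$ dependence or leave it as stated. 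A minor point to dispatch is the meaning of ``accuracy'': I would phrase the guarantee for $\norm{C - C^\star}$ (or for the reconstruction-error gap), using strong convexity to pass from a small function-value gap to a small $\norm{C-C^\star}$ up to the $O(1)$ condition number.

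The main obstacle is not the optimization machinery, which is essentially a citation, but making the reduction genuinely rigorous: pinning down the constants $L$ and $\mu$ explicitly in terms of the current ALS factor matrices and verifying that the informal ``well-conditioned'' hypothesis really supplies the strong-convexity parameter the black-box theorems require, rather than merely being suggestive. Closely tied to this is explaining how ``$n$ large enough'' enters --- it is there so that the empirical Hessian concentrates around its population counterpart, ensuring the population condition number (not a degenerate empirical one) governs the rate. I would therefore devote most of the write-up to stating the smoothness and strong-convexity constants cleanly for the ALS least squares step, and then quote the SVRG/SDCA convergence bound as a black box.
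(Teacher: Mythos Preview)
Your proposal is correct in spirit and in fact supplies substantially more detail than the paper itself. In the paper, Claim~\ref{clm:fastleastsquare} is not proved at all: it is presented explicitly as an ``informal statement'' summarizing the guarantees of variance-reduced stochastic methods, with the proof outsourced entirely to the citations \citep{shalev2013stochastic,johnson2013accelerating}. So your plan---identify the ALS subproblem as a smooth, strongly convex finite-sum, then invoke SVRG/SDCA as a black box---is exactly the intended reading, and your discussion of what ``well-conditioned'' and ``$n$ large enough'' should mean goes beyond what the paper spells out.

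One small remark: you parse the bound as $O(Tn\log(n/\epsilon))$ and note that the clean SVRG rate is $O(Tn\log(1/\epsilon))$. That is the right instinct; the claim is deliberately loose and informal, and the paper does not attempt to justify the extra $\log n$ or to pin down constants. There is no hidden argument you are missing.
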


In other words, when the problem is well-conditioned, the algorithms only need a few passes on the data set to find an accurate solution. Having these results for gradient-based methods, we now convert the objective function of ALS to a form similar to the one in~\eqref{eq:problem}. Again suppose we are in the setting that the empirical tensor can be computed as average of $S(x^{(i)})$'s; see Equation~\eqref{eq:empiricaltensor}. Recall the original objective function for ALS is
$$
\min_{C} \Bigl\| \ \sum_{j=1}^k \lambda_j a_j\otimes b_j\otimes c_j - \E_{i\in[n]} S(x^{(i)}) \Bigr\|_F^2,
$$
where $\E_{i\in[n]}(\cdot) := \frac{1}{n} \sum_{i \in [n]} \cdot^{(i)}$. For any random variable $X$, we know 
$$(a - \E[X])^2 = \E[(a-X)^2] - \E[(X-\E[X])^2].$$
Therefore, we can rewrite the objective function as
$$
\min_{C} \ \frac{1}{n} \sum_{i=1}^n \Bigl\| \sum_{j=1}^k \lambda_j a_j\otimes b_j\otimes c_j - S(x^{(i)}) \Bigr\|_F^2 - \frac{1}{n} \sum_{i=1}^n \|S(x^{(i)}) - \hat{T}\|_F^2,
$$
where $\hat{T} := \frac{1}{n} \sum_{i \in [n]} S(x^{(i)})$. The second term does not depend on $C$, so it can be ignored in the optimization problem. Let 
$$f(C, x^{(i)}) := \Bigl\| \sum_{j=1}^k \lambda_j a_j\otimes b_j\otimes c_j - S(x^{(i)}) \Bigr\|_F^2,$$
and thus, we have rewritten the objective function as $\frac{1}{n} \sum_{i=1}^n f(C,x^{(i)}),$
which is exactly the form required in~\eqref{eq:problem}. The gradient of $f$ functions w.r.t.\ to the columns of matrix $C$ denoted by $C_t$ can also be computed as
$$
\frac{\partial}{\partial C_t} f(C,x^{(i)}) = 2\lambda_t\sum_{j=1}^k \lambda_j  \inner{a_j,a_t}\inner{b_j,b_t} C_{j} - 2\lambda_t S(x^{(i)})(a_t,b_t,I).
$$
Computing this stochastic gradient for all the entries of matrix $C$, i.e., all $C_{i,j}$'s only take $\Theta(k^2d)$ time. Then, combined with Claim~\ref{clm:fastleastsquare} allows the least squares problem to be solved efficiently. However, from an arbitrary initialization, we do not have any theoretical bounds on the condition number of these least-squares problems, or the number of iterations it takes ALS to converge. Theoretical analysis of ALS algorithm is still an open problem.

\section{Sample Complexity}

One major drawback of tensor decomposition algorithms is that they often require a fairly large number of samples. A large number of samples may be hard to get in practice, and can also slow down the algorithms \--- as we just saw, many of the tensor decomposition algorithms can be implemented so that they only need to go through the data set small number of times.

A misleading intuition argues that in order to estimate every entry of an $d\times d\times d$ tensor to an accuracy of $\epsilon$, one would need $d^3/\epsilon^2$ samples, which is often too large to be practical. However, this argument is based on the incorrect assumptions that 1) each sample is highly noisy and only provide a small amount of information; 2) the tensor decomposition algorithms require every entry of the tensor to be estimated accurately. The real number of samples required is distinct for different applications, and is far from well-understood.

\subsection{Tensor Concentration Bounds} In tensor decompositions, often we do not need to estimate every entry of the tensor. Instead, we would like to approximate the tensor in a certain norm, e.g., spectral norm, Frobenius norm and other norms based on Sum-of-Squares relaxations are often used. For a specific norm, tensor concentration bounds give estimates on how many samples we need in order to estimate the tensor within some error $\epsilon$. 

When the norm is the Frobenius norm, or the spectral norm of some unfolded version of the tensor (matricized version), the problem can be reduced to vector concentration bounds or matrix concentration bounds. There has been a lot of research on matrix concentrations, many popular bounds can be found in \cite{tropp2012user}.

\paragraph{Tensors with Independent Entries:}For the spectral norm of the tensor, one of the first concentration bounds is by \citet{latala2005some}, and later generalized in \citet{nguyen2010tensor}. They consider the case when there is a random tensor $T$ whose entries are independent random variables with zero mean. For simplicity, we state the following corollary to give a flavor on what they provide.

\begin{corollary}[Corollary 3 of \cite{nguyen2010tensor}] \label{cor:iidtensor} Suppose order-$p$ tensor $T \in \R^{d_1\times d_2\times \cdots \times d_p}$ has i.i.d.\ standard Gaussian entries. Then for every $p$, there exists a constant $C_p > 0$ such that with high probability,
$$
\|T\|^2 \le C_p \max\{d_1,d_2,\dotsc,d_p\}.
$$
\end{corollary}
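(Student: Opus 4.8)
The plan is to prove this by a standard $\epsilon$-net argument combined with Gaussian tail bounds, which is exactly the route taken by \citet{nguyen2010tensor} (the corollary being just the Gaussian instance of their more general statement). Recall that the spectral norm is the multilinear form at its worst-case unit arguments,
\[
\|T\| = \sup_{\|u_1\| = \dotsb = \|u_p\| = 1} T(u_1, u_2, \dotsc, u_p) = \sup_{\|u_j\|=1} \sum_{i_1,\dotsc,i_p} T_{i_1,\dotsc,i_p} \, u_1(i_1) \cdots u_p(i_p).
\]
First I would observe that for \emph{fixed} unit vectors $u_1,\dotsc,u_p$ the scalar $T(u_1,\dotsc,u_p)$ is a linear combination of the independent standard Gaussians $T_{i_1,\dotsc,i_p}$, hence itself Gaussian with mean zero and variance $\prod_{j=1}^p \|u_j\|^2 = 1$; thus $\Pr[\,|T(u_1,\dotsc,u_p)| > t\,] \le 2 e^{-t^2/2}$.

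Second, for each mode $j$ I would fix an $\epsilon$-net $\mathcal{N}_j$ of the unit sphere $S^{d_j - 1}$ of size at most $(3/\epsilon)^{d_j}$, and union-bound the tail estimate over the product net $\mathcal{N}_1 \times \dotsb \times \mathcal{N}_p$, which has at most $(3/\epsilon)^{d_1 + \dotsb + d_p}$ points. Taking $t^2 = 2(d_1 + \dotsb + d_p)\ln(3/\epsilon) + 2s$ yields, with probability at least $1 - 2e^{-s}$, that $|T(u_1,\dotsc,u_p)| \le t$ for \emph{all} tuples of net points simultaneously. Since $d_1 + \dotsb + d_p \le p \max_j d_j$, this already bounds the supremum of $|T|$ over the net by $O_p(\sqrt{\max_j d_j})$.

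Third --- and this is the step that needs a little care --- I would pass from the net back to the full sphere by a telescoping comparison. Given arbitrary unit vectors $u_j$ with closest net points $u_j' \in \mathcal{N}_j$, writing $T(u_1,\dotsc,u_p) - T(u_1',\dotsc,u_p')$ as a sum of $p$ terms each replacing one coordinate at a time gives
\[
\bigl| T(u_1,\dotsc,u_p) - T(u_1',\dotsc,u_p') \bigr| \le \sum_{j=1}^p \bigl| T(u_1',\dotsc,u_{j-1}', u_j - u_j', u_{j+1}, \dotsc, u_p) \bigr| \le p \, \epsilon \, \|T\|,
\]
using $|T(w_1,\dotsc,w_p)| \le \|T\| \prod_j \|w_j\|$ and $\|u_j - u_j'\| \le \epsilon$. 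Hence $\|T\| \le \|T\|_{\mathrm{net}} + p\epsilon \|T\|$, and choosing $\epsilon = 1/(2p)$ gives $\|T\| \le 2\|T\|_{\mathrm{net}}$; with this choice the net has at most $(6p)^{d_1+\dotsb+d_p}$ points, so all the $p$-dependence is swallowed into a constant $C_p$, and combining with the second step yields $\|T\|^2 \le C_p \max_j d_j$ with high probability (taking, say, $s \asymp \max_j d_j$).

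The main obstacle is really just the net-comparison step in the multilinear setting: one has to be careful that the $p$ intermediate ``hybrid'' terms are each controlled by $\|T\|$ itself --- so that the inequality can be solved for $\|T\|$ --- and that $\epsilon$ is taken small enough relative to $p$ that the factor $1/(1-p\epsilon)$ stays bounded, which is precisely what forces the constant to depend on $p$. Everything else (the Gaussian variance computation, the covering number of the sphere, the union bound) is routine. For the fully general version in \citet{nguyen2010tensor} with merely independent, mean-zero entries, the only change is to replace the exact Gaussian tail in the first step by the appropriate concentration inequality for the linear form $\sum T_{i_1,\dotsc,i_p} u_1(i_1)\cdots u_p(i_p)$, at the cost of a more delicate dependence on the moments of the entries.
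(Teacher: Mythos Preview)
Your $\epsilon$-net argument is correct for the Gaussian case stated in the corollary; the variance computation, the covering number, the union bound, and the hybrid comparison to pass from the net to the full sphere are all fine, and the choice $\epsilon = 1/(2p)$ correctly absorbs the order into the constant $C_p$.

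Note, however, that the paper does not supply its own proof of this corollary: it is quoted from \citet{nguyen2010tensor}, and the paper only remarks that the technique there (and in \citet{latala2005some}) is the ``entropy-concentration'' method, which argues about $T(v_1,\dotsc,v_p)$ \emph{separately} for sparse (low-entropy) and dense vectors $v_i$. That is not the plain $\epsilon$-net argument you give, so your statement that it is ``exactly the route taken'' by the original reference is inaccurate. For i.i.d.\ Gaussians the two approaches coincide in strength --- your simpler route already suffices because $T(u_1,\dotsc,u_p)$ is exactly $\mathcal{N}(0,1)$ and the sub-Gaussian tail beats the exponential net cardinality. Where the approaches diverge is precisely the generalization you sketch in your last paragraph: for merely independent mean-zero entries with, say, bounded fourth moment, the linear form $T(u_1,\dotsc,u_p)$ need not concentrate exponentially, so a naive union bound over $(6p)^{\sum d_j}$ net points fails; this is the reason Lata{\l}a introduced the sparse/dense decomposition in the first place. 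So your final remark that ``the only change is to replace the exact Gaussian tail by the appropriate concentration inequality'' understates the difficulty --- that replacement is exactly the nontrivial part, and it is what forces the more elaborate technique the paper alludes to.
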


This shows the spectral norm of a Gaussian tensor only depends on its largest dimension. More specifically, for a $d\times d\times d\times d$ tensor, its spectral norm is still with high probability $O(\sqrt{d})$, which is much smaller than its Frobenius norm $\Theta(d^2)$ or the spectral norm of an unfolded matricization $\Theta(d)$. The technique used in these papers is called the ``entropy-concentration''. The key idea is to argue about linear forms $T(v_1,\dotsc,v_p)$ separately for vectors $v_i$'s that are sparse (low entropy) and dense.

\paragraph{Tensors from Latent Variable Models:}
When the tensor is constructed from a latent variable model (see Section~\ref{sec:applications} for many examples), 
the coordinates of the tensor are often not independent. A case-by-case analysis is required. 
\citet{OvercompleteLVMs2014} analyzed the number of samples required for multi-view model and independent component analysis. The ideas used are again similar to the entropy concentration approach, except a vector is considered ``sparse'' if it has large correlation only with a few components. We provide the guarantee in a simple multi-view model; refer to~\citet{OvercompleteLVMs2014} for more detailed results.

\begin{corollary}
[Spectral Norm Bound for Multi-view Model by~\citet{OvercompleteLVMs2014}]
Consider a simple multi-view model where the latent variable has $k$ possibilities. Each sample is generated by first picking a hidden variable $h \in[k]$, and then observing $x_1 = a_h + \zeta_1$, $x_2 = b_h+\zeta_2$, $x_3 = c_h+\zeta_3$. Here for $h\in[k]$, $a_h$'s, $b_h$'s, $c_h$'s are $d$-dimensional conditional means and assumed to be random unit vectors, and $\zeta_1$, $\zeta_2$, $\zeta_3$ are independent random Gaussian noise vectors whose variance is 1 in each coordinate. Given $n$ samples $\{(x_1^{(i)}, x_2^{(i)}, x_3^{(i)}): i \in [n]\}$, let 
\begin{align*}
\hat{T} &:= \frac{1}{n}\sum_{i=1}^n x^{(i)}_1\otimes x^{(i)}_2\otimes x^{(i)}_3, \\
T &:= \frac{1}{n} \sum_{i=1}^n h_i \ a_{h_i}\otimes b_{h_i}\otimes c_{h_i},
\end{align*}
where $h_i$ denotes the true hidden value for the $i$-th sample. Then with high probability,
$$
\|\hat{T} - T\| \le O(\sqrt{d/n}\cdot \mbox{poly}\log n).
$$
\end{corollary}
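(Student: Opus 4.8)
The plan is to write $\hat{T} - T$ as a sum of terms depending on how many of the three "views" contribute noise rather than signal, then bound each term by a matrix/vector concentration argument after an appropriate unfolding or entropy-concentration reduction. First I would expand each factor of the empirical tensor: $x_1^{(i)} = a_{h_i} + \zeta_1^{(i)}$, and similarly for $x_2^{(i)},x_3^{(i)}$, so that
\begin{equation*}
\hat{T} = \frac{1}{n}\sum_{i=1}^n (a_{h_i}+\zeta_1^{(i)})\otimes(b_{h_i}+\zeta_2^{(i)})\otimes(c_{h_i}+\zeta_3^{(i)}).
\end{equation*}
Expanding the triple product gives $8$ terms; the pure-signal term is exactly $T$, so $\hat{T}-T$ is a sum of $7$ error terms, grouped by the number of $\zeta$'s appearing: three terms with one $\zeta$ (e.g. $\frac1n\sum_i a_{h_i}\otimes b_{h_i}\otimes \zeta_3^{(i)}$), three with two $\zeta$'s, and one with three $\zeta$'s. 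I would bound the spectral norm of each group separately and conclude by the triangle inequality.

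Next I would handle the individual groups. For the all-noise term $\frac1n\sum_i \zeta_1^{(i)}\otimes\zeta_2^{(i)}\otimes\zeta_3^{(i)}$, the entries are (conditionally on the $h_i$) independent-ish sums of products of independent Gaussians, and one can invoke a tensor concentration bound of the Latała/Nguyen type (Corollary~\ref{cor:iidtensor} and its generalizations) or, more simply, a symmetrization plus $\epsilon$-net argument over the unit sphere in each mode: for fixed unit $u,v,w$, $\frac1n\sum_i \langle\zeta_1^{(i)},u\rangle\langle\zeta_2^{(i)},v\rangle\langle\zeta_3^{(i)},w\rangle$ is an average of $n$ independent mean-zero sub-exponential random variables, hence concentrates at scale $1/\sqrt n$, and a union bound over a net of size $e^{O(d)}$ in each mode costs an extra $\sqrt{d/n}$ and polylog factors. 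For the mixed terms with one or two $\zeta$'s, the same net argument works, now using that $\|a_h\|,\|b_h\|,\|c_h\|$ are $O(1)$ (unit vectors), so these terms are actually smaller — the dominant contribution comes from the single-$\zeta$ terms, which give the claimed $O(\sqrt{d/n}\,\mathrm{poly}\log n)$.

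The main obstacle I expect is controlling the single-$\zeta$ terms sharply, because there the signal vectors $a_{h_i}$ vary with $i$ and the sum $\frac1n\sum_i a_{h_i}\otimes b_{h_i}\otimes\zeta_3^{(i)}$ does not have i.i.d.\ entries; a naive bound treating it as a $d\times d^2$ matrix acting on the $\zeta_3$'s would give $\Theta(\sqrt{d^2/n})=\Theta(d/\sqrt n)$, which is too weak. The fix, following \citet{OvercompleteLVMs2014}, is the entropy-concentration (or "spectral-to-Frobenius peeling") technique: decompose an arbitrary test vector into a part highly correlated with only a few of the $k$ component directions $\{a_h\}$ (the "sparse" part, handled by a small net on a low-dimensional subspace) and a part nearly orthogonal to all components (the "dense" part, whose contribution is averaged down). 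Carrying out this peeling carefully — choosing the right threshold, controlling the number of relevant components, and summing the dyadic contributions — is the technical heart, but since the components are random unit vectors in $\R^d$ with $k$ not too large, incoherence holds with high probability and the peeling closes with only polylogarithmic overhead. The remaining steps (the two- and three-$\zeta$ terms, and assembling the triangle inequality) are routine.
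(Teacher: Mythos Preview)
Your proposal is correct and matches what the paper describes. Note that the paper itself does not give a proof of this corollary; it only states the result and, in the surrounding text, points to \citet{OvercompleteLVMs2014} with the one-line summary that ``the ideas used are again similar to the entropy concentration approach, except a vector is considered `sparse' if it has large correlation only with a few components.'' Your plan---expand into the seven error terms, handle the pure-noise and mixed terms by a net argument, and use entropy-concentration/peeling with exactly this notion of sparsity for the signal-times-noise terms---is precisely that approach, and you correctly identify the single-$\zeta$ terms as the place where a naive matrix unfolding loses a $\sqrt d$ factor and the component-wise peeling is needed.
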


Note that in the above model, the noise is extremely high where the total norm of the noise is $O(\sqrt{d})$ compared to the norm of the signal $\|a_h\|=\|b_h\|=\|c_h\|= 1$. Even in this high-noise regime, it only takes $d \cdot \mbox{poly}\log d$ samples to estimate the tensor with constant accuracy in spectral norm. The result is tight up to $\mbox{poly}\log$ factors.

\subsection{Case Study: Tensor PCA and Tensor Completion}

Given a tensor $\hat{T} = T+E$ where $E$ is a perturbation tensor, concentration bounds give us tools to bound the norm of error $E$. However, different tensor decomposition algorithms may have different requirement on $E$. Finding the ``most robust'' tensor decomposition algorithm is still an open problem. In this section, we will describe recent progress in some specific problems.

\paragraph{Tensor PCA:}The model of tensor PCA is very simple. There is an unknown signal $v\in \R^d$ with $\|v\| = 1$. Now suppose we are given tensor
$$
\hat{T} = \tau \cdot v\otimes v\otimes v + E,
$$
where $E\in \R^{d\times d\times d}$ is a noise tensor whose entries are independent standard Gaussians, and $\tau \in \R$ is a scalar. The goal is to find a vector that is within a small constant distance to $v$ when $\hat{T}$ is given. The parameter $\tau$ determines the signal-to-noise ratio, and the problem is easier when $\tau$ is larger. This problem was originally proposed by \cite{richard2014statistical} as a simple statistical model for tensor PCA.

If the algorithm can take exponential time, then the best solution is to find the unit vector $u$ that maximizes $\hat{T}(u,u,u)$. By Corollary~\ref{cor:iidtensor} we know the spectral norm of $E$ is bounded by $O(\sqrt{d})$, and therefore, as long as $\tau \ge C\sqrt{d}$ for some universal constant $C$, the optimal direction $u$ has to be close to $v$.

However, when the algorithm is required to run in polynomial time, the problem becomes harder. The best known result is from \citet{hopkins2015tensor} as follows.

\begin{theorem}[\cite{hopkins2015tensor}] If $\tau = Cd^{3/4}$ for some universal constant $C$, then there is an efficient algorithm that finds a vector $u$ such that with high probability $\|u-v\|\le 0.1$. Moreover, no Sum-of-Squares algorithm of degree at most 4 can do better.
\end{theorem}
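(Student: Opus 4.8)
The statement has two halves: an \emph{algorithmic} half (a polynomial-time estimator exists once $\tau \geq Cd^{3/4}$) and a \emph{hardness} half (the degree-$4$ Sum-of-Squares relaxation fails once $\tau \ll d^{3/4}$). I would treat them separately, but both reduce to one sharp random-matrix estimate. For the algorithm, the upper bound can be obtained by rounding the degree-$4$ SoS relaxation, or more simply by the following spectral method, which is the one I would analyze. First symmetrize $\widehat T$ (this leaves $\tau v^{\otimes 3}$ unchanged), let $A_i := \widehat T(e_i, I, I) \in \R^{d\times d}$ be the $i$-th slice, and form
\begin{equation*}
M := \sum_{i\in[d]} A_i \otimes A_i \ \in \ \R^{d^2\times d^2}.
\end{equation*}
Substituting $\widehat T = \tau\,v^{\otimes 3} + E$ and expanding gives $M = \tau^2\,(vv^\top)\otimes(vv^\top) + (\text{terms linear in }E) + M_E$, where $M_E := \sum_i E_i\otimes E_i$ with $E_i := E(e_i,I,I)$. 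The signal term is rank one, with spectral norm $\tau^2$ and eigenvector $v\otimes v$. The mean $\E[M_E]$ is the explicitly known rank-one matrix $d\cdot\mathrm{vec}(I)\mathrm{vec}(I)^\top$, so it can simply be subtracted — equivalently one restricts $M$ to the subspace orthogonal to $\mathrm{vec}(I)$, on which $v\otimes v$ retains norm $\sqrt{1-1/d}$.

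The heart of the algorithmic analysis is then to prove, with high probability, $\|M_E - \E[M_E]\| \leq \widetilde{O}(d^{3/2})$ and that the $E$-linear cross terms have norm $\widetilde{O}(\tau\sqrt d + d^{3/2})$, by matrix concentration (matrix Bernstein from \cite{tropp2012user} together with a decoupling and $\varepsilon$-net/chaining argument in the spirit of \cite{latala2005some,nguyen2010tensor}). Taking $C$ large enough that $\tau^2 = C^2 d^{3/2}$ dominates all error terms, the Davis--Kahan bound (Theorem~\ref{thm:daviskahan}) shows that the top eigenvector $\widehat u$ of the recentered $M$ satisfies $\|\widehat u\widehat u^\top - (v\otimes v)(v\otimes v)^\top\|\le 1/100$; reshaping $\widehat u\in\R^{d^2}$ into a $d\times d$ matrix and extracting its leading singular vector (the truncated SVD being the best rank-one approximation, Theorem~\ref{thm:eckartyoung}, plus a standard singular-vector perturbation bound, cf.\ Theorem~\ref{thm:daviskahan}) yields $\widehat v$ with $\|\widehat v - v\|\le 0.1$. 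If a cleaner constant is wanted, one appends one robust power-iteration step on $\widehat T$ from $\widehat v$ (cf.\ Lemma~\ref{lemma:fixed-point}), which is safe since $\tau/\|E\| \gg 1$ once $\tau\ge Cd^{3/4}$, using $\|E\| = O(\sqrt d)$ from Corollary~\ref{cor:iidtensor}.

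For the hardness half, the plan is to exhibit, on a \emph{pure-noise} instance $\widehat T = E$ (i.e.\ $\tau = 0$), a degree-$4$ pseudo-expectation $\widetilde\E$ on polynomials in $u\in\R^d$ that ``hallucinates'' an objective value of order $d^{3/4}$, far above the true maximum $\max_{\|u\|=1}E(u,u,u) = O(\sqrt d)$. Concretely $\widetilde\E$ must satisfy $\widetilde\E[1]=1$, the sphere identities $\widetilde\E[(\|u\|^2-1)\,p(u)]=0$ for $\deg p\le 2$, the positivity conditions $\widetilde\E[q(u)^2]\ge 0$ for $\deg q\le 2$, and yet $\widetilde\E[E(u,u,u)]\ge c\,d^{3/4}$ with high probability. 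The principled construction is \emph{pseudo-calibration}: define the pseudo-moments $\widetilde\E[u^\alpha]$, $|\alpha|\le 4$, by projecting onto the low-degree part in the entries of $\widehat T$ the true conditional moments $\E_{(\widehat T,v)\sim\text{planted}}[u^\alpha\mid\widehat T]$; equivalently one may write $\widetilde\E = \widetilde\E_{\mathrm{sphere}} + (\text{correction linear in }E)$ with $\widetilde\E_{\mathrm{sphere}}$ the moments of the uniform measure on the unit sphere. One then checks (i) the objective is $\Theta(d^{3/4})$ by a direct expectation computation, and (ii) the degree-$4$ pseudo-moment matrix $\mathcal M(\widetilde\E)\in\R^{\binom{d+1}{2}\times\binom{d+1}{2}}$ stays PSD, which reduces to showing the $E$-linear correction matrix has spectral norm below the (constant-order, after normalization) spectral gap of $\mathcal M(\widetilde\E_{\mathrm{sphere}})$ — once more a random-matrix bound, handled by Weyl (Theorem~\ref{thm:weyl}) applied to a correction whose norm is controlled as above. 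Given such a $\widetilde\E$, any planted instance with $\tau = o(d^{3/4})$ has SoS-$4$ value $\Theta(d^{3/4})$ dominated by this noise floor, and the optimal pseudo-distribution places negligible mass near any fixed direction, so no rounding of the degree-$4$ relaxation can output a vector correlated with $v$.

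The main obstacle, common to both halves, is the sharp spectral estimate itself: the matrices in question ($M_E - \E[M_E]$ on the algorithmic side, the $E$-linear moment correction on the hardness side) are \emph{quadratic} in i.i.d.\ Gaussians, with strongly correlated entries, so they are not sums of independent PSD rank-one terms and matrix Bernstein does not apply off the shelf — one must decouple and run an entropy-based ``sparse vs.\ dense'' chaining argument to get the $\widetilde{O}(d^{3/2})$ bound rather than the trivial $O(d^2)$. On the hardness side there is the additional conceptual step of guessing the correct pseudo-moments while simultaneously meeting every SoS constraint; pseudo-calibration makes that guess canonical, after which PSD-ness is, again, the spectral bound above. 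Everything else — the signal computation, Davis--Kahan, the reshaping step, the power-iteration cleanup, and bookkeeping of universal constants — is routine.
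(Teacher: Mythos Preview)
The paper does not prove this theorem. It is stated as a cited result from \cite{hopkins2015tensor} in the ``Case Study: Tensor PCA and Tensor Completion'' subsection, with only a one-paragraph informal discussion afterward (placing $d^{3/4}$ between the information-theoretic $\sqrt d$ and the trivial $d$, and pointing to \cite{anandkumar2016homotopy}); no argument, sketch, or proof outline is given. So there is no ``paper's own proof'' to compare your proposal against.

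That said, your sketch is broadly in line with how the cited reference actually proceeds, so let me flag the small inaccuracies rather than the overall strategy. On the algorithmic side, your identification of the mean $\E[M_E]$ as the single rank-one matrix $d\cdot\mathrm{vec}(I)\mathrm{vec}(I)^\top$ is not right once $E$ has been symmetrized: the second moment of a symmetric Gaussian slice produces a small \emph{sum} of permutation/identity-type matrices on $\R^{d^2}$ (a constant number of ``structured'' directions), not a single rank-one term, and all of them must be projected out before the $\widetilde O(d^{3/2})$ fluctuation bound is usable. On the hardness side, calling the construction ``pseudo-calibration'' is anachronistic for \cite{hopkins2015tensor}: that paper builds the degree-$4$ pseudo-distribution more directly (the systematic pseudo-calibration recipe was formalized later, for planted clique), though your description of what the construction must achieve and why PSD-ness reduces to the same random-matrix estimate is accurate in spirit. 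Neither point is a genuine gap in the plan; they are places where the write-up would need to be tightened before it matches the actual argument.
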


The term $d^{3/4}$ in the above theorem is between the information theoretic limit $\sqrt{d}$ and the trivial solution that treats the tensor as a $d\times d^2$ matrix which gives $\Theta(d)$ bound for $\tau$. The problem can also be solved more efficiently using a homotopy optimization approach~\citep{anandkumar2016homotopy}. However, it seems there are some fundamental difficulties in going below $d^{3/4}$.

\paragraph{Tensor Completion:}A very closely related problem is called tensor completion. In this problem, we observe a random subset of entries of a low-rank tensor $T$, and the goal is to recover the original full low-rank tensor. \citet{barak2016noisy} provide a tight bound on the number of samples required to recover $T$.

\begin{theorem}[\cite{barak2016noisy}, informal]
Suppose tensor $T\in \R^{d\times d\times d}$ has rank $k$, given $n = d^{1.5} k \poly\log d$ random observations of the entries of the tensor, there exists an algorithm that recovers $T$ up to a lower order error term.
\end{theorem}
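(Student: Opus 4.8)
The plan is to follow the Sum-of-Squares (SoS) approach of \citet{barak2016noisy}. First I would recast tensor completion as a polynomial optimization problem: writing the unknown low-rank tensor as $T = \sum_{j\in[k]} a_j\otimes b_j\otimes c_j$ (in the symmetric case $a_j=b_j=c_j$; the asymmetric case is handled exactly as in Section~\ref{subsec:nonsymmetric}), the natural estimator minimizes the squared reconstruction error $\frac1n \sum_{(i_1,i_2,i_3)\in\Omega} \bigl( \hat T_{i_1,i_2,i_3} - \sum_j a_j(i_1)b_j(i_2)c_j(i_3)\bigr)^2$ over the revealed set $\Omega$ with $|\Omega| = n$. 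This objective is non-convex, so the first step is to pass to a constant-degree SoS relaxation: solve the associated semidefinite program to obtain a \emph{pseudo-distribution} $\tilde\mu$ over factor tuples whose pseudo-expected empirical error is no larger than the error achieved by the true factors (which is the ``lower order error term'' coming from the noise/approximation).

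The technical heart of the argument is a \emph{generalization bound}: I would show that small pseudo-expected error on the $n$ observed entries forces small pseudo-expected error on all $d^3$ entries, up to the target accuracy. This is a Rademacher-complexity / uniform-convergence statement, but it must be carried out \emph{inside} the SoS proof system so that the constant-degree relaxation can certify it. Concretely, one symmetrizes and reduces the generalization gap to (the SoS value of) the injective norm of a random tensor whose entries are independent signs supported on $\Omega$; bounding this norm is exactly a tensor/matrix concentration question of the flavor of Corollary~\ref{cor:iidtensor} and the matrix Bernstein inequalities in \cite{tropp2012user}, combined with the sparse-versus-dense ``entropy--concentration'' dichotomy used there. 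Tracking this norm carefully is what produces the threshold $n \gtrsim d^{1.5}k\,\poly\log d$: a naive flattening to a $d\times d^2$ matrix would demand $\approx d^2 k$ samples, while the genuine third-order structure --- visible only to the degree-$O(1)$ SoS relaxation --- saves a factor of $\sqrt d$.

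Finally, from the pseudo-distribution $\tilde\mu$ with small pseudo-error over all entries I would round: take $\hat T := \tilde{\mathbb E}_{\tilde\mu}\bigl[\sum_j a_j\otimes b_j\otimes c_j\bigr]$ (or a closely related rounding of the SoS moment matrix), and conclude that $\|\hat T - T\|_F$ is small using the convexity of the squared loss together with the fact that pseudo-expectations satisfy Jensen's inequality for squares. Some care is needed because the SoS variables encode the factors rather than the tensor directly, so the low-degree SoS proof of consistency must be transported through the rounding, but this is routine once the generalization bound is in place.

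The main obstacle is the generalization step. Uniform convergence over all rank-$k$ tensors normally relies on chaining / metric-entropy estimates, and the nontrivial part is to replace those by low-degree SoS certificates --- i.e., to exhibit, as a sum of squares of low-degree polynomials in the factor variables, an inequality of the form ``(true error) $\le$ (empirical error) $+$ (term controlled by a spectral norm)''. This is where the $d^{1.5}$ scaling is won or lost, and it is precisely the place where the polynomial-time barrier phenomena discussed for tensor PCA (the $d^{3/4}$ threshold of \cite{hopkins2015tensor}) reappear: without the SoS viewpoint one is stuck at the matrix-flattening bound $\approx d^2 k$, and pushing below $d^{1.5}k$ appears to run into the same computational obstructions.
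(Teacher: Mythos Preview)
The paper does not prove this theorem at all: it is stated as a cited result from \cite{barak2016noisy} (note the ``informal'' tag in the theorem header) and is followed only by a one-paragraph discussion of why the $d^{1.5}$ exponent is likely tight via the 3-XOR refutation connection. There is no argument in the paper to compare your proposal against.

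That said, your outline is a fair high-level sketch of the actual Barak--Moitra argument, with one caveat. In \cite{barak2016noisy} the SoS variables are not the factor vectors $a_j,b_j,c_j$ of a rank-$k$ decomposition; rather, one works with pseudo-distributions over \emph{single} unit vectors and uses them to define an SoS-certifiable surrogate of the tensor nuclear norm (the maximum of $\langle T,u\otimes v\otimes w\rangle$ over pseudo-distributions of bounded degree). The estimator is then the minimizer of this norm subject to fitting the observed entries, and the generalization step bounds the Rademacher complexity of the unit ball of this SoS norm --- this is exactly where the $d^{1.5}$ appears, via a degree-$O(1)$ SoS certificate for the spectral-type inequality you allude to. Your parametrization by $k$ factor vectors would make the SoS relaxation degree depend on $k$ and is not how the polynomial-time guarantee is obtained; the rest of your plan (Rademacher symmetrization inside SoS, rounding by pseudo-expectation, the $\sqrt d$ saving over matrix flattening) matches the original.
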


For small $k$, the term $d^{1.5} k \poly\log d$ in the above guarantee is again between information theoretic limit $\Theta(dk\log d)$ and the trivial solution that considers the tensor as a $d\times d^2$ matrix which gives  $\Theta(d^2k\log d)$. \cite{barak2016noisy} showed the $d^{1.5}$ dependency which is likely to be tight because improving this bound will also give a better algorithm for refuting random 3-XOR clauses (which is a conjectured hard problem~\citep{feige2002relations,grigoriev2001linear,schoenebeck2008linear}).
The algorithm is again based on Sum-of-Squares. Recently there were also several improvements in the recovery guarantees, see e.g., \citet{potechin2017exact} and references therein.\clearpage{}

\clearpage{}\chapter{Overcomplete Tensor Decomposition} \label{sec:overcomplete-decomp}

Unlike matrices, the rank of a tensor can be higher than its dimension. We call such tensors {\em overcomplete}. Overcomplete tensors can still have a unique decomposition; recall Theorem~\ref{thm:kruskal} for 3rd order tensors and see~\citet{Sidiropoulos2000:CPuniqueness} for higher order tensors. This is useful in the application of learning latent variable models: it is possible to learn a model with more components than the number of dimensions, e.g., a mixture of 100 Gaussians in 50 dimensions. 

However, finding a CP decomposition for an overcomplete tensor is much harder than the undercomplete case (when the rank is at most the dimension). In this section we will describe a few techniques for decomposing overcomplete tensors.

\section{Higher-order Tensors via Tensorization}

For higher order tensors, the most straightforward approach to handle overcomplete decomposition is to convert them to lower order tensors but in higher dimension. We call this approach as {\emph tensorization} and describe it in this section. For simplicity, we restrict our attention to symmetric tensors, but what we discuss here also applies to asymmetric tensors. Consider a 6th order tensor
$$T=\sum_{j=1}^k \lambda_j a_j^{\otimes 6},$$ with $d$-dimensional rank-1 components $a_j \in \R^d, j\in [k]$, and real weights $\lambda_j \in \R, j\in [k]$, where $k \gg d$. We can reshape this tensor as a 3rd order tensor $\hat{T} \in R^{d^2\times d^2 \times d^2}$ as follows. Let $$b_j := a_j \odot a_j \in \R^{d^2},$$ where $\odot$ denotes the Khatri-Rao product defined in~\eqref{eqn:KhatriRao}; note that with vector inputs, this works the same as Kronecker product. Then we have
$$
\hat{T} = \sum_{j=1}^k \lambda_j b_j^{\otimes 3}.
$$
We call this process of reshaping the tensor to a different order as tensorization. Now for the 3rd order tensor $\hat{T}$, if the rank-1 components $b_j$'s are linearly independent, we can use the tensor decomposition algorithms in Section~\ref{ch:tensor-decomp} to recover its rank-1 components $b_j$'s, and since $b_j = a_j \odot a_j$, the original rank-1 components $a_j$ is computed as the top singular vector of the matricized version of $b_j$. This whole approach is provided in Algorithm~\ref{algo:overcomplete}.

\floatname{algorithm}{Algorithm}
\begin{algorithm}[t]
\caption{Decomposing Overcomplete Tensors via Tensorization}
\label{algo:overcomplete}
\begin{algorithmic}[1]
\renewcommand{\algorithmicrequire}{\textbf{input}}
\renewcommand{\algorithmicensure}{\textbf{output}}
\REQUIRE tensor $T = \sum_{j=1}^k a_j^{\otimes 6}$
\ENSURE rank-1 components $a_j, j \in [k]$
\STATE Reshape the tensor $T$ to $\hat{T} \in \R^{d^2\times d^2\times d^2}$, each mode of $\hat{T}$ is indexed by $[d]\times [d]$, and $\hat{T}_{(i_1,i_2),(i_3,i_4),(i_5,i_6)} = T_{i_1,i_2,i_3,i_4,i_5,i_6}$.
\STATE Use Tensor Power Method (Algorithm~\ref{alg:robustpower}) to decompose $\hat{T} = \sum_{j=1}^k b_j^{\otimes 3}$ and recover $b_j$'s.
\STATE For each $b_j \in \R^{d^2}, j\in[k]$, reshape it as a $d \times d$ matrix, and let $(\lambda_j,v_j)$ be its top singular value-vector pair.
\RETURN $\sqrt{\lambda_j}v_j, j\in[k]$.
\end{algorithmic}
\end{algorithm}

If $k \le {d+1 \choose 2}$ and the vectors $a_j$'s are in general position, then the vectors $b_j$'s are going to be linearly independent. Recent work by~\citet{bhaskara2014smoothed} shows that if $a_j$'s are perturbed by a random Gaussian noise, the smallest singular value of matricized $b_j$'s are lower bounded (where the lower bound depends polynomially on the magnitude of the noise and the dimension). As a result this algorithm is robust to small amount of noise.

In Algorithm~\ref{algo:overcomplete}, we used tensor power iteration as the core tensor decomposition algorithm. It is worth mentioning that we can also use other tensor decomposition algorithms. In particular, if we use simultaneous diagonalization algorithm as proposed in Algorithm~\ref{algo:simdiag} instead of tensor power method, then it suffices to have a 5th order input tensor $T = \sum_{j=1}^k \lambda_j a_j^{\otimes 5}$. Again we can reshape the tensor as
$$
\hat{T} = \sum_{j=1}^k \lambda_j b_j\otimes b_j\otimes a_j.
$$
Even though the third mode still only has $d$ dimensions which is smaller than rank $k$ in the overcomplete regime, simultaneous diagonalization only requires the rank to be less or equal to the dimension of the first two modes, and thus, the algorithm can work.
 
The same idea can be also generalized to even higher order tensors. In general, if vectors $a_j^{\otimes r}$ are linearly independent, then we can apply simultaneous diagonalization algorithm to a $(2r+1)$-th order tensor and compute the unique tensor decomposition.
 
This algorithm can be applied to learning several latent variable models, as long as we have access to higher order tensors. In some applications such as pure topic models, this is fairly straightforward  as we only need to form the moment using the correlations of $p$ words instead of 3. In other applications this may require careful manipulations of the moments. In both cases, working with higher order tensors can potentially increase the sample complexity and running time of the algorithm.
 
\section{FOOBI Algorithm}

In practice, working with high-order tensors is often too expensive in terms of both sample complexity and running time. Therefore, it is useful to design algorithms that can handle overcomplete tensors when the order of the tensor is low, e.g., 3rd or 4th order tensors. For 4th order tensors, \citet{de2007fourth} proposed an algorithm called FOOBI (Fourth-Order-Only Blind Identification) that can work up to rank $k = Cd^2$ for some fixed constant $C > 0$. For simplicity, we again describe the algorithm for symmetric tensors $$T = \sum_{j=1}^k \lambda_j a_j^{\otimes 4},$$ and we will keep the notation $b_j := a_j \odot a_j$ as we had in the previous section. We will also assume $\lambda_j>0$ and the components are real-valued. All these requirements can be removed for this algorithm and interested readers are encouraged to check the original paper. We provide the FOOBI method in Algorithm~\ref{algo:foobi}. Intuitively, the algorithm has three main parts
\begin{enumerate}
\item Finding the span of vectors $\{b_j: j \in [k]\}$.
\item Finding the $b_j$'s.
\item Computing the $a_j$'s.
\end{enumerate}
Step 1 is done using a SVD operation and step 3 is achieved the same as what we discussed in the previous section for higher order tensors. The magic happens in step 2 where the algorithm uses a quadratic operator to detect rank-1 matrices. In the rest of this section, we describe these steps in more details.

\begin{algorithm}[h!]
\caption{FOOBI for Decomposing Overcomplete Tensors~\citep{de2007fourth}}
\label{algo:foobi}
\begin{algorithmic}[1]
\renewcommand{\algorithmicrequire}{\textbf{input}}
\renewcommand{\algorithmicensure}{\textbf{output}}
\REQUIRE tensor $T = \sum_{j \in [k]} \lambda_j a_j^{\otimes 4}$
\ENSURE rank-1 components $\{(\lambda_j, a_j)\}$'s
\STATE Reshape the tensor to a matrix $$M = \sum_{j=1}^k \lambda_j (a_j\odot a_j)(a_j\odot a_j)^\top \in \R^{d^2 \times d^2}.$$
\STATE Compute the SVD of $M$ as $M = UDU^\top$.
\STATE Let $L:\R^{d\times d}\to \R^{d^4}$ be a quadratic operator such that
$$
L(A)(i_1,i_2,j_1,j_2) = \det \left(\begin{array}{cc}A_{i_1,j_1} & A_{i_1,j_2} \\ A_{i_2,j_1} & A_{i_2,j_2} \end{array}\right).
$$
Let $\tilde{L}:\R^{d^2\times d^2}\to \R^{d^4}$ be the unique linear operator that satisfy $\tilde{L}(A\otimes A) = L(A)$.
\STATE Construct matrix $Z = \tilde{L} [(UD^{1/2})\otimes (UD^{1/2})]$. 
\STATE Let $y_1,y_2,\dotsc,y_k \in \R^{k^2}$ be the $k$ least right singular vectors of $Z$. 
\STATE Pick random vectors $u,v\in \mbox{span}(y_1,\dotsc,y_k)$, reshape them as  $k\times k$ matrices $U,V$.
\STATE Use Simultaneous Diagonalization (see Algorithm~\ref{algo:simdiag} in Section~\ref{sec:simdiag}) to express $U = QD_UQ^\top$ and $V = QD_VQ^\top$, where $D_U,D_V$ are $k\times k$ diagonal matrices and $Q\in \R^{k\times k}$ is shared between $U,V$.
\STATE Let $x_1,\dotsc,x_k\in \R^k$ be the columns of $Q$, $b_j = UD^{1/2}x_j$ for all $j\in[k]$.
\STATE For $j\in[k]$, reshape $b_j \in \R^{d^2}$ to $d\times d$ matrix and let $(\delta_j, v_j)$ be its top singular value-vector pair.
\RETURN $\{(\delta_j^2, v_j): j\in[k] \}$.
\end{algorithmic}
\end{algorithm}

\subsection{Finding Span of $\{b_j\}$'s}

In the first step, we try to find the span of the vectors $b_j$'s. This is very simple as we can matricize the tensor as
\begin{equation} \label{eqn:foobi-matrix}
M := T_{\{1,2\},\{3,4\}} = \sum_{j=1}^k \lambda_j b_j b_j^\top \in \R^{d^2 \times d^2},
\end{equation}
where $b_j := a_j \odot a_j \in \R^{d^2}$.
Therefore, we just need to compute the column span (or row span) of $M$, and it would corresponds to the span of vectors $\{b_j\}$'s. In order to make the algorithm more robust to noise, we use singular value decomposition to find the top singular values, and drop all the singular values that are very close to $0$.

\subsection{Finding $\{b_j\}$'s}

In the second step of the algorithm, we can view the vectors $b_j$'s as reshaped $d\times d$ matrices. For vector $u \in R^{d^2}$, the matricized version $\mbox{mat}(u) \in \R^{d \times d}$ is defined as
$$
\mbox{mat}(u)_{i,j} = u(d(j-1)+i), \quad i,j \in [d],
$$
 which is formed by stacking the entries of $u$ in the columns of the matrix.
Given the definition of $b_j$'s, we have $\mbox{mat}(b_j) = a_j a_j^\top$ which are rank-1 matrices. In addition, from the previous step, we know the linear subspace spanned by these matrices. Using the key observation that $\mbox{mat}(b_j)$ are all rank-1 matrices, we hope to recover $b_j$'s as follows.

Suppose the SVD of $M$ in~\eqref{eqn:foobi-matrix} is denoted by $UDU^\top$, where $U \in \R^{d^2 \times k}$ be an orthonormal matrix that represents the span of the vectors $\{b_j\}$'s.
Since we know 
$$M = (UD^{1/2})(UD^{1/2})^\top = \sum_{j=1}^k (\sqrt{\lambda_j}b_j)(\sqrt{\lambda_j}b_j)^\top,$$
there exists an orthogonal matrix $R$ such that the columns of $UD^{1/2}R$ are equal to $\{\sqrt{\lambda_j}b_j\}$'s. In order to find the vectors $b_j$'s, we need to find the columns of this orthogonal matrix $R$ denoted by $x_j$ such that $\mbox{mat}(UD^{1/2}x_j)$'s are rank-1 matrices; recall the above discussion that the matricized versions of $b_j$'s are rank-1 matrices.

Finding these $x_j$ directions is not an easy task. \citet{de2007fourth} show that it is possible to do this using a very interesting rank-1 detector.

\subsection{Rank-1 Detector}

For a matrix $A \in \R^{d\times d}$, we know $A$ is rank at most 1 if and only if determinants of all $2\times 2$ submatrices of $A$ are equal to $0$. In particular, for a symmetric matrix $A$, we can define a mapping $L(A)$ that maps $A$ to a $\mathcal{D} \approx d^4/8$ dimensional space, where each entry in $L(A)$ corresponds to the value of the determinant of a unique $2\times 2$ submatrix of $A$. The exact number of dimensions is $\mathcal{D} = {{d\choose 2}+1 \choose 2}$, because that is the number of 4-tuples $(i_1,i_2,j_1,j_2)$ where $i_1 < i_2$, $j_1<j_2$ and $(i_1,i_2)\le (j_1,j_2)$.

\begin{definition}[rank-1 detector] Function $L$ maps $d\times d$ symmetric matrices to $\mathcal{D} = {{d\choose 2}+1 \choose 2}$ dimensional space indexed by  $(i_1,i_2,j_1,j_2)$ where $i_1 < i_2$, $j_1<j_2$ and $(i_1,i_2)\le (j_1,j_2)$, where
$$
L(A)(i_1,i_2,j_1,j_2) = \det \left(\begin{array}{cc}A_{i_1,j_1} & A_{i_1,j_2} \\ A_{i_2,j_1} & A_{i_2,j_2} \end{array}\right).
$$
\end{definition}

It is easy to prove that this rank-1 detector indeed works. 

\begin{claim}
Symmetric matrix $A\in \R^{d\times d}$ is of rank at most 1, if and only if $L(A) = 0$. 
\end{claim}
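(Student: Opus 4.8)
The plan is to establish the two directions of the equivalence separately. The forward implication is immediate and does not even use symmetry; the reverse implication needs one small bookkeeping observation about which $2\times 2$ minors the operator $L$ actually records, after which it reduces to a standard fact about rank.

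First I would handle the ``only if'' direction. A matrix has rank at most $1$ precisely when $A = uv^\top$ for some $u,v \in \R^d$ (the cases $u=0$ or $v=0$ giving $A=0$). In that case $A_{i,j} = u_i v_j$, so for any indices $i_1,i_2,j_1,j_2$,
\[
A_{i_1,j_1}A_{i_2,j_2} - A_{i_1,j_2}A_{i_2,j_1}
= u_{i_1}v_{j_1}u_{i_2}v_{j_2} - u_{i_1}v_{j_2}u_{i_2}v_{j_1} = 0 ,
\]
hence every coordinate of $L(A)$ vanishes, i.e. $L(A)=0$.

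For the ``if'' direction, assume $L(A)=0$. The first step is to upgrade this to the statement that \emph{every} $2\times 2$ minor of $A$ vanishes, not only those indexed in the definition of $L$. A general $2\times 2$ minor comes from choosing distinct rows $p\neq q$ and distinct columns $r\neq s$; since swapping the two rows (or the two columns) only flips the sign of the determinant, we may assume $p<q$ and $r<s$. If $(p,q)\le (r,s)$ lexicographically, this minor is exactly $L(A)(p,q,r,s)=0$. If instead $(r,s)<(p,q)$, I would look at $L(A)(r,s,p,q) = \det\!\begin{pmatrix} A_{r,p} & A_{r,q}\\ A_{s,p} & A_{s,q}\end{pmatrix} = 0$; applying $A = A^\top$ entrywise turns this scalar into $A_{p,r}A_{q,s} - A_{q,r}A_{p,s}$, which is the minor on rows $\{p,q\}$ and columns $\{r,s\}$, so that minor vanishes as well. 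This is the one place where symmetry of $A$ is used, and it is the only step that requires any care: it amounts to checking that the lexicographic restriction $(i_1,i_2)\le(j_1,j_2)$ built into $L$ discards no information once $A=A^\top$ is invoked.

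Finally I would invoke the elementary fact that a matrix all of whose $2\times 2$ minors vanish has rank at most $1$, proving it directly for completeness: if $A=0$ we are done, and otherwise pick an entry $A_{p,q}\neq 0$; then for all $i,j\in[d]$ the vanishing of the minor on rows $\{i,p\}$ and columns $\{j,q\}$ gives $A_{i,j}A_{p,q}=A_{i,q}A_{p,j}$ (the degenerate cases $i=p$ or $j=q$ holding trivially), so $A_{i,j}=u_iv_j$ with $u_i:=A_{i,q}$ and $v_j:=A_{p,j}/A_{p,q}$, i.e. $A=uv^\top$ has rank at most $1$. Combining this with the previous paragraph completes the proof of the ``if'' direction, and hence of the claim.
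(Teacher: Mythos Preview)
Your proof is correct. The paper does not actually give a proof of this claim; it merely asserts that ``it is easy to prove that this rank-1 detector indeed works'' and moves on. Your argument is the natural one the paper is implicitly relying on, and you correctly identify the one non-obvious point: the index restriction $(i_1,i_2)\le(j_1,j_2)$ in the definition of $L$ means only about half of the $2\times 2$ minors are explicitly recorded, and symmetry of $A$ is precisely what recovers the rest.
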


The mapping $L(A)$ is quadratic in the entries of $A$. Therefore, if we apply $L$ to the matrix $Vx$, then $L(Vx)$ is also quadratic in the variable $x$. Na\"ively, $L(Vx) = 0$ would give a set of quadratic equations, and solving a system of quadratic equations is again hard in general. Luckily, we have a very large number of equations \--- $\mathcal{D} \approx d^4/8$. This allows us to use a {\em linearization} approach: instead of treating $L(Vx) = 0$ as a system of quadratic equations over $x$, we will {\em lift} the variables to $X = x \odot x$ and view $X$ as a $d+1\choose 2$ dimensional vector. Now in our problem, $L(UD^{1/2}x)$ is equal to a linear operator $\tilde{L}$ applied to $[(UD^{1/2})\otimes (UD^{1/2})]X$, i.e.,
$$L(UD^{1/2}x) = \tilde{L}([(UD^{1/2})\otimes (UD^{1/2})]X).$$ 

\begin{definition}[Linearized detector] Linearized rank-1 detector $\tilde{L}$ maps a ${d^2\times d^2}$ matrix such that 
$$
\tilde{L}(A\otimes A) = L(A).
$$
More precisely, for two matrices $A,B\in \R^{d\times d}$ we have
$$
\tilde{L}(A\otimes B)(i_1,i_2,j_1,j_2) = A_{i_1,j_1}B_{i_2,j_2} - A_{i_1,j_2} B_{i_2,j_1},
$$
which is the determinant of the $2 \times 2$ submatrix of $A$ when $A=B$.
\end{definition}

Since $\tl{L}$ is a linear operator,  we can represent it as a matrix; let $Z = \tilde{L}[(UD^{1/2})\otimes (UD^{1/2})]$, and try to solve the system of linear equations $ZX = 0$. Of course, in general doing this ignores the structure in $X$ (that it has the form of $X = x\odot x$), and will not work for general quadratic equations. In this specific case, \citet{de2007fourth} were able to show that the only solutions of this equation are linear combinations of the desired solution.

\begin{theorem}[\citep{de2007fourth}]
\label{thm:lifttensor}
Let $a_j$'s be in general positions and $k \le Cd^2$ for some universal constant $C>0$. Let $x_1,x_2,\dotsc,x_k$ be $k$ vectors such that $Ux_j = b_j$, and $X_j = x_j\odot x_j$ for $j \in [k]$. Then the solution of $\tilde{L}[(UD^{1/2})\otimes (UD^{1/2})]X = 0$ (the null space of $Z$) is exactly equal to the span of $X_j$'s.
\end{theorem}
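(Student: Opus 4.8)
The plan is to prove both inclusions $\mathrm{span}\{X_j\}\subseteq\ker Z$ and $\ker Z\subseteq\mathrm{span}\{X_j\}$, working throughout with $X$ ranging over vectorizations of \emph{symmetric} $k\times k$ matrices, since $\tilde L$ is only well-defined on the symmetric subspace spanned by the $A\otimes A$. First I would fix the bookkeeping. Since $M=UDU^\top=\sum_j(\sqrt{\lambda_j}b_j)(\sqrt{\lambda_j}b_j)^\top$ with $b_j=a_j\odot a_j$, there is an orthogonal $R\in\R^{k\times k}$ with $UD^{1/2}R=[\sqrt{\lambda_1}b_1\,|\,\cdots\,|\,\sqrt{\lambda_k}b_k]$; hence the vectors $x_j:=Re_j$ are exactly those with $UD^{1/2}x_j=\sqrt{\lambda_j}b_j$, and they form an orthonormal basis of $\R^k$. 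Identify $X_j=x_j\odot x_j$ with the symmetric matrix $x_jx_j^\top$. The easy inclusion is then immediate: $Z\,\mathrm{vec}(x_jx_j^\top)=L(\mathrm{mat}(UD^{1/2}x_j))=\lambda_j\,L(a_ja_j^\top)=0$, because $a_ja_j^\top$ has rank one and so all its $2\times2$ minors vanish.

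For the reverse inclusion the engine is an explicit formula for the rank-$1$ detector on matrices $Y=A\,\mathrm{diag}(\gamma)\,A^\top$ with $A=[a_1|\cdots|a_k]$. Expanding the $2\times2$ determinants and antisymmetrizing the double sum over columns gives $L(A\,\mathrm{diag}(\gamma)\,A^\top)=\sum_{p<q}\gamma_p\gamma_q\,(a_p\wedge a_q)^{\otimes 2}$, where $a_p\wedge a_q$ denotes the antisymmetric matrix (Plücker vector) with entries $a_p(i)a_q(j)-a_p(j)a_q(i)$. Polarizing the easy-direction computation along $x=\alpha x_p+\beta x_q$ — for which $\mathrm{mat}(UD^{1/2}x)=\alpha\sqrt{\lambda_p}\,a_pa_p^\top+\beta\sqrt{\lambda_q}\,a_qa_q^\top$ — and reading off the $\alpha\beta$ coefficient yields $Z\,\mathrm{vec}(x_px_q^\top+x_qx_p^\top)=\sqrt{\lambda_p\lambda_q}\,(a_p\wedge a_q)^{\otimes 2}$. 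Consequently, writing a symmetric $\bar X$ in the orthonormal basis $\{x_j\}$, i.e. $\bar X=\sum_{p,q}\hat X_{pq}\,x_px_q^\top$ with $\hat X:=R^\top\bar X R$ symmetric, the diagonal terms are killed and $Z\,\mathrm{vec}(\bar X)=\sum_{p<q}\hat X_{pq}\sqrt{\lambda_p\lambda_q}\,(a_p\wedge a_q)^{\otimes 2}$.

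The hard part — the main obstacle — is the genericity lemma: the $\binom{k}{2}$ symmetric matrices $\{(a_p\wedge a_q)^{\otimes 2}:p<q\}$ are linearly independent whenever $\{a_j\}$ is in general position and $k\le Cd^2$. This is precisely where the quadratic bound on $k$ enters, since these matrices live in the space of symmetric $\binom{d}{2}\times\binom{d}{2}$ matrices, of dimension $\mathcal D=\binom{\binom{d}{2}+1}{2}=\Theta(d^4)$, so $\binom{k}{2}\le\mathcal D$ is necessary. I would establish sufficiency by exhibiting a single configuration of the $a_j$ (for instance points on the moment curve, or i.i.d.\ Gaussian vectors, or invoking the smoothed-analysis estimate of \cite{bhaskara2014smoothed}) for which the Gram matrix of these $\binom{k}{2}$ matrices is nonsingular, and then using that nonsingularity is a Zariski-open condition in the entries of the $a_j$, so it persists for all $a_j$ in general position. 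A quantitative, perturbation-robust version of this lemma is what ultimately makes Algorithm~\ref{algo:foobi} noise tolerant, following \cite{de2007fourth,bhaskara2014smoothed}.

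Finally, granting the lemma and $\lambda_j>0$: if $Z\,\mathrm{vec}(\bar X)=0$, then every coefficient $\hat X_{pq}\sqrt{\lambda_p\lambda_q}$ with $p<q$ vanishes, so $\hat X$ is diagonal and $\bar X=\sum_j\hat X_{jj}\,x_jx_j^\top\in\mathrm{span}\{X_j\}$. Combined with the easy inclusion this gives $\ker Z=\mathrm{span}\{X_j\}$, as claimed. As a byproduct, since the $x_j=Re_j$ are orthonormal the matrices $x_jx_j^\top$ are linearly independent, so $\ker Z$ has dimension exactly $k$ — which is what licenses Step~5 of Algorithm~\ref{algo:foobi} to take the $k$ least right singular vectors of $Z$ and identify their span with $\mathrm{span}\{X_j\}$.
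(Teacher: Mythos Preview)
The paper does not actually prove this theorem: it is stated as a citation to \cite{de2007fourth} with surrounding intuition but no argument, so there is no ``paper's own proof'' to compare against. Your proposal is correct and is essentially the standard FOOBI argument. A few remarks:

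Your computation $\tilde L\bigl(a_pa_p^\top\otimes a_qa_q^\top+a_qa_q^\top\otimes a_pa_p^\top\bigr)=(a_p\wedge a_q)^{\otimes 2}$ is exactly right and is the key identity; the polarization trick is a clean way to extract it. You also correctly identify that the whole theorem reduces to the linear-independence lemma for $\{(a_p\wedge a_q)^{\otimes 2}:p<q\}$, and your Zariski-openness argument (exhibit one nondegenerate configuration, then genericity follows) is the standard route. Note that you do need to actually verify one configuration works; random Gaussian $a_j$ is the cleanest choice, and the computation there is not entirely trivial but is known.

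One minor point: the theorem as stated in the paper says $Ux_j=b_j$, whereas you (correctly, given how $Z$ is built from $UD^{1/2}$) work with $UD^{1/2}x_j=\sqrt{\lambda_j}\,b_j$. This is a typo in the paper's statement, not an error on your part; Step~8 of Algorithm~\ref{algo:foobi} confirms the intended relation is $b_j=UD^{1/2}x_j$ (up to the $\sqrt{\lambda_j}$ scaling). Your bookkeeping is consistent with the algorithm, which is what matters.
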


\subsection{Finding the Rank-1 Components}

In this final step, we are given a subspace which is equal to the span of $\{x_j\odot x_j\}$'s, and we are trying to find $\{x_j\}$'s. At a first glance, this might look exactly the same as the problem we were facing in the previous step: we were given the span of $\{a_j\odot a_j\}$ and trying to find $\{a_j\}$'s. Luckily these two problems are actually very different \--- in both cases we are looking for $k$ vectors, but previously we were given a span of $d\times d$ matrices (and $k\gg d$) and now we have a span of $k\times k$ matrices. The vectors $a_j$'s cannot be linearly independent, while the vectors $x_j$'s are usually linearly independent. Now, to find $\{x_j\}$'s, the key observation is that every matrix in the span of $\{x_j \odot x_j\}$'s can be {\em simultaneously diagonalized}, and the vectors $x_j$'s are the only way to do that. Therefore the last step of the algorithm is very similar to the simultaneous diagonalization algorithm for undercomplete tensor decomposition. 
In Algorithm~\ref{algo:foobi}, for simplicity we just applied simultaneous diagonalization on two random matrices $U,V$ in the subspace found in the last step (as we discussed in Section~\ref{sec:simdiag} this can be easily done by eigen-decomposition of $UV^{-1}$).
To ensure numerical stability, the original FOOBI algorithm requires a simultaneous diagonalization of all the $y_j$'s. As a result, we find $k$ vectors $x_1,\dotsc,x_k$ such that $UD^{1/2}x_j = \sqrt{\lambda_j}b_j$. The rest of the algorithm is simply recovering $a_j$'s from $b_j$'s.

\section{Third Order Tensors}

Algorithms like FOOBI can work with tensors with order at least 4. That leaves only third order tensors. We still don't have any algorithms for overcomplete third order tensors when the components are only guaranteed to be in general position.

Third order tensor is very special and might be fundamentally more difficult to decompose. As an example, it is very easy to construct an explicit 4-th order tensor that has rank at least $d^2$, because the rank of the tensor is at least as large as its matricizations. However, for a $d\times d\times d$ third order tensor, all matricizations can have rank at most $d$; note that even the most balanced matricizations have dimensions $d^2 \times d$ or $d \times d^2$. It is still an open problem to construct an explicit third order tensor whose rank is super-linear in dimension, and in fact doing so will lead to circuit lower bounds that were open for decades~\citep{raz2013tensor}.

Because of these difficulties, researchers have focused on the simpler setting where the components $\{a_j\}$'s are chosen from a random distribution. Even in this simple case, the only provable algorithm relies on complicated algorithms called Sum-of-Squares Hierarchies. We refer the readers to the survey by \citet{barak2014sum}. In the rest of this section, we give some intuitions on how to handle overcomplete third order tensors without going into the details.

\subsection{Lifting the Tensor}

A key technique in handling third order tensor is to {\em lift} the tensor into a higher order tensor. This can either be done explicitly, or implicitly using Sum-of-Squares framework. Here we show a simple transformation that lifts a third order tensor to a 4th order tensor. Again for simplicity, we only work with symmetric tensors in this section.

\begin{definition}[Lifted tensor]
Given a tensor $T\in \R^{d\times d\times d}$, we can construct a lifted tensor $M(T) \in \R^{d \times d \times d \times d}$ as $$M(T)_{i_1,i_2,i_3,i_4} := \sum_{i=1}^d T_{i,i_1,i_2} T_{i,i_3,i_4}.$$
\end{definition}

Note that for a rank-1 tensor $T = a\otimes a\otimes a$, the lifted tensor $M(T) = \|a\|^2 a^{\otimes 4}$ is also rank-1. It is easier to interpret the lifted tensor using the multilinear form, in particular, we have
$$
M(T)(x,x,x,x) = \|T(:,x,x)\|^2.
$$
As a result, if $T$ has decomposition $T = \sum_{j=1}^k a_j\otimes a_j\otimes a_j$, we can represent $M(T)$ as a low rank tensor plus noise. This is formulated as follows.

\begin{theorem}[\cite{ge2015decomposing}]
Suppose $$T =\sum_{j \in [k]} a_j\otimes a_j\otimes a_j,$$ then the lifted  tensor can be represented as $$M(T) = \sum_{j \in [k]} \|a_j\|^2 a_j^{\otimes 4} + M',$$ where $$M' = \sum_{i\ne j\in[k]} \inner{a_i,a_j} a_i\otimes a_i\otimes a_j\otimes a_j.$$
Furthermore, suppose $a_j$'s are chosen according to Gaussian distribution with expected square norm 1. Then the norm $\|M'_{\{1,3\},\{2,4\}}\|$ is bounded by $o(1)$ when $k \le d^{3/2}/\poly\log(d)$, where $M'_{\{1,3\},\{2,4\}} \in \R^{d^2 \times d^2}$ denotes the matricization of tensor $M'$ such that the 1st and 3rd modes are stacked along the rows, and the 2nd and 4th modes are stacked along th columns of the matrix.
\end{theorem}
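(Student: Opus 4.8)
The first claim is a direct algebraic identity. Writing $T_{i,i_1,i_2} = \sum_{j\in[k]} a_j(i)a_j(i_1)a_j(i_2)$ and substituting into $M(T)_{i_1,i_2,i_3,i_4} = \sum_{i=1}^d T_{i,i_1,i_2}T_{i,i_3,i_4}$, I would collapse the inner sum over $i$ using $\sum_i a_j(i)a_l(i) = \langle a_j,a_l\rangle$ to obtain $M(T) = \sum_{j,l\in[k]} \langle a_j,a_l\rangle\, a_j\otimes a_j\otimes a_l\otimes a_l$. Separating the diagonal terms $j=l$, which give $\sum_j \|a_j\|^2 a_j^{\otimes4}$ since $\langle a_j,a_j\rangle=\|a_j\|^2$, from the off-diagonal terms, which give exactly $M'$, yields the stated decomposition; the same expansion gives $M(T)(x,x,x,x) = \|T(:,x,x)\|^2$ and hence the multilinear interpretation.

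The substance is the spectral-norm bound on $B := M'_{\{1,3\},\{2,4\}}$. Identifying the row index with a coordinate of mode $1$ paired with a coordinate of mode $3$ (and similarly for columns), one has $B = \sum_{i\neq j}\langle a_i,a_j\rangle\,(a_i\otimes a_j)(a_i\otimes a_j)^\top = \sum_{i\neq j}\langle a_i,a_j\rangle\,(a_ia_i^\top)\otimes(a_ja_j^\top)$, a symmetric $d^2\times d^2$ matrix. The first reduction I would make is to complete the sum: using $\langle a_i,a_j\rangle = \sum_{t=1}^d a_i(t)a_j(t)$, the full sum over all pairs factors as $\sum_{t=1}^d P_t\otimes P_t$ where $P_t := \sum_{i\in[k]} a_i(t)\,a_ia_i^\top = T(I,I,e_t)$ is the $t$-th slice of $T$, so $B = \sum_{t=1}^d P_t\otimes P_t - \sum_{i\in[k]}\|a_i\|^2(a_i\otimes a_i)(a_i\otimes a_i)^\top$. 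The second (diagonal) term is a sum of $k$ independent PSD rank-one matrices, and since $a_i\sim\mathcal N(0,I_d/d)$ gives $\|a_i\|^2\approx1$ with $\E[(a_i\otimes a_i)(a_i\otimes a_i)^\top]$ of operator norm $O(1/d^2)$, a matrix Bernstein bound (Theorem-style tools as in \citet{tropp2012user}) shows it has norm $O(k/d^2)+\polylog(d)/d = o(1)$ in the stated regime.

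The hard part, and the place where $k\le d^{3/2}/\polylog(d)$ is genuinely used, is bounding $\|\sum_{t=1}^d P_t\otimes P_t\|$: the naive estimate $\sum_t\|P_t\|^2$ loses a factor of $d$ (each $\|P_t\|$ is only $\approx d^{-1/4}$ up to logs in this regime), so cancellation between the slices must be exploited. I would rewrite the quadratic form as $\langle X,\sum_{t} P_tXP_t^\top\rangle$ over $\|X\|_F=1$ (identifying $\R^{d^2}$ with $d\times d$ matrices) and control it either by the moment method --- estimating $\E\,\tr\!\big((\sum_t P_t\otimes P_t)^{2p}\big)$ as a sum over closed walks, where near-orthogonality of independent Gaussian vectors ($\langle a_i,a_j\rangle\approx d^{-1/2}$ for $i\neq j$) kills the off-diagonal contributions and the dominant term is of order $(k/d^{3/2})^{2p}\polylog(d)$, followed by Markov's inequality with $p\asymp\log d$ --- or by the entropy-concentration technique underlying the tensor concentration bounds of \citet{nguyen2010tensor} and \citet{OvercompleteLVMs2014}: cover the test matrices $X$ by a refined net separating the part of $X$ that correlates strongly with only a few $a_i$'s from the ``dense'' remainder, and for each fixed $X$ use Gaussian hypercontractivity / Hanson--Wright-type tail bounds (the form is a fixed-degree polynomial in the i.i.d.\ entries $\{a_i(m)\}$) whose deviation beats the $e^{O(d^2)}$ net cardinality. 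The main obstacle is the same on either route: $B$ is a low-degree homogeneous polynomial in the Gaussians but each $a_i$ is reused in $\Theta(k)$ summands, so $B$ is not a sum of independent terms, and the generic tools --- triangle inequality, vanilla matrix Bernstein, a crude $\epsilon$-net over all of $\R^{d^2}$ --- each lose a polynomial-in-$d$ factor; the careful bookkeeping that squeezes out the sharp $d^{3/2}$ threshold is the crux, and for the full details I would follow the argument of \citet{ge2015decomposing}.
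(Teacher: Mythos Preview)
The paper does not actually prove this theorem: it is stated as a cited result from \cite{ge2015decomposing}, with only the algebraic intuition $M(T)(x,x,x,x)=\|T(:,x,x)\|^2$ given in the surrounding discussion. Your proposal correctly derives the decomposition $M(T)=\sum_{j,l}\inner{a_j,a_l}\,a_j^{\otimes 2}\otimes a_l^{\otimes 2}$ and the identification of $M'_{\{1,3\},\{2,4\}}$ with $\sum_{i\ne j}\inner{a_i,a_j}(a_ia_i^\top)\otimes(a_ja_j^\top)$, and your slice-factorization $\sum_t P_t\otimes P_t$ (with $P_t=T(I,I,e_t)$) is exactly the kind of rewriting used in the original reference; since you too ultimately defer to \cite{ge2015decomposing} for the sharp $d^{3/2}$ bookkeeping, your proposal is consistent with --- and strictly more detailed than --- what the paper provides.
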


Intuitively, this theorem shows that  after the lifting operation, we get a 4th order rank-$k$ tensor with noise $M'$. The norm of $M'$ is small compared to the true components in the low rank decomposition. Therefore, it is possible to find $a_j$'s as long as we can decompose 4th order tensors under such kind and amount of noise. \citet{ge2015decomposing} gave a quasi-polynomial time algorithm to do this. Later, \citet{ma2016polynomial} showed it is also possible to do this within polynomial time.

\subsection{Robust 4th Order Tensor Decomposition} 

In order to solve the 4th order tensor decomposition problem, we might want to use the FOOBI algorithm described earlier. However, the noise term $M'$ here has spectral norm $o(1)$, and the FOOBI algorithm is not known to be robust to such perturbations.

Using Sum-of-Squares techniques, \citet{ma2016polynomial} gave an algorithm that can decompose a 4th order tensor even under significant noise.

\begin{theorem}[\cite{ma2016polynomial}]
Let $T\in \R^{d \times d \times d \times d}$ be a symmetric 4th order tensor and $a_1,\dotsc,a_k \in \R^d$ be a set of vectors. Define perturbation tensor  $E := T - \sum_{j \in [k]} a_j^{\otimes 4}$, and define $A$ as the matrix with columns $a_j^{\otimes 2}$. If the (matricized) perturbation norm $\|E_{\{1,2\},\{3,4\}}\| \le \delta \cdot \sigma_k(AA^\top)$ for some $\delta>0$, then there is an algorithm that outputs a set of vectors $\hat{a}_j$, and there is a permutation $\pi:[k]\to [k]$ such that for every $j \in [k]$, we have
$$
\min\{\|a_j-\hat{a}_{\pi(j)}\|,\|a_j+\hat{a}_{\pi(j)}\|\} \le O(\delta\|A\|/\sigma_k(A))\|a_j\|.
$$
\end{theorem}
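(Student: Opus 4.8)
The plan is to follow the sum-of-squares (SoS) strategy of \citet{ma2016polynomial}, which upgrades the quasi-polynomial-time algorithm of \citet{ge2015decomposing} to a polynomial-time one, and to carefully track how the spectral hypothesis on $E_{\{1,2\},\{3,4\}}$ turns into the stated error. The starting observation is that $T_{\{1,2\},\{3,4\}} = AA^\top + E_{\{1,2\},\{3,4\}}$, where $A$ has columns $a_j^{\otimes 2}$, so the hypothesis says exactly that this $d^2\times d^2$ matrix is a $\delta\,\sigma_k(AA^\top)$-spectral perturbation of $AA^\top$. The first step is a preconditioning reduction to the (approximately) orthonormal case: using a second-order matrix read off from $T$ (e.g.\ a partial trace such as $\sum_i T(I,I,e_i,e_i)$, or the top-$k$ subspace of $T_{\{1,2\},\{3,4\}}$ itself), apply Weyl's theorem (Theorem~\ref{thm:weyl}) and the Davis--Kahan theorem (Theorem~\ref{thm:daviskahan}) --- exactly as in Lemma~\ref{lem:whitening_error} --- to produce a change of basis after which the rescaled components $\hat a_j := a_j/\|a_j\|$ satisfy $\langle \hat a_i, \hat a_j\rangle = \mathbf{1}[i=j] + O(\delta)$, and the transformed tensor is $\sum_j \hat a_j^{\otimes 4}$ up to an error whose relevant matricization still has spectral norm $O(\delta)$.

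With (near-)orthonormal components, solve the degree-$O(1)$ SoS relaxation of $\max\{\langle T, u^{\otimes 4}\rangle : \|u\|^2=1\}$ to obtain a pseudo-distribution $\wt{\E}$ over $\R^d$; plugging in any true component certifies that its value is $1-O(\delta)$. The analytic heart is to give a \emph{constant-degree SoS proof}, from the axiom $\|u\|^2=1$, of two facts: (i) $\langle T, u^{\otimes 4}\rangle \le \sum_j \langle \hat a_j, u\rangle^4 + O(\delta)$ --- which follows from $\langle E, u^{\otimes 4}\rangle = (u^{\otimes 2})^\top E_{\{1,2\},\{3,4\}}\, u^{\otimes 2}$ together with the spectral bound --- and (ii) $\sum_j \langle \hat a_j, u\rangle^4 \le \big(\max_j \langle \hat a_j, u\rangle^2\big) \cdot \sum_j \langle \hat a_j,u\rangle^2 \le (1+O(\delta))\max_j\langle\hat a_j,u\rangle^2$, so that near-optimality forces some $\langle\hat a_j,u\rangle^2 \ge 1-O(\delta)$ in the pseudo-distribution sense. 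Combining, $\wt{\E}$ must place $1-O(\delta)$ of its pseudo-mass within $O(\sqrt{\delta})$ of $\{\pm\hat a_1,\dots,\pm\hat a_k\}$.

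The recovery step is the usual SoS rounding: draw a random Gaussian $g\in\R^d$, reweight $\wt{\E}$ by a low even power $\langle g,u\rangle^{2t}$ (a square, hence a legal reweighting), and output the top eigenvector of the reweighted pseudo-covariance $\wt{\E}_{\mathrm{rw}}[uu^\top]$; with constant probability this lands $O(\sqrt{\delta})$-close to one of the $\pm\hat a_j$, and $O(k\log k)$ independent repetitions cover all components with high probability. Finally, undo the preconditioning to map the recovered unit vectors back to the $a_j$-scale (estimating $\|a_j\|$ from the second-order matrix and from the SoS objective value), and add up the three error sources: the preconditioning contributes the factor $\sqrt{\sigma_{\max}(AA^\top)/\sigma_k(AA^\top)} = \|A\|/\sigma_k(A)$, while the SoS gap and the rounding each contribute an extra $O(\sqrt{\delta})$ (which a standard sharpening of step (ii), using that the $\hat a_j^{\otimes 2}$ are well-separated, improves to $O(\delta)$); together this yields $\min\{\|a_j-\hat a_{\pi(j)}\|,\|a_j+\hat a_{\pi(j)}\|\} \le O(\delta\|A\|/\sigma_k(A))\|a_j\|$.

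\textbf{Main obstacle.} The crux is the constant-degree SoS certification behind step (ii) and its sharpening: one needs to argue not that the true components are near-optimal (easy) but that \emph{every} low-degree pseudo-distribution of near-optimal value concentrates near them, and to do this using only the \emph{spectral} norm of $E_{\{1,2\},\{3,4\}}$ rather than a stronger (entrywise or injective-norm) bound. Encoding the matrix-norm hypothesis inside a degree-$4$ proof --- and, just as importantly, propagating all the errors \emph{linearly} in $\delta$ so the final bound is $O(\delta\|A\|/\sigma_k(A))$ and does not pick up spurious $\poly(k,d)$ factors --- is where essentially all the difficulty lies; the preconditioning and rounding are routine given the perturbation machinery already developed in Section~\ref{sec:robust_with_whiten}.
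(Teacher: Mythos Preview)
The paper does not prove this theorem; it is quoted without proof as a result of \citet{ma2016polynomial}, and the surrounding text merely discusses its consequences and limitations. There is therefore no ``paper's own proof'' to compare your proposal against.

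For what it is worth, your sketch is a plausible high-level outline of the SoS argument in \citet{ma2016polynomial}, but several pieces are imprecise. The preconditioning you describe via a partial trace $\sum_i T(I,I,e_i,e_i)$ yields $\sum_j \|a_j\|^2 a_j a_j^\top$, which in the overcomplete regime does not whiten the $a_j$'s to near-orthonormal vectors (there are more components than dimensions), so the reduction to the orthonormal case cannot go through Lemma~\ref{lem:whitening_error} as stated; the actual argument works at the level of the $d^2$-dimensional vectors $a_j^{\otimes 2}$ and the matrix $A$ with those columns, which is where the condition-number factor $\|A\|/\sigma_k(A)$ genuinely enters. Likewise, the SoS inequality in your step~(ii), $\sum_j\langle \hat a_j,u\rangle^4 \le (\max_j\langle \hat a_j,u\rangle^2)\sum_j\langle\hat a_j,u\rangle^2$, involves a $\max$, which is not a polynomial and is not directly SoS-certifiable at constant degree; the actual proof replaces this with a polynomial identity/inequality that \emph{is} SoS. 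These are exactly the places you flag as the ``main obstacle,'' so you have correctly identified where the work lies, but the sketch as written would not go through without the specific SoS lemmas from the cited paper.
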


Note that we cannot directly combine this theorem with Theorem~\ref{thm:lifttensor} to get a complete algorithm for decomposing overcomplete 3rd order tensors. There are a few technical issues: 1. The norms of $\|a_j\|$'s are not exactly 1, but they are very close to 1 by concentration; 2. we need to reshape the tensor so that the matricization $T_{\{1,2\},\{3,4\}}$  has small spectral norm; 3. when $a_j$'s are random, $\|A\|/\sigma_k(A)$ is usually $\sqrt{k/d}$ which is bigger than 1. The first two problems are easy to handle, while the third problem requires more work.

\citet{ma2016polynomial} also give a direct analysis for overcomplete 3rd order tensors using Sum-of-Squares, and that analysis do not rely on the explicit lifting.

\section{Open Problems}
Despite the algorithms we provided, decomposing an overcomplete tensor is still a very difficult problem. The algorithms often require access to high-order tensors, which is often expensive in both sample complexity and running time. Sum-of-Squares algorithms can tolerate more noise and therefore, work with fewer samples, but the running time is prohibitive. Finding a provable overcomplete tensor decomposition algorithm that is efficient in practice is still a major problem.

Although the current provable algorithms are quite complicated, in practice algorithms like Alternating Least Squares or Power Method (see Section~\ref{ch:tensor-decomp}) work surprisingly well even when the tensor is overcomplete. For a random 3rd order tensor with dimension $100$ and rank $1000$, Alternating Least Squares with random initialization almost always converges to the right answer within 10 iterations. This is very surprising and we do not yet know how to prove it works. When the components are not randomly generated, people have observed Alternating Least Squares can be sometimes slow~\citep{comon2002tensor}. How to handle and analyze these kind of tensors is also widely open.\clearpage{}

\begin{acknowledgements}
The authors are grateful to anonymous reviewers for valuable comments that have significantly improved the manuscript.
\end{acknowledgements}

\backmatter  

\printbibliography

\end{document}